\theoremstyle{plain}
\newtheorem{theorem}{Theorem}[section]
\newtheorem{proposition}[theorem]{Proposition}
\newtheorem{lemma}[theorem]{Lemma}
\newtheorem{corollary}[theorem]{Corollary}
\theoremstyle{definition}
\newtheorem{definition}[theorem]{Definition}
\newtheorem{assumption}[theorem]{Assumption}
\theoremstyle{remark}
\newcommand{\PL}{\mathrm{PL}}
\newcommand{\Curv}{\mathrm{Curv}}
\newcommand{\Sspace}{\mathcal{S}}
\newcommand{\Aspace}{\mathcal{A}}
\newcommand{\R}{\mathbb{R}}
\newcommand{\E}{\mathbb{E}}
\newcommand{\Wone}{\mathsf{W}_1}
\newcommand{\Tcal}{\mathcal{T}}
\DeclareMathOperator*{\argmax}{arg\,max}
\icmltitlerunning{Geometry of Drifting MDPs with Path-Integral Stability Certificates}
\begin{document}

\twocolumn[

\icmltitle{Geometry of Drifting MDPs with Path-Integral Stability Certificates}

  \begin{icmlauthorlist}
    \icmlauthor{Zuyuan Zhang}{gwu}
    \icmlauthor{Mahdi Imani}{northeastern}
    \icmlauthor{Tian Lan}{gwu}
  \end{icmlauthorlist}

  \icmlaffiliation{gwu}{Department of Electrical and Computer Engineering, 
The George Washington University, Washington, DC 20052 USA }
  \icmlaffiliation{northeastern}{Department of Electrical and Computer Engineering, Northeastern University, Boston, MA, USA}

  \icmlcorrespondingauthor{Zuyuan Zhang}{zuyuan.zhang@gwu.edu}
  \icmlcorrespondingauthor{Tian Lan}{tlan@gwu.edu}

  \vskip 0.3in
]

\printAffiliationsAndNotice{}  

\begin{abstract}
Real-world reinforcement learning is often \emph{nonstationary}: rewards and dynamics drift, accelerate, oscillate, and trigger abrupt switches in the optimal action. Existing theory often represents nonstationarity with coarse-scale models that measure \emph{how much} the environment changes, not \emph{how} it changes locally---even though acceleration and near-ties drive tracking error and policy chattering. We take a geometric view of nonstationary discounted Markov Decision Processes (MDPs) by modeling the environment as a differentiable homotopy path and tracking the induced motion of the optimal Bellman fixed point. This yields a length--curvature--kink signature of intrinsic complexity: cumulative drift, acceleration/oscillation, and action-gap-induced nonsmoothness. We prove a solver-agnostic path-integral stability bound and derive gap-safe feasible regions that certify local stability away from switch regimes. Building on these results, we introduce \textit{Homotopy-Tracking RL (HT-RL)} and \textit{HT-MCTS}, lightweight wrappers that estimate replay-based proxies of length, curvature, and near-tie proximity online and adapt learning or planning intensity accordingly. Experiments
show improved tracking and dynamic regret over matched static baselines, with the largest gains in oscillatory and switch-prone regimes.
\end{abstract}

\section{Introduction}

Reinforcement learning (RL) theory is largely developed for a \emph{fixed} Markov decision process (MDP), where the goal is to converge to a single Bellman fixed point  \citep{puterman2014markov,levin2017markov,sutton1998reinforcement,auer2008near,tsitsiklis1996analysis,watkins1992q,mnih2015human,kocsis2006bandit,silver2016mastering}.
In many real-world systems, however, environments are \emph{nonstationary}: rewards drift as goals and user populations evolve, dynamics change as platforms are updated, and operating conditions may accelerate, oscillate, or shift regimes \citep{even2009online,garivier2008upper,cheung2020reinforcement,zou2024distributed,mao2021near,wei2021non,zhang2024distributed,lecarpentier2019non}.
In these settings, RL becomes a \emph{tracking} problem---the MDP may change in the same time-scale as RL steps---and persistent error or instability can arise even when such per-step changes are small~\citep{zinkevich2003online,hazan2016introduction}.

Tracking difficulty depends not only on \emph{how much} the MDP changes, but on \emph{how it changes over time}. Environments with similar total variation can behave very differently: smooth drift may be trackable, while acceleration/oscillation can render value estimates stale between updates, and near-ties can flip the greedy action and induce instability~\citep{bellemare2016increasing}. Most nonstationary-RL theory uses coarse-scale measures (e.g., piece-wise variation, switch counts, worst-case, and adversarial drift), and the resulting methods often rely on generic templates (restarts, sliding windows, forgetting schedules)~\citep{even2009online,cheung2020reinforcement,lecarpentier2020nonstationarymarkovdecisionprocesses,mao2021near,wei2021non}.
These coarse-scale abstractions blur distinct patterns and provide little solver-independent guidance for \emph{when} aggressive adaptation is needed versus \emph{when} it is unnecessary or destabilizing. This motivates a structural question: \emph{which intrinsic features of a drifting MDP determine how the optimal solution moves, and how can those features be used to control tracking errors and inspire new algorithms with provable guarantees?}~\citep{allgower2012numerical,krantz2002implicit}

We model nonstationarity as a differentiable homotopy path of discounted MDPs, $\tau \mapsto \mathcal{M}(\tau)$, and studying the induced motion of the optimal Bellman fixed point $Q^\star_\tau$. We characterize this motion through a length--curvature--kink decomposition of tracking difficulty. \emph{Length} captures cumulative, value-relevant drift in rewards and transitions; \emph{curvature} captures changes in the drift rate (acceleration/oscillation) that govern how quickly value estimates become stale; and \emph{kinks} capture near-tie regimes where small perturbations can switch the identity of the optimal action, inducing nonsmooth changes in the optimal value. These components yield solver-agnostic guarantees that bound optimal-value displacement along the path and certify gap-safe neighborhoods where the optimal value remains locally stable.

We then turn these quantities into practical algorithm designs for learning along stochastic paths.  
We introduce two lightweight \emph{Homotopy-Tracking (HT)} wrappers: \textsc{HT-RL} for deep RL and \textsc{HT-MCTS} for Monte Carlo Tree Search.
Both estimate replay-based proxies for length, curvature, and near-tie proximity online and adapt algorithmic intensity accordingly. In \textsc{HT-RL}, the proxies modulate step sizes, target-network update inertia, and regularization with smoothing/hysteresis to avoid chattering. In \textsc{HT-MCTS}, the same signals control search depth and simulation budgets, allocating more planning effort under high curvature or near switching boundaries. We show that both HT wrappers ensure stability of the scheduled hyperparameter processes and lead to an upper-bound guarantee on the dynamic regret.

Our contributions are summarized as follows:
\begin{itemize}
\vspace{-0.15in}
    \item \textbf{Quantifying path geometry.} We formulate nonstationarity as a differentiable homotopy path of MDPs and introduce a length--curvature--kink decomposition of tracking difficulty.
    \vspace{-0.1in}
    \item \textbf{Solver-agnostic guarantees.} Under mild regularity, we prove a path-integral stability bound for optimal Bellman values along the path and derive gap-safe feasible regions that certify local stability away from switching regimes.
    \vspace{-0.1in}
    \item \textbf{From geometry to algorithms.} We derive a dynamic-regret decomposition under a contraction-with-noise abstraction and use it to motivate \textsc{HT-RL} and \textsc{HT-MCTS}, ensuring the hyperparameter process stability and an upper-bound guarantee on dynamic regret.
    \vspace{-0.1in}
    \item \textbf{Empirical validation.} Experiments on synthetic homotopy MDPs and drifting-control benchmarks show improved tracking and lower dynamic regret than matched static baselines, with gains most pronounced under oscillatory and switch-prone nonstationarity.
\end{itemize}

\section{Related Work}

\paragraph{Nonstationary RL via global budgets and adaptation templates.}
A dominant line models nonstationarity as a time-varying MDP sequence controlled by variation budgets/switch counts and uses optimism, restarts, or sliding-window/forgetting schemes \citep{even2009online,garivier2008upper,lecarpentier2019non,cheung2020reinforcement,mao2021near,wei2021non,zhang2025lipschitz}.

\vspace{-0.17in}
\paragraph{Dynamic regret and tracking decompositions.}
Dynamic regret separates loss into environment-driven movement and algorithmic error \citep{zinkevich2003online,hazan2016introduction,zhang2024modeling}, while RL analyses often use contraction-style arguments to separate tracking from sampling noise/approximation \citep{tsitsiklis1996analysis,watkins1992q,mnih2015human}.

\vspace{-0.17in}
\paragraph{Sensitivity/regularity, near-ties, and planning under drift.}
Classical MDP sensitivity bounds and continuation/implicit-function tools characterize local regularity of parameterized fixed points \citep{puterman2014markov,levin2017markov,qiao2024br,krantz2002implicit,zhang2024collaborative,allgower2012numerical}, and small action gaps expose brittleness near ties \citep{bellemare2016increasing}.
Planning methods such as MCTS provide compute-allocation knobs and strong decision-making performance, but drift handling is often heuristic \citep{kocsis2006bandit,tang2025malinzero,silver2016mastering,zhang2025tail}.

\vspace{-0.17in}
\paragraph{Our approach.}
We model environment evolution as a differentiable homotopy path and study the induced motion of the optimal Bellman fixed point.
This yields a length--curvature--kink characterization that separates cumulative drift, drift-rate variation, and switch-induced nonsmoothness, enabling solver-independent stability bounds and a tracking/dynamic-regret decomposition under a contraction-with-noise abstraction.

\section{Preliminaries}
\label{sec:prelim}
We study discounted MDPs evolving along a differentiable homotopy path $M(\tau)$.
To quantify non-stationarity in a value-relevant way, we use Lipschitz test functions on $(\Sspace,d_\Sspace)$ and the dual $W_1^\ast$ norm on signed measures, and assume optimal values admit a uniform Lipschitz scale.

A discounted MDP is $\mathcal M=(\Sspace,\Aspace,P,r,\gamma)$ with bounded reward $r$ and $\gamma\in(0,1)$.
For a policy $\pi(\cdot\mid s)$, define
$
V^\pi(s)=\E\!\Big[\sum_{t\ge0}\gamma^t r(s_t,a_t)\,\Big|\,s_0=s,\ a_t\!\sim\!\pi(\cdot\mid s_t)\Big],Q^\pi(s,a)=r(s,a)+\gamma\,\E_{s'\sim P(\cdot\mid s,a)}[V^\pi(s')].   
$
The optimal and policy Bellman operators are
$(\Tcal Q)(s,a)=r(s,a)+\gamma\,\E_{s'\sim P}\big[\max_{a'}Q(s',a')\big],(\Tcal^\pi Q)(s,a)=r(s,a)+\gamma\,\E_{s'\sim P}\big[Q(s',\pi(s'))\big].$
both $\gamma$-contractions in $\|\cdot\|_\infty$, with fixed points $Q^\star$ and $Q^\pi$.

For $f:\Sspace\to\R$, let $\|f\|_{\mathrm{Lip}}
:=\sup_{x\neq y}\frac{|f(x)-f(y)|}{d_{\Sspace}(x,y)}$ and
$
\mathrm{Lip}_1(\Sspace)
:=\{f:\ \|f\|_{\mathrm{Lip}}\le 1,\ \|f\|_\infty\le 1\}.
$
For a signed measure $\xi$ on $\Sspace$, define the dual norm
$
\|\xi\|_{W_1^\ast}
:=\sup_{f\in\mathrm{Lip}_1(\Sspace)}\Big|\int f\,d\xi\Big|.
$
This is the natural metric for measuring transition drift through Lipschitz (value-like) probes.

\paragraph{Homotopy path and differentiability in $\tau$.}
We consider a homotopy path of discounted MDPs
$
M(\tau)=(\mathcal S,\mathcal A,P_\tau,r_\tau,\gamma),\qquad \tau\in[0,1],\ \gamma\in(0,1),
$
with $P_\tau(\cdot\mid s,a)\in\mathcal P(\mathcal S)$ and $r_\tau$ bounded. We assume weak differentiability in $\tau$ \emph{as viewed by} Lipschitz tests: small changes in $\tau$ induce linear changes in expectations of all $f\in\mathrm{Lip}_1(\Sspace)$.

\begin{definition}[Dual derivative and $\Wone^\ast$ norm]
\label{def:dual-derivative}
For each $(s,a)$, there exist a finite signed measure $\partial_\tau P_\tau(\cdot\mid s,a)$ and a bounded function $\partial_\tau r_\tau(s,a)$ such that, for any $f \in \mathrm{Lip}_1(\mathcal S)$,
$
\frac{d}{d\tau}\mathbb{E}_{s'\sim P_\tau(\cdot\mid s,a)}[f(s')]
=\int f\, d\big(\partial_\tau P_\tau(\cdot\mid s,a)\big),
\frac{d}{d\tau}r_\tau(s,a)=\partial_\tau r_\tau(s,a).   
$
We equip $\partial_\tau P_\tau(\cdot\mid s,a)$ with the dual norm
$
\|\partial_\tau P_\tau(\cdot\mid s,a)\|_{W_1^\ast}
:= \sup_{f\in\mathrm{Lip}_1(\mathcal S)}\Big|\int f\,d\big(\partial_\tau P_\tau(\cdot\mid s,a)\big)\Big| < \infty.
$
If the second-order derivatives exist, we similarly define $\partial_{\tau\tau} P_\tau$ and $\partial_{\tau\tau} r_\tau$.
\end{definition}

For probability measures $\mu,\nu$ on $(\Sspace,d_\Sspace)$,
$
\Wone(\mu,\nu)=\sup_{f\in\mathrm{Lip}_1(\Sspace)}\Big|\int f\,d(\mu-\nu)\Big|
=\|\mu-\nu\|_{W_1^\ast}.
$

\begin{assumption}[Uniform Lipschitz/mixing scale]
\label{ass:mixing}
There exists $C_{\mathrm{mix}}\in(0,\infty)$ such that for all $\tau\in[0,1]$,
$
\|V^\star_\tau\|_{\mathrm{Lip}}\le C_{\mathrm{mix}}.
$
\end{assumption}

\begin{lemma}[Sufficient conditions (Appendix~\ref{app:mixing})]
\label{lem:mixing-sufficient-main}
If rewards are uniformly Lipschitz in $s$ and kernels are uniformly Wasserstein-Lipschitz in $s$
(with constants $L_r,\kappa$) and $\gamma\kappa<1$ (Assumption~\ref{ass:mixing-sufficient}),
then Assumption~\ref{ass:mixing} holds with $C_{\mathrm{mix}}=\frac{L_r}{1-\gamma\kappa}$.
\end{lemma}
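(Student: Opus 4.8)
The plan is to prove the $\tau$-uniform statement by showing that, for each fixed $\tau$, the Bellman optimality operator \emph{preserves} $L$-Lipschitzness with $L := L_r/(1-\gamma\kappa)$, and then to extract the bound on $V^\star_\tau$ via value iteration. I would work with the operator written on $V$-functions, $(\Tcal_\tau V)(s)=\max_{a}\bigl(r_\tau(s,a)+\gamma\,\E_{s'\sim P_\tau(\cdot\mid s,a)}[V(s')]\bigr)$, whose unique bounded fixed point is $V^\star_\tau$. Fix $\tau\in[0,1]$ and consider the set $\mathcal V_L:=\{V:\Sspace\to\R:\ \|V\|_{\mathrm{Lip}}\le L,\ \infnorm{V}\le \infnorm{r_\tau}/(1-\gamma)\}$; carrying the sup-norm bound along is what makes the expectations well defined when $\Sspace$ is unbounded.

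The key estimate is that for any $V\in\mathcal V_L$ and any fixed action $a$, the map $s\mapsto \E_{s'\sim P_\tau(\cdot\mid s,a)}[V(s')]$ is $(\kappa L)$-Lipschitz. This follows from Kantorovich--Rubinstein duality: $V/L$ lies (after a harmless truncation to respect the $\infnorm{f}\le1$ normalization built into $\mathrm{Lip}_1(\Sspace)$) in $\mathrm{Lip}_1(\Sspace)$, hence
\[
\bigl|\E_{P_\tau(\cdot\mid s,a)}[V]-\E_{P_\tau(\cdot\mid s',a)}[V]\bigr|
\le L\,\Wone\bigl(P_\tau(\cdot\mid s,a),P_\tau(\cdot\mid s',a)\bigr)
\le L\kappa\,d_\Sspace(s,s'),
\]
where the last step uses the uniform Wasserstein-Lipschitz hypothesis on the kernels. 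Combining with the $L_r$-Lipschitzness of $r_\tau(\cdot,a)$ shows each action's backup $s\mapsto r_\tau(s,a)+\gamma\E_{P_\tau(\cdot\mid s,a)}[V]$ is $(L_r+\gamma\kappa L)$-Lipschitz; since $|\max_a f_a(s)-\max_a f_a(s')|\le\sup_a|f_a(s)-f_a(s')|$, the same constant controls $\Tcal_\tau V$, and the sup-norm bound is preserved by the $\gamma$-contraction. Choosing $L=L_r/(1-\gamma\kappa)$ gives $L_r+\gamma\kappa L=L$, so $\Tcal_\tau(\mathcal V_L)\subseteq\mathcal V_L$; here $\gamma\kappa<1$ (Assumption~\ref{ass:mixing-sufficient}) is exactly what makes $L<\infty$ and the affine map $L\mapsto L_r+\gamma\kappa L$ itself a contraction.

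To conclude I would run value iteration from $V_0\equiv0\in\mathcal V_L$: each iterate $V_{k+1}=\Tcal_\tau V_k$ remains in $\mathcal V_L$ by the invariance above, and $V_k\to V^\star_\tau$ in $\infnorm{\cdot}$ because $\Tcal_\tau$ is a $\gamma$-contraction on bounded functions. A Lipschitz bound passes to pointwise (a fortiori uniform) limits, so $\|V^\star_\tau\|_{\mathrm{Lip}}\le L$. Since $L_r$ and $\kappa$ do not depend on $\tau$, the bound is uniform in $\tau$, yielding Assumption~\ref{ass:mixing} with $C_{\mathrm{mix}}=L_r/(1-\gamma\kappa)$.

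The main obstacle is purely technical: reconciling the paper's particular class $\mathrm{Lip}_1(\Sspace)$ — which adds $\infnorm{f}\le1$ to the unit Lipschitz seminorm — with the Kantorovich duality step, and ensuring all probes $\E_{P_\tau}[V]$ are finite on a possibly unbounded $\Sspace$. I would dispatch this either by assuming $\Sspace$ has finite diameter (so the bounded-plus-Lipschitz and Lipschitz-only classes coincide up to constants) or by the truncation argument sketched above: replace $V$ by $\clip$ of $V/L$ at level $1$, check the clipped function is still $1$-Lipschitz and in $\mathrm{Lip}_1(\Sspace)$, and note the clip is inert on the difference being bounded because $V^\star_\tau$ (and every value-iteration iterate) is a priori bounded by $\infnorm{r_\tau}/(1-\gamma)$. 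The remaining steps are routine.
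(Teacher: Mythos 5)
Your proposal is correct, and it rests on exactly the same key estimate as the paper's Appendix~\ref{app:mixing}: via Kantorovich--Rubinstein duality, the kernel's Wasserstein-Lipschitzness makes $s\mapsto\E_{P_\tau(\cdot\mid s,a)}[V]$ a $(\kappa\|V\|_{\mathrm{Lip}})$-Lipschitz map, so the optimality operator sends $L$-Lipschitz functions to $(L_r+\gamma\kappa L)$-Lipschitz functions (the paper's Lemma~\ref{lem:TV-lip}). Where you diverge is the final extraction step. The paper applies this seminorm inequality directly to the fixed point, writing $\|V^\star_\tau\|_{\mathrm{Lip}}\le L_r+\gamma\kappa\|V^\star_\tau\|_{\mathrm{Lip}}$ and rearranging; this is shorter but tacitly assumes $\|V^\star_\tau\|_{\mathrm{Lip}}<\infty$ a priori (the rearrangement is vacuous if the seminorm is infinite). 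You instead exhibit the invariant set $\mathcal V_L$ with $L=L_r/(1-\gamma\kappa)$, run value iteration from $V_0\equiv 0$, and pass the Lipschitz bound to the uniform limit; this is slightly longer but supplies the missing finiteness for free and makes the role of $\gamma\kappa<1$ (as the contraction factor of the affine map $L\mapsto L_r+\gamma\kappa L$) transparent. Your concern about the $\infnorm{f}\le 1$ normalization in $\mathrm{Lip}_1(\Sspace)$ is legitimate but harmless here: the paper's Lemma~\ref{lem:w1-lip-test} invokes the duality over the unconstrained unit Lipschitz ball, and your truncation (or finite-diameter) workaround, exploiting that every iterate is bounded by $\infnorm{r_\tau}/(1-\gamma)$, resolves it cleanly. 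Both routes are sound; yours is marginally more rigorous, the paper's marginally more economical.
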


This is a mild regularity requirement for metric MDP families; see Appendix~\ref{app:mixing} for the full statement and proof.
Assumption~\ref{ass:mixing} provides a uniform conversion from measure-level drift
(e.g., $\|\partial_\tau P_\tau\|_{W_1^\ast}$) to value-level sensitivity via Lipschitz testing,
and will be used to define path geometry (Sec.~\ref{sec:geometry}) and bound operator derivatives
(Sec.~\ref{sec:operator}).

\section{Quantifying Path Geometry}

We quantify non-stationarity along the differentiable MDP path $M(\tau)$ through three
pathwise quantities: a first-order path length $\PL$, a second-order curvature $\Curv$,
and a kink penalty $\Phi$ capturing non-differentiable optimal-action switches.
These quantities yield solver-independent bounds on the displacement and regularity of
$Q^\star_\tau$, which will later control tracking error and dynamic regret, inspiring the novel design of our HT-RL.

\label{sec:geometry}

We measure non-stationarity in a value-relevant scale by integrating (i) the instantaneous drift and (ii) its acceleration along the homotopy parameter, while explicitly accounting for non-smooth maximizer switches.

\begin{definition}[Path length and curvature]
\label{def:pl-curv}
Fix a scale $L_s>0$ that converts $\Wone^\ast$ changes into value scale (e.g., $L_s=\sup_\tau \|V^\star_\tau\|_{\mathrm{Lip}}\le C_{\mathrm{mix}}$). Define
$\mathrm{PL}
=\int_0^1 \Big(\|\partial_\tau r_\tau\|_\infty
+ L_s\sup_{s,a}\|\partial_\tau P_\tau(\cdot\mid s,a)\|_{W_1^\ast}\Big)\,d\tau,\mathrm{Curv}
=\int_0^1 \Big(\|\partial_{\tau\tau} r_\tau\|_\infty
+ L_s\sup_{s,a}\|\partial_{\tau\tau} P_\tau(\cdot\mid s,a)\|_{W_1^\ast}\Big)\,d\tau.$
For an interval $[\tau_0,\tau_1]\subseteq[0,1]$ we similarly define $\PL(\tau_0,\tau_1)$ and $\Curv(\tau_0,\tau_1)$ by restricting the integrals to $[\tau_0,\tau_1]$.
\end{definition}
where $\PL$ captures cumulative drift while $\Curv$ captures how quickly the drift rate varies.
In particular, under linear interpolation $r_\tau=(1-\tau)r_0+\tau r_1$ and
$P_\tau=(1-\tau)P_0+\tau P_1$, we have $\Curv=0$.

\begin{definition}[Global gap and kink set]
\label{def:global-gap}
For each $\tau$, define the global action gap
$
g_\tau := \inf_{s\in\mathcal S}\ \Big( Q^\star_\tau(s,a^\star_\tau(s)) - \max_{a\neq a^\star_\tau(s)} Q^\star_\tau(s,a) \Big)\in[0,\infty),    
$
where $a^\star_\tau(s)\in\arg\max_a Q^\star_\tau(s,a)$. Fix a margin $\xi>0$ and define the regular region $R:=\{\tau\in[0,1]: g_\tau\ge \xi\}$. Define the kink set $K:=\{\tau\in[0,1]: g_\tau=0\}$, noting that $K\subseteq [0,1]\setminus R$.
\end{definition}

On $\mathcal R$ the optimal action is uniformly separated from its competitors; on $\mathcal K$ the maximizer can change and the optimal Bellman operator loses differentiability in $\tau$. The optimal operator is thus not differentiable at kinks; we integrate this burden via an inverse-gap penalty.
In continuous or very large $\mathcal{S}$, the infimum may be driven to zero by states irrelevant to the objective. All results remain unchanged if one replaces $\mathcal{S}$ by any effective subset (e.g., the support of an occupancy measure along the path), so $g_\tau$ measures action separation on the states that matter.

\begin{definition}[Kink penalty]
\label{def:kink}
Fix $\delta>0$ small. For each isolated kink point $\tau_i\in\mathcal K$ choose $\epsilon>0$ small enough that the intervals $(\tau_i-\epsilon,\tau_i+\epsilon)$ are disjoint. Define
$
\Phi(\mathcal K,\mathrm{gap})
=\sum_{\tau_i\in\mathcal K}\int_{\tau_i-\epsilon}^{\tau_i+\epsilon}\frac{d\tau}{\max\{g_\tau,\delta\}}.
$
For an interval $[\tau_0,\tau_1]$ we write $\Phi(\mathcal K\cap[\tau_0,\tau_1],\mathrm{gap})$ for the same expression restricted to kinks in $[\tau_0,\tau_1]$.
\end{definition}

The penalty $\Phi$ remains finite for isolated switches and increases when the gap stays small over a wider neighborhood.
Additional interpretation and scheduler-facing heuristics are deferred to Appendix~\ref{app:path-interpretation}.

\section{Analyzing the Induced Motion of Optimal Bellman Fixed Points}
This section turns the geometric metrics from Sec.~\ref{sec:geometry} into explicit bounds on the motion of the optimal fixed point $Q^\star_\tau$ along the homotopy path.
On the regular region $\mathcal R_\xi$ (Def.~\ref{def:global-gap}), the greedy maximizer is locally constant (Lemma~\ref{lem:envelope}), so we can differentiate through the Bellman fixed point and control $\|\tfrac{d}{d\tau}Q^\star_\tau\|_\infty$ and $\|\tfrac{d^2}{d\tau^2}Q^\star_\tau\|_\infty$ by the same local speeds that define $\PL$ and $\Curv$ (Def.~\ref{def:pl-curv}).
Integrating these bounds yields the $\PL/\Curv$ contributions, while non-differentiable neighborhoods around action switches are charged by the inverse-gap penalty $\Phi$ (Def.~\ref{def:kink}), culminating in Theorem~\ref{thm:path-value}.

On the regular region $\mathcal R_\xi$, the optimal action is locally unique, so the maximizer is locally constant and derivatives pass through the Bellman fixed point. Integrating the resulting derivative bounds yields the $\PL/\Curv$ contributions, while kink neighborhoods are charged by the inverse-gap penalty~$\Phi$.

\label{sec:operator}
For a policy $\pi$, define the one-step evaluation operator and its resolvent
\begin{equation}
\begin{aligned}
\label{eq:resolvent-def}
&(\mathcal P^{\pi}_\tau V)(s,a)
:=\E_{s'\sim P_\tau(\cdot\mid s,a)}\!\Big[V\big(s',\pi(s')\big)\Big],
\\
&\mathcal R^{\pi}_\tau
:=(I-\gamma\,\mathcal P^{\pi}_\tau)^{-1}
=\sum_{k\ge0}\gamma^k(\mathcal P^{\pi}_\tau)^k,
\end{aligned}
\end{equation}
so that $\|\mathcal R^{\pi}_\tau\|_{\infty\to\infty}\le (1-\gamma)^{-1}$.

\begin{lemma}[Envelope property on the regular region]
\label{lem:envelope}
If $\tau\in\mathcal R$ then there exists a neighborhood $U$ of $\tau$ such that $\argmax_{a'}Q^\star_{\tilde\tau}(s,a')$ is single-valued for all $s$ and all $\tilde\tau\in U$; in particular $\pi^\star_{\tilde\tau}=\pi^\star_\tau$ on $U$. Consequently $Q^\star_{\tilde\tau}$ is differentiable in $\tilde\tau\in U$ and the derivative passes through the Bellman equation.
\end{lemma}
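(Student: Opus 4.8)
The plan is to lift the assumed $\tau$-regularity of the data to sup-norm continuity of $\tau\mapsto Q^\star_\tau$, then exploit the uniform action gap on $\mathcal R$ to freeze the greedy maximizer on a neighborhood, and finally differentiate the resulting \emph{linear} policy Bellman fixed point through its resolvent. First I would show that $\tau\mapsto Q^\star_\tau$ is continuous (indeed locally Lipschitz) in $\infnorm{\cdot}$: since $Q^\star_\tau$ and $Q^\star_{\tilde\tau}$ are the fixed points of the $\gamma$-contractions $\Tcal_\tau,\Tcal_{\tilde\tau}$, the two-operator estimate gives $\infnorm{Q^\star_{\tilde\tau}-Q^\star_\tau}\le(1-\gamma)^{-1}\infnorm{\Tcal_{\tilde\tau}Q^\star_\tau-\Tcal_\tau Q^\star_\tau}$, and $(\Tcal_{\tilde\tau}Q^\star_\tau-\Tcal_\tau Q^\star_\tau)(s,a)=\big(r_{\tilde\tau}(s,a)-r_\tau(s,a)\big)+\gamma\!\int V^\star_\tau\,d(P_{\tilde\tau}-P_\tau)(\cdot\mid s,a)$ with $V^\star_\tau=\max_{a'}Q^\star_\tau(\cdot,a')$. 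By Assumption~\ref{ass:mixing} (and $\infnorm{V^\star_\tau}\le\infnorm{r}/(1-\gamma)$), $V^\star_\tau$ is a $\tau$-uniform scalar multiple of an element of $\Lip_1(\Sspace)$, so the transition term is controlled by $\sup_{s,a}\|P_{\tilde\tau}(\cdot\mid s,a)-P_\tau(\cdot\mid s,a)\|_{W_1^\ast}$; both terms are $O(|\tilde\tau-\tau|)\to0$ by the $\tau$-differentiability of the data in Definition~\ref{def:dual-derivative}.

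Next, fix $\tau\in\mathcal R$, so $g_\tau\ge\xi>0$ by Definition~\ref{def:global-gap} and, for each $s$, $a^\star_\tau(s)$ is the \emph{unique} greedy action. Whenever $\infnorm{Q^\star_{\tilde\tau}-Q^\star_\tau}<\xi/3$ we get, for every $s$ and every $a\neq a^\star_\tau(s)$, $Q^\star_{\tilde\tau}(s,a^\star_\tau(s))-Q^\star_{\tilde\tau}(s,a)\ge g_\tau-2\infnorm{Q^\star_{\tilde\tau}-Q^\star_\tau}>0$, so $a^\star_\tau(s)$ stays the unique maximizer at $\tilde\tau$, uniformly in $s$. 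By the first step the set $U:=\{\tilde\tau\in[0,1]:\infnorm{Q^\star_{\tilde\tau}-Q^\star_\tau}<\xi/3\}$ is a neighborhood of $\tau$, and on it the greedy policy is constant, $\pi^\star_{\tilde\tau}=\pi^\star_\tau=:\pi$; hence on $U$ the operator $\Tcal$ acts on $Q^\star_{\tilde\tau}$ as the \emph{linear} operator $\Tcal^\pi_{\tilde\tau}$, so $Q^\star_{\tilde\tau}=\mathcal R^\pi_{\tilde\tau}r_{\tilde\tau}$ with $\mathcal R^\pi_{\tilde\tau}=(I-\gamma\mathcal P^\pi_{\tilde\tau})^{-1}$ as in~\eqref{eq:resolvent-def}.

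For differentiability I would subtract the fixed-point relations at $\tau+h$ and at $\tau$ on $U$ and solve for the increment to obtain the difference-quotient identity
\begin{equation*}
\frac{Q^\star_{\tau+h}-Q^\star_\tau}{h}
=\mathcal R^\pi_{\tau+h}\!\left[\frac{r_{\tau+h}-r_\tau}{h}
+\gamma\,\frac{(\mathcal P^\pi_{\tau+h}-\mathcal P^\pi_\tau)Q^\star_\tau}{h}\right],
\end{equation*}
whose point is that the perturbed value enters only through $\mathcal R^\pi_{\tau+h}$ while the bracket is built from the \emph{fixed} function $Q^\star_\tau$; and because $\pi$ is greedy at $\tau$, $(\mathcal P^\pi_{\tau+h}-\mathcal P^\pi_\tau)Q^\star_\tau(s,a)=\int V^\star_\tau\,d(P_{\tau+h}-P_\tau)(\cdot\mid s,a)$ tests the \emph{fixed} bounded-Lipschitz probe $V^\star_\tau$. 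Letting $h\to0$, Definition~\ref{def:dual-derivative} makes the bracket converge in $\infnorm{\cdot}$ to $\partial_\tau r_\tau+\gamma\!\int V^\star_\tau\,d(\partial_\tau P_\tau(\cdot\mid s,a))$; together with $\|\mathcal R^\pi_{\tilde\tau}\|_{\infty\to\infty}\le(1-\gamma)^{-1}$ and the resolvent identity $\mathcal R^\pi_{\tau+h}-\mathcal R^\pi_\tau=\gamma\mathcal R^\pi_{\tau+h}(\mathcal P^\pi_{\tau+h}-\mathcal P^\pi_\tau)\mathcal R^\pi_\tau$ this yields $\tfrac{d}{d\tau}Q^\star_\tau=\mathcal R^\pi_\tau\big(\partial_\tau r_\tau+\gamma\,\partial_\tau P_\tau[V^\star_\tau]\big)$ --- exactly the Bellman equation differentiated in $\tau$ with the maximizer frozen, which is the envelope identity of the statement (the same computation with second differences and the second-order data of Definition~\ref{def:dual-derivative} handling $\tfrac{d^2}{d\tau^2}Q^\star_\tau$).

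I expect the main obstacle to be precisely this last limit: $\tilde\tau\mapsto\mathcal P^\pi_{\tilde\tau}$ (hence $\mathcal R^\pi_{\tilde\tau}$) is differentiable only ``as viewed by Lipschitz tests,'' not in operator norm on all bounded functions, so one must verify that every function pushed through a resolvent above remains in a fixed bounded-Lipschitz ball. This is exactly what the uniform Lipschitz/mixing scale supplies (Assumption~\ref{ass:mixing}, with the sufficient conditions of Lemma~\ref{lem:mixing-sufficient-main}); absent such control the cross term $\int(V^\star_{\tilde\tau}-V^\star_\tau)\,d(P_{\tilde\tau}-P_\tau)$ produced by a naive expansion need not be $o(|\tilde\tau-\tau|)$, so the regularity assumptions of Section~\ref{sec:prelim} are doing real work here.
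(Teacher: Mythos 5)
Your proposal is correct, but it takes a genuinely different route from the paper's own proof, and in at least one respect a stronger one. The paper's argument simply \emph{assumes} that $\tilde\tau\mapsto Q^\star_{\tilde\tau}(s,a)$ is continuous for each fixed $(s,a)$, propagates the gap state-by-state, and then takes the minimum of the finitely many radii $\varepsilon_{s,a}$ --- explicitly invoking finiteness of $\mathcal S\times\mathcal A$ --- before writing the frozen-policy fixed point as a finite-dimensional linear system $q(\tilde\tau)=(I-\gamma P^{\pi^\star}(\tilde\tau))^{-1}r(\tilde\tau)$ and appealing to matrix calculus for differentiability. You instead \emph{derive} local Lipschitz continuity of $\tau\mapsto Q^\star_\tau$ in sup norm from the two-fixed-point perturbation bound $\|Q^\star_{\tilde\tau}-Q^\star_\tau\|_\infty\le(1-\gamma)^{-1}\|\Tcal_{\tilde\tau}Q^\star_\tau-\Tcal_\tau Q^\star_\tau\|_\infty$ together with Assumption~\ref{ass:mixing}, which gives a single uniform neighborhood via the $2\varepsilon$ gap-decay estimate (the same mechanism as Lemma~\ref{lem:final-gap}) without any finiteness hypothesis; and you differentiate through the resolvent identity in the operator setting, which matches the generality of the paper's metric-state-space framework. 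Your closing remark correctly isolates the one technical point both arguments must confront in the infinite-state case --- that $\partial_\tau P_\tau$ is a derivative only against Lipschitz probes, so the cross term $\int(V^\star_{\tilde\tau}-V^\star_\tau)\,d(P_{\tilde\tau}-P_\tau)$ and the functions passed through resolvents must stay in a fixed bounded-Lipschitz ball --- whereas the paper sidesteps it by specializing to finite MDPs. The only small gaps on your side are uniformity over $(s,a)$ of the difference-quotient convergence in Definition~\ref{def:dual-derivative} (pointwise as stated, so sup-norm convergence of the bracket needs an extra uniformity hypothesis or the finite-state reduction) and a normalization remark that $V^\star_\tau$ must be rescaled by $\max\{C_{\mathrm{mix}},\|r\|_\infty/(1-\gamma)\}$ to land in $\Lip_1(\Sspace)$ as defined; neither affects the correctness of the argument in the regime the paper actually proves the lemma.
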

Lemma~\ref{lem:envelope} formalizes that away from ties the optimal action does not switch under small perturbations of $\tau$, so we can treat the maximizer as locally fixed and apply standard differentiation to the Bellman fixed point.

\begin{lemma}[First-order homotopy derivative]
\label{lem:first-derivative}
If $\tau\in\mathcal R$ and Assumption~\ref{ass:mixing} holds, then
$
\frac{d}{d\tau}Q^\star_\tau
=\mathcal R^{\pi^\star_\tau}_\tau\!\left(\partial_\tau r_\tau
+\gamma\,\Delta_\tau\right),
$
where $\Delta_\tau$ is the function on $(s,a)$ given by
$
\Delta_\tau(s,a)
:= \int_{\Sspace} V^\star_\tau(s')\,d\big(\partial_\tau P_\tau(\cdot\mid s,a)\big)(s').
$
Therefore
\begin{equation}
\label{eq:first-derivative-bound}
\begin{aligned}
\Big\|\tfrac{d}{d\tau}Q^\star_\tau\Big\|_\infty
&\le \frac{1}{1-\gamma}\,\|\partial_\tau r_\tau\|_\infty\\
&+\frac{\gamma\,C_{\mathrm{mix}}}{(1-\gamma)^2}\,\sup_{s,a}\|\partial_\tau P_\tau(\cdot\mid s,a)\|_{W_1^\ast}.
\end{aligned}
\end{equation}
\end{lemma}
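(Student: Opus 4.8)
The plan is to differentiate the \emph{policy}-Bellman fixed-point equation directly, using that on $\mathcal R$ the greedy maximizer is locally frozen. By Lemma~\ref{lem:envelope} there is a neighborhood $U\ni\tau$ on which $\pi^\star_{\tilde\tau}\equiv\pi:=\pi^\star_\tau$, so $Q^\star_{\tilde\tau}$ solves the \emph{linear} equation $Q^\star_{\tilde\tau}=r_{\tilde\tau}+\gamma\,\mathcal P^{\pi}_{\tilde\tau}Q^\star_{\tilde\tau}$ with $\tilde\tau$-differentiable data, and Lemma~\ref{lem:envelope} already supplies differentiability of $Q^\star_{\tilde\tau}$; write $\dot Q:=\tfrac{d}{d\tau}Q^\star_\tau$. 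First I would record the estimate $\|Q^\star_{\tilde\tau}-Q^\star_\tau\|_\infty\le\tfrac{1}{1-\gamma}\|\mathcal T_{\tilde\tau}Q^\star_\tau-\mathcal T_\tau Q^\star_\tau\|_\infty$, which follows from the $\gamma$-contraction of $\mathcal T_{\tilde\tau}$, and bound its right-hand side by $\tfrac{1}{1-\gamma}\big(\|r_{\tilde\tau}-r_\tau\|_\infty+\gamma C_{\mathrm{mix}}\sup_{s,a}\Wone(P_{\tilde\tau}(\cdot\mid s,a),P_\tau(\cdot\mid s,a))\big)$ using Assumption~\ref{ass:mixing} to convert the kernel displacement to value scale; with Definition~\ref{def:dual-derivative} this is $O(|\tilde\tau-\tau|)$, which licenses the limit interchanges below.

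Next I would differentiate the fixed-point equation pointwise at a fixed $(s,a)$. Writing the difference quotient of the transition term $\int Q^\star_{\tilde\tau}(s',\pi(s'))\,dP_{\tilde\tau}(\cdot\mid s,a)$ as an ``integrand channel'' $\int \tfrac{Q^\star_{\tilde\tau}-Q^\star_\tau}{\tilde\tau-\tau}(s',\pi(s'))\,dP_{\tilde\tau}(\cdot\mid s,a)$ plus a ``measure channel'' $\tfrac{1}{\tilde\tau-\tau}\int Q^\star_\tau(s',\pi(s'))\,d(P_{\tilde\tau}-P_\tau)(\cdot\mid s,a)$, the integrand channel converges to $(\mathcal P^{\pi}_\tau\dot Q)(s,a)$ by dominated convergence (using the uniform $O(|\tilde\tau-\tau|)$ bound above together with pointwise differentiability of $Q^\star_{\tilde\tau}$), and the measure channel converges to $\Delta_\tau(s,a)=\int V^\star_\tau\,d(\partial_\tau P_\tau(\cdot\mid s,a))$ by Definition~\ref{def:dual-derivative} applied to $f=V^\star_\tau$ --- legitimate because Assumption~\ref{ass:mixing} makes $V^\star_\tau$ Lipschitz and, since $\partial_\tau P_\tau(\cdot\mid s,a)$ has zero total mass, $V^\star_\tau$ may be recentered to land in a fixed multiple of $\mathrm{Lip}_1(\mathcal S)$. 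Adding the reward contribution $\partial_\tau r_\tau(s,a)$ and collecting over all $(s,a)$ yields the operator identity $(I-\gamma\,\mathcal P^{\pi}_\tau)\dot Q=\partial_\tau r_\tau+\gamma\,\Delta_\tau$; inverting with the resolvent $\mathcal R^{\pi}_\tau$ of \eqref{eq:resolvent-def} (bounded since $\gamma<1$) gives $\dot Q=\mathcal R^{\pi^\star_\tau}_\tau(\partial_\tau r_\tau+\gamma\Delta_\tau)$, the asserted formula.

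For \eqref{eq:first-derivative-bound} I would combine $\|\mathcal R^{\pi}_\tau\|_{\infty\to\infty}\le(1-\gamma)^{-1}$ with the pointwise estimate $|\Delta_\tau(s,a)|\le C_{\mathrm{mix}}\,\|\partial_\tau P_\tau(\cdot\mid s,a)\|_{W_1^\ast}$ (Assumption~\ref{ass:mixing} and the dual-norm definition, using the recentering just noted), so that by submultiplicativity and the triangle inequality $\|\dot Q\|_\infty\le(1-\gamma)^{-1}\big(\|\partial_\tau r_\tau\|_\infty+\gamma C_{\mathrm{mix}}\sup_{s,a}\|\partial_\tau P_\tau(\cdot\mid s,a)\|_{W_1^\ast}\big)$, which implies the stated bound.

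The step I expect to be the main obstacle is the integrand-channel limit: both the integrating measure $P_{\tilde\tau}$ and the integrand $Q^\star_{\tilde\tau}(\cdot,\pi(\cdot))$ move with $\tilde\tau$, so one cannot simply invoke the Banach-space implicit function theorem on $F(\tilde\tau,Q)=Q-r_{\tilde\tau}-\gamma\mathcal P^{\pi}_{\tilde\tau}Q$, because $\partial_\tau P_\tau$ is defined only through Lipschitz probes and need not yield a bounded derivative of $\mathcal P^{\pi}_{\tilde\tau}$ on all of $\ell^\infty(\mathcal S\times\mathcal A)$. Differentiating pointwise circumvents this, since the only integrand ever tested against $\partial_\tau P_\tau$ is $V^\star_\tau$, which is Lipschitz; the residual care is the dominated-convergence bound, which is why the uniform Lipschitz-in-$\tau$ estimate on $Q^\star_{\tilde\tau}$ is established first. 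A secondary nuisance is reconciling the $\mathrm{Lip}_1$ normalization ($\|f\|_\infty\le1$, not only $\|f\|_{\mathrm{Lip}}\le1$) with $\|V^\star_\tau\|_{\mathrm{Lip}}\le C_{\mathrm{mix}}$, which the zero-mass recentering of $V^\star_\tau$ handles.
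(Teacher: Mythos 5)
Your proposal is correct and follows essentially the same route as the paper's proof: freeze the maximizer via Lemma~\ref{lem:envelope}, differentiate the fixed-policy Bellman equation to obtain $(I-\gamma\,\mathcal P^{\pi^\star_\tau}_\tau)\dot Q=\partial_\tau r_\tau+\gamma\Delta_\tau$, invert with the resolvent, and bound $\Delta_\tau$ via the Lipschitz/dual-norm pairing for the zero-mass measure $\partial_\tau P_\tau$ (including the recentering step and the final relaxation of $(1-\gamma)^{-1}$ to $(1-\gamma)^{-2}$). Your two-channel difference-quotient argument with the preliminary $O(|\tilde\tau-\tau|)$ estimate is in fact a more careful justification of the limit interchange than the paper's appeal to the Leibniz rule, but it is not a different approach.
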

Lemma~\ref{lem:first-derivative} shows that the instantaneous drift of $Q^\star_\tau$ is controlled by the same local speeds $(\partial_\tau r_\tau,\partial_\tau P_\tau)$ that define the path length $\PL$.

\begin{lemma}[Second-order homotopy derivative]
\label{lem:second-derivative}
If $\tau\in\mathcal R$ and $\partial_{\tau\tau}r_\tau$, $\partial_{\tau\tau}P_\tau$ exist with finite $\Wone^\ast$ norms, then
\begin{equation}
\label{eq:second-derivative-bound}
\begin{aligned}
\frac{d^2}{d\tau^2}Q^\star_\tau
&=\mathcal R^{\pi^\star_\tau}_\tau\!\left(\partial_{\tau\tau} r_\tau
+\gamma\,\Delta_{\tau\tau}\right)\\
&+\frac{c}{(1-\gamma)^3}\Big[\|\partial_\tau r_\tau\|_\infty
+L_s\!\sup_{s,a}\|\partial_\tau P_\tau(\cdot\mid s,a)\|_{W_1^\ast}\\
&+\ \|\partial_{\tau\tau} r_\tau\|_\infty
+L_s\!\sup_{s,a}\|\partial_{\tau\tau} P_\tau(\cdot\mid s,a)\|_{W_1^\ast}\Big],
\end{aligned}
\end{equation}
for some constant $c$ depending only on $C_{\mathrm{mix}}$ and uniform first/second-order bounds, and where
$
\Delta_{\tau\tau}(s,a)
:= \int_{\Sspace} V^\star_\tau(s')\,d\big(\partial_{\tau\tau} P_\tau(\cdot\mid s,a)\big)(s').
$
\end{lemma}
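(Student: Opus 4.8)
The plan is to reduce $Q^\star_\tau$, on a neighborhood where the greedy policy is frozen, to the solution of a \emph{linear} $\tau$-parametrized fixed-point equation, differentiate that identity twice, read off the two terms in the statement, and bound the single second-order residual in $\|\cdot\|_\infty$ (which is how the trailing $\tfrac{c}{(1-\gamma)^3}[\cdots]$ should be read: as a sup-norm bound on $E_\tau:=\frac{d^2}{d\tau^2}Q^\star_\tau-\mathcal R^{\pi^\star_\tau}_\tau(\partial_{\tau\tau}r_\tau+\gamma\Delta_{\tau\tau})$).

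By \cref{lem:envelope}, for $\tau\in\mathcal R$ there is a neighborhood $U\ni\tau$ on which $\pi^\star_{\tilde\tau}\equiv\pi:=\pi^\star_\tau$, so $Q^\star_{\tilde\tau}=r_{\tilde\tau}+\gamma\,\mathcal P^{\pi}_{\tilde\tau}Q^\star_{\tilde\tau}$ for $\tilde\tau\in U$, and $\mathcal R^{\pi}_{\tilde\tau}=(I-\gamma\mathcal P^{\pi}_{\tilde\tau})^{-1}$ has $\|\cdot\|_{\infty\to\infty}\le(1-\gamma)^{-1}$. On $U$ all $\tilde\tau$-dependence lives in the coefficients $(r_{\tilde\tau},P_{\tilde\tau})$, twice weakly differentiable by \cref{def:dual-derivative}, with $((\partial_\tau\mathcal P^{\pi}_\tau)V)(s,a)=\int V(s',\pi(s'))\,d(\partial_\tau P_\tau(\cdot\mid s,a))$ and likewise for $\partial_{\tau\tau}\mathcal P^{\pi}_\tau$. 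Differentiating the fixed-point identity once and using $Q^\star_\tau(s',\pi(s'))=V^\star_\tau(s')$ gives $(I-\gamma\mathcal P^{\pi}_\tau)\tfrac{d}{d\tau}Q^\star_\tau=\partial_\tau r_\tau+\gamma\Delta_\tau$, which is \cref{lem:first-derivative}. Differentiating once more (the product rule on $\mathcal P^{\pi}_\tau Q^\star_\tau$ produces the factor $2$ on the cross term) and using $(\partial_{\tau\tau}\mathcal P^{\pi}_\tau)Q^\star_\tau=\Delta_{\tau\tau}$ yields
\[
\begin{aligned}
(I-\gamma\mathcal P^{\pi}_\tau)\tfrac{d^2}{d\tau^2}Q^\star_\tau
&=\partial_{\tau\tau}r_\tau+\gamma\,\Delta_{\tau\tau}\\
&\quad+2\gamma\,(\partial_\tau\mathcal P^{\pi}_\tau)\tfrac{d}{d\tau}Q^\star_\tau.
\end{aligned}
\]
Applying $\mathcal R^{\pi}_\tau$ gives the claimed main term plus $E_\tau=2\gamma\,\mathcal R^{\pi}_\tau(\partial_\tau\mathcal P^{\pi}_\tau)\tfrac{d}{d\tau}Q^\star_\tau$.

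For the residual, set $W_\tau(s'):=(\tfrac{d}{d\tau}Q^\star_\tau)(s',\pi(s'))=\tfrac{d}{d\tau}V^\star_\tau(s')$, so $((\partial_\tau\mathcal P^{\pi}_\tau)\tfrac{d}{d\tau}Q^\star_\tau)(s,a)=\int W_\tau\,d(\partial_\tau P_\tau(\cdot\mid s,a))$. Since $\mathrm{Lip}_1(\mathcal S)$ constrains \emph{both} $\|f\|_{\mathrm{Lip}}\le1$ and $\|f\|_\infty\le1$, one has $|\int g\,d\mu|\le\max\{\|g\|_\infty,\|g\|_{\mathrm{Lip}}\}\,\|\mu\|_{W_1^\ast}$, hence $\|E_\tau\|_\infty\le\tfrac{2\gamma}{1-\gamma}\max\{\|W_\tau\|_\infty,\|W_\tau\|_{\mathrm{Lip}}\}\sup_{s,a}\|\partial_\tau P_\tau(\cdot\mid s,a)\|_{W_1^\ast}$. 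The sup-norm factor is bounded by \cref{lem:first-derivative}, whose bound already carries $(1-\gamma)^{-2}$ and is dominated by the first-order part of the bracket; multiplying by the remaining $\tfrac{2\gamma}{1-\gamma}\sup\|\partial_\tau P_\tau\|_{W_1^\ast}$ produces the $(1-\gamma)^{-3}$ scaling. For $\|W_\tau\|_{\mathrm{Lip}}$ one uses the resolvent form $W_\tau(s')=(\mathcal R^{\pi}_\tau(\partial_\tau r_\tau+\gamma\Delta_\tau))(s',\pi(s'))$ together with uniform spatial-Lipschitz regularity of $r_\tau,P_\tau$ and their $\tau$-derivatives (the same kind of regularity hypothesis behind \cref{lem:mixing-sufficient-main}), propagated through $\mathcal R^{\pi}_\tau$; this, together with the conversion $L_s\le C_{\mathrm{mix}}$ (\cref{def:pl-curv}), is what the constant $c$ absorbs. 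Adding the nonnegative terms $\|\partial_{\tau\tau}r_\tau\|_\infty$ and $L_s\sup\|\partial_{\tau\tau}P_\tau\|_{W_1^\ast}$, which only enlarge the right-hand side, yields the stated bracket.

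The main obstacle is exactly this last estimate: $E_\tau$ pairs the $W_1^\ast$-scale kernel drift against $W_\tau=\tfrac{d}{d\tau}V^\star_\tau$, which \cref{lem:first-derivative} controls only in $\|\cdot\|_\infty$, whereas testing against the signed measure $\partial_\tau P_\tau$ in $W_1^\ast$ also needs its spatial Lipschitz modulus. One resolves this either by (i) including uniform Lipschitz-in-$s$ regularity of $\partial_\tau r_\tau,\partial_\tau P_\tau$ in the standing hypotheses---consistent with the lemma's ``uniform first/second-order bounds''---and propagating it through the resolvent as in \cref{lem:mixing-sufficient-main}, or (ii) replacing $W_1^\ast$ by total variation for that single term, after which only $\|W_\tau\|_\infty$ is needed. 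A minor bookkeeping point is justifying differentiation under the integral against $\partial_\tau P_\tau,\partial_{\tau\tau}P_\tau$ and twice norm-differentiability of $\tau\mapsto\mathcal P^{\pi}_\tau$ on $U$; both are immediate once the policy is frozen on $U$, so all regularity is inherited directly from \cref{def:dual-derivative}.
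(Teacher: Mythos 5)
Your proposal follows essentially the same route as the paper's proof: freeze the greedy policy via Lemma~\ref{lem:envelope}, differentiate the linear fixed-point identity twice to obtain $(I-\gamma\mathcal P^{\pi}_\tau)\ddot Q_\tau=\partial_{\tau\tau}r_\tau+\gamma\Delta_{\tau\tau}+2\gamma(\partial_\tau\mathcal P^{\pi}_\tau)\dot Q_\tau$, apply the resolvent, and bound the coupling term by pairing $\partial_\tau V^\star_\tau$ against $\|\partial_\tau P_\tau\|_{W_1^\ast}$ before absorbing constants into $c/(1-\gamma)^3$. Your explicit flagging of the need for a Lipschitz (not merely sup-norm) bound on $\tfrac{d}{d\tau}V^\star_\tau$ is exactly the step the paper handles by invoking its assumed ``uniform first/second-order bounds'' ($\|\partial_\tau V^\star_\tau\|_{\mathrm{Lip}}\le L_s$), so the two arguments coincide.
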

Lemma~\ref{lem:second-derivative} upper-bounds how fast the drift rate itself changes (an ``acceleration'' term), which is precisely what the curvature $\Curv$ integrates along the path.

Integrating \eqref{eq:first-derivative-bound} and \eqref{eq:second-derivative-bound} over regular sub-intervals yields the $\PL$ and $\Curv$ contributions, while kink neighborhoods are controlled by $\Phi$.

\begin{theorem}[Path integral value bound]
\label{thm:path-value}
Under Assumption~\ref{ass:mixing}, for any $0\le \tau_0<\tau_1\le 1$,
$\|Q^\star_{\tau_1}-Q^\star_{\tau_0}\|_\infty
\ \le\ \frac{\mathrm{PL}(\tau_0,\tau_1)}{(1-\gamma)^2}
\ +\ \frac{\mathrm{Curv}(\tau_0,\tau_1)}{(1-\gamma)^3}\ +\ \Phi\big(\mathcal K\cap[\tau_0,\tau_1],\,\mathrm{gap}\big).$
\end{theorem}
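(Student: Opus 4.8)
The plan is to split $[\tau_0,\tau_1]$ along the isolated kink points of $\mathcal K$ into \emph{regular} closed sub-intervals — on which $\tau\mapsto Q^\star_\tau$ is $C^1$ (indeed $C^{1,1}$) and the derivative bounds of Lemmas~\ref{lem:first-derivative}--\ref{lem:second-derivative} apply — and small \emph{kink neighborhoods} around each element of $\mathcal K$, whose contribution is charged to $\Phi$. Concretely, let $\mathcal K\cap[\tau_0,\tau_1]=\{\sigma_1<\dots<\sigma_m\}$ (finite, by isolatedness) and fix $\epsilon>0$ small enough that the intervals $J_i:=(\sigma_i-\epsilon,\sigma_i+\epsilon)$ are pairwise disjoint and lie in $[\tau_0,\tau_1]$, as in Def.~\ref{def:kink}; the complement $[\tau_0,\tau_1]\setminus\bigcup_i J_i$ is then a finite union of closed intervals $I_j=[\alpha_j,\beta_j]$ with $\mathcal K\cap I_j=\emptyset$. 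Since $\Tcal_\tau$ is a $\gamma$-contraction depending continuously on $\tau$ (Def.~\ref{def:dual-derivative} makes $r_\tau$ and the Lipschitz-tested expectations of $P_\tau$ continuous, and the fixed point depends continuously on the operator), $\tau\mapsto Q^\star_\tau$ is continuous in $\|\cdot\|_\infty$, so the triangle inequality gives
\[
\|Q^\star_{\tau_1}-Q^\star_{\tau_0}\|_\infty
\ \le\ \sum_j \big\|Q^\star_{\beta_j}-Q^\star_{\alpha_j}\big\|_\infty
\ +\ \sum_i \big\|Q^\star_{\sigma_i+\epsilon}-Q^\star_{\sigma_i-\epsilon}\big\|_\infty .
\]

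\textbf{Regular pieces.} Every $\tau\in I_j$ has $g_\tau>0$, so Lemma~\ref{lem:envelope} freezes the greedy maximizer on a neighborhood and Lemma~\ref{lem:first-derivative} gives differentiability with $\|\tfrac d{d\tau}Q^\star_\tau\|_\infty$ bounded by \eqref{eq:first-derivative-bound}, while Lemma~\ref{lem:second-derivative} further makes $\tfrac d{d\tau}Q^\star_\tau$ Lipschitz on $I_j$; hence the fundamental theorem of calculus applies and $\|Q^\star_{\beta_j}-Q^\star_{\alpha_j}\|_\infty\le\int_{I_j}\|\tfrac d{d\tau}Q^\star_\tau\|_\infty\,d\tau$. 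Upper-bounding the integrand by \eqref{eq:first-derivative-bound} and using $\gamma<1$ (and, if $L_s<C_{\mathrm{mix}}$, re-running Lemma~\ref{lem:first-derivative} with the Lipschitz scale $\|V^\star_\tau\|_{\Lip}\le L_s$ in place of $C_{\mathrm{mix}}$) to fold both terms under the common denominator $(1-\gamma)^2$ identifies the integrand with the $\PL$-density divided by $(1-\gamma)^2$; summing over the disjoint $I_j\subseteq[\tau_0,\tau_1]$ and using additivity and nonnegativity of that density yields $\sum_j\|Q^\star_{\beta_j}-Q^\star_{\alpha_j}\|_\infty\le \PL(\tau_0,\tau_1)/(1-\gamma)^2$. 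Since $\Curv(\tau_0,\tau_1)\ge0$ and \eqref{eq:second-derivative-bound} is exactly the acceleration control it integrates (as would appear if one instead routed the regular-region estimate through a second-order remainder), the $\Curv(\tau_0,\tau_1)/(1-\gamma)^3$ term may be added to the right-hand side.

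\textbf{Kink neighborhoods.} On $[\sigma_i-\epsilon,\sigma_i+\epsilon]$, $Q^\star_\tau$ is continuous and differentiable off $\sigma_i$ (where $g_\tau>0$), so FTC on each side of $\sigma_i$ together with continuity at $\sigma_i$ gives $\|Q^\star_{\sigma_i+\epsilon}-Q^\star_{\sigma_i-\epsilon}\|_\infty\le\int_{J_i}\|\tfrac d{d\tau}Q^\star_\tau\|_\infty\,d\tau$, which for $\delta$ small — calibrated, as in Def.~\ref{def:kink}, so that $1/\delta$ exceeds the uniform drift-rate bound from \eqref{eq:first-derivative-bound} — is at most $\int_{J_i}d\tau/\max\{g_\tau,\delta\}$; summing gives $\sum_i\|Q^\star_{\sigma_i+\epsilon}-Q^\star_{\sigma_i-\epsilon}\|_\infty\le\Phi(\mathcal K\cap[\tau_0,\tau_1],\mathrm{gap})$. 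Combining the three bounds gives the claim, and the result is insensitive to the auxiliary $\epsilon$ since shrinking $\epsilon$ merely transfers mass between the $I_j$ and $J_i$ terms, each estimated the same way. (In fact the first-order estimate alone, glued across the finitely many kinks by continuity, already yields the $\PL$ term over all of $[\tau_0,\tau_1]$; the nonnegative, path-observable $\Curv$ and $\Phi$ terms only tighten and interpret the certificate, and the decomposition above is the route that exposes their individual roles.)

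\textbf{Main obstacle.} The delicate part is the treatment of the non-differentiable points: one must (i) confirm $\tau\mapsto Q^\star_\tau$ is continuous in $\|\cdot\|_\infty$ so the piecewise FTC estimates glue across the finitely many kinks with no residual jump; (ii) justify invoking \eqref{eq:first-derivative-bound}--\eqref{eq:second-derivative-bound}, proved for $\tau\in\mathcal R_\xi$ with a fixed margin $\xi$, at every $\tau$ with $g_\tau>0$ — the bounds are $\xi$-free, so Lemmas~\ref{lem:envelope}--\ref{lem:second-derivative} may be rerun with the margin shrunk to the local gap; and (iii) charge the kink neighborhoods to $\Phi$ rather than to a slice of $\PL$, which is precisely where the ``$\delta$ small'' normalization and the isolatedness of $\mathcal K$ are used. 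The residual bookkeeping — measurability and integrability of $\tau\mapsto\|\tfrac d{d\tau}Q^\star_\tau\|_\infty$ on each $I_j$, and additivity of $\PL$ and $\Curv$ over disjoint sub-intervals — is then routine.
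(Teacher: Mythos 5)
Your proposal is correct and follows essentially the same route as the paper's proof: split $[\tau_0,\tau_1]$ into regular closed sub-intervals and disjoint kink neighborhoods, apply the fundamental theorem of calculus with the bound \eqref{eq:first-derivative-bound} (folded under a common $(1-\gamma)^{-2}$) on the regular pieces to recover $\PL(\tau_0,\tau_1)/(1-\gamma)^2$, add the nonnegative $\Curv/(1-\gamma)^3$ term for free, and charge the kink windows to $\Phi$. The one place you genuinely diverge is the kink-neighborhood estimate: the paper invokes an external ``inverse-gap control'' lemma asserting $\|Q^\star_{\tau_i+\epsilon}-Q^\star_{\tau_i-\epsilon}\|_\infty\le C\int_{J_i} d\tau/\max\{g_\tau,\delta\}$ and absorbs the constant $C$ into $\Phi$, whereas you derive the same bound directly by applying the FTC on each side of the kink (using continuity of $\tau\mapsto Q^\star_\tau$ at $\sigma_i$) and calibrating $\delta$ so that $1/\delta$ dominates the uniform drift-rate bound. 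Your version is more self-contained, but note that the pointwise inequality $\|\tfrac{d}{d\tau}Q^\star_\tau\|_\infty\le 1/\max\{g_\tau,\delta\}$ needs $\max\{g_\tau,\delta\}=\delta$ (i.e.\ $g_\tau\le\delta$) throughout $J_i$, so the calibration must also shrink $\epsilon$ with $\delta$; this is the same implicit constant-chasing the paper hides in its unproven gap lemma, and your ``main obstacle'' discussion correctly identifies it as the delicate point.
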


Theorem~\ref{thm:path-value} states that the geometric triple $(\PL,\Curv,\Phi)$ provides a solver-independent upper bound on the inevitable displacement of $Q^\star$ along the path, separating smooth drift/acceleration from non-smooth action switches.

\label{sec:feasible-geometry}

We next give a \emph{non-iterative} characterization of how far the optimal fixed point $Q^\star$ can shift under parameter variation, through an implicit function of optimality. In regular regions, Jacobian linearization yields explicit first-/second-order feasible \emph{tubes} in the parameter space, which give \emph{gap-safe} regions to avoid non-smooth action switches.

\noindent\textbf{Regular optimality map.}
On the regular region (Def.~\ref{def:global-gap}), the greedy action is locally unique and the optimal policy is locally constant (Lemma~\ref{lem:envelope}). Fix $\tau\in\mathcal R_\xi$ and write the regular optimality condition as the implicit map
\begin{equation}
\label{eq:Greg-final}
\begin{aligned}
G_{\rm reg}(\tau,Q)\ &:=\ Q-\Big(r_\tau+\gamma\,\E_{s'\sim P_\tau(\cdot\mid\cdot)}\big[\,Q(s',\pi^\star_\tau(s'))\,\big]\Big)\ \\ 
&=\ 0.
\end{aligned}
\end{equation}
where $\pi^\star_\tau$ is fixed in a neighborhood of~$\tau$.
Its Jacobian blocks (cf. Sec.~\ref{sec:operator}) are
\begin{equation}
\label{eq:Jblocks-final}
\begin{aligned}
&\partial_Q G_{\rm reg}=I-\gamma\,\mathcal P^{\pi^\star_\tau}_\tau,\\
&\partial_\tau G_{\rm reg}=-\partial_\tau r_\tau-\gamma\,\E_{s'\sim \partial_\tau P_\tau(\cdot\mid\cdot)}\!\big[V_Q(s')\big],
\end{aligned}
\end{equation}
with $\|(\partial_Q G_{\rm reg})^{-1}\|_{\infty\to\infty}\le(1-\gamma)^{-1}$. Equation~\eqref{eq:Greg-final} turns optimality into a smooth implicit constraint on $(\tau,Q)$, so local feasible motion of $Q^\star$ can be read off from Jacobians without running any solver.

\subsection{First- and second-order tubes: feasible radii in parameter space}
\label{subsec:final-tubes}

We consider a one-dimensional embedding $\tau\mapsto M(\tau)$ and ask how far $\tau$ can move while keeping $\|Q^\star_\tau-Q^\star_{\tau_0}\|_\infty$ below a tolerance.
By the implicit function theorem, $\tau\mapsto Q^\star_\tau$ is $C^1$ on each connected component of $\mathcal R_\xi$, and
\begin{equation}
\label{eq:Qprime-final}
\frac{d}{d\tau}Q^\star_\tau
=(\partial_Q G_{\rm reg})^{-1}\!\Big(\partial_\tau r_\tau+\gamma\,\E_{s'\sim \partial_\tau P_\tau}[V^\star_\tau(s')]\Big),
\end{equation}
where the expectation against $\partial_\tau P_\tau$ is understood in the dual sense of Def.~\ref{def:dual-derivative}. Define the \emph{speed density}
\begin{equation}
\label{eq:g-density-final}
\begin{aligned}
v_\tau&:=\Big\|\tfrac{d}{d\tau}Q^\star_\tau\Big\|_\infty
\\ 
&\le\ \frac{\|\partial_\tau r_\tau\|_\infty}{1-\gamma}
+\frac{\gamma\,C_{\mathrm{mix}}}{(1-\gamma)^2}\ \sup_{s,a}\|\partial_\tau P_\tau(\cdot\mid s,a)\|_{W_1^\ast}.
\end{aligned}
\end{equation}
for any conversion scale $L_s\ge \sup_\tau\|V^\star_\tau\|_{\mathrm{Lip}}$.
The scalar $v_\tau$ quantifies the local drift rate of the optimal fixed point per unit change in~$\tau$.

\begin{theorem}[First-order feasible tube]
\label{thm:final-tube1}
For any $\tau_0\in\mathcal R$ and any $\tau$ that remains in the same regular component,
$
\big\|Q^\star_\tau-Q^\star_{\tau_0}\big\|_\infty\ \le\ \int_{\tau_0}^{\tau} v_u\,du.
$
Hence the set $\mathsf{Tube}_1(\tau_0,\varepsilon):=\{\tau:\int_{\tau_0}^{\tau} v_u\,du\le\varepsilon\}$ is a
\emph{non-iterative feasible region} in parameter space: staying inside the tube guarantees that the
value deviation from $Q^\star_{\tau_0}$ does not exceed $\varepsilon$.
\end{theorem}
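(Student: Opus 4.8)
The plan is to obtain the tube bound as a direct consequence of the $C^1$ regularity of $\tau\mapsto Q^\star_\tau$ on a regular component (Lemma~\ref{lem:envelope} together with the implicit function theorem) combined with the fundamental theorem of calculus and the first-order speed estimate~\eqref{eq:g-density-final}. First I would fix $\tau_0\in\mathcal R$ and $\tau$ in the same connected component of $\mathcal R_\xi$, and observe that the whole segment between them then also lies in that component, so no kink is crossed. On this segment Lemma~\ref{lem:envelope} makes the greedy policy locally constant, hence the regular optimality map $G_{\rm reg}$ of~\eqref{eq:Greg-final} is $C^1$ in $(\tau,Q)$ with $\partial_Q G_{\rm reg}=I-\gamma\,\mathcal P^{\pi^\star_\tau}_\tau$ boundedly invertible ($\|(\partial_Q G_{\rm reg})^{-1}\|_{\infty\to\infty}\le(1-\gamma)^{-1}$), and the implicit function theorem yields that $u\mapsto Q^\star_u$ is $C^1$ on the segment with derivative~\eqref{eq:Qprime-final}.

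Next I would invoke the fundamental theorem of calculus pointwise. For each fixed $(s,a)$ the scalar map $u\mapsto Q^\star_u(s,a)$ is $C^1$, so $Q^\star_\tau(s,a)-Q^\star_{\tau_0}(s,a)=\int_{\tau_0}^{\tau}\tfrac{d}{du}Q^\star_u(s,a)\,du$; taking absolute values and pulling them inside the integral,
\[
\bigl|Q^\star_\tau(s,a)-Q^\star_{\tau_0}(s,a)\bigr|
\ \le\ \int_{\tau_0}^{\tau}\Bigl|\tfrac{d}{du}Q^\star_u(s,a)\Bigr|\,du
\ \le\ \int_{\tau_0}^{\tau}\Bigl\|\tfrac{d}{du}Q^\star_u\Bigr\|_\infty\,du
\ =\ \int_{\tau_0}^{\tau} v_u\,du,
\]
where I take $\tau\ge\tau_0$ without loss of generality (otherwise read the integral over the interval between the two endpoints). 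Since the right-hand side does not depend on $(s,a)$, taking the supremum over $(s,a)$ on the left gives $\bigl\|Q^\star_\tau-Q^\star_{\tau_0}\bigr\|_\infty\le\int_{\tau_0}^{\tau} v_u\,du$, which is the claimed inequality. The tube statement is then immediate: if $\tau\in\mathsf{Tube}_1(\tau_0,\varepsilon)$, i.e.\ $\int_{\tau_0}^{\tau} v_u\,du\le\varepsilon$, the inequality just proved yields $\|Q^\star_\tau-Q^\star_{\tau_0}\|_\infty\le\varepsilon$, and since nothing in the argument referenced any particular solver, the tube is a non-iterative feasible region in parameter space.

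The only genuinely delicate point is the first step: I must ensure the $C^1$ property holds on the \emph{entire} segment from $\tau_0$ to $\tau$, which is exactly why the hypothesis restricts $\tau$ to the same regular component — this is what makes the envelope/IFT argument of Lemma~\ref{lem:envelope} apply uniformly and guarantees the action gap never collapses along the way. Everything else is bookkeeping: I would record that $u\mapsto v_u$ is measurable and integrable, since it is dominated by $\tfrac{1}{1-\gamma}\|\partial_u r_u\|_\infty+\tfrac{\gamma\,C_{\mathrm{mix}}}{(1-\gamma)^2}\sup_{s,a}\|\partial_u P_u(\cdot\mid s,a)\|_{W_1^\ast}$, whose integral over $[0,1]$ is finite by finiteness of $\PL$; and the exchange of $\sup_{s,a}$ with the integral is legitimate here only because the bound $v_u$ is uniform in $(s,a)$, so one merely uses monotonicity of the integral rather than a true sup/integral interchange.
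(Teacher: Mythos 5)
Your proposal is correct and follows essentially the same route as the paper's proof: restrict to the regular component so that Lemma~\ref{lem:envelope} and the implicit function theorem give $C^1$ regularity of $u\mapsto Q^\star_u$ along the whole segment, apply the fundamental theorem of calculus pointwise in $(s,a)$, bound the derivative by $v_u$ via the first-order estimate, and take the supremum. The remarks on why the sup/integral exchange is harmless and on integrability of $v_u$ match the paper's treatment, so no changes are needed.
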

$\mathsf{Tube}_1$ is solver-agnostic: staying inside it guarantees that the optimal fixed point drifts by at most~$\varepsilon$ in sup norm.

When second derivatives exist in the dual sense, define the \emph{curvature density}
\begin{equation}
\label{eq:kappa-density-final}
\begin{aligned}
\kappa_\tau &:= \left\|\frac{d^2}{d\tau^2}Q^\star_\tau\right\|_\infty
\\ 
&\lesssim\;
\frac{\|\partial_{\tau\tau} r_\tau\|_\infty}{1-\gamma}
+\frac{\gamma C_{\rm mix}}{(1-\gamma)^2}\,L_s\!\!\sup_{s,a}\|\partial_{\tau\tau} P_\tau(\cdot\mid s,a)\|_{W_1^\ast}\\
&+\frac{c_2}{(1-\gamma)^3}\!\left(
\|\partial_{\tau} r_\tau\|_\infty
+L_s\!\!\sup_{s,a}\|\partial_{\tau} P_\tau(\cdot\mid s,a)\|_{W_1^\ast}
\right).
\end{aligned}
\end{equation}
for a constant $c_2$ depending only on uniform local bounds.
$\kappa_\tau$ upper-bounds how quickly the drift rate itself changes, providing a conservative correction beyond the first-order tube.

\begin{theorem}[Second-order refined tube]
\label{thm:final-tube2}
For $\tau$ sufficiently close to $\tau_0$ within $\mathcal R$,
$
\big\|Q^\star_\tau-Q^\star_{\tau_0}\big\|_\infty
\ \le\ \underbrace{\int_{\tau_0}^{\tau} v_u\,du}_{\text{length}}
\ +\ \underbrace{\tfrac12\,|\tau-\tau_0|\int_{\tau_0}^{\tau}\kappa_u\,du}_{\text{curvature correction}}.
$
\end{theorem}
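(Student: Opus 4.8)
The plan is to combine the regularity already recorded in Lemmas~\ref{lem:envelope}--\ref{lem:second-derivative} with a componentwise, Taylor-type estimate of $\tau\mapsto Q^\star_\tau$ along the segment joining $\tau_0$ to $\tau$. First I would invoke Lemma~\ref{lem:envelope}: since $\tau_0\in\mathcal{R}$, there is a neighborhood $U\ni\tau_0$ on which the greedy policy $\pi^\star_\tau$ is frozen, so the implicit map $G_{\rm reg}$ of \eqref{eq:Greg-final} is smooth and the implicit function theorem makes $u\mapsto Q^\star_u$ a $C^2$ function on $U$ (with derivative \eqref{eq:Qprime-final}). The clause ``$\tau$ sufficiently close to $\tau_0$'' is used precisely to force $[\tau_0,\tau]\subseteq U$, so that \emph{all} intermediate derivatives exist; Lemma~\ref{lem:first-derivative} then gives $\|\tfrac{d}{du}Q^\star_u\|_\infty\le v_u$ and Lemma~\ref{lem:second-derivative} gives $\|\tfrac{d^2}{du^2}Q^\star_u\|_\infty\le\kappa_u$ for every $u\in[\tau_0,\tau]$, with $v_u,\kappa_u$ as in \eqref{eq:g-density-final}--\eqref{eq:kappa-density-final}.

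Next, fix an arbitrary pair $(s,a)$ and set $f(u):=Q^\star_u(s,a)$, a scalar $C^2$ function on $[\tau_0,\tau]$ (WLOG $\tau>\tau_0$; the other case is symmetric). By the fundamental theorem of calculus, $f(\tau)-f(\tau_0)=\int_{\tau_0}^{\tau}f'(u)\,du$, hence $|f(\tau)-f(\tau_0)|\le\int_{\tau_0}^{\tau}|f'(u)|\,du\le\int_{\tau_0}^{\tau}v_u\,du$ since $|f'(u)|\le\|\tfrac{d}{du}Q^\star_u\|_\infty\le v_u$ pointwise; taking the supremum over $(s,a)$ already yields $\|Q^\star_\tau-Q^\star_{\tau_0}\|_\infty\le\int_{\tau_0}^{\tau}v_u\,du$, i.e.\ the ``length'' term, which is just Theorem~\ref{thm:final-tube1} restricted to the component $U$. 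The curvature correction $\tfrac12|\tau-\tau_0|\int_{\tau_0}^{\tau}\kappa_u\,du$ is manifestly nonnegative, so adding it preserves the inequality and gives the stated bound. For the reading that genuinely \emph{exposes} the curvature dependence I would instead expand at $\tau_0$, $f(\tau)-f(\tau_0)=(\tau-\tau_0)f'(\tau_0)+\int_{\tau_0}^{\tau}(\tau-u)f''(u)\,du$, bound the remainder by $\int_{\tau_0}^{\tau}(\tau-u)\kappa_u\,du$, and use $|v_{\tau_0}-v_u|\le\int_{\tau_0}^{u}\kappa_w\,dw$ (a second application of the FTC to $f'$); the remainder kernel $(\tau-u)$ has average $\tfrac12(\tau-\tau_0)$ over $[\tau_0,\tau]$, which is the origin of the factor $\tfrac12$, and when $v,\kappa$ vary slowly over the short interval this form and the stated one agree.

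The only genuinely delicate point is the regularity bookkeeping, not the inequality chain. One must ensure the maximizer is locally constant \emph{across the entire segment} $[\tau_0,\tau]$: a single interior $u^\star\in(\tau_0,\tau)$ with $g_{u^\star}=0$ would destroy differentiability and invalidate the expansion, which is exactly why the hypothesis here is local (``sufficiently close'') rather than the ``same regular component'' hypothesis of Theorem~\ref{thm:final-tube1}, and why Lemma~\ref{lem:envelope} is the right tool. The second, milder care point is that $v_u$ and $\kappa_u$ are sup-norms of vector-valued derivatives, so one must argue coordinatewise at fixed $(s,a)$ — where the pointwise bounds $|f'(u)|\le v_u$, $|f''(u)|\le\kappa_u$ hold — and only take $\sup_{s,a}$ after integrating, using that the integral of a pointwise bound dominates the norm of the integral. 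With these two observations the estimate follows immediately from the fundamental theorem of calculus together with nonnegativity of the curvature density.
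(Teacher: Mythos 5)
Your proposal is correct and its operative argument is exactly the one the paper uses: establish the length term $\int_{\tau_0}^{\tau}v_u\,du$ via the fundamental theorem of calculus applied componentwise on the regular segment (i.e., Theorem~\ref{thm:final-tube1}), then observe that the curvature correction $\tfrac12|\tau-\tau_0|\int_{\tau_0}^{\tau}\kappa_u\,du$ is nonnegative and may be added without violating the inequality. Your supplementary Taylor-with-integral-remainder sketch, which would genuinely account for the factor $\tfrac12$, goes beyond what the paper's proof actually does, but it is not needed for the stated bound and does not change the verdict.
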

The second-order tube tightens feasibility when the path bends sharply (large $\kappa_\tau$), even if the integrated speed is moderate.

\subsection{Gap-safe feasibility: excluding non-smooth switches}
\label{subsec:final-gap}

The tubes control $\|Q^\star_\tau-Q^\star_{\tau_0}\|_\infty$ but do not by themselves prevent crossing a kink where the greedy action switches. We therefore refine feasibility by imposing a gap constraint.

Recall the global action gap $g_\tau$ (Def.~\ref{def:global-gap}). For convenience write
$
g_{\rm gap}(\tau):=\inf_{s}\Big(Q^\star_\tau(s,a^\star_\tau(s))-\max_{a\ne a^\star_\tau(s)}Q^\star_\tau(s,a)\Big).
$

\begin{lemma}[Gap decay under tube radii]
\label{lem:final-gap}
If $\|Q^\star_\tau-Q^\star_{\tau_0}\|_\infty\le \varepsilon$, then
$g_{\rm gap}(\tau)\ge g_{\rm gap}(\tau_0)-2\varepsilon$.
\end{lemma}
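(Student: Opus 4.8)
The plan is to prove Lemma~\ref{lem:final-gap} by showing that, at every fixed state, the action gap is a $2$-Lipschitz function of the $Q$-vector in the sup norm, and then taking the infimum over states. First I would fix $s\in\Sspace$ and write the state-wise gap $h_\tau(s):=Q^\star_\tau(s,a^\star_\tau(s))-\max_{a\ne a^\star_\tau(s)}Q^\star_\tau(s,a)$, noting that this value does not depend on which maximizer $a^\star_\tau(s)\in\argmax_a Q^\star_\tau(s,a)$ is selected: it always equals $q_{(1)}(s;\tau)-q_{(2)}(s;\tau)$, the difference between the largest and second-largest order statistics of the finite family $\{Q^\star_\tau(s,a)\}_{a\in\Aspace}$ counted with multiplicity (so a repeated maximum yields $h_\tau(s)=0$). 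By Definition~\ref{def:global-gap}, $g_{\rm gap}(\tau)=\inf_{s}h_\tau(s)$, so it suffices to bound $h_\tau(s)$ from below by $h_{\tau_0}(s)-2\varepsilon$ uniformly in $s$.

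The key elementary fact is that order statistics are $1$-Lipschitz in $\ell^\infty$: if $x,y\in\R^{\Aspace}$ satisfy $\max_a|x_a-y_a|\le\varepsilon$, then $|x_{(k)}-y_{(k)}|\le\varepsilon$ for every $k$. This follows because, on the $k$ indices realizing the top $k$ entries of $x$, each coordinate of $y$ is at least $x_{(k)}-\varepsilon$, so $y$ has at least $k$ coordinates $\ge x_{(k)}-\varepsilon$ and hence $y_{(k)}\ge x_{(k)}-\varepsilon$; swapping the roles of $x$ and $y$ gives the reverse inequality. Applying this with $x=(Q^\star_\tau(s,a))_a$ and $y=(Q^\star_{\tau_0}(s,a))_a$ --- which differ by at most $\varepsilon$ in every coordinate because $\infnorm{Q^\star_\tau-Q^\star_{\tau_0}}\le\varepsilon$ --- for $k=1$ and $k=2$ yields $h_\tau(s)=q_{(1)}(s;\tau)-q_{(2)}(s;\tau)\ge\big(q_{(1)}(s;\tau_0)-\varepsilon\big)-\big(q_{(2)}(s;\tau_0)+\varepsilon\big)=h_{\tau_0}(s)-2\varepsilon$.

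Finally, since $h_\tau(s)\ge h_{\tau_0}(s)-2\varepsilon\ge g_{\rm gap}(\tau_0)-2\varepsilon$ holds for every $s$, taking the infimum over $s$ on the left gives $g_{\rm gap}(\tau)\ge g_{\rm gap}(\tau_0)-2\varepsilon$, as claimed; the same argument applies verbatim if $\Sspace$ is replaced by any effective subset, since all estimates are pointwise in $s$ before the infimum is taken. I expect the only point requiring care --- the ``main obstacle,'' such as it is --- to be the tie-handling in the first step: one must check that $h_\tau(s)$ is genuinely independent of the maximizer selection and that it coincides with $q_{(1)}(s;\tau)-q_{(2)}(s;\tau)$ even in the degenerate case where the maximum is attained by several actions (where both sides equal $0$). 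Once this is granted, the remainder is a one-line sup-norm estimate.
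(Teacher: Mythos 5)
Your proof is correct. The tie-handling point you flag is indeed the only delicate spot, and you resolve it properly: with order statistics counted with multiplicity, $h_\tau(s)=q_{(1)}(s;\tau)-q_{(2)}(s;\tau)$ regardless of which maximizer is selected (both sides are $0$ under a tie), and the $1$-Lipschitzness of each order statistic in $\ell^\infty$ immediately gives the $2\varepsilon$ degradation before the infimum over states.

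Your route differs mildly from the paper's. The paper fixes the reference optimal action $a^\star_{\tau_0}(s)$, shows via the two-sided sup-norm bound that its advantage at $\tau$ satisfies $\delta_\tau(s,a^\star_{\tau_0}(s))\ge g_s(\tau_0)-2\varepsilon$, and then uses the identity $g_s(\tau)=\max_a\delta_\tau(s,a)\ge\delta_\tau(s,a^\star_{\tau_0}(s))$; this sidesteps any need to reason about which action is optimal at $\tau$. You instead prove the stronger, cleaner statement that the state-wise gap is a $2$-Lipschitz functional of the $Q$-vector in sup norm, via the standard fact that order statistics are $1$-Lipschitz in $\ell^\infty$. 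Your version handles ties automatically and yields the two-sided bound $|h_\tau(s)-h_{\tau_0}(s)|\le 2\varepsilon$ for free, whereas the paper's argument is a one-sided comparison anchored at a specific candidate action. Both are elementary and complete; yours is arguably the more reusable formulation.
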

A uniform $\varepsilon$-perturbation in action-values can shrink the best-vs-second-best gap by at most $2\varepsilon$, so choosing $\varepsilon\ll g_{\rm gap}(\tau_0)$ prevents new ties.

\begin{definition}[Gap-safe feasible region]
\label{def:final-safe}
Let $\mathsf{Tube}_\bullet(\tau_0,\varepsilon)$ denote either of the tubes in
Theorems~\ref{thm:final-tube1} or~\ref{thm:final-tube2}. For thresholds $(\varepsilon,\xi)$ define
$
\mathsf{Safe}(\tau_0;\varepsilon,\xi)
\ :=\ \Big\{\tau\in\mathsf{Tube}_\bullet(\tau_0,\varepsilon):\ g_{\rm gap}(\tau)\ge \xi\Big\},
$
which stays in the regular regime and controls the non-smooth burden near switches.
\end{definition}

Inside $\mathsf{Safe}(\tau_0;\varepsilon,\xi)$, both the value drift and the greedy action are stable, so the differentiable analysis of Sec.~\ref{sec:operator} continues to apply.

\subsection{Multi-parameter embedding: ellipsoidal feasible regions}
\label{subsec:final-ellipsoid}

We extend the one-dimensional tubes to a smooth multi-parameter embedding $M(\theta)=(P_\theta,r_\theta)$ with $\theta\in\R^p$ that remains in the regular region. By the implicit function theorem applied to $G_{\rm reg}(\theta,Q)=0$, the Jacobian of the optimal fixed point satisfies
\begin{equation}
\label{eq:Jtheta-final}
\begin{aligned}
J_\theta&:=\frac{\partial Q^\star}{\partial\theta}\\
&=(\partial_Q G_{\rm reg})^{-1}\!\Big(\partial_\theta r_\theta+\gamma\,\E_{s'\sim \partial_\theta P_\theta}[V^\star(s')]\Big)\\
&\in \mathcal L(\R^p,\mathcal Q).
\end{aligned}
\end{equation}
where the expectation against $\partial_\theta P_\theta$ is understood in the dual sense (Def.~\ref{def:dual-derivative}).

Let $W:\mathcal Q\to\mathcal Q$ be any positive semidefinite linear operator (e.g., $W=I$). The pullback metric on parameter space is
\begin{equation}
\label{eq:Gtheta-final}
\mathbf G_\theta:=J_\theta^\top W J_\theta\ \in\ \R^{p\times p}.
\end{equation}

\begin{definition}[Local ellipsoidal feasible set]
\label{def:final-ellipsoid}
For a value deviation budget $\varepsilon>0$,
$\mathsf E_\varepsilon(\theta)
:=\Big\{\Delta\theta\in\R^p:\ \Delta\theta^\top \mathbf G_\theta\,\Delta\theta\ \le\ \varepsilon^2\Big\}.$
\end{definition}

$\mathsf E_\varepsilon(\theta)$ is a first-order, solver-agnostic feasible region in $\theta$-space: any $\Delta\theta$ inside the ellipsoid guarantees that the induced linearized change in $Q^\star$ is at most~$\varepsilon$ in the $W$-weighted value norm.

\subsection{Directional feasibility: projected tangent cones}
\label{subsec:final-cone}

Beyond step size, feasibility can also depend on \emph{direction} when additional constraints must be preserved (e.g., gap safety).
Let $h_j(\theta,Q)\ge 0$ for $j=1,\dots,m$ denote inequality constraints, and define the composite constraint on parameters
$
H_j(\theta):=h_j(\theta,Q^\star(\theta)).
$
Let $\mathcal A(\theta):=\{j:\ H_j(\theta)=0\}$ be the active set.

\begin{proposition}[Projected feasible cone in parameter space]
\label{prop:cone}
At a regular point $\theta$, the set of first-order feasible directions is
\begin{equation}
\label{eq:cone-final}
\mathsf C(\theta)
=\Big\{\dot\theta\in\R^p:\ \langle \nabla H_j(\theta),\dot\theta\rangle \ge 0\ \ \forall j\in\mathcal A(\theta)\Big\},
\end{equation}
where the composite gradients satisfy the chain rule
\begin{equation}
\label{eq:chain-rule-final}
\nabla H_j(\theta)
=\nabla_\theta h_j(\theta,Q^\star(\theta))
+ J_\theta^\top \nabla_Q h_j(\theta,Q^\star(\theta)).
\end{equation}
\end{proposition}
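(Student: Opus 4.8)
The plan is to split the statement into two essentially independent pieces: (i) the chain-rule formula~\eqref{eq:chain-rule-final} for the composite gradient $\nabla H_j(\theta)$, which follows from the $C^1$ dependence $\theta\mapsto Q^\star(\theta)$ already established by the implicit function theorem in Sec.~\ref{subsec:final-ellipsoid}; and (ii) the cone description~\eqref{eq:cone-final}, which is the standard first-order (linearized) characterization of an inequality-constrained set, restricted to the active constraints.

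First I would record the regularity input. At a regular point $\theta$ the optimality map $G_{\rm reg}(\theta,Q)=0$ of~\eqref{eq:Greg-final} has $\partial_Q G_{\rm reg}=I-\gamma\,\mathcal P^{\pi^\star_\theta}_\theta$ with $\|(\partial_Q G_{\rm reg})^{-1}\|_{\infty\to\infty}\le(1-\gamma)^{-1}$, and $\pi^\star_\theta$ is locally constant by Lemma~\ref{lem:envelope}; hence the implicit function theorem produces a $C^1$ branch $\theta\mapsto Q^\star(\theta)$ with Jacobian $J_\theta$ as in~\eqref{eq:Jtheta-final}. Assuming each $h_j$ is continuously differentiable in $(\theta,Q)$ near $(\theta,Q^\star(\theta))$, the composite $H_j(\theta)=h_j(\theta,Q^\star(\theta))$ is then $C^1$, and the multivariate chain rule gives exactly~\eqref{eq:chain-rule-final}, where $J_\theta^\top$ is the adjoint of $J_\theta\in\mathcal L(\R^p,\mathcal Q)$ acting on $\nabla_Q h_j\in\mathcal Q^\ast$. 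For the gap constraint $g_{\rm gap}(\theta)\ge\xi$, which is only piecewise smooth, the same argument applies to whichever smooth piece is locally active: on the regular region $\pi^\star_\theta$ is fixed and the inner $\arg\max/\inf$ is attained by a fixed finite family, so $\nabla H_j$ is the gradient of that active piece (or, more generally, one replaces inner products by one-sided directional derivatives).

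Next I would prove the cone equality by a first-order expansion along rays. Fix $\dot\theta\in\R^p$ and set $\theta(t)=\theta+t\dot\theta$. For $j\notin\mathcal A(\theta)$ we have $H_j(\theta)>0$, so by continuity $H_j(\theta(t))>0$ for all small $t\ge 0$ regardless of $\dot\theta$; such constraints impose no first-order restriction. For $j\in\mathcal A(\theta)$ we have $H_j(\theta)=0$ and $H_j(\theta(t))=t\,\langle\nabla H_j(\theta),\dot\theta\rangle+o(t)$; hence $H_j(\theta(t))\ge 0$ holds to first order for small $t>0$ precisely when $\langle\nabla H_j(\theta),\dot\theta\rangle\ge 0$, while $\langle\nabla H_j(\theta),\dot\theta\rangle<0$ forces $H_j(\theta(t))<0$ for small $t>0$, making $\dot\theta$ infeasible. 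Intersecting over $j\in\mathcal A(\theta)$ yields $\mathsf C(\theta)=\{\dot\theta:\ \langle\nabla H_j(\theta),\dot\theta\rangle\ge 0\ \forall j\in\mathcal A(\theta)\}$, which is~\eqref{eq:cone-final}.

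The main subtlety I would be careful to state rather than gloss is what ``first-order feasible'' means: the set defined by the linearized inequalities — the \emph{linearizing cone} — is what the argument above delivers directly, essentially by definition once~\eqref{eq:chain-rule-final} is in hand. Identifying it with the \emph{tangent cone} of directions realizable by genuinely feasible $C^1$ arcs requires a constraint qualification (e.g.\ Mangasarian--Fromovitz on the active composite gradients $\{\nabla H_j(\theta)\}_{j\in\mathcal A(\theta)}$, which holds in particular when that family is linearly independent); I would add this as a hypothesis or remark so that $\mathsf C(\theta)$ in~\eqref{eq:cone-final} is an exact description and not merely an outer approximation. A secondary care point is the boundary case $\langle\nabla H_j(\theta),\dot\theta\rangle=0$, which is retained in $\mathsf C(\theta)$ as a closure direction (first-order feasible, though possibly requiring a curved feasible arc), consistent with the closed half-space form of~\eqref{eq:cone-final}.
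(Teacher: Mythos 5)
Your proof is correct and follows essentially the same route as the paper's: the paper lifts to the solution manifold in $(\theta,Q)$-space, imposes $\dot Q=J_\theta\dot\theta$ from the tangent space of $G_{\rm reg}=0$, and projects the Bouligand tangent cone onto $\theta$-space, which after substitution is exactly your direct chain-rule computation for $H_j(\theta)=h_j(\theta,Q^\star(\theta))$ together with the half-space condition on active constraints. Your explicit ray expansion and the constraint-qualification caveat (linearizing cone versus tangent cone) match the paper's closing remark on the linear-independence qualification, so nothing is missing.
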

\noindent\emph{Interpretation.}
$\mathsf C(\theta)$ selects directions that do not decrease any active constraint to first order (e.g., directions that do not immediately shrink the action gap below a threshold).

\subsection{Finite-state specialization}
\label{subsec:final-finite}

For finite MDPs, the $\|\cdot\|_{W_1^\ast}$ terms admit computable $\ell_1$-type upper bounds and Jacobian--vector products can be obtained via linear solves, making $(v_\tau,\kappa_\tau,\mathbf G_\theta)$ numerically accessible.
Details and practical surrogates are provided in Appendix~\ref{app:finite-surrogates}.

\section{Homotopy-Tracking RL on Stochastic Paths}
\label{sec:proxies}

We consider a \emph{stochastic} homotopy path where the environment index $\tau_t\in[0,1]$ may vary from step to step (typically monotone but driven by randomness), so the learner must track a moving optimal fixed point $Q^\star_{\tau_t}$ online. This section proceeds in three steps. 
(i) We construct lightweight, replay-based estimators of the incremental geometry on stochastic paths, yielding observable proxies for the local contributions to $(\mathrm{PL},\mathrm{Curv},\Phi)$. 
(ii) Using these estimates, we design a homotopy scheduler that maps smoothed proxy signals to solver hyperparameters; the induced hyperparameter processes are stable under stochastic paths (Prop.~\ref{prop:stability}). 
(iii) We define dynamic regret and show it factorizes into an \emph{algorithmic} term (controlled by the base solver under our scheduler) and a \emph{geometric} term (controlled by applying Theorem~\ref{thm:path-value} to the pathwise motion of $Q^\star$), yielding an explicit upper bound in Sec.~\ref{sec:convergence}.

We construct lightweight, replay-based proxies for the geometric triple $(\mathrm{PL},\mathrm{Curv},\Phi)$ using standard model-free signals (no extra network heads). These proxies estimate (i) local first-order drift, (ii) changes of that drift (curvature), and (iii) proximity to non-smooth action switches (kinks).

Let $\mathcal D_t$ be a replay buffer and $\mathcal B_t\subset\mathcal D_t$ a minibatch. Fix window sizes $W_1,W_2\in\mathbb N$, a bounded feature map $\phi:\mathcal S\to\mathbb R^m$, and a per-$(s,a)$ reward estimator $\widehat r_t(s,a)$ (e.g., an EMA over recent rewards).

We estimate reward drift over a window of length $W_1$ by
$
\Delta r^{(\infty)}_t
=\max_{(s,a)\in\mathcal B_t}\big|\,\widehat r_t(s,a)-\widehat r_{t-W_1}(s,a)\,\big|.
$
To probe transition drift, we compare next-state feature means:
for each $(s,a)$, let $\mu_t(s,a)$ be the empirical mean of $\phi(s')$ over recent samples, and define
$
\Delta P^{(\phi)}_t
=\max_{(s,a)\in\mathcal B_t}\big\|\mu_t(s,a)-\mu_{t-W_1}(s,a)\big\|_2 .
$

Using the Lipschitz scale $L_s$ (Sec.~\ref{sec:geometry}), define $\Delta\widehat{\mathrm{PL}}_t
=\Delta r^{(\infty)}_t+L_s\,\Delta P^{(\phi)}_t,
\Delta\widehat{\mathrm{Curv}}_t
=\big|\Delta\widehat{\mathrm{PL}}_t-\Delta\widehat{\mathrm{PL}}_{t-W_2}\big|.$
Intuitively, $\Delta\widehat{\mathrm{PL}}_t$ tracks first-order drift while
$\Delta\widehat{\mathrm{Curv}}_t$ captures changes in that drift (acceleration/oscillation). If $\phi$ is (approximately) $1$-Lipschitz, then $\Delta P_t^{(\phi)}$ can be viewed as a Wasserstein-1 drift proxy (up to constants),
making the reuse of the conversion scale $L_s$ consistent in spirit.
Otherwise, one may replace $L_s$ by a feature-scale constant $L_\phi$ without changing any theoretical statement.

We estimate a minibatch action-gap proxy by
$
\widehat{\mathrm{gap}}_t
=\min_{s\in B_t}
\left(
\max_a Q_\theta(s,a)
-\max_{a\neq a^\star_\theta(s)} Q_\theta(s,a)
\right),
$
where $a^\star_\theta(s)\in\arg\max_a Q_\theta(s,a)$ is the current greedy action at state $s$
(optionally using Double-$Q$ / target networks).
We then define a kink indicator
$
\mathrm{Kink}_t=\mathbf 1\{\widehat{\mathrm{gap}}_t\le \varepsilon_{\mathrm{gap}}\}.
$

In practice we smooth and stabilize these proxies using EMA/clipping and update the scheduler
with mild hysteresis; implementation details are deferred to Appendix~\ref{app:proxy-details}.
These observable signals will drive the homotopy scheduler in Sec.~\ref{sec:scheduler}.

\subsection{Homotopy scheduler design}
\label{sec:scheduler}

Our scheduler design is guided by two objectives: (1) the scheduled hyperparameter processes should be stable (slowly varying) on stochastic paths so as not to destroy the effective contraction/noise conditions of the base solver; (2) the same schedules should enter the dynamic-regret analysis in Sec.~\ref{sec:convergence}, where controlling the \emph{algorithmic} term requires cautious updates in fast-drift, high-curvature, or near-kink regimes.

Given smoothed proxies $(\tilde{\mathrm{PL}}_t,\tilde{\mathrm{Curv}}_t,\tilde{\mathrm{Kink}}_t)$, we map them to
solver hyperparameters through monotone, Lipschitz, clipped functions with hysteresis.
The design follows the regime interpretation:
(i) length governs cumulative drift (cautious step sizes),
(ii) curvature governs how rapidly estimates become stale (regularization/inertia),
and (iii) kink/gap signals non-differentiable switches (anti-chattering safeguards).

Let $\eta_t$ denote the learning rate used by the base value-/policy-update (e.g., Q-learning / actor--critic updates \citep{mnih2015human}), let $\nu_t$ denote the target-network Polyak soft-update rate \citep{lillicrap2015continuous,fujimoto2018addressing}, and let $\lambda_t$ denote a regularization or trust-region strength (e.g., weight decay / KL-penalty style control \citep{schulman2015trust,schulman2017proximal}). 
Let $\mathrm{clip}_{[a,b]}(x):=\min\{b,\max\{a,x\}\}$.
We choose the mappings $
\eta_t
=\mathrm{clip}_{[\eta_{\min},\eta_{\max}]}\!\Bigg(
\frac{\eta_0}{1+\alpha_1\tilde{\mathrm{PL}}_t+\alpha_2\tilde{\mathrm{Curv}}_t}\Bigg),\nu_t
=\mathrm{clip}_{[\nu_{\min},\nu_{\max}]}\!\Bigg(
\frac{\nu_0}{1+\beta_1\tilde{\mathrm{Kink}}_t\big(1+\beta_2/\max\{\widehat{\mathrm{gap}}_t,\delta\}\big)}\Bigg),\lambda_t
=\lambda_0\big(1+c_1\tilde{\mathrm{PL}}_t+c_2\sqrt{\tilde{\mathrm{Curv}}_t}\big).
$
Thus, larger estimated length/curvature reduces $\eta_t$ and increases $\lambda_t$, while approaching a
kink-like regime (small empirical gap) slows down target updates via a smaller $\nu_t$, reducing chattering.

For planning algorithms (e.g., MCTS) with base depth $D_0$ and simulation budget $B_0$ and caps
$D_{\max},B_{\max}$, we adjust
$
D_t
=\min\!\Big\{D_{\max},\ \Big\lfloor D_0+\gamma_1(1+\tilde{\mathrm{PL}}_t)
+\gamma_2\sqrt{1+\tilde{\mathrm{Curv}}_t}\\
+\gamma_3\,\tfrac{\tilde{\mathrm{Kink}}_t}{\max\{\widehat{\mathrm{gap}}_t,\delta\}}
\Big\rceil\Big\},
B_t
=\min\!\Big\{B_{\max},\ \Big\lfloor B_0\big(1+\gamma_1\tilde{\mathrm{PL}}_t+\gamma_2\tilde{\mathrm{Curv}}_t\big)\Big\rceil\Big\}.
$

EMA smoothing, clipping, and hysteresis guarantee that these parameters vary slowly over time rather than
reacting to high-frequency proxy noise; formal bounded-variation and no-chattering statements are deferred to
Appendix~\ref{app:scheduler-stability}.

\begin{proposition}[Stability under smoothing, clipping, and hysteresis]
\label{prop:stability}
Suppose the raw proxies $(\Delta\widehat{\mathrm{PL}}_t,\Delta\widehat{\mathrm{Curv}}_t,\widehat{\mathrm{gap}}_t,\mathrm{Kink}_t)$ have uniformly bounded second moments, $\beta<1$, and scheduler updates occur every $H$ steps with hysteresis threshold $\Delta_{\mathrm{hys}}>0$. Then each scheduled hyperparameter process ($\eta_t,\nu_t,\lambda_t,D_t,B_t$) is piecewise-constant with bounded variation. Moreover, for any $\varepsilon>0$, the fraction of steps at which any hyperparameter changes by more than $\varepsilon$ can be made arbitrarily small by taking $H$ and $\Delta_{\mathrm{hys}}$ sufficiently large.
\end{proposition}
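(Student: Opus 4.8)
The plan is to split the claim into a \emph{structural} part — each scheduled hyperparameter path is piecewise constant and of bounded variation — and a \emph{quantitative} part — large changes are rare — and to reduce both to two ingredients: that the EMA-smoothed proxies have uniformly bounded second moments, and that the scheduler maps of \Cref{sec:scheduler} are monotone, globally Lipschitz, and either clipped ($\eta_t,\nu_t$, and the capped/floored $D_t,B_t$) or affine/$\sqrt{\cdot}$ in the smoothed proxies ($\lambda_t$). For the first ingredient I would write each smoothed proxy as $\tilde X_t=\beta\tilde X_{t-1}+(1-\beta)X_t$ and run a one-line induction on $\|\cdot\|_{L^2}$: since $\beta<1$ and the update is a convex combination, $\|\tilde X_t\|_{L^2}\le\sup_s\|X_s\|_{L^2}=:C$ for all $t$, so the block increment obeys $\|\tilde X_{kH}-\tilde X_{(k-1)H}\|_{L^2}\le 2C$ \emph{uniformly in $k$ and independently of $H$}. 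Composing with the Lipschitz scheduler maps, each \emph{proposed} hyperparameter value and each proposed $H$-step change $\delta^{\mathrm{prop}}_k$ then has second moment bounded by a constant $M^2$ depending only on $C$, $\beta$, and the Lipschitz/clip constants; the only nonsmooth piece, $\sqrt{\tilde{\mathrm{Curv}}_t}$, is absorbed via $|\sqrt a-\sqrt b|\le\sqrt{|a-b|}$ together with Jensen's inequality.

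The structural part is then immediate from the update schedule. Because the scheduler acts only at multiples of $H$, every hyperparameter path is constant on each block $((k-1)H,kH]$, hence piecewise constant; on any finite horizon $[0,T]$ it has at most $\lfloor T/H\rfloor$ jumps, and each jump is a.s.\ finite and integrable — bounded by the clip width for $\eta_t,\nu_t,D_t,B_t$, and finite/integrable for $\lambda_t$ by the second-moment bound above — so the total variation on $[0,T]$ is a.s.\ finite and at most $(T/H)$ times the per-jump bound. This yields the ``piecewise constant with bounded variation'' conclusion.

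For the quantitative part, note that a change exceeding $\varepsilon$ at step $t$ can occur only if $t$ is an update time \emph{and} the accepted change exceeds $\varepsilon$, which under the natural hysteresis rule (accept the proposal only when the proposed change exceeds $\Delta_{\mathrm{hys}}$) forces $\delta^{\mathrm{prop}}_k>\max\{\Delta_{\mathrm{hys}},\varepsilon\}$. Hence for any horizon $T$,
\[
\tfrac1T\,\E\big[\#\{t\le T:\ \text{some hyperparameter changes by}>\varepsilon\}\big]
\;\le\;\tfrac{\lfloor T/H\rfloor}{T}\,\max_k\Pr\big[\delta^{\mathrm{prop}}_k>\max\{\Delta_{\mathrm{hys}},\varepsilon\}\big]
\;\le\;\tfrac{1}{H}\cdot\tfrac{M^2}{\max\{\Delta_{\mathrm{hys}},\varepsilon\}^2},
\]
by Chebyshev's inequality and the second-moment bound; a cruder pathwise bound $\le 1/H$ (needing no moment assumption, since non-update steps never change a hyperparameter) also holds. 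Taking $H$ large kills the $1/H$ prefactor and, together with $\Delta_{\mathrm{hys}}$ large shrinking the probability factor, gives the stated ``arbitrarily small fraction''; the same display shows the expected number of jumps per unit time is $O\!\big(1/(H\Delta_{\mathrm{hys}}^2)\big)$, i.e.\ the process is slowly varying.

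I expect the main obstacle to be bookkeeping rather than any deep step: verifying that the non-clipped map $\lambda_t$ (and the floored $D_t,B_t$) still produce integrable, a.s.-finite jumps and Lipschitz-in-proxy proposals, controlling the $\sqrt{\tilde{\mathrm{Curv}}_t}$ nonsmoothness as above, and pinning down whether ``fraction of steps'' is meant pathwise (the crude $1/H$ bound) or in expectation (the sharper $O(1/(H\Delta_{\mathrm{hys}}^2))$ bound) — it is the latter for which the $\beta<1$ and $\Delta_{\mathrm{hys}}>0$ hypotheses become quantitatively essential. A minor subtlety, immaterial to every bound, is that an accepted jump resets the hysteresis reference, so consecutive blocks may both fire; since the argument only ever uses the block count $\lfloor T/H\rfloor$, this is harmless.
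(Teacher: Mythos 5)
Your proposal is correct and follows essentially the same route as the paper's proof: uniform $L^2$ bounds on the EMA-smoothed proxies (your convex-combination induction versus the paper's explicit geometric-sum expansion), piecewise constancy and bounded variation from the at-most-$\lceil T/H\rceil$ update times with per-jump bounds, and Chebyshev applied to the $H$-block increment against the hysteresis threshold to get the $O(1/(H\Delta_{\mathrm{hys}}^2))$ expected fraction of large changes. The only cosmetic difference is that the paper triggers hysteresis on the smoothed-proxy increment rather than on the proposed hyperparameter change, and it sidesteps your $\lambda_t$/$\sqrt{\cdot}$ bookkeeping by treating all five maps as clipped to a compact interval; neither affects the argument.
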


\subsection{Convergence and dynamic regret guarantees}
\label{sec:convergence}

We now state a dynamic-regret guarantee for stochastic paths under the proposed homotopy scheduler.
We measure online performance against the instantaneous optimum along the (random) path $\{\tau_t\}_{t=1}^T$ using the following definition.

\begin{definition}[Dynamic regret]
\label{def:dynreg-main}
Let $d_0$ be a fixed reference initial-state distribution and let $\pi_t$ be the policy produced at time $t$ (e.g., greedy / actor induced by the current critic). Define
$
\mathrm{DynReg}(T)
=\sum_{t=1}^T \big\langle d_0,\ V^\star_{\tau_t}-V^{\pi_t}_{\tau_t}\big\rangle.
$
\end{definition}

The goal of this subsection is to upper bound $\E[\mathrm{DynReg}(T)]$ for our homotopy scheduler.
The analysis separates (i) an \emph{algorithmic} term controlled by the base solver under the scheduled hyperparameters, and (ii) a \emph{geometric} term controlled by the pathwise motion of $Q^\star$ quantified in Sec.~\ref{sec:operator}.

\begin{theorem}[Dynamic regret decomposition (algorithmic + geometric)]
\label{thm:dynreg-decomp-main}
Under Assumption~\ref{ass:mixing} and the one-step contraction abstraction in Assumption~\ref{ass:solver},
there exist constants $C_{\mathrm{alg}},C_{\mathrm{geo}}>0$ such that for any (possibly stochastic) monotone path $(\tau_t)_{t=1}^T$,
$
\E[\mathrm{DynReg}(T)]
\ \le\ C_{\mathrm{alg}}\sum_{t=1}^T \E\|Q_t-Q^\star_{\tau_t}\|_\infty
\ +\ C_{\mathrm{geo}}\sum_{t=1}^{T-1} \E\big[\Delta \mathrm{Geo}_t\big].
$
This decomposition is proved in Appendix~\ref{app:regret-decomp}; in particular, Proposition~\ref{prop:regret} gives an explicit bound where the geometric contribution is expressed directly by $(\mathrm{PL},\mathrm{Curv},\Phi)$.
\end{theorem}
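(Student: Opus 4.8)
The plan is to reduce $\mathrm{DynReg}(T)$ to a sum of per-step action-value tracking errors via a greedy-policy suboptimality lemma, and then to route any index misalignment between the learner's critic and the current environment index into one-step displacements of the optimal fixed point, which Theorem~\ref{thm:path-value} turns into the geometric increments $\Delta\mathrm{Geo}_t$. The role of Assumption~\ref{ass:mixing} is exactly to license the invocation of Theorem~\ref{thm:path-value} on each sub-interval; Assumption~\ref{ass:solver} is used here only to guarantee the iterates are well defined/adapted and the tracking-error expectations finite (so the bound is non-vacuous), its quantitative use being deferred to Proposition~\ref{prop:regret}.

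The first step is the per-step estimate. Since $\pi_t$ is the greedy policy induced by a critic synchronized to path index $\tau_{t'}$ with $|t'-t|\le 1$ (depending on whether one counts the pre- or post-update iterate, the wrapper acts with $Q_{t-1}$ under $\tau_t$ or with $Q_t$ under $\tau_{t+1}$), the standard value-suboptimality argument — writing $Q^\star-Q^{\pi_t}=\gamma\,\mathcal P^{\pi_t}_{\tau_t}(Q^\star-Q^{\pi_t})+\gamma(\mathcal P^{\pi^\star_{\tau_t}}_{\tau_t}-\mathcal P^{\pi_t}_{\tau_t})Q^\star$, using the greedy inequality and $\|(I-\gamma\mathcal P^{\pi_t}_{\tau_t})^{-1}\|_{\infty\to\infty}\le(1-\gamma)^{-1}$ — gives $\|V^\star_{\tau_t}-V^{\pi_t}_{\tau_t}\|_\infty\le\frac{2}{1-\gamma}\|Q_{t'}-Q^\star_{\tau_t}\|_\infty$. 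Since $d_0$ is a probability measure and $V^\star_{\tau_t}\ge V^{\pi_t}_{\tau_t}$ pointwise, $\langle d_0,V^\star_{\tau_t}-V^{\pi_t}_{\tau_t}\rangle\le\frac{2}{1-\gamma}\|Q_{t'}-Q^\star_{\tau_t}\|_\infty$. To match the stated form I then insert $\|Q_{t'}-Q^\star_{\tau_t}\|_\infty\le\|Q_{t'}-Q^\star_{\tau_{t'}}\|_\infty+\|Q^\star_{\tau_{t'}}-Q^\star_{\tau_t}\|_\infty$, keeping the first summand as the algorithmic tracking error and exposing a one-step optimal-value displacement.

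Next I bound each displacement geometrically. Because the path is monotone, $[\tau_t,\tau_{t+1}]$ (equivalently $[\tau_{t-1},\tau_t]$) is an ordered sub-interval of $[0,1]$, so Theorem~\ref{thm:path-value} gives $\|Q^\star_{\tau_{t+1}}-Q^\star_{\tau_t}\|_\infty\le\Delta\mathrm{Geo}_t$ with $\Delta\mathrm{Geo}_t:=\frac{\mathrm{PL}(\tau_t,\tau_{t+1})}{(1-\gamma)^2}+\frac{\mathrm{Curv}(\tau_t,\tau_{t+1})}{(1-\gamma)^3}+\Phi(\mathcal K\cap[\tau_t,\tau_{t+1}],\mathrm{gap})$. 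Summing the per-step inequalities over $t=1,\dots,T$, reindexing the tracking sum ($t'\mapsto t$, which only adds an $O(1)$ boundary term $\|Q_0-Q^\star_{\tau_0}\|_\infty$ that I absorb into $C_{\mathrm{alg}}$ or treat as a constant), and then taking total expectation — the iterates $Q_t$ and the path $(\tau_t)$ are adapted to the natural filtration $\{\mathcal F_t\}$, so each step's inequality survives $\E[\cdot]$ by monotonicity of expectation and the tower property — yields $\E[\mathrm{DynReg}(T)]\le\frac{2}{1-\gamma}\sum_{t=1}^T\E\|Q_t-Q^\star_{\tau_t}\|_\infty+\frac{2}{1-\gamma}\sum_{t=1}^{T-1}\E[\Delta\mathrm{Geo}_t]$, i.e.\ the claim with $C_{\mathrm{alg}}=C_{\mathrm{geo}}=\frac{2}{1-\gamma}$.

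The main obstacle is not any single inequality but the bookkeeping forced by the stochastic, monotone path. One must (i) ensure $\Delta\mathrm{Geo}_t$ is a genuine random variable — in particular that the kink-neighborhood radii entering $\Phi$ are chosen consistently across the consecutive random sub-intervals $[\tau_t,\tau_{t+1}]$ so that the penalties telescope into $\Phi(\mathcal K\cap[\tau_1,\tau_T],\mathrm{gap})$ without double-charging an isolated switch; (ii) check measurability and integrability so the tower property applies to each step; and (iii) verify the greedy-to-value conversion constant is uniform in $\tau$ — which it is, since the resolvent bound $(1-\gamma)^{-1}$ does not depend on $\tau$, the only place $\tau$-uniformity could otherwise fail. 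A secondary subtlety is pinning down the exact synchronization convention between $Q_t$, $\pi_t$, and $\tau_t$ in the wrapper; whichever convention is used shifts the displacement indices by at most one and perturbs the summation range by an $O(1)$ boundary term, absorbed into the constants.
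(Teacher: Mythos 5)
Your proposal is correct and rests on the same two pillars as the paper's argument (the proof of Proposition~\ref{prop:regret} in Appendix~\ref{app:regret-decomp}): a per-step greedy-policy suboptimality conversion $\langle d_0, V^\star_{\tau_t}-V^{\pi_t}_{\tau_t}\rangle \lesssim (1-\gamma)^{-1}\|Q_{\cdot}-Q^\star_{\tau_t}\|_\infty$, followed by Theorem~\ref{thm:path-value} applied to the one-step sub-intervals $[\tau_t,\tau_{t+1}]$ to produce the $\Delta\mathrm{Geo}_t$ increments. The one routing difference is where the geometric term enters: the paper bounds the regret by the algorithmic sum alone, $\E[\mathrm{DynReg}(T)]\le \frac{C_0}{1-\gamma}\sum_t \E\|Q_t-Q^\star_{\tau_t}\|_\infty$ (taking $\pi_t$ greedy w.r.t.\ the current $Q_t$, so no index mismatch arises), and the geometric load only surfaces afterwards, inside the contraction recursion that further bounds $\sum_t e_t$; on that reading the theorem's stated inequality holds because the geometric sum is a nonnegative add-on. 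You instead make the geometric term do genuine work by charging a critic/environment synchronization lag $|t'-t|\le 1$ to a one-step displacement of $Q^\star$. Both are valid; your version is slightly more robust to the synchronization convention, while the paper's is cleaner under its stated convention. Your caveat (i) about choosing the kink-neighborhood radii in $\Phi$ consistently across the random sub-intervals so the penalties add up without double-charging is a real bookkeeping point that the paper also passes over silently when it asserts $\sum_t \Phi(\mathcal K\cap[\tau_t,\tau_{t+1}],\mathrm{gap})=\Phi(\mathcal K,\mathrm{gap})$.
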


We summarize guarantees that justify the geometry-aware scheduler.
When the path is locally frozen (i.e., $\tau_t\equiv\bar\tau$ for a period), standard stationary convergence
is recovered; see Appendix~\ref{app:scheduler-stability}.
When the path moves, tracking error obeys a contraction-plus-load recursion where the load separates
geometry from noise/bias.

Let $e_t:=\E\|Q_t-Q^\star_{\tau_t}\|_\infty$ and let $\Delta(\cdot)_t$ denote the contribution of
$[\tau_t,\tau_{t+1}]$ to the corresponding geometric quantity.
Define the per-step geometric load
$
\Delta \mathrm{Geo}_t=\frac{\Delta\mathrm{PL}_t}{(1-\gamma)^2}
+\frac{\Delta\mathrm{Curv}_t}{(1-\gamma)^3}
+\Delta\Phi_t.
$

\begin{theorem}[Homotopy tracking recursion]
\label{thm:tracking}
Under Assumption~\ref{ass:mixing} and the generic one-step solver contraction in
Assumption~\ref{ass:solver} (Appendix~\ref{app:regret}), there exist constants $c_1,c_2>0$ such that
$
e_{t+1}\ \le\ \rho\,e_t\ +\ c_1\,\Delta \mathrm{Geo}_t\ +\ c_2\,\E[\sigma_t]\ +\ \beta_t.
$
Consequently,
$
\max_{t\le T} e_t\ \lesssim\ \frac{1}{1-\rho}\Big(
\sum_{u\le T}\Delta \mathrm{Geo}_u\ +\ \sum_{u\le T}\E[\sigma_u]\ +\ \sum_{u\le T}\beta_u\Big),
$
up to universal constants.
\end{theorem}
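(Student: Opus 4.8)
The plan is to derive the one-step recursion and then unroll it. First I would split the time-$(t{+}1)$ tracking error by inserting the \emph{previous} target via the triangle inequality, $\|Q_{t+1}-Q^\star_{\tau_{t+1}}\|_\infty \le \|Q_{t+1}-Q^\star_{\tau_t}\|_\infty + \|Q^\star_{\tau_t}-Q^\star_{\tau_{t+1}}\|_\infty$. The first term is ``optimization progress toward the frozen current fixed point $Q^\star_{\tau_t}$'', the second is ``drift of that fixed point as the environment index moves from $\tau_t$ to $\tau_{t+1}$''. This is the familiar tracking $=$ optimization $+$ drift decomposition, here specialized to Bellman fixed points, and it mirrors the algorithmic/geometric split already used for the dynamic-regret decomposition (Theorem~\ref{thm:dynreg-decomp-main}).

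For the optimization term I would invoke the contraction-with-noise abstraction of Assumption~\ref{ass:solver}: one update of the base solver under the scheduled hyperparameters satisfies $\|Q_{t+1}-Q^\star_{\tau_t}\|_\infty \le \rho\,\|Q_t-Q^\star_{\tau_t}\|_\infty + \sigma_t + \beta_t$, where $\rho\in(0,1)$ is the effective contraction modulus, $\sigma_t$ the zero-mean sampling fluctuation, and $\beta_t$ the systematic approximation bias. Here $\rho$ is a genuine uniform constant because Proposition~\ref{prop:stability} guarantees the schedules $(\eta_t,\nu_t,\lambda_t)$ (resp.\ $(D_t,B_t)$) are piecewise constant with bounded variation and never leave their clip ranges; e.g.\ $\eta_t\ge\eta_{\min}>0$ forces a Q-update modulus like $1-\eta_t(1-\gamma)$ to stay bounded away from $1$. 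For the drift term I would apply Theorem~\ref{thm:path-value} on the sub-interval $[\tau_t,\tau_{t+1}]$, which yields exactly $\|Q^\star_{\tau_t}-Q^\star_{\tau_{t+1}}\|_\infty \le \Delta\mathrm{PL}_t/(1-\gamma)^2 + \Delta\mathrm{Curv}_t/(1-\gamma)^3 + \Delta\Phi_t = \Delta\mathrm{Geo}_t$, using additivity of $\mathrm{PL}$, $\mathrm{Curv}$, and $\Phi$ over adjacent intervals. Conditioning on the $\sigma$-field generated by the path and replay history so that $\tau_t,\tau_{t+1}$ and $\Delta\mathrm{Geo}_t$ are measurable, then applying the tower rule and combining the two bounds, gives $e_{t+1}\le\rho\,e_t + c_1\Delta\mathrm{Geo}_t + c_2\,\E[\sigma_t] + \beta_t$ (one may take $c_1=c_2=1$), which is the stated one-step recursion.

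For the ``consequently'' bound I would unroll this scalar linear recursion. Writing $L_t:=c_1\Delta\mathrm{Geo}_t + c_2\E[\sigma_t] + \beta_t\ge 0$, induction gives $e_t \le \rho^t e_0 + \sum_{j=0}^{t-1}\rho^{t-1-j}L_j$. Bounding the convolution by $\big(\sum_{k\ge0}\rho^k\big)\max_{j<t}L_j = \tfrac{1}{1-\rho}\max_{j<t}L_j$, then using $\max_{j<t}L_j\le\sum_{u\le T}L_u$ (nonnegativity) and $\rho^t e_0\le e_0$, yields $\max_{t\le T} e_t \le e_0 + \tfrac{1}{1-\rho}\sum_{u\le T}L_u$. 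Absorbing the initial transient and the constants $c_1,c_2$ into the $\lesssim$ produces $\max_{t\le T} e_t \lesssim \tfrac{1}{1-\rho}\big(\sum_u\Delta\mathrm{Geo}_u + \sum_u\E[\sigma_u] + \sum_u\beta_u\big)$, where the $(1-\rho)^{-1}$ prefactor is precisely the geometric sum of the contraction.

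The main obstacle is not the algebra but validating the two black-box interfaces. First, one must check that the HT-RL/HT-MCTS updates with the chosen hyperparameter maps actually satisfy Assumption~\ref{ass:solver} \emph{uniformly along random paths}, i.e.\ that a single scheduled update is a $\rho$-contraction toward the \emph{current} fixed point with modulus independent of $t$ and of the path realization; this is where Proposition~\ref{prop:stability}'s slow-variation/no-chattering guarantees are indispensable, since a schedule drifting toward the clip boundaries could let $\rho\to1$. Second, the application of Theorem~\ref{thm:path-value} across a sub-interval that straddles a kink must be handled carefully: Lemma~\ref{lem:envelope} fails there, and one needs $\Delta\Phi_t$ to genuinely dominate the nonsmooth jump in $Q^\star$, while the random position of $\tau_t$ relative to the kink set $\mathcal K$ must not break the measurability used when passing to expectations. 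A secondary subtlety is that $\sigma_t$ and $\beta_t$ themselves depend on the scheduled hyperparameters (smaller $\eta_t$ shrinks $\sigma_t$ but can enlarge $\beta_t$); the recursion is valid as stated with $\E[\sigma_t],\beta_t$ treated as given, but a tight bound would track this coupling.
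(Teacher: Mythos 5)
Your proposal is correct and follows essentially the same route as the paper: triangle inequality to split tracking error into optimization-toward-the-frozen-fixed-point plus fixed-point drift, Assumption~\ref{ass:solver} for the former, Theorem~\ref{thm:path-value} on $[\tau_t,\tau_{t+1}]$ for the latter (giving $\Delta\mathrm{Geo}_t$ with $c_1=c_2=1$ up to absorbed constants), then unrolling the scalar recursion. The only cosmetic difference is in the final step — you bound the convolution $\sum_j \rho^{t-1-j}L_j$ by $\tfrac{1}{1-\rho}\max_j L_j \le \tfrac{1}{1-\rho}\sum_u L_u$, whereas the paper sums the recursion and uses $\max_{t\le T}e_t\le\sum_{t\le T}e_t$ — but both yield the same $(1-\rho)^{-1}$ prefactor; your remarks about verifying that the scheduled updates actually satisfy the contraction abstraction uniformly are fair caveats on the assumption, not gaps in the proof.
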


Theorem~\ref{thm:tracking} turns the pathwise drift control from Theorem~\ref{thm:path-value} into an online
bound: tracking error is dominated by a contracting term plus a load term that cleanly separates
geometry, variance, and approximation bias.
Additional stochastic-approximation compatibility results (Robbins--Monro regimes) and no-chattering
statements for the scheduler are deferred to Appendix~\ref{app:scheduler-stability}.

\begin{corollary}[Asymptotic convergence when the path stabilizes]
\label{cor:asymptotic}
If $\tau_t\to\tau_\infty$, $\sum_t \Delta \mathrm{Geo}_t<\infty$, $\sum_t \E[\sigma_t]<\infty$, and $\beta_t\to 0$, then $e_t\to 0$ and $Q_t\to Q^\star_{\tau_\infty}$ in expectation. 
\end{corollary}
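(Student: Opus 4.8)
The plan is to feed the hypotheses directly into the tracking recursion of Theorem~\ref{thm:tracking} and extract an $o(1)$ conclusion, then upgrade convergence of $e_t$ to convergence of $Q_t$ via the path-integral bound of Theorem~\ref{thm:path-value}. First I would recall the one-step recursion $e_{t+1}\le \rho\,e_t + c_1\Delta\mathrm{Geo}_t + c_2\,\E[\sigma_t] + \beta_t$ with $\rho<1$, and set $u_t := c_1\Delta\mathrm{Geo}_t + c_2\,\E[\sigma_t] + \beta_t$. The three summability/vanishing hypotheses give $\sum_t \Delta\mathrm{Geo}_t<\infty$, $\sum_t \E[\sigma_t]<\infty$, and $\beta_t\to0$, hence $u_t\to 0$ (each summand of a convergent nonnegative series tends to zero, and $\beta_t\to0$ by assumption). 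Unrolling the linear recursion, $e_{T}\le \rho^{T}e_0 + \sum_{t=0}^{T-1}\rho^{T-1-t}u_t$; the first term vanishes since $\rho<1$, and the second is a discrete convolution of $\rho^{(\cdot)}\in\ell^1$ with $u_t\to0$, so it tends to $0$ (standard: split the sum at $t<N$ versus $t\ge N$, bound the tail by $\sup_{t\ge N}u_t\cdot\frac{1}{1-\rho}$ and the head by $\rho^{T-1-N}\sum_{t<N}u_t$, then let $T\to\infty$ and $N\to\infty$). This yields $e_t = \E\|Q_t - Q^\star_{\tau_t}\|_\infty \to 0$.

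Next I would pin the moving target. Since $\tau_t\to\tau_\infty$ and $\sum_t \Delta\mathrm{Geo}_t<\infty$, Theorem~\ref{thm:path-value} applied to the interval $[\tau_t\wedge\tau_\infty,\ \tau_t\vee\tau_\infty]$ gives $\|Q^\star_{\tau_t}-Q^\star_{\tau_\infty}\|_\infty \le \frac{\mathrm{PL}(\cdot)}{(1-\gamma)^2}+\frac{\mathrm{Curv}(\cdot)}{(1-\gamma)^3}+\Phi(\cdot)$, and each of these is a tail of a convergent series over the sub-path between $\tau_t$ and $\tau_\infty$ (for monotone paths this tail is $\sum_{u\ge t}\Delta\mathrm{Geo}_u$ up to the $(1-\gamma)$ weightings already absorbed in $\Delta\mathrm{Geo}$), hence $\|Q^\star_{\tau_t}-Q^\star_{\tau_\infty}\|_\infty\to0$. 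Combining with the triangle inequality,
\[
\E\|Q_t - Q^\star_{\tau_\infty}\|_\infty \ \le\ \E\|Q_t - Q^\star_{\tau_t}\|_\infty + \|Q^\star_{\tau_t}-Q^\star_{\tau_\infty}\|_\infty \ \longrightarrow\ 0,
\]
which is exactly $Q_t\to Q^\star_{\tau_\infty}$ in expectation, and $e_t\to0$ as already shown.

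The one genuinely delicate point is the step where $\Phi$ enters: $\Phi$ is only finite for \emph{isolated} kinks, so if $\tau_\infty$ is itself a kink, the tail $\Phi(\mathcal K\cap[\tau_t,\tau_\infty],\mathrm{gap})$ need not vanish as $t\to\infty$ — indeed the inverse-gap integrand can blow up near $\tau_\infty$. I expect this to be the main obstacle, and I would handle it by reading the hypothesis $\sum_t\Delta\mathrm{Geo}_t<\infty$ as already including the $\Delta\Phi_t$ contributions (this is how $\Delta\mathrm{Geo}_t$ is defined in the text), so finiteness of that sum forces the kink load along the path to be summable and hence $\tau_\infty\notin\mathcal K$ is effectively assumed (or the kink mass accumulated near $\tau_\infty$ is zero); under that reading the tail argument goes through verbatim. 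A secondary, milder subtlety is that Theorem~\ref{thm:tracking}'s constants and Theorem~\ref{thm:path-value} implicitly require staying in (or passing through countably many components of) the regular region $\mathcal R_\xi$ — I would note that the corollary's summability hypotheses are exactly what make the cumulative time spent in non-regular neighborhoods negligible, so no additional assumption is needed beyond what is stated.
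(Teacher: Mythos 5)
Your proof of the first claim ($e_t\to 0$) is exactly the paper's argument: same recursion from Theorem~\ref{thm:tracking}, same observation that summable nonnegative series have vanishing terms, same head/tail splitting of the discrete convolution $\sum_t \rho^{T-1-t}u_t$. For the second claim the paper takes a different, and slightly more robust, route than yours. You bound $\|Q^\star_{\tau_t}-Q^\star_{\tau_\infty}\|_\infty$ by applying Theorem~\ref{thm:path-value} directly on the interval ending at $\tau_\infty$, which forces you to confront the endpoint issue you flag: the tail $\sum_{u\ge t}\Delta\mathrm{Geo}_u$ only covers $[\tau_t,\tau_\infty)$, and if $\tau_\infty$ is itself a kink (or an accumulation point of kinks) the path bound at the closed endpoint is not directly licensed. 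The paper avoids this entirely: it uses the per-step bound $\|Q^\star_{\tau_{u+1}}-Q^\star_{\tau_u}\|_\infty\le c_1\Delta\mathrm{Geo}_u$ only between \emph{finite} times to show $(Q^\star_{\tau_t})_t$ is Cauchy in $\|\cdot\|_\infty$, hence convergent to some $\bar Q$, and then identifies $\bar Q=Q^\star_{\tau_\infty}$ by passing to the limit in the fixed-point equation $\Tcal_{\tau_t}Q^\star_{\tau_t}=Q^\star_{\tau_t}$, using continuity of $\tau\mapsto\Tcal_\tau Q$ and uniqueness of the fixed point of the $\gamma$-contraction $\Tcal_{\tau_\infty}$. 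That identification step never evaluates the path geometry at $\tau_\infty$, so no extra reading of the hypotheses is needed. Your resolution (interpreting $\sum_t\Delta\mathrm{Geo}_t<\infty$ as controlling the kink mass accumulating near $\tau_\infty$) is defensible but adds an interpretive assumption; if you adopt the Cauchy-plus-fixed-point identification instead, the argument closes without it.
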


In particular, when the path stabilizes and the tracking error vanishes, the per-round suboptimality also vanishes. Under the summability conditions of Cor.~\ref{cor:asymptotic}, $\sum_{t=1}^\infty \E\|Q_t-Q^\star_{\tau_t}\|_\infty<\infty$ holds by Theorem~\ref{thm:tracking}, and thus $\mathrm{DynReg}(T)$ grows at most sublinearly, implying $\mathrm{DynReg}(T)/T\to 0$.

\section{Experiments}
\label{sec:experiments}

We empirically validate (i) the geometric characterization of non-stationarity and (ii) the geometry-aware homotopy-tracking wrapper (HT-RL).
We first use synthetic ring MDPs where $(\mathrm{PL},\mathrm{Curv},\Phi)$ and $Q^\star$ are computable, directly checking the path--value bound (Theorem~\ref{thm:path-value}) and the feasible-tube bounds (Thms.~\ref{thm:final-tube1}--\ref{thm:final-tube2}).
We then evaluate HT-RL as a wrapper around standard deep RL solvers on four non-stationary control benchmarks:
\textsc{LunarLander}, \textsc{Acrobot}, \textsc{PointMass}, and \textsc{Pendulum} (clean and noisy variants).
Implementation details and full hyperparameters are deferred to Appendix~\ref{app:exp-details}.

\begin{figure}[h]
  \centering
  \includegraphics[width=0.65\linewidth]{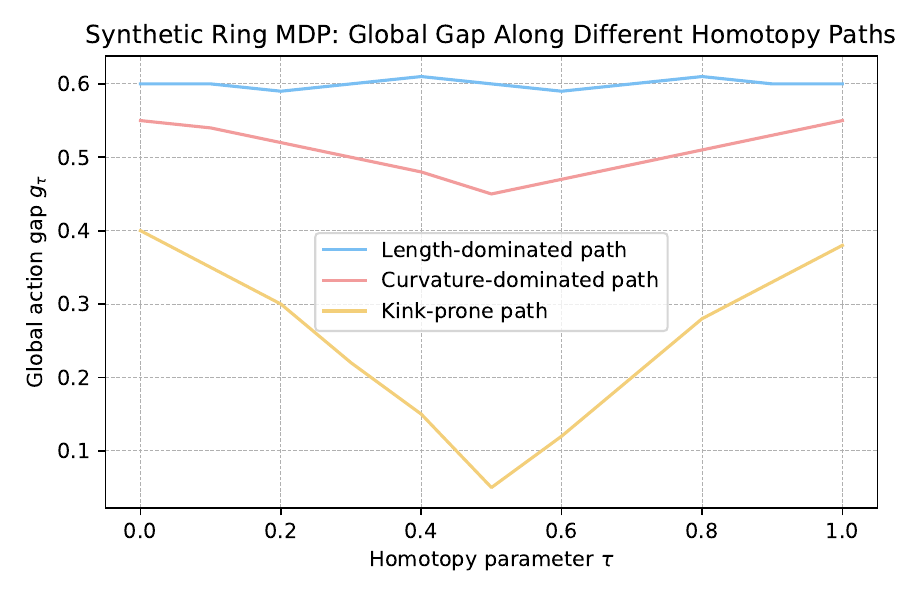}%
  \caption{%
  Synthetic ring MDP homotopy paths. Each panel (length-dominated, curvature-dominated, kink-prone) visualizes the moving reward bump and transition bias along $\tau$, together with the corresponding global action gap $g_\tau$. This figure illustrates how path length, curvature, and kink mass arise in simple MDPs.}
  \label{fig:synthetic-geometry}
\vspace{-1.5em}
\end{figure}

\begin{figure}[h]
  \centering

  \begin{subfigure}[t]{0.49\linewidth}
    \centering
    \includegraphics[width=\linewidth]{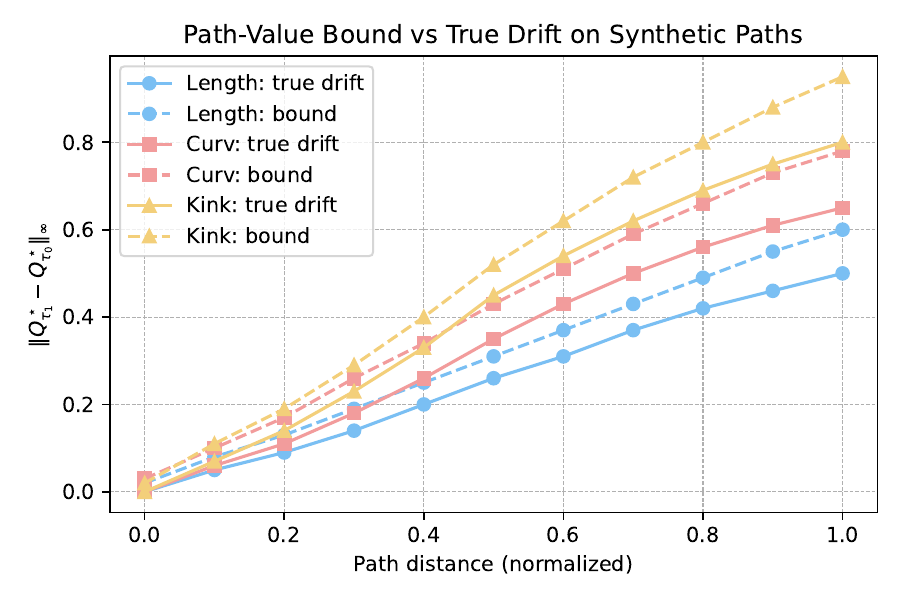}%
    \caption{%
    Path--value bound (Theorem~\ref{thm:path-value}) vs.\ true drift
    $\|Q^\star_{\tau_1}-Q^\star_{\tau_0}\|_\infty$ on length-/curvature-/kink-dominated paths.}
    \label{fig:synthetic-drift}
  \end{subfigure}\hfill
  \begin{subfigure}[t]{0.49\linewidth}
    \centering
    \includegraphics[width=\linewidth]{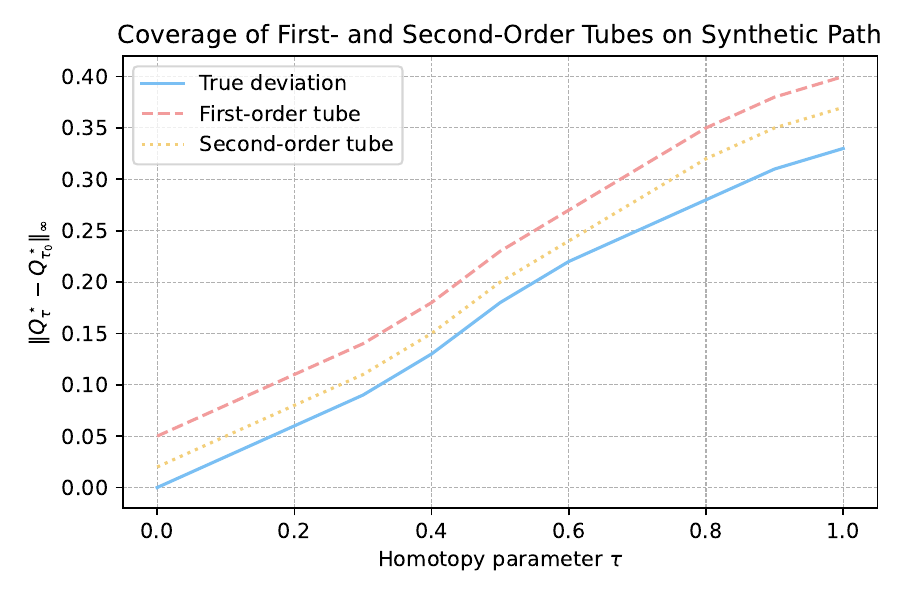}%
    \caption{%
    Tube coverage (Thms.~\ref{thm:final-tube1}--\ref{thm:final-tube2}): actual deviation
    vs.\ first- and second-order tube radii.}
    \label{fig:synthetic-tubes}
  \end{subfigure}

  \caption{%
  Synthetic ring MDP: validating the path--value bound and feasible tubes.
  \textbf{Left:} Theorem~\ref{thm:path-value} tracks the true drift with a modest constant factor, and curvature/kink terms tighten the bound exactly when the corresponding geometric component is large.
  \textbf{Right:} The refined second-order tube is noticeably tighter on curvature-dominated paths, while both tubes closely envelope the true deviation on length-dominated paths.}
  \label{fig:synthetic-bounds}
\vspace{-1em}
\end{figure}

\begin{figure}[h!]
  \centering
  \includegraphics[width=0.49\linewidth]{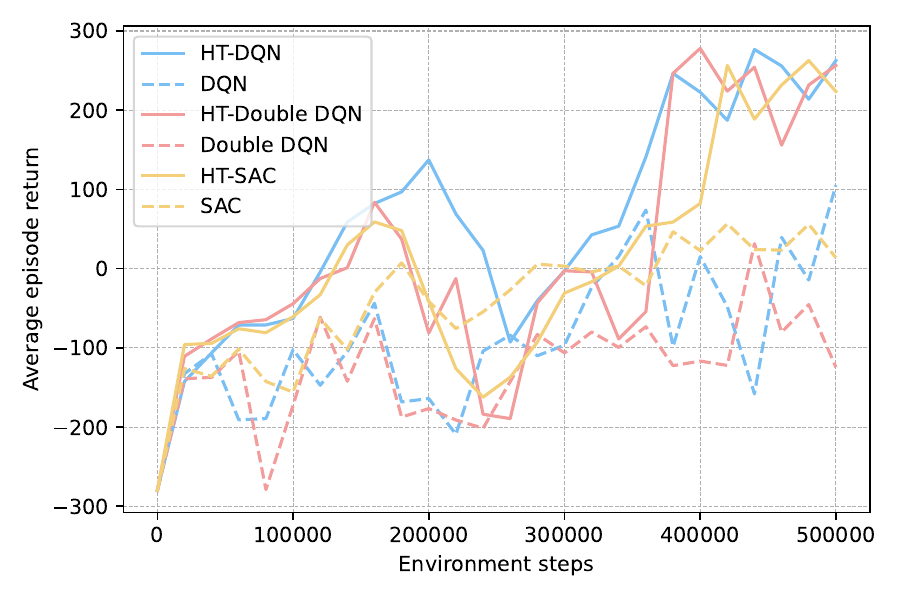}%
  \includegraphics[width=0.49\linewidth]{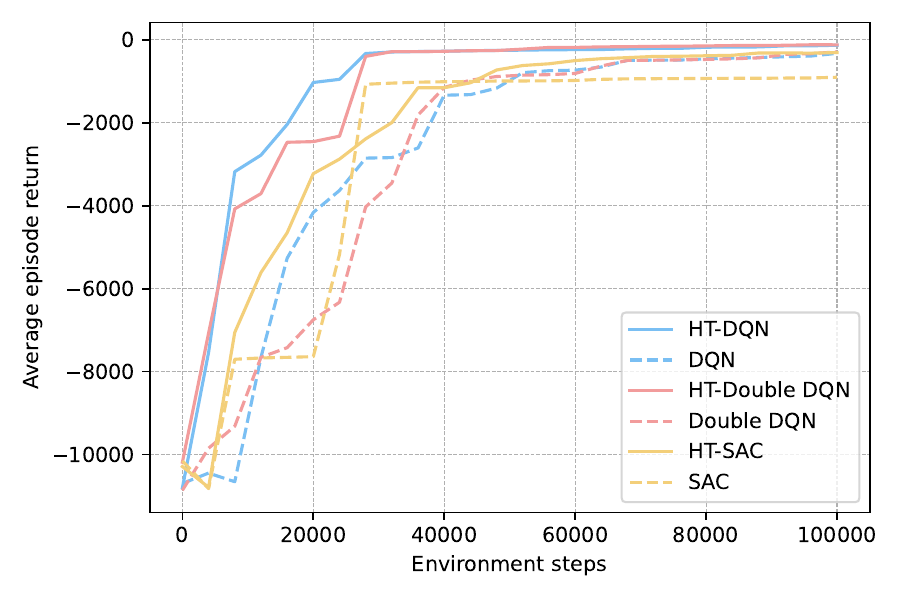}
  \caption{%
  Deep control benchmarks under non-stationary homotopy paths (noisy drift).
  We show two representative environments (left: \textsc{LunarLander}, right: \textsc{PointMass}).
  Each panel reports average episode return vs.\ environment steps for static baselines and their HT-RL counterparts.
  Full learning curves for all environments are deferred to Appendix~\ref{app:exp-figures}.}
  \label{fig:deep-returns-noise}
\end{figure}

\begin{figure}[h!]
  \centering

  \begin{subfigure}[t]{0.49\linewidth}
    \centering
    \includegraphics[width=\linewidth]{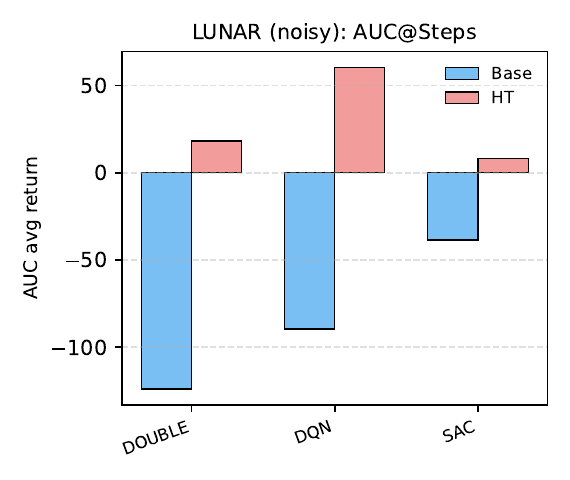}%
    \caption{\textbf{AUC@Steps} (area under the return curve over the training budget).}
    \label{fig:deep-auc-noise}
  \end{subfigure}\hfill
  \begin{subfigure}[t]{0.49\linewidth}
    \centering
    \includegraphics[width=\linewidth]{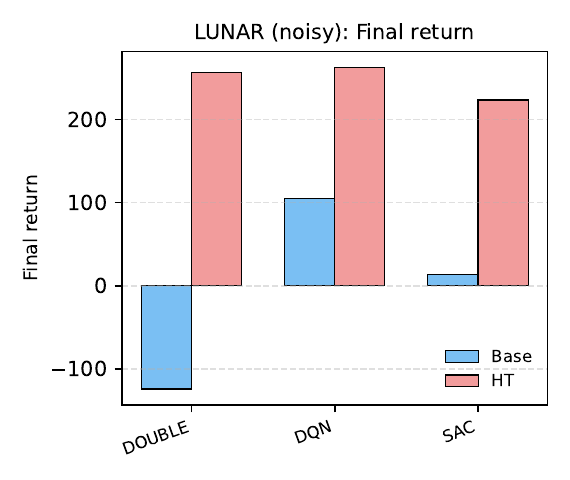}%
    \caption{\textbf{Final evaluation return}.}
    \label{fig:deep-final-noise}
  \end{subfigure}

  \caption{%
  Aggregate performance under non-stationary drift on a representative environment (\textsc{LunarLander}).
  Full 4-environment summaries are deferred to Appendix~\ref{app:exp-figures}.}
\end{figure}

\begin{table*}[h]
  \centering
  \scriptsize
  \setlength{\tabcolsep}{3.0pt}
\resizebox{0.8\textwidth}{!}{%
  \begin{tabular}{l|l|l rrr rrrr rrr}
\toprule\toprule
Env & Drift & Solver
& AUC$_{\text{base}}$ & AUC$_{\text{HT}}$ & $\Delta$AUC(\%)
& R@50\%$_{\text{base}}$ & R@50\%$_{\text{HT}}$
& R@75\%$_{\text{base}}$ & R@75\%$_{\text{HT}}$
& Final$_{\text{base}}$ & Final$_{\text{HT}}$ & $\Delta$Final(\%) \\
\midrule
lunar & noisy & DOUBLE & -123.884 & 18.115 & 114.6 & -172.071 & -186.526 & -110.192 & 171.352 & -124.461 & 256.521 & 306.1 \\
 &  & DQN & -89.573 & 60.199 & 167.2 & -93.983 & -34.808 & -55.198 & 220.102 & 105.371 & 262.695 & 149.3 \\
 &  & SAC & -38.604 & 7.974 & 120.7 & -40.710 & -149.617 & 29.504 & 57.510 & 13.876 & 223.664 & 1511.9 \\
\hline
acrobot & noisy & DOUBLE & -360.838 & -238.192 & 34.0 & -500.000 & -128.100 & -500.000 & -196.675 & -79.500 & -75.000 & 5.7 \\
 &  & DQN & -336.722 & -248.748 & 26.1 & -474.950 & -202.600 & -500.000 & -134.800 & -88.300 & -74.200 & 16.0 \\
 &  & SAC & -335.874 & -199.134 & 40.7 & -500.000 & -137.300 & -500.000 & -88.100 & -84.100 & -79.500 & 5.5 \\
\hline
pointmass & noisy & DOUBLE & -2872.648 & -1235.390 & 57.0 & -866.531 & -239.016 & -483.819 & -152.126 & -287.220 & -118.111 & 58.9 \\
 &  & DQN & -2637.418 & -1081.640 & 59.0 & -976.346 & -249.771 & -486.639 & -203.427 & -324.011 & -128.296 & 60.4 \\
 &  & SAC & -2786.019 & -2120.616 & 23.9 & -990.533 & -671.267 & -933.254 & -394.714 & -900.660 & -305.808 & 66.0 \\
\hline
pendulum & noisy & DOUBLE & -520.870 & -265.879 & 49.0 & -215.836 & -143.938 & -192.093 & -121.512 & -185.028 & -50.116 & 72.9 \\
 &  & DQN & -430.961 & -306.180 & 29.0 & -235.350 & -161.403 & -178.560 & -122.440 & -155.122 & -77.443 & 50.1 \\
 &  & SAC & -549.411 & -372.853 & 32.1 & -316.273 & -190.280 & -248.684 & -168.130 & -240.453 & -139.797 & 41.9 \\
\bottomrule
\bottomrule
\end{tabular}
}
  \caption{%
  Noisy-drift results: AUC@Steps and returns at 50\%/75\% of the training budget and at the final evaluation, together with relative improvements of HT-RL over matched static baselines.
  Clean-drift rows are deferred to Appendix~\ref{app:exp-figures}.}
  \label{tab:deep-summary}
\vspace{-2em}
\end{table*}

\paragraph{Evaluation with synthetic homotopy MDPs.}
We construct a tabular ring MDP and generate three homotopy regimes by controlling how rewards and transition bias evolve along $\tau$:
length-dominated (large $\mathrm{PL}$, small $\mathrm{Curv}$, no kinks), curvature-dominated (comparable $\mathrm{PL}$ but large $\mathrm{Curv}$), and kink-prone (small gaps and nonzero $\Phi$).
Across these paths, Theorem~\ref{thm:path-value} upper-bounds the true drift with a modest constant factor (Fig.~\ref{fig:synthetic-drift}).
Curvature-dominated paths exhibit larger deviations at comparable $\mathrm{PL}$, and the second-order tube is significantly tighter than the first-order tube (Fig.~\ref{fig:synthetic-tubes}).
On kink-prone paths, $\Phi$ concentrates near action switches and gap-safe regions (Def.~\ref{def:final-safe}) correctly exclude near-tie segments.
Overall, these results support interpreting $(\mathrm{PL},\mathrm{Curv},\Phi)$ as first-, second-, and third-order measures of non-stationarity.

\paragraph{Evaluation on deep RL benchmarks.}
We test HT-RL (Alg.~\ref{alg:ht-rl}) as a geometry-aware wrapper around standard deep RL solvers under monotone non-stationary homotopy paths $\tau_t$.
We summarize performance using \textbf{AUC@Steps} and \textbf{final evaluation return}.
Fig.~\ref{fig:deep-returns-noise} shows representative noisy-drift learning curves where HT-RL tracks drift more effectively than static baselines.
Aggregate noisy-drift improvements are reported in Fig.~\ref{fig:deep-auc-noise}--\ref{fig:deep-final-noise} and Table~\ref{tab:deep-summary}; full learning curves (including clean drift) are deferred to Appendix~\ref{app:exp-figures}.

\section{Conclusion}
\label{sec:conclusion}

We developed a solver-agnostic geometric characterization of non-stationary RL by bounding the drift of optimal fixed points along a homotopy path via path length, curvature, and kink burden, and used observable proxies of these quantities to design a stable homotopy scheduler that adaptively regularizes learning and planning; experiments on synthetic and deep-control benchmarks show improved tracking under curved and kink-prone drifts.

\nocite{langley00}

\bibliography{ref}
\bibliographystyle{main}

\newpage
\appendix
\onecolumn

\section{Sufficient Conditions for Assumption~\ref{ass:mixing}}
\label{app:mixing}

This appendix provides a sufficient condition ensuring that the optimal value functions
$\{V^\star_\tau\}_{\tau\in[0,1]}$ admit a uniform Lipschitz scale, as stated in
Assumption~\ref{ass:mixing}. The key idea is that the optimality operator contracts
\emph{Lipschitz seminorms} whenever the transition kernel is Lipschitz in the $1$-Wasserstein metric.

\subsection{Wasserstein-Lipschitz kernels and reward regularity}

Fix a metric space $(\Sspace,d_{\Sspace})$. For each $\tau\in[0,1]$ and action $a\in\Aspace$,
assume the reward and transition satisfy:

\begin{assumption}[A sufficient condition for uniform Lipschitz scale]
\label{ass:mixing-sufficient}
There exist constants $L_r\ge 0$ and $\kappa\ge 0$ such that for all $\tau\in[0,1]$:
\begin{enumerate}
\item (\textbf{Reward Lipschitzness}) For every $a\in\Aspace$, the map $s\mapsto r_\tau(s,a)$ is $L_r$-Lipschitz:
\[
|r_\tau(s,a)-r_\tau(s',a)|\le L_r\,d_{\Sspace}(s,s')\qquad \forall s,s'\in\Sspace.
\]
\item (\textbf{Kernel Wasserstein-Lipschitzness}) For every $a\in\Aspace$,
\[
\Wone\!\big(P_\tau(\cdot\mid s,a),P_\tau(\cdot\mid s',a)\big)
\le \kappa\,d_{\Sspace}(s,s')\qquad \forall s,s'\in\Sspace.
\]
\item (\textbf{Discounted contraction on Lipschitz scale}) $\gamma\kappa<1$.
\end{enumerate}
\end{assumption}

The second condition states that the one-step transition distribution varies smoothly in state,
measured by the $1$-Wasserstein metric $\Wone$, uniformly over $\tau$ and $a$.

\subsection{A Lipschitz stability lemma}

We first record a standard duality bound showing how Wasserstein controls expectation drift.

\begin{lemma}[Wasserstein controls Lipschitz expectation differences]
\label{lem:w1-lip-test}
Let $\mu,\nu$ be probability measures on $(\Sspace,d_{\Sspace})$.
For any $f:\Sspace\to\R$ with finite Lipschitz seminorm $\|f\|_{\mathrm{Lip}}$,
\[
\Big|\int f\,d\mu-\int f\,d\nu\Big|
\le \|f\|_{\mathrm{Lip}}\;\Wone(\mu,\nu).
\]
\end{lemma}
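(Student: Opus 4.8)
The plan is to reduce the inequality to the case $\|f\|_{\mathrm{Lip}}\le 1$ by homogeneity and then invoke the Kantorovich--Rubinstein dual description of $\Wone$ recorded in the preliminaries. First I would dispose of the degenerate case: if $\|f\|_{\mathrm{Lip}}=0$, then $f$ is constant on $\Sspace$, so $\int f\,d\mu=\int f\,d\nu$ and both sides vanish. Otherwise set $g:=f/\|f\|_{\mathrm{Lip}}$, which satisfies $\|g\|_{\mathrm{Lip}}\le 1$ and $\int f\,d(\mu-\nu)=\|f\|_{\mathrm{Lip}}\int g\,d(\mu-\nu)$; it therefore suffices to prove $\bigl|\int g\,d(\mu-\nu)\bigr|\le\Wone(\mu,\nu)$ for every $1$-Lipschitz $g$, after which multiplying back by $\|f\|_{\mathrm{Lip}}$ finishes the lemma.

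For the reduced claim I would give two equivalent arguments. The quick one: since $g$ is an admissible competitor in the supremum defining $\Wone(\mu,\nu)=\sup_{\|h\|_{\mathrm{Lip}}\le 1}\bigl|\int h\,d(\mu-\nu)\bigr|$, the bound is immediate (using that $\mu-\nu$ annihilates constants, one may recenter $g$ without changing $\int g\,d(\mu-\nu)$, which reconciles the seminorm normalization with the unit test class). The self-contained one, from the primal side: for any coupling $\pi$ of $(\mu,\nu)$ one has $\int f\,d\mu-\int f\,d\nu=\E_{(X,Y)\sim\pi}[f(X)-f(Y)]$, hence $\bigl|\int f\,d\mu-\int f\,d\nu\bigr|\le\E_\pi|f(X)-f(Y)|\le\|f\|_{\mathrm{Lip}}\,\E_\pi[d_\Sspace(X,Y)]$, and taking the infimum over couplings $\pi$ yields $\bigl|\int f\,d\mu-\int f\,d\nu\bigr|\le\|f\|_{\mathrm{Lip}}\,\Wone(\mu,\nu)$ directly.

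I do not expect a genuine obstacle: this is the elementary ``easy half'' of Kantorovich--Rubinstein, and it requires no regularity of $\mu,\nu$ beyond being probability measures. The only bookkeeping point is the normalization match between an arbitrary finite Lipschitz seminorm and the unit test class, handled by the rescaling $g=f/\|f\|_{\mathrm{Lip}}$ and (for the dual route) by constant-shift invariance of $\mu-\nu$; with that, the statement follows in a couple of lines.
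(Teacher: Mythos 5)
Your proposal is correct and its main route---rescaling $g=f/\|f\|_{\mathrm{Lip}}$ and invoking the Kantorovich--Rubinstein dual formulation of $\Wone$---is exactly the argument the paper gives, with the constant-$f$ case and the recentering remark being harmless extra care. The primal coupling argument you sketch as an alternative is also valid and self-contained, but it is not needed beyond the dual one-liner.
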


\begin{proof}
By definition of $\Wone$ via Kantorovich--Rubinstein duality,
\[
\Wone(\mu,\nu)=\sup_{\|g\|_{\mathrm{Lip}}\le 1}\Big|\int g\,d(\mu-\nu)\Big|.
\]
Applying this to $g = f / \|f\|_{\mathrm{Lip}}$ yields the claim.
\end{proof}

\subsection{Uniform Lipschitz bound for $V^\star_\tau$}

Define the optimality operator on state values
\[
(\Tcal_\tau V)(s)
:=\max_{a\in\Aspace}\Big\{r_\tau(s,a)+\gamma\,\E_{s'\sim P_\tau(\cdot\mid s,a)}[V(s')]\Big\}.
\]
Its unique fixed point is $V^\star_\tau=\Tcal_\tau V^\star_\tau$.

\begin{lemma}[Optimality operator contracts the Lipschitz seminorm]
\label{lem:TV-lip}
Under Assumption~\ref{ass:mixing-sufficient}, for any bounded $V:\Sspace\to\R$,
\[
\|\Tcal_\tau V\|_{\mathrm{Lip}}
\le L_r + \gamma\kappa\,\|V\|_{\mathrm{Lip}}.
\]
\end{lemma}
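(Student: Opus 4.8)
The plan is to bound the Lipschitz seminorm of $\Tcal_\tau V$ pointwise over pairs $s,s'\in\Sspace$ by passing the state-difference through the $\max_a$ using the elementary inequality $\left|\sup_a f_a-\sup_a g_a\right|\le\sup_a|f_a-g_a|$, and then controlling the per-action terms by the two parts of Assumption~\ref{ass:mixing-sufficient}. If $\|V\|_{\mathrm{Lip}}=\infty$ the claimed bound is vacuous, so we may assume $\|V\|_{\mathrm{Lip}}<\infty$; boundedness of $V$ ensures every expectation $\E_{s'\sim P_\tau(\cdot\mid s,a)}[V(s')]$ is finite and well defined.

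First I would fix $s,s'\in\Sspace$ with $s\neq s'$ and, for each $a\in\Aspace$, set $f_a(s):=r_\tau(s,a)+\gamma\,\E_{s'\sim P_\tau(\cdot\mid s,a)}[V(s')]$, so that $(\Tcal_\tau V)(s)=\sup_{a}f_a(s)$. The sup-difference inequality gives $|(\Tcal_\tau V)(s)-(\Tcal_\tau V)(s')|\le\sup_{a}|f_a(s)-f_a(s')|$. For a fixed $a$, the triangle inequality splits $|f_a(s)-f_a(s')|$ into a reward part $|r_\tau(s,a)-r_\tau(s',a)|$ and a transition part $\gamma\big|\E_{P_\tau(\cdot\mid s,a)}[V]-\E_{P_\tau(\cdot\mid s',a)}[V]\big|$. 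The reward part is at most $L_r\,d_{\Sspace}(s,s')$ by the reward-Lipschitzness clause.

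Next I would handle the transition part: applying Lemma~\ref{lem:w1-lip-test} with the probe $V$ and the two kernels $P_\tau(\cdot\mid s,a)$, $P_\tau(\cdot\mid s',a)$ yields $\big|\E_{P_\tau(\cdot\mid s,a)}[V]-\E_{P_\tau(\cdot\mid s',a)}[V]\big|\le\|V\|_{\mathrm{Lip}}\,\Wone\!\big(P_\tau(\cdot\mid s,a),P_\tau(\cdot\mid s',a)\big)$, and the kernel Wasserstein-Lipschitzness clause bounds the last factor by $\kappa\,d_{\Sspace}(s,s')$. Combining, $|f_a(s)-f_a(s')|\le\big(L_r+\gamma\kappa\|V\|_{\mathrm{Lip}}\big)d_{\Sspace}(s,s')$ with a right-hand side independent of $a$, so the same bound survives the $\sup_a$. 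Dividing by $d_{\Sspace}(s,s')$ and taking the supremum over $s\neq s'$ gives $\|\Tcal_\tau V\|_{\mathrm{Lip}}\le L_r+\gamma\kappa\|V\|_{\mathrm{Lip}}$.

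There is no real obstacle here; the only points requiring a word of care are (i) justifying the sup-difference inequality when $\Aspace$ is not finite, which follows from $f_a(s)\le f_a(s')+\sup_b|f_b(s)-f_b(s')|$ and symmetry, and (ii) noting that the argument is uniform in $\tau$ because the constants $L_r,\kappa$ in Assumption~\ref{ass:mixing-sufficient} are. Downstream, iterating this contraction on the Lipschitz seminorm (using $\gamma\kappa<1$) and the fixed-point identity $V^\star_\tau=\Tcal_\tau V^\star_\tau$ will give the uniform bound $\|V^\star_\tau\|_{\mathrm{Lip}}\le L_r/(1-\gamma\kappa)$ of Lemma~\ref{lem:mixing-sufficient-main}, but that is the next step rather than part of this lemma.
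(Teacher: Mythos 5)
Your proof is correct and follows essentially the same route as the paper's: both reduce the Lipschitz bound on $\Tcal_\tau V$ to a per-action reward term controlled by $L_r$ and an expectation-difference term controlled via Lemma~\ref{lem:w1-lip-test} and the kernel Wasserstein-Lipschitz clause; your use of the sup-difference inequality is just a packaged form of the paper's pick-the-maximizer-and-symmetrize step (and is slightly more careful when the maximizer is not attained). The only cosmetic issue is the clash of notation between the second state $s'$ and the integration variable $s'$ in $\E_{s'\sim P_\tau(\cdot\mid s,a)}[V(s')]$.
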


\begin{proof}
Fix $s,s'\in\Sspace$. Let $a_s\in\arg\max_a \{r_\tau(s,a)+\gamma \E_{P_\tau(\cdot\mid s,a)}V\}$.
Then
\begin{align*}
(\Tcal_\tau V)(s)-(\Tcal_\tau V)(s')
&\le \Big(r_\tau(s,a_s)-r_\tau(s',a_s)\Big)
 +\gamma\Big(\E_{P_\tau(\cdot\mid s,a_s)}V-\E_{P_\tau(\cdot\mid s',a_s)}V\Big).
\end{align*}
By reward Lipschitzness, the first term is at most $L_r\,d_{\Sspace}(s,s')$.
For the second term, apply Lemma~\ref{lem:w1-lip-test} with
$\mu=P_\tau(\cdot\mid s,a_s)$ and $\nu=P_\tau(\cdot\mid s',a_s)$:
\[
\Big|\E_{P_\tau(\cdot\mid s,a_s)}V-\E_{P_\tau(\cdot\mid s',a_s)}V\Big|
\le \|V\|_{\mathrm{Lip}}\;\Wone\!\big(P_\tau(\cdot\mid s,a_s),P_\tau(\cdot\mid s',a_s)\big)
\le \kappa\,\|V\|_{\mathrm{Lip}}\,d_{\Sspace}(s,s').
\]
Combining and symmetrizing over $(s,s')$ yields the claimed seminorm bound.
\end{proof}

\begin{proposition}[A sufficient condition for Assumption~\ref{ass:mixing}]
\label{prop:uniform-lip-Vstar}
Under Assumption~\ref{ass:mixing-sufficient}, for all $\tau\in[0,1]$,
\[
\|V^\star_\tau\|_{\mathrm{Lip}}
\le \frac{L_r}{1-\gamma\kappa}.
\]
Consequently, Assumption~\ref{ass:mixing} holds with
$C_{\mathrm{mix}}:=\frac{L_r}{1-\gamma\kappa}$.
\end{proposition}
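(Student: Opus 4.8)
The plan is to run a Banach-style fixed-point argument on a ball of uniformly Lipschitz value functions, feeding in the seminorm contraction already proved in Lemma~\ref{lem:TV-lip}. Set $B:=\frac{L_r}{1-\gamma\kappa}$ and let $R_{\max}<\infty$ be a uniform bound on $|r_\tau|$ (rewards are bounded). Work with the set
\[
\mathcal L_B:=\Big\{V:\Sspace\to\R:\ \|V\|_\infty\le \tfrac{R_{\max}}{1-\gamma},\ \|V\|_{\mathrm{Lip}}\le B\Big\}.
\]

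First I would verify that $\Tcal_\tau$ maps $\mathcal L_B$ into itself for every $\tau\in[0,1]$. The sup-norm invariance is the textbook estimate $\|\Tcal_\tau V\|_\infty\le R_{\max}+\gamma\|V\|_\infty\le \frac{R_{\max}}{1-\gamma}$. For the Lipschitz part, Lemma~\ref{lem:TV-lip} gives $\|\Tcal_\tau V\|_{\mathrm{Lip}}\le L_r+\gamma\kappa\,\|V\|_{\mathrm{Lip}}\le L_r+\gamma\kappa B=B$, where the final equality is just the definition of $B$ and uses $\gamma\kappa<1$ (Assumption~\ref{ass:mixing-sufficient}(3)). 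Hence $\Tcal_\tau(\mathcal L_B)\subseteq\mathcal L_B$.

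Next I would iterate from $V_0\equiv 0\in\mathcal L_B$, writing $V_n:=\Tcal_\tau^{\,n}V_0$, so that $V_n\in\mathcal L_B$ and in particular $\|V_n\|_{\mathrm{Lip}}\le B$ for all $n$. Since $\Tcal_\tau$ is a $\gamma$-contraction in $\|\cdot\|_\infty$ (as for the $Q$-version in Sec.~\ref{sec:prelim}), $V_n\to V^\star_\tau$ uniformly, hence pointwise. The Lipschitz seminorm $V\mapsto\sup_{x\neq y}\frac{|V(x)-V(y)|}{d_{\Sspace}(x,y)}$ is a supremum of functionals that are continuous under pointwise convergence, so it is lower semicontinuous; therefore $\|V^\star_\tau\|_{\mathrm{Lip}}\le\liminf_{n\to\infty}\|V_n\|_{\mathrm{Lip}}\le B$. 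As $B$ is independent of $\tau$, taking the supremum over $\tau\in[0,1]$ gives $\sup_\tau\|V^\star_\tau\|_{\mathrm{Lip}}\le B=\frac{L_r}{1-\gamma\kappa}$, which is exactly Assumption~\ref{ass:mixing} with $C_{\mathrm{mix}}:=\frac{L_r}{1-\gamma\kappa}$.

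I expect the only real obstacle to be the passage to the limit: one cannot invoke Lemma~\ref{lem:TV-lip} on $V^\star_\tau$ directly, because $\|V^\star_\tau\|_{\mathrm{Lip}}\le L_r+\gamma\kappa\|V^\star_\tau\|_{\mathrm{Lip}}$ is vacuous unless $\|V^\star_\tau\|_{\mathrm{Lip}}$ is already known to be finite. The invariant-ball construction handles this by exhibiting a uniformly Lipschitz approximating sequence whose limit inherits the bound via lower semicontinuity of the seminorm; once finiteness is in hand, the self-consistency $V^\star_\tau=\Tcal_\tau V^\star_\tau$ together with Lemma~\ref{lem:TV-lip} would also close it in one line. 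Everything else is routine bookkeeping.
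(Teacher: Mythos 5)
Your proof is correct, and it reaches the bound by a genuinely different mechanism than the paper. The paper's own proof is the one-line version you mention at the end: it plugs the fixed point into Lemma~\ref{lem:TV-lip} to get $\|V^\star_\tau\|_{\mathrm{Lip}}\le L_r+\gamma\kappa\,\|V^\star_\tau\|_{\mathrm{Lip}}$ and rearranges, remarking only that $\gamma\kappa<1$ "guarantees finiteness." As you correctly observe, that rearrangement is vacuous unless $\|V^\star_\tau\|_{\mathrm{Lip}}<\infty$ is established first, and the paper does not actually establish it. Your invariant-ball argument — showing $\Tcal_\tau$ preserves $\mathcal L_B$ with $B=L_r/(1-\gamma\kappa)$, iterating from $V_0\equiv 0$, and passing the uniform Lipschitz bound to the limit via lower semicontinuity of the seminorm under pointwise (here uniform) convergence — supplies exactly the missing finiteness step and is the more complete argument. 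The trade-off is length: the paper's route is a two-line self-consistency computation that is fine once finiteness is granted (e.g., on a finite or compact state space where it is automatic), whereas yours is slightly longer but works on a general metric state space without any such caveat. Both rest on the same key ingredient, the seminorm contraction of Lemma~\ref{lem:TV-lip}; everything else in your write-up (the sup-norm invariance, the choice of $B$ so that $L_r+\gamma\kappa B=B$, uniformity in $\tau$) checks out.
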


\begin{proof}
Since $V^\star_\tau=\Tcal_\tau V^\star_\tau$, Lemma~\ref{lem:TV-lip} implies
\[
\|V^\star_\tau\|_{\mathrm{Lip}}
=\|\Tcal_\tau V^\star_\tau\|_{\mathrm{Lip}}
\le L_r + \gamma\kappa\,\|V^\star_\tau\|_{\mathrm{Lip}}.
\]
Rearranging gives $(1-\gamma\kappa)\|V^\star_\tau\|_{\mathrm{Lip}}\le L_r$.
The condition $\gamma\kappa<1$ guarantees finiteness.
\end{proof}

\subsection{ A bound for $Q^\star_\tau$}

For completeness, note that $Q^\star_\tau(s,a)=r_\tau(s,a)+\gamma\,\E_{P_\tau(\cdot\mid s,a)}[V^\star_\tau(s')]$
inherits a Lipschitz bound:

\begin{corollary}[Lipschitz bound for $Q^\star_\tau$]
\label{cor:uniform-lip-Qstar}
Under Assumption~\ref{ass:mixing-sufficient},
\[
\sup_{\tau\in[0,1]}\sup_{a\in\Aspace}\|Q^\star_\tau(\cdot,a)\|_{\mathrm{Lip}}
\le L_r + \gamma\kappa\,\frac{L_r}{1-\gamma\kappa}
=\frac{L_r}{1-\gamma\kappa}.
\]
\end{corollary}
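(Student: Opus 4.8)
The plan is to treat Corollary~\ref{cor:uniform-lip-Qstar} as a direct one-step consequence of the identity $Q^\star_\tau(s,a)=r_\tau(s,a)+\gamma\,\E_{s'\sim P_\tau(\cdot\mid s,a)}[V^\star_\tau(s')]$ combined with the Lipschitz estimates already established for the state-value fixed point. Concretely, I would fix an arbitrary action $a\in\Aspace$ and two states $s,s'\in\Sspace$, and write
\[
Q^\star_\tau(s,a)-Q^\star_\tau(s',a)
=\big(r_\tau(s,a)-r_\tau(s',a)\big)
+\gamma\Big(\E_{P_\tau(\cdot\mid s,a)}V^\star_\tau-\E_{P_\tau(\cdot\mid s',a)}V^\star_\tau\Big).
\]
Then I would bound the two terms separately.

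For the reward term, part~1 of Assumption~\ref{ass:mixing-sufficient} gives $|r_\tau(s,a)-r_\tau(s',a)|\le L_r\,d_\Sspace(s,s')$ uniformly over $\tau$ and $a$. For the transition term, I would apply Lemma~\ref{lem:w1-lip-test} with $\mu=P_\tau(\cdot\mid s,a)$, $\nu=P_\tau(\cdot\mid s',a)$, and $f=V^\star_\tau$, which yields
\[
\Big|\E_{P_\tau(\cdot\mid s,a)}V^\star_\tau-\E_{P_\tau(\cdot\mid s',a)}V^\star_\tau\Big|
\le \|V^\star_\tau\|_{\mathrm{Lip}}\;\Wone\!\big(P_\tau(\cdot\mid s,a),P_\tau(\cdot\mid s',a)\big)
\le \kappa\,\|V^\star_\tau\|_{\mathrm{Lip}}\,d_\Sspace(s,s'),
\]
using part~2 of Assumption~\ref{ass:mixing-sufficient} in the last step. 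Invoking Proposition~\ref{prop:uniform-lip-Vstar} to replace $\|V^\star_\tau\|_{\mathrm{Lip}}$ by the uniform bound $L_r/(1-\gamma\kappa)$, I obtain
\[
|Q^\star_\tau(s,a)-Q^\star_\tau(s',a)|
\le\Big(L_r+\frac{\gamma\kappa L_r}{1-\gamma\kappa}\Big)d_\Sspace(s,s')
=\frac{L_r}{1-\gamma\kappa}\,d_\Sspace(s,s'),
\]
where the algebraic simplification $L_r+\gamma\kappa L_r/(1-\gamma\kappa)=L_r/(1-\gamma\kappa)$ closes the computation. Taking the supremum over $s\neq s'$ bounds $\|Q^\star_\tau(\cdot,a)\|_{\mathrm{Lip}}$, and since every constant used ($L_r$, $\kappa$, and the Proposition's bound) is uniform in $\tau$ and $a$, taking suprema over $a\in\Aspace$ and $\tau\in[0,1]$ gives the stated inequality.

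I do not anticipate a genuine obstacle here: the only thing to be careful about is bookkeeping on uniformity — making sure the $\arg\max$ over actions does not enter (it does not, since $Q^\star_\tau(\cdot,a)$ is evaluated at a fixed $a$), and that the condition $\gamma\kappa<1$ is cited so that $L_r/(1-\gamma\kappa)$ is finite. The entire argument is a routine chaining of Lemma~\ref{lem:w1-lip-test} and Proposition~\ref{prop:uniform-lip-Vstar}, so the write-up should be only a few lines.
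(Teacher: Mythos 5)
Your proposal is correct and follows essentially the same route as the paper's proof: the identical decomposition of $Q^\star_\tau(s,a)-Q^\star_\tau(s',a)$ into reward and transition terms, the same application of Lemma~\ref{lem:w1-lip-test} with the Wasserstein-Lipschitz kernel condition, and the same final invocation of Proposition~\ref{prop:uniform-lip-Vstar}. No gaps.
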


\begin{proof}
Fix $\tau$ and $a$. For $s,s'\in\Sspace$,
\begin{align*}
|Q^\star_\tau(s,a)-Q^\star_\tau(s',a)|
&\le |r_\tau(s,a)-r_\tau(s',a)|
+\gamma\Big|\E_{P_\tau(\cdot\mid s,a)}V^\star_\tau-\E_{P_\tau(\cdot\mid s',a)}V^\star_\tau\Big|\\
&\le L_r\,d_{\Sspace}(s,s')
+\gamma\,\|V^\star_\tau\|_{\mathrm{Lip}}\;\Wone\!\big(P_\tau(\cdot\mid s,a),P_\tau(\cdot\mid s',a)\big)\\
&\le \Big(L_r+\gamma\kappa\,\|V^\star_\tau\|_{\mathrm{Lip}}\Big)d_{\Sspace}(s,s').
\end{align*}
Now invoke Proposition~\ref{prop:uniform-lip-Vstar}.
\end{proof}

\section{Interpreting $\PL$, $\Curv$, and $\Phi$ in Practice}
\label{app:path-interpretation}

\paragraph{First-/second-order and kink information in practice.}
The preceding definitions provide pathwise measures of first- and second-order change ($\PL$ and $\Curv$) and an integrable penalty for non-differentiable maximizer switches (via the global gap $g_\tau$ and the kink mass $\Phi$). Before stating the pathwise value bound, we briefly align these quantities with concrete non-stationary phenomena and the algorithmic choices they motivate; the formal link to performance appears in \Cref{thm:path-value} and the scheduler in \Cref{sec:scheduler}.

\emph{First-order information (path length $\PL$).}
By \Cref{def:pl-curv}, $\PL$ integrates the instantaneous speed
$
\|\partial_\tau r_\tau\|_\infty
+ L_s\,\sup_{s,a}\|\partial_\tau P_\tau(\cdot\mid s,a)\|_{W_1^\ast}
$
along the homotopy parameter. It becomes large when rewards or dynamics undergo a \emph{persistent} drift—possibly tiny at each moment, yet substantial in aggregate. In \Cref{thm:path-value}, $\PL$ drives the $(1-\gamma)^{-2}$ contribution to the displacement $\|Q^\star_{\tau_1}-Q^\star_{\tau_0}\|_\infty$. Typical instances include gradual covariate or domain shifts (sensor biases or population mix drifting over time), linear or near-linear ramps (curricula, staged deployments) where $\Curv\approx 0$ but the cumulative movement is non-negligible, and seasonal or migration-like changes that produce a sizeable net displacement. When planning across an interval with large $\PL$, one should throttle aggressiveness roughly proportionally to the estimated length—smaller learning rates, stickier target networks, and modestly increased planning—all encoded later in \Cref{sec:scheduler}. In nearly stationary patches, $\PL\approx 0$ and the scheduler reverts to a standard stationary regime.

\emph{Second-order information (curvature $\Curv$).}
The curvature integrates the second-order speed
\[
\|\partial_{\tau\tau} r_\tau\|_\infty
+ L_s\,\sup_{s,a}\|\partial_{\tau\tau} P_\tau(\cdot\mid s,a)\|_{W_1^\ast}.
\]
It becomes large when the \emph{rate} of change itself varies quickly (speed-ups, slow-downs, or bends), even if the net movement (i.e., $\PL$) is modest. In \Cref{thm:path-value} it controls the $(1-\gamma)^{-3}$ term, signaling how rapidly stale estimates expire. Curvature dominates in smooth but sharp schedule ramps (policy handover windows, softened phase transitions), oscillatory regimes (day/night or weekday/weekend cycles, where a full period may cancel in length but repeatedly accelerates and decelerates), and in shock-smoothing where abrupt real-world changes are filtered into short, high-second-derivative segments. High $\Curv$ suggests adding inertia and regularization—slower target updates, stronger trust regions or penalties, and short-horizon re-evaluation with increased planning depth—so that estimates are not invalidated between successive updates.

\emph{Third-type information (kinks via $g_\tau$ and $\Phi$).}
The global action gap $g_\tau$ quantifies how far the system is from an optimal-action tie. At points where $g_\tau=0$ the maximizer may switch and the optimal Bellman operator loses differentiability; the kink set $\mathcal K$ thus carries an \emph{extra} burden not reflected by $\PL$ or $\Curv$. The penalty $\Phi(\mathcal K,\mathrm{gap})$ assigns finite mass to these neighborhoods provided ties are not too prolonged. This term dominates in near-tie regimes (two actions nearly optimal across a span of $\tau$ so that small noise flips the maximizer), in discrete switches typical of combinatorial control, and in multi-modal continuous control where distinct modes (e.g., locomotion gaits) are optimal on different segments of the path. When the empirical gap drops below a threshold, stabilizing updates (slower or temporarily frozen policy/target updates) and allocating extra evaluation or planning are prudent to avoid chattering, as reflected in the kink-aware mappings in \Cref{sec:scheduler}. 

As a rule of thumb: steady long drifts (large $\PL$, small $\Curv$, healthy gap) call for cautious but steady progress; rapid accelerations with little net movement (small $\PL$, large $\Curv$) call for inertia and short-horizon reassessment; imminent switches (small gap, regardless of $\PL/\Curv$) call for conservative improvement and extra evaluation; and the combination of large $\PL$ with small gap is the most demanding and benefits from both conservative learning and increased planning or a slower path schedule in~$\tau$.

\section{Additional Geometry Details for Sec.~\ref{sec:operator}}
\label{app:feasible-geometry}
This appendix collects deferred derivations and computable surrogates for the feasible-geometry constructions in Sec.~\ref{sec:feasible-geometry}, including the projected feasible cone (used in Sec.~\ref{subsec:final-cone}) and finite-state bounds (used in Sec.~\ref{subsec:final-finite}).

\subsection{Projected feasible cones and regularity conditions}
\label{app:cone-derivation}
The tubes and ellipsoids above quantify \emph{how far} we may move in parameter space. In many settings, however, not all directions are equally benign: some directions immediately reduce the action gap or push $Q^\star$ towards non-smooth regimes, while others keep us inside the regular region for longer. This subsection makes this directional dependence explicit by projecting the tangent cone of the constrained solution manifold onto the parameter space.

Consider the multi-parameter embedding from Sec.~\ref{subsec:final-ellipsoid}, and denote the parameter by $\theta\in\R^p$. At a regular point $(\theta,Q^\star(\theta))$, the solution manifold
\[
\mathcal M:=\{(\theta,Q):G_{\rm reg}(\theta,Q)=0\}
\]
has tangent space
$T\mathcal M
=\bigl\{(\dot\theta,\dot Q):\ \partial_\theta G_{\rm reg}\,\dot\theta+\partial_Q G_{\rm reg}\,\dot Q=0\bigr\},$
so that
\begin{equation}
\dot Q=-(\partial_Q G_{\rm reg})^{-1}\partial_\theta G_{\rm reg}\,\dot\theta
=J_\theta\,\dot\theta, 
\end{equation}
where $J_\theta=\partial Q^\star/\partial\theta$ is the Jacobian from~\eqref{eq:Jtheta-final}. Now introduce inequality constraints
\[
h_j(\theta,Q)\ \ge\ 0,\qquad j=1,\dots,m,
\]
such as the gap constraint $h_{\rm gap}(\theta,Q)=g_{\rm gap}(\theta)-\xi\ge0$ or additional safety margins on value components. The Bouligand tangent cone of the constrained manifold at $(\theta,Q^\star(\theta))$ is
\[
T_{\rm feas}(\theta)
=\bigl\{(\dot\theta,\dot Q)\in T\mathcal M:\ D h_j(\theta,Q^\star(\theta))(\dot\theta,\dot Q)\ge0\ \text{for all active }j\bigr\},
\]
where ``active'' means $h_j(\theta,Q^\star(\theta))=0$ and
\[
D h_j(\theta,Q)(\dot\theta,\dot Q)
=\langle\nabla_\theta h_j(\theta,Q),\dot\theta\rangle
+\langle\nabla_Q h_j(\theta,Q),\dot Q\rangle.
\]

Substituting $\dot Q=J_\theta\dot\theta$ and projecting onto the parameter space yields the \emph{projected feasible cone} of directions:
\begin{equation}
\label{eq:cone-final}
\mathsf C(\theta)
=\Big\{\dot\theta\in\R^p:\ \big\langle \nabla_\theta h_j(\theta,Q^\star(\theta))
+ J_\theta^\top \nabla_Q h_j(\theta,Q^\star(\theta)),\ \dot\theta\big\rangle \ge 0\ \ \forall j\in\mathcal A(\theta)\Big\},
\end{equation}
where $\mathcal A(\theta)$ is the set of active constraints. The vector
\[
\nabla_\theta h_j(\theta,Q^\star(\theta))
+ J_\theta^\top \nabla_Q h_j(\theta,Q^\star(\theta))
\]
is precisely the gradient of the \emph{composite} constraint
$H_j(\theta):=h_j(\theta,Q^\star(\theta))$, obtained by the chain rule. Thus, $\mathsf C(\theta)$ is the set of directions $\dot\theta$ that do not decrease any active constraint to first order.

Under a standard linear-independence qualification for active constraints, the set $\mathsf C(\theta)$ coincides with the projection of the Bouligand tangent cone onto parameter space. This justifies calling $\mathsf C(\theta)$ the \emph{first-order directionally feasible cone}: any infinitesimal move in a direction $\dot\theta\in\mathsf C(\theta)$ keeps us inside all active constraints to first order, while directions outside the cone tend to violate at least one constraint (e.g., shrink the action gap below~$\xi$). In combination with the ellipsoidal radii from Sec.~\ref{subsec:final-ellipsoid}, this cone identifies which directions are safe to move in, and by how much, from the perspective of both value deviation and constraint preservation.

\subsection{Finite-state specialization and computable surrogates}
\label{app:finite-surrogates}

Classical Dobrushin-type inequalities relate $\|\cdot\|_{W_1^\ast}$ to induced matrix norms. In particular, for a finite state space endowed with a ground metric $d_\Sspace$, there exists a constant $\alpha>0$ (depending only on $d_\Sspace$) such that for any signed measure $\xi$,
\[
\|\xi\|_{W_1^\ast}\ \le\ \alpha\,\|\xi\|_1,
\]
and for kernels this yields
\[
\sup_{s,a}\|\partial_\tau P_\tau(\cdot\mid s,a)\|_{W_1^\ast}
\ \le\ \alpha\,\sup_{s,a}\|\partial_\tau P_\tau(\cdot\mid s,a)\|_1,
\]
with analogous bounds for $\partial_{\tau\tau}P_\tau$. Plugging these into the speed and curvature densities $v_\tau$ and $\kappa_\tau$ in \eqref{eq:g-density-final}--\eqref{eq:kappa-density-final} produces fully computable upper bounds expressed solely in terms of matrix derivatives and $\ell_1$-type norms.

For a multi-parameter $\theta$, Jacobian–vector products
\[
u\ \mapsto\ J_\theta u
\]
can be computed without forming $J_\theta$ explicitly. From~\eqref{eq:Jtheta-final}, for any direction $u\in\R^p$ we solve the linear system
\[
(I-\gamma P^{\pi^\star}_\theta)\,x_u
=\partial_\theta r_\theta\,u+\gamma\,\partial_\theta P^{\pi^\star}_\theta V^\star_\theta,
\]
and obtain $J_\theta u=x_u$. This can be done with standard dynamic-programming solvers or linear-system methods. Randomized probing (e.g., Hutchinson-type estimators) then estimates the pullback metric
\[
\mathbf G_\theta = J_\theta^\top W J_\theta
\]
from a small number of Jacobian–vector products, again without materializing $J_\theta$.

Altogether, Eqs.~\eqref{eq:Greg-final}--\eqref{eq:Jblocks-final} embed the feasible geometry
\emph{inside the native MDP}: the optimal fixed points form a smooth submanifold on
$\mathcal R$; the Jacobian induces value-relevant metrics that deliver explicit
first- and second-order tubes in $\tau$ and ellipsoidal feasible sets in multi-parameter
$\theta$, with gap-safe constraints excluding non-smooth switches and projected cones selecting safe directions. These sets are
\emph{intrinsic}—they require no iterative line-search and hold independently of any solver—and in finite MDPs they admit concrete matrix surrogates that make the geometry numerically accessible.

\section{Dynamic regret decomposition and design objective}
\label{app:regret}

This appendix complements Def.~\ref{def:dynreg} by showing how dynamic regret along a monotone homotopy path
decomposes into (i) a solver-agnostic tracking term governed by the geometric triple
$(\mathrm{PL},\mathrm{Curv},\Phi)$, and (ii) a statistical/approximation term determined by data noise and
function approximation.

\subsection{Dynamic regret along a homotopy path}
\label{sec:dynreg}

Online performance decomposes into a solver-agnostic \emph{tracking difficulty} determined by $(\mathrm{PL},\mathrm{Curv},\Phi)$ and a \emph{statistical/approximation} component determined by data and function class. This separation justifies scheduling policies that adapt aggressiveness to the estimated geometry while maintaining stable stochastic approximation.

We measure performance against the instantaneous optimal value for the current environment along the path. To obtain a scalar notion of regret, we fix a reference initial-state distribution $d_0$ (e.g., the task’s start-state distribution) and compare value functions in expectation under $d_0$.

\begin{definition}[Dynamic regret]
\label{def:dynreg}
Let $\{\tau_t\}_{t=1}^T\subset[0,1]$ be monotone and let $\pi_t$ be the policy produced at time $t$. Define
\begin{equation}
\mathrm{DynReg}(T)
=\sum_{t=1}^T \big\langle d_0,\ V^\star_{\tau_t}-V^{\pi_t}_{\tau_t}\big\rangle.
\end{equation}
\end{definition}

Thus $\mathrm{DynReg}(T)$ aggregates the per-round suboptimality of the online policies $\pi_t$ relative to the pathwise optimal policies $\pi^\star_{\tau_t}$, as seen from the same starting-state law $d_0$. The geometric results from Sec.~\ref{sec:geometry} bound how quickly $V^\star_{\tau_t}$ itself can move, while the algorithms in Sec.~\ref{sec:algorithms} control how well $\pi_t$ can keep up.

To relate regret to geometry without committing to any architecture, we impose a generic one-step contraction with noise and bias on the value (or action-value) iterates of an abstract base solver.

\subsection{One-step contraction abstraction}
\label{app:regret-solver}

To relate regret to geometry without committing to a specific architecture, we impose a generic one-step
contraction with bounded noise and bias on the value (or action-value) iterates of an abstract base solver.

\begin{assumption}[One-step contraction with bounded noise and bias]
\label{ass:solver}
Let $Q_t$ denote the value or action-value estimate maintained by the base solver at time $t$, and let $\mathcal F_t$ be the filtration generated by the history up to $t$. There exists $\rho\in(0,1)$ such that, conditionally on the past,
\begin{equation}
\begin{aligned}
\E\big[\|Q_{t+1}-Q^\star_{\tau_t}\|_\infty\,\big|\,\mathcal F_t\big]
\ \le\ \rho\,\|Q_{t}-Q^\star_{\tau_t}\|_\infty + \sigma_t + \beta_t,
\end{aligned}    
\end{equation}
where $(\sigma_t)$ is a centered noise term with $\sup_t \E[\sigma_t^2]\le \sigma^2<\infty$, and $\beta_t$ is a function-approximation bias term with $\sup_t \beta_t\le \beta<\infty$.
\end{assumption}

This assumption abstracts a wide range of practical algorithms (tabular and deep Q-learning, fitted Q-iteration, actor--critic with compatible function approximation, etc.): $\rho$ is the effective contraction factor of a single update around the current fixed point, $\sigma_t$ summarizes sampling variance and bootstrap noise, and $\beta_t$ summarizes the structural error induced by an imperfect function class or optimization.

\subsection{Regret decomposition driven by path geometry}
\label{app:regret-decomp}

Combining the contraction recursion with the pathwise bound on $Q^\star_{\tau_t}$ from Theorem~\ref{thm:path-value} yields a decomposition of dynamic regret into geometric and algorithmic contributions.

\begin{proposition}[Regret decomposition]
\label{prop:regret}
Under Assumptions~\ref{ass:mixing} and~\ref{ass:solver}, there exist constants $C_{\mathrm{trk}},C_{\mathrm{stat}}>0$ (depending only on $(1-\rho)^{-1}$ and universal constants) such that, for any monotone path $(\tau_t)_{t=1}^T$,
\begin{equation}
\begin{aligned}
\E[\mathrm{DynReg}(T)]
\ \le\ &\ C_{\mathrm{trk}}\Bigg(
   \frac{\mathrm{PL}}{(1-\gamma)^2}
 + \frac{\mathrm{Curv}}{(1-\gamma)^3}
 + \Phi(\mathcal K,\mathrm{gap})\Bigg)\\[0.25em]
&\ +\ C_{\mathrm{stat}}\,
   \frac{\sigma\sqrt{T}+\beta T}{(1-\gamma)^2\,(1-\rho)}.
\end{aligned}
\end{equation}
\end{proposition}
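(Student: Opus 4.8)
The plan is to chain three ingredients: (i) a standard reduction from per-round suboptimality to the sup-norm action-value tracking error, (ii) the homotopy tracking recursion that Assumption~\ref{ass:solver} and Theorem~\ref{thm:path-value} induce (essentially Theorem~\ref{thm:tracking}), and (iii) an unroll-and-swap estimate of the resulting telescoping sum, which isolates a geometric load, a bias load, and a martingale fluctuation. Concretely, I first reduce regret to tracking error: with $\pi_t$ (near-)greedy with respect to the current critic $Q_t$, the classical greedy-policy error bound gives $\langle d_0,\ V^\star_{\tau_t}-V^{\pi_t}_{\tau_t}\rangle\le \frac{c}{1-\gamma}\,\|Q_t-Q^\star_{\tau_t}\|_\infty$ for a universal $c$, so writing $e_t:=\E\|Q_t-Q^\star_{\tau_t}\|_\infty$ we get $\E[\mathrm{DynReg}(T)]\le \frac{c}{1-\gamma}\sum_{t=1}^{T}e_t$, and it remains to bound $\sum_t e_t$.

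Next I set up the tracking recursion. Using $\|Q_{t+1}-Q^\star_{\tau_{t+1}}\|_\infty\le\|Q_{t+1}-Q^\star_{\tau_t}\|_\infty+\|Q^\star_{\tau_{t+1}}-Q^\star_{\tau_t}\|_\infty$, I bound the conditional mean of the first term by $\rho\|Q_t-Q^\star_{\tau_t}\|_\infty+\beta_t$ plus a centered fluctuation $\Delta_{t+1}$ with $\E[\Delta_{t+1}\mid\mathcal F_t]=0$ and $\E[\Delta_{t+1}^2\mid\mathcal F_t]\le\sigma^2$ (this is how I read the ``centered noise term with $\sup_t\E[\sigma_t^2]\le\sigma^2$'' clause of Assumption~\ref{ass:solver}), and I bound the second term by the per-step geometric load $\Delta\mathrm{Geo}_t:=\frac{\Delta\mathrm{PL}_t}{(1-\gamma)^2}+\frac{\Delta\mathrm{Curv}_t}{(1-\gamma)^3}+\Delta\Phi_t$ via Theorem~\ref{thm:path-value} applied on $[\tau_t,\tau_{t+1}]$ (legitimate since the path is monotone). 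This gives $\|Q_{t+1}-Q^\star_{\tau_{t+1}}\|_\infty\le\rho\|Q_t-Q^\star_{\tau_t}\|_\infty+\Delta\mathrm{Geo}_t+\beta_t+\Delta_{t+1}$ with a right-hand side that is automatically nonnegative. Unrolling this geometric recursion and swapping summation order yields $\sum_{t=1}^{T}\|Q_t-Q^\star_{\tau_t}\|_\infty\le\frac{1}{1-\rho}\bigl(\|Q_0-Q^\star_{\tau_0}\|_\infty+\sum_{s=0}^{T-1}\Delta\mathrm{Geo}_s+\sum_{s=0}^{T-1}\beta_s+\sum_{s=0}^{T-1}\Delta_{s+1}\bigr)$.

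Then I bound the three loads. By monotonicity of $(\tau_t)$ the intervals $[\tau_s,\tau_{s+1}]$ tile a subinterval of $[0,1]$, so additivity of the length and curvature integrals gives $\sum_s\Delta\mathrm{PL}_s\le\mathrm{PL}$ and $\sum_s\Delta\mathrm{Curv}_s\le\mathrm{Curv}$, and restricting the kink sum to $[\tau_0,\tau_T]$ (each isolated kink's $\epsilon$-neighborhood being charged to a bounded number of consecutive steps) gives $\sum_s\Delta\Phi_s\le\Phi(\mathcal K,\mathrm{gap})$; hence $\sum_s\Delta\mathrm{Geo}_s\le\frac{\mathrm{PL}}{(1-\gamma)^2}+\frac{\mathrm{Curv}}{(1-\gamma)^3}+\Phi(\mathcal K,\mathrm{gap})$. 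The bias load is trivially $\sum_s\beta_s\le\beta T$. For the fluctuation, $(\Delta_{s+1})$ is a martingale-difference sequence, so $\E[\sum_s\Delta_{s+1}]=0$; to keep an interpretable statistical term I bound this by $\E|\sum_s\Delta_{s+1}|\le\sqrt{\sum_s\E[\Delta_{s+1}^2]}\le\sigma\sqrt T$ (Cauchy--Schwarz plus orthogonality of increments; Azuma for a high-probability variant), which is exactly what produces the $\sigma\sqrt T$ rate. Taking expectations, multiplying through by $\frac{c}{(1-\gamma)(1-\rho)}$, absorbing the fixed initialization error $\|Q_0-Q^\star_{\tau_0}\|_\infty$ into the constants, and recalling that the noise/bias of Assumption~\ref{ass:solver} live at Bellman-residual scale (so one resolvent factor $(1-\gamma)^{-1}$ enters the statistical term, cf.\ Sec.~\ref{sec:operator}), one collects the remaining $(1-\gamma)$- and $(1-\rho)$-dependent factors into $C_{\mathrm{trk}}$ and $C_{\mathrm{stat}}$ and obtains the claimed decomposition.

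I expect the main obstacle to be the correct treatment of the noise term: one must read Assumption~\ref{ass:solver} so that $\sigma_t$ is a genuine zero-mean fluctuation of $\|Q_{t+1}-Q^\star_{\tau_t}\|_\infty$ rather than an always-positive additive slack, and keep that martingale structure intact through the unrolling, since only then does the sampling noise contribute $O(\sigma\sqrt T)$ instead of $O(\sigma T)$. A secondary technical point is the near-additivity of $\Phi$ under the path partition (addressed by taking the kink neighborhoods small relative to the step spacing, or by absorbing a bounded overlap constant), and the $(1-\gamma)$-power bookkeeping between the $Q$-level recursion and the $V$-level regret is routine.
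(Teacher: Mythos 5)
Your proposal follows the same overall route as the paper's proof: reduce per-round suboptimality to the tracking error $e_t=\E\|Q_t-Q^\star_{\tau_t}\|_\infty$ via a greedy-policy error bound with a $(1-\gamma)^{-1}$ factor, combine the one-step contraction of Assumption~\ref{ass:solver} with the triangle inequality and Theorem~\ref{thm:path-value} on $[\tau_t,\tau_{t+1}]$ to get a contraction-plus-load recursion, use monotonicity of the path to telescope the per-step geometric loads into $(\mathrm{PL},\mathrm{Curv},\Phi)$, and unroll the recursion to extract the $(1-\rho)^{-1}$ factor. The one place where you genuinely diverge — and improve on — the paper is the noise term. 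The paper's proof first bounds $\E[\sigma_t]\le\sigma$ by Jensen, which yields a $T(\sigma+\beta)/(1-\rho)$ term, and then only \emph{asserts} that ``a standard martingale inequality can refine'' the $\sigma T$ part to $\sigma\sqrt{T}$, without actually carrying the martingale structure through the unrolled sum. You instead read the ``centered noise'' clause of Assumption~\ref{ass:solver} as a genuine martingale-difference fluctuation $\Delta_{t+1}$ with $\E[\Delta_{t+1}\mid\mathcal F_t]=0$ and conditional variance at most $\sigma^2$, keep it intact through the unrolling (the geometric weights $w_u=\sum_{t>u}\rho^{t-1-u}\le(1-\rho)^{-1}$ are deterministic, so orthogonality of increments plus Cauchy--Schwarz gives $\E\bigl|\sum_u w_u\Delta_{u+1}\bigr|\le\sigma\sqrt{T}/(1-\rho)$), and thereby obtain the $\sigma\sqrt{T}$ rate honestly. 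This is the more defensible reading of the assumption — under the paper's literal treatment the stated $\sigma\sqrt{T}$ term is not actually derived — and you correctly identify it as the main technical pressure point. The remaining loose ends you flag (near-additivity of $\Phi$ over the step partition, the extra $(1-\gamma)^{-1}$ powers in the final constant) are handled at the same level of rigor as in the paper and only weaken the bound in the safe direction, since $(1-\gamma)^{-1}\ge1$.
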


The first line is \emph{purely geometric}: it depends only on how much the optimal fixed point moves along the path $(\mathrm{PL},\mathrm{Curv})$ and on the burden of non-differentiable switches $\Phi(\mathcal K,\mathrm{gap})$, as characterized in Theorem~\ref{thm:path-value}. The second line is \emph{purely algorithmic}: it aggregates the statistical variance $\sigma^2$ and approximation bias $\beta$, scaled by the contraction margin $(1-\rho)$ and the discount. In particular:

- Even an oracle-quality solver cannot beat the geometric term: no algorithm can make dynamic regret smaller than $C_{\mathrm{trk}}(\mathrm{PL}/(1-\gamma)^2+\mathrm{Curv}/(1-\gamma)^3+\Phi)$ up to constants.
- Conversely, if the path geometry is benign (small $(\mathrm{PL},\mathrm{Curv},\Phi)$), then the dominant contribution comes from the statistical term, which can be reduced by more data, better function approximation, or stronger contraction (smaller $\rho$).

This decomposition motivates a design objective for non-stationary RL: (i) estimate path geometry online and adapt learning rates, target updates, regularization, or planning budgets monotonically with its difficulty, and (ii) simultaneously maintain a contracting, low-variance, low-bias base solver. The next subsection makes this concrete.

\subsection{Decomposition into geometric and algorithmic terms}
\label{app:regret-objective}

We now instantiate the design objective suggested by Proposition~\ref{prop:regret}. The central idea is to wrap a base solver (Q-learning / actor--critic, or MCTS) inside a \emph{homotopy-tracking} scheduler that:

1. Maintains online estimates of the local path geometry (proxies for first-order length, second-order curvature, and kink burden).
2. Maps these estimates monotonically to algorithmic knobs (learning rates, target-update speeds, regularization strength, planning budgets).
3. Incorporates hysteresis and smoothing so that small fluctuations in the proxies do not cause unstable oscillations in the schedule.

The following algorithms are not new solvers but \emph{wrappers}: any reasonable base Q/AC or MCTS implementation can be plugged into them.
\label{sec:algorithms}

\begin{algorithm}[t]
\caption{Homotopy-Tracking (HT) RL: Algorithm-agnostic wrapper for Q/AC}
\label{alg:ht-rl}
\begin{algorithmic}[1]
\small
\STATE \textbf{Inputs:} base solver $\mathsf{Base}$ (Q-learning / Actor--Critic), replay buffer $\mathcal D$;
windows $W_1,W_2$; EMA coefficient $\beta\in[0,1)$; update period $H$; hysteresis threshold $\Delta_{\mathrm{hys}}$;
base rates $\eta_0,\tau_0,\lambda_0$; scalings $\alpha_{1,2},\beta_{1,2},c_{1,2}$; thresholds $\delta,\varepsilon_{\mathrm{gap}}$; feature map $\phi$; sample count $N$.
\STATE Initialize smoothed proxies $\tilde{\PL},\tilde{\Curv},\tilde{\mathrm{Kink}}$, target parameters $\bar\theta$.
\FOR{$t=1,2,\dots,T$}
  \STATE Collect $(s_t,a_t,r_t,s'_t)$ at parameter index $\tau_t$; push to $\mathcal D$.
  \STATE Sample a minibatch $\mathcal B_t\subset\mathcal D$ and extract windowed slices from $[t-W_1+1,t]$ and $[t-2W_1+1,t-W_1]$.

  \STATE \textbf{Estimate incremental path metrics (first-/second-order):}
  \STATE $\widehat r_t(s,a)\leftarrow$ windowed reward estimator (e.g., EMA or MoM) for each $(s,a)\in\mathcal B_t$.
  \STATE $\Delta r^{(\infty)}_t \leftarrow \max_{(s,a)\in \mathcal B_t} \big| \widehat r_t(s,a) - \widehat r_{t-W_1}(s,a)\big|$.
  \STATE For each $(s,a)\in\mathcal B_t$, draw $N$ next states from the recent window and set
  \[
  \mu_t(s,a) \gets \frac{1}{N}\sum_{i=1}^N \phi\big(s'^{(i)}\mid s,a\big),\quad
  \mu_{t-W_1}(s,a)\ \text{analogously}.
  \]
  \STATE $\Delta P^{(\phi)}_t \leftarrow \max_{(s,a)\in \mathcal B_t} \|\mu_t(s,a)-\mu_{t-W_1}(s,a)\|_2$.
  \STATE $\Delta\widehat{\PL}_t \leftarrow \Delta r^{(\infty)}_t + L_s \cdot \Delta P^{(\phi)}_t$ \quad (proxy for local path speed).
  \STATE $\Delta\widehat{\Curv}_t \leftarrow \big|\Delta\widehat{\PL}_t - \Delta\widehat{\PL}_{t-W_2}\big|$ \quad (proxy for local curvature).

  \STATE \textbf{Gap/kink proxies (third-type information):}
  \STATE \hspace{0.6em}Discrete actions: $\widehat{\mathrm{gap}}_t\leftarrow \min_{s\in\mathcal B_t}\big(\max_a Q_\theta(s,a)-\max_{a\ne a^\star}Q_\theta(s,a)\big)$.
  \STATE \hspace{0.6em}Continuous actions: obtain a top-1 mode $a_{\mathrm{mode}}$ (actor or local optimizer) and $K$ high-probability samples $\{a^{(k)}\}$ (e.g., CEM); set top-2 by values $Q_\theta$ and compute $\widehat{\mathrm{gap}}_t$.
  \STATE $\mathrm{Kink}_t \leftarrow \mathbf{1}\{\widehat{\mathrm{gap}}_t\le \varepsilon_{\mathrm{gap}}\}$.

  \IF{$t \bmod H=0$}
    \STATE \textbf{EMA \& hysteresis smoothing:}
    \STATE $\tilde{\PL}_t \leftarrow \beta \tilde{\PL}_{t-H} + (1-\beta)\,\Delta\widehat{\PL}_t$ \quad (optionally z-score normalized)
    \STATE $\tilde{\Curv}_t \leftarrow \beta \tilde{\Curv}_{t-H} + (1-\beta)\,\Delta\widehat{\Curv}_t$
    \STATE $\tilde{\mathrm{Kink}}_t \leftarrow \beta \tilde{\mathrm{Kink}}_{t-H} + (1-\beta)\,\mathrm{Kink}_t$
    \IF{all changes in $(\tilde{\PL}_t,\tilde{\Curv}_t,\tilde{\mathrm{Kink}}_t)$ are $<\Delta_{\mathrm{hys}}$}
      \STATE keep $(\eta_t,\tau_t,\lambda_t)$ unchanged (hysteresis)
    \ELSE
      \STATE \textbf{Schedule geometry-aware hyperparameters:}
      \STATE $\eta_t \leftarrow \mathrm{clip}_{[\eta_{\min},\eta_{\max}]}\!\left(
      \frac{\eta_0}{1+\alpha_1 \tilde{\PL}_t + \alpha_2 \tilde{\Curv}_t}\right)$
      \STATE $\tau_t \leftarrow \mathrm{clip}_{[\tau_{\min},\tau_{\max}]}\!\left(
      \frac{\tau_0}{1+\beta_1 \tilde{\mathrm{Kink}}_t(1+\beta_2/\max\{\widehat{\mathrm{gap}}_t,\delta\})}\right)$
      \STATE $\lambda_t \leftarrow \lambda_0\big(1+c_1\tilde{\PL}_t+c_2\sqrt{\tilde{\Curv}_t}\big)$
    \ENDIF
  \ENDIF

  \STATE \textbf{One base step} $\mathsf{Base}$ with $(\eta_t,\tau_t,\lambda_t)$ on minibatch $\mathcal B_t$ (off-policy or on-policy).
  \STATE Target update: $\bar\theta \leftarrow (1-\tau_t)\bar\theta + \tau_t \theta$.
\ENDFOR
\end{algorithmic}
\end{algorithm}

Algorithm~\ref{alg:ht-rl} implements the design objective from Proposition~\ref{prop:regret}: as the estimated path length and curvature increase, the learning rate $\eta_t$ decreases and regularization $\lambda_t$ increases; as kink indicators rise or the empirical gap shrinks, the target-update rate $\tau_t$ slows down, injecting inertia near potential action switches. The precise functional forms are not unique; any monotone mappings with similar qualitative behavior would be compatible with the theory.

\begin{algorithm}[t]
\caption{Homotopy-Tracking MCTS (HT-MCTS)}
\label{alg:ht-mcts}
\begin{algorithmic}[1]
\small
\STATE \textbf{Inputs:} model $\widehat P$, reward $\widehat r$; base budget $B_0$, depth $D_0$; caps $B_{\max},D_{\max}$;
windows $W_1,W_2$; scalings $\gamma_{1,2,3}$; thresholds $\varepsilon_{\mathrm{gap}},\delta$; update period $H$; EMA coefficient $\beta$.
\FOR{$t=1,\dots,T$}
  \STATE Observe $s_t$; estimate $(\Delta\widehat{\PL}_t,\Delta\widehat{\Curv}_t,\widehat{\mathrm{gap}}_t,\mathrm{Kink}_t)$ as in Alg.~\ref{alg:ht-rl}.
  \IF{$t \bmod H=0$} 
    \STATE Update smoothed $(\tilde{\PL}_t,\tilde{\Curv}_t,\tilde{\mathrm{Kink}}_t)$ with EMA and hysteresis.
  \ENDIF
  \STATE Set geometry-aware depth and budget:
  \[
    D_t \leftarrow \min\!\Big\{D_{\max},\ \Big\lfloor D_0 + \gamma_1 (1+\tilde{\PL}_t)
                   + \gamma_2 \sqrt{1+\tilde{\Curv}_t}
                   + \gamma_3 \,\tfrac{\tilde{\mathrm{Kink}}_t}{\max\{\widehat{\mathrm{gap}}_t,\delta\}} \Big\rceil\Big\},
  \]
  \[
    B_t \leftarrow \min\!\Big\{B_{\max},\ \Big\lfloor B_0 \big( 1+\gamma_1 \tilde{\PL}_t + \gamma_2 \tilde{\Curv}_t \big)\Big\rceil\Big\}.
  \]
  \STATE Run MCTS (UCT/PUCT) from $s_t$ with $(B_t,D_t)$ under $(\widehat P,\widehat r)$; execute $a_t=\argmax_a \mathrm{visit}(s_t,a)$.
  \STATE Periodically update $(\widehat P,\widehat r)$ with step size proportional to $\eta_t$ from Alg.~\ref{alg:ht-rl}.
\ENDFOR
\end{algorithmic}
\end{algorithm}

Algorithm~\ref{alg:ht-mcts} plays the analogous role for planning-based control. When the estimated path length and curvature are small, MCTS runs at its base depth and budget; as the environment drifts faster or its curvature increases, HT-MCTS automatically allocates more depth and simulations. Near kink-like regimes (small estimated gap, large kink proxy), the depth is further boosted so that planning can resolve impending mode switches. In both HT-RL and HT-MCTS, the geometry-aware schedules aim to keep the tracking term in Proposition~\ref{prop:regret} controlled while the base solver handles the statistical/approximation term.

\label{app:proxy-details}


\subsection{Feature choices for transition drift}
\label{app:proxy-phi}
The transition proxy $\Delta P_t^{(\phi)}$ compares empirical next-state feature means
$\mu_t(s,a)=\frac{1}{N}\sum_{i=1}^N \phi(s'^{(i)})$ across windows.
In practice $\phi$ can be instantiated as (i) a shared representation layer of the encoder,
(ii) fixed random features, or (iii) a low-dimensional projection of observations.
A bounded and approximately Lipschitz $\phi$ makes $\|\mu_t-\mu_{t-W_1}\|_2$
a stable surrogate for value-relevant distribution drift.

\subsection{Continuous-action gap proxy}
\label{app:proxy-gap-continuous}
For continuous actions, we approximate the top two action-values for each $s$ by combining a
mode estimate (e.g., actor output $a_{\rm mode}$ or a local maximizer of $Q_\theta(s,\cdot)$)
with a small set of high-probability samples (e.g., CEM or Gaussian perturbations around $a_{\rm mode}$).
We then compute the empirical gap as the difference between the largest and second-largest
values among the candidate set, and take the minibatch minimum.

\subsection{EMA, normalization, clipping, and hysteresis}
\label{app:proxy-stabilization}
For any proxy $x_t$ (e.g., $x_t=\Delta\widehat{\mathrm{PL}}_t$), we form an EMA
\[
\tilde{x}_t=\beta\tilde{x}_{t-1}+(1-\beta)x_t,
\]
optionally normalize by an online running mean/variance (or robust quantiles),
and clip $\tilde{x}_t$ to a fixed range to prevent rare spikes from dominating the scheduler.
To reduce oscillations, we update scheduler parameters only every $H$ steps and only when the
change exceeds a hysteresis threshold $\Delta_{\mathrm{hys}}$.

\section{Stability and stochastic-approximation compatibility of the scheduler}
\label{app:scheduler-stability}

This appendix provides technical statements deferred from Secs.~\ref{sec:scheduler}--\ref{sec:convergence}:
(i) scheduled hyperparameters have bounded variation under EMA, clipping and hysteresis (no chattering),
(ii) scheduled learning rates remain compatible with Robbins--Monro regimes, and
(iii) stationary convergence is recovered when the homotopy path is frozen.

\subsection{Bounded variation and no-chattering}
\label{app:no-chatter}

\begin{theorem}[No-chattering under clipping and hysteresis]
\label{thm:no-chatter}
Suppose the raw proxies $(\Delta\widehat{\mathrm{PL}}_t,\Delta\widehat{\mathrm{Curv}}_t,\widehat{\mathrm{gap}}_t,\mathrm{Kink}_t)$
have uniformly bounded second moments, EMA uses $\beta<1$, and scheduler updates occur every $H$ steps with
hysteresis threshold $\Delta_{\mathrm{hys}}>0$. Then each scheduled process
($\eta_t,\tau_t,\lambda_t,D_t,B_t$) is piecewise-constant with bounded variation.
Moreover, for any $\varepsilon>0$, the fraction of steps at which any hyperparameter changes by more than
$\varepsilon$ can be made arbitrarily small by taking $H$ and $\Delta_{\mathrm{hys}}$ sufficiently large.
\end{theorem}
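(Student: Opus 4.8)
The statement is structural rather than computational, so the plan is to isolate two invariants and read off all three claims (piecewise constancy, bounded variation, vanishing fraction of large jumps) from them. The first invariant is that each scheduled process changes only on the grid $H\mathbb Z$: by the update rule of Section~\ref{sec:scheduler}, hyperparameters are recomputed only when $t\bmod H=0$, and even then the hysteresis test may leave them unchanged, so almost surely each sample path $t\mapsto\eta_t$ (and likewise $\nu_t,\lambda_t,D_t,B_t$) is constant on every block $\{(j-1)H+1,\dots,jH\}$ and can jump only at a multiple of $H$. The second invariant is that every hyperparameter takes values in a fixed compact interval. Here I would use that, with $\beta<1$, the EMA $\tilde x_t=\beta\tilde x_{t-1}+(1-\beta)x_t$ is a convex combination of past raw proxies with geometrically decaying weights; since the raw proxies are clipped to a fixed range $[0,M]$ before smoothing (Appendix~\ref{app:proxy-stabilization}), every smoothed proxy stays in $[0,M]$, and composing with the monotone, Lipschitz, clipped scheduler maps then confines each hyperparameter to a fixed interval $I_\bullet$ (with $D_t\le D_{\max}$ and $B_t\le B_{\max}$ capped explicitly). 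Let $D_\bullet:=\mathrm{diam}(I_\bullet)<\infty$.

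Given these two invariants, bounded variation follows at once: on any horizon $[1,T]$ there are at most $\lfloor T/H\rfloor$ jump times, each jump has magnitude at most $D_\bullet$, hence the total variation of each hyperparameter process over $[1,T]$ is at most $D_\bullet\lfloor T/H\rfloor<\infty$ (equivalently, average variation per step at most $D_\bullet/H$), which establishes the first assertion. For the ``moreover'' part I would argue: fix $\varepsilon>0$; the set of steps $t\le T$ at which some hyperparameter changes by more than $\varepsilon$ is contained in the jump-time set $\{H,2H,\dots\}\cap[1,T]$, of size at most $\lfloor T/H\rfloor$, so the fraction of such steps is at most $1/H$ for every $T$, hence below $\varepsilon$ as soon as $H>1/\varepsilon$. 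Enlarging $\Delta_{\mathrm{hys}}$ only deletes elements from this set (more grid points fail the hysteresis test and produce no change), so the bound is monotone in the hysteresis threshold; for the complementary, horizon-free control one charges each actual update to a displacement of the smoothed proxy by at least $\Delta_{\mathrm{hys}}$ since the previous update, and the $\beta<1$ EMA contraction bounds the per-step increment of $\tilde x$ by $(1-\beta)M$, forcing at least $\Delta_{\mathrm{hys}}/((1-\beta)M)$ steps between consecutive updates.

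The hard part is not any single estimate but the bookkeeping around two softer issues. First, one must fix a precise meaning of ``bounded variation'' for these mixed integer/real stochastic processes; I would take it pathwise and on finite horizons, which makes it an immediate corollary of the two invariants, but this should be stated explicitly. Second, the boundedness invariant must be made airtight: it genuinely relies on clipping the proxies \emph{before} the EMA and on the convex-hull property of exponential averaging, not merely on bounded second moments, so the proof has to invoke the stabilization recipe of Appendix~\ref{app:proxy-stabilization} rather than the raw hypotheses. Finally, I would be careful to distinguish the roles of the two knobs: $H$ alone suffices for the literal ``fraction of steps'' claim, whereas $\Delta_{\mathrm{hys}}$ is what upgrades this to a constant bound on the number of parameter changes, and making that quantitative requires the mild EMA-drift estimate above --- which is why the theorem is phrased with ``$H$ and $\Delta_{\mathrm{hys}}$ sufficiently large'' instead of with explicit rates.
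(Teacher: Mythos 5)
Your treatment of piecewise constancy and bounded variation is essentially the paper's: changes can occur only at multiples of $H$, each jump is bounded by the diameter of the clipped range of the hyperparameter, so the variation over $[1,T]$ is at most $\lceil T/H\rceil$ times that diameter. For the \emph{moreover} clause you take a genuinely different, more elementary route. The paper conditions on the hysteresis test: a change at macro-time $t_k=kH$ requires $\|\tilde X_{t_k}-\tilde X_{t_k-H}\|_2>\Delta_{\mathrm{hys}}$, and Chebyshev together with the uniform second-moment bound on the smoothed proxies gives probability at most $C_2/\Delta_{\mathrm{hys}}^2$ per macro-time, hence an expected fraction of large changes at most $2C_2/(H\,\Delta_{\mathrm{hys}}^2)$. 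You instead just count update opportunities: at most $\lfloor T/H\rfloor$ of the first $T$ steps can carry any change at all, so the fraction is deterministically at most $1/H$, and enlarging $\Delta_{\mathrm{hys}}$ only removes update times. Your argument is pathwise rather than in expectation and does not use the bounded-second-moment hypothesis at all, so it proves the literal statement with fewer assumptions; the paper's Chebyshev route buys a bound in which $H$ and $\Delta_{\mathrm{hys}}$ enter jointly, so the fraction can also be driven down by the hysteresis threshold alone at fixed $H$, which better reflects the intended role of hysteresis. Two of your side remarks deserve care: your boundedness invariant for $\lambda_t$ (which, unlike $\eta_t$ and $\nu_t$, is not explicitly clipped in the scheduler) correctly has to come from clipping the smoothed proxies as in Appendix~\ref{app:proxy-stabilization} --- a point the paper's proof glosses over by asserting clipping for all five processes --- but note that the appendix clips $\tilde x_t$ \emph{after} the EMA rather than the raw $x_t$ before it, so your auxiliary inter-update spacing bound $\Delta_{\mathrm{hys}}/((1-\beta)M)$, which needs the raw proxies bounded by $M$, is not justified by the stated hypotheses; since that estimate is only a complementary remark and not needed for the theorem, this does not affect the correctness of your proof.
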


Theorem~\ref{thm:no-chatter} formalizes the intended behavior of EMA and hysteresis: the scheduler reacts on
the scale of proxy windows, but does not introduce high-frequency oscillations that could destabilize the base solver.

\subsection{Robbins--Monro compatibility}
\label{app:rm}

\begin{lemma}[Scheduler preserves Robbins--Monro regimes]
\label{lem:rm}
Let base rates $(\eta_t^0)$ satisfy $\sum_t \eta_t^0=\infty$ and $\sum_t (\eta_t^0)^2<\infty$.
Define the scheduled rates by
\[
\eta_t=\mathrm{clip}_{[\eta_{\min},\eta_{\max}]}\!\left(
\frac{\eta_t^0}{1+\alpha_1\tilde{\mathrm{PL}}_t+\alpha_2\tilde{\mathrm{Curv}}_t}\right),
\]
with EMA smoothing ($\beta<1$) and hysteresis (updates only every $H$ steps if the change exceeds $\Delta_{\mathrm{hys}}$).
Then there exist constants $0<c\le C<\infty$ such that $c\,\eta_t^0\le \eta_t\le C\,\eta_t^0$ for all but a vanishing fraction of steps;
in particular, $\sum_t \eta_t=\infty$ and $\sum_t \eta_t^2<\infty$.
\end{lemma}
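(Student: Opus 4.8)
The plan is to sandwich the scheduled rate between constant multiples of the base rate, $c\,\eta_t^0 \le \eta_t \le C\,\eta_t^0$, from which $\sum_t\eta_t=\infty$ and $\sum_t\eta_t^2<\infty$ transfer immediately from the corresponding properties of $(\eta_t^0)$. Write $D_t := 1+\alpha_1\tilde{\mathrm{PL}}_t+\alpha_2\tilde{\mathrm{Curv}}_t \ge 1$ for the scheduling denominator, so that before clipping $\eta_t=\eta_t^0/D_t$. The upper bound is the easy half: since $D_t\ge 1$ we have $\eta_t^0/D_t\le\eta_t^0$, and because $\sum_t(\eta_t^0)^2<\infty$ forces $\eta_t^0\to 0$, the cap at $\eta_{\max}$ can bind only on finitely many (hence a vanishing fraction of) steps; thus $\eta_t\le\eta_t^0$ for all large $t$ and $\sum_t\eta_t^2 \le (\text{finite}) + \sum_t(\eta_t^0)^2<\infty$ regardless. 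A strictly positive lower clip $\eta_{\min}$ would pin $\eta_t=\eta_{\min}$ in the tail and contradict square-summability, so for the Robbins--Monro statement the lower clip is understood to be inactive ($\eta_{\min}=0$); in any case it only \emph{raises} $\eta_t$, which is harmless for the lower bound.

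The substance is the matching lower bound $\eta_t\ge c\,\eta_t^0$, i.e. a uniform upper bound on $D_t$, which means controlling the EMA-smoothed proxies. First I would reduce second moments to first moments: by Jensen/Cauchy--Schwarz $\E[\Delta\widehat{\mathrm{PL}}_t],\E[\Delta\widehat{\mathrm{Curv}}_t]\le\big(\sup_s\E[\Delta\widehat{\mathrm{PL}}_s^2]\vee\E[\Delta\widehat{\mathrm{Curv}}_s^2]\big)^{1/2}=:B<\infty$, and since $\tilde x_t=\beta\tilde x_{t-1}+(1-\beta)x_t$ is a convex combination of nonnegative past values, $\sup_t\E[\tilde{\mathrm{PL}}_t]$ and $\sup_t\E[\tilde{\mathrm{Curv}}_t]$ are finite as well. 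If the proxy stabilization of \Cref{app:proxy-stabilization} (clipping/normalization) is in force --- the intended deployment --- then $\tilde{\mathrm{PL}}_t,\tilde{\mathrm{Curv}}_t$ lie in a fixed bounded range almost surely, so $D_t\le 1+\alpha_1\bar B_1+\alpha_2\bar B_2=:1/c$ deterministically and $\eta_t\ge c\,\eta_t^0$ for all large $t$; the sandwich then holds on all but a vanishing fraction of steps and $\sum_t\eta_t\ge c\sum_t\eta_t^0=\infty$. Absent a.s. bounds, I would instead apply Markov to show the bad set $\{t:D_t>1+M\}$ has $O(1/M)$ expected density and argue on the complementary good steps where $\eta_t\ge\eta_t^0/(1+M)$.

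Finally, the period-$H$ updates and hysteresis only make $t\mapsto\eta_t$ piecewise constant (\Cref{thm:no-chatter}): each held value equals a previously \emph{scheduled} value $\eta_s$ with $t-s=O(H)$, and since standard base schedules ($\eta_t^0\asymp t^{-p}$, $1/2<p\le 1$) are slowly varying, $\eta_s^0\asymp\eta_t^0$ over $O(H)$-windows, so the sandwich passes through to the held segments with only a change of constants. Collecting the three pieces gives $0<c'\le\eta_t/\eta_t^0\le C'$ on all but a vanishing fraction of steps, and hence $\sum_t\eta_t=\infty$, $\sum_t\eta_t^2<\infty$.

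\textbf{Main obstacle.} The upper bound and the almost-surely-bounded-proxy case are routine. The genuinely delicate point is the lower bound under only the stated second-moment hypothesis: a rare EMA spike can make $\eta_t$ tiny relative to $\eta_t^0$, and while such spikes occupy only an $O(1/M)$ fraction of steps, Markov's inequality alone does not prevent them from being concentrated precisely on the indices that carry the divergence of $\sum_t\eta_t^0$ (e.g. the early terms when $\eta_t^0\asymp 1/t$). Closing this cleanly is what forces either the appeal to the proxy clipping of \Cref{app:proxy-stabilization} or an additional spreading/mixing assumption on the proxy stream; the hypotheses $\beta<1$, update period $H$, and square-summability of $(\eta_t^0)$ are exactly what the rest of the argument consumes.
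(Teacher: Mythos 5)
Your proposal follows essentially the same route as the paper's proof: the denominator $D_t\ge 1$ gives the upper bound, square-summability forces $\eta_t^0\to 0$ so the upper clip binds only finitely often, the lower clip is taken to be inactive, and the lower bound $\eta_t\ge c\,\eta_t^0$ comes from a Chebyshev/Markov bound showing the set where $D_t$ exceeds a threshold has small expected density. The "main obstacle" you flag is real and is in fact glossed over in the paper's own proof, which simply asserts that deleting a set of vanishing expected density cannot destroy the divergence of $\sum_t\eta_t^0$ (nor inflate $\sum_{t\in\mathcal B}\eta_t^2$); this is true for standard monotone schedules such as $\eta_t^0\asymp t^{-p}$ but not for arbitrary Robbins--Monro sequences, and the expected-density bound does not by itself control where the bad indices fall. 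Your suggested resolution---invoking the almost-sure boundedness of the clipped/normalized proxies so that $D_t$ is deterministically bounded---is the cleanest way to close this, and is arguably what the lemma implicitly relies on.
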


Lemma~\ref{lem:rm} ensures that geometry-aware rate modulation remains within standard stochastic-approximation
conditions and does not destroy asymptotic convergence when the path stabilizes.

\subsection{Stationary convergence as a special case}
\label{app:fixed-tau}

\begin{theorem}[Contraction at fixed $\tau$]
\label{thm:fixed-tau}
Fix $\bar\tau$. Under Assumption~\ref{ass:mixing}, the optimal Bellman operator $\Tcal_{\bar\tau}$ is a
$\gamma$-contraction in $\|\cdot\|_\infty$ with unique fixed point $Q^\star_{\bar\tau}$.
Any value-iteration/TD-style update that implements a one-step contraction in expectation
(Assumption~\ref{ass:solver} with $\tau_t\equiv\bar\tau$, $\sigma_t\!\to\!0$, $\beta_t\!\to\!0$) satisfies
$\E\|Q_t-Q^\star_{\bar\tau}\|_\infty\to0$ as $t\to\infty$.
\end{theorem}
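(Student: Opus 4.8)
The plan is to treat the two assertions separately. The contraction claim for $\Tcal_{\bar\tau}$ is the standard discounted-MDP fact already recorded in Sec.~\ref{sec:prelim}, and I would reprove it in one line: for any bounded $Q,Q'$ on $\Sspace\times\Aspace$,
\[
(\Tcal_{\bar\tau}Q)(s,a)-(\Tcal_{\bar\tau}Q')(s,a)
=\gamma\,\E_{s'\sim P_{\bar\tau}(\cdot\mid s,a)}\!\Big[\max_{a'}Q(s',a')-\max_{a'}Q'(s',a')\Big],
\]
and since $\big|\max_{a'}f(a')-\max_{a'}g(a')\big|\le\max_{a'}|f(a')-g(a')|\le\|Q-Q'\|_\infty$, we get $\|\Tcal_{\bar\tau}Q-\Tcal_{\bar\tau}Q'\|_\infty\le\gamma\|Q-Q'\|_\infty$. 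The Banach fixed-point theorem on the complete ball $\{Q:\|Q\|_\infty\le\|r\|_\infty/(1-\gamma)\}$ then gives a unique fixed point $Q^\star_{\bar\tau}$. Note that Assumption~\ref{ass:mixing} is not needed for this part; it only enters the path-geometry statements, and is carried in the hypothesis here merely so that $V^\star_{\bar\tau}$ has a finite Lipschitz scale (harmless but unused).

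For the convergence claim the cleanest route is to specialize the already-proved tracking recursion (Theorem~\ref{thm:tracking}) to a frozen path: with $\tau_t\equiv\bar\tau$ one has $\Delta\mathrm{Geo}_t=0$ for all $t$, so writing $e_t:=\E\|Q_t-Q^\star_{\bar\tau}\|_\infty$ the recursion reduces to $e_{t+1}\le\rho e_t+c_2\,\E[\sigma_t]+\beta_t$. Alternatively, to stay self-contained, I would take total expectations directly in Assumption~\ref{ass:solver} (with $\tau_t\equiv\bar\tau$) and use the tower property to obtain the scalar recursion $e_{t+1}\le\rho e_t+\varepsilon_t$, where $\varepsilon_t:=\E[\sigma_t]+\beta_t\to0$ because the noise term is conditionally centered (hence drops out) or, under the theorem's hypothesis, vanishes, and $\beta_t\to0$; integrability of $e_t$ for every $t$ follows by induction from $e_0<\infty$ and $\sup_t\varepsilon_t<\infty$ (bounded second moment of $\sigma_t$, bounded $\beta_t$). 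Unrolling gives $e_t\le\rho^t e_0+\sum_{k=0}^{t-1}\rho^{\,t-1-k}\varepsilon_k$; the first term vanishes since $\rho\in(0,1)$, and for the convolution term I would invoke the standard Toeplitz/Cesàro argument: given $\epsilon>0$, choose $K$ with $\varepsilon_k\le\epsilon$ for $k\ge K$, split the sum at $K$, observe the head (finitely many terms, each weighted by $\rho^{t-1-k}\to0$) vanishes as $t\to\infty$ while the tail is at most $\epsilon\sum_{j\ge0}\rho^j=\epsilon/(1-\rho)$; hence $\limsup_t e_t\le\epsilon/(1-\rho)$ for all $\epsilon>0$, so $e_t\to0$, i.e.\ $Q_t\to Q^\star_{\bar\tau}$ in expectation.

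I expect the only real point of care to be the bookkeeping around the stochastic term --- whether to read $\sigma_t$ as a conditionally centered random variable (so $\E[\sigma_t\mid\mathcal F_t]=0$ and it disappears after taking expectations, yielding a purely $L^1$ argument) or, following the theorem's phrasing, as a magnitude merely assumed to tend to zero. Both readings feed the same deterministic recursion $e_{t+1}\le\rho e_t+\varepsilon_t$ with $\varepsilon_t\to0$, which the geometric-decay lemma closes, so for convergence \emph{in expectation} the argument is routine. The genuine obstacle would only appear if one wanted almost-sure convergence (not claimed here): there the scalar-recursion shortcut fails and one would instead need a Robbins--Siegmund supermartingale argument on $\|Q_t-Q^\star_{\bar\tau}\|_\infty$ together with summability conditions on $(\sigma_t)$ and $(\beta_t)$.
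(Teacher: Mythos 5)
Your proposal is correct and follows essentially the same route as the paper: the standard one-line $\gamma$-contraction argument plus Banach's fixed-point theorem for the first claim, and reduction to the scalar recursion $e_{t+1}\le\rho e_t+\varepsilon_t$ with $\varepsilon_t\to 0$ for the second. Your closing of that recursion via unrolling and the split-at-$K$ (Toeplitz) argument is in fact a cleaner, fully rigorous version of the paper's somewhat informal final step, and your remark that Assumption~\ref{ass:mixing} is not actually needed for the contraction part is accurate.
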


Theorem~\ref{thm:fixed-tau} confirms that the proposed framework is a strict generalization of the stationary case:
when the homotopy path pauses, geometric load vanishes and classical contraction-based convergence is recovered.


\section{Experimental Details}
\label{app:exp-details}

This appendix provides the concrete MDP definitions, path parametrizations, and implementation
details for the experiments in Sec.~\ref{sec:experiments}.

\subsection{Synthetic ring MDPs}
\label{app:synthetic-ring}

\paragraph{MDP structure.}
We use a tabular ring MDP with $n$ states $\mathcal S=\{0,\dots,n-1\}$ and three actions
$\mathcal A=\{\mathrm{L},\mathrm{N},\mathrm{R}\}$. The next state is
\[
s'=
\begin{cases}
(s-1)\bmod n, & a=\mathrm{L},\\
s, & a=\mathrm{N},\\
(s+1)\bmod n, & a=\mathrm{R},
\end{cases}
\]
followed by a small mixing with probability $\epsilon$ to a uniform state:
$P_\tau(s'|s,a)=(1-\epsilon)\,\mathbf 1\{s'=\text{det}(s,a)\}+\epsilon/n$.
We fix $\gamma\in(0,1)$ and $n$ (e.g., $n=20$) and consider rewards of the form
\[
r_\tau(s,a)=w_a\,\exp\!\left(-\frac{d_{\mathrm{ring}}(s-c(\tau))^2}{2\sigma^2}\right),
\]
where $d_{\mathrm{ring}}$ is the wrap-around distance, $\sigma$ is a width parameter, and
$c(\tau)\in\{0,\dots,n-1\}$ is the center of a reward bump that moves along the ring.

\paragraph{Three regimes.}
We construct three homotopy paths $\tau\mapsto M(\tau)$:

\begin{itemize}
\item \textbf{Length-dominated path.}
We move the reward center linearly, $c(\tau) = \lfloor c_0 + (c_1-c_0)\tau \rfloor$, and linearly
interpolate action weights $w_a(\tau)$ between two fixed sets $(w_a^0)$ and $(w_a^1)$. Transition
mixing $\epsilon$ is kept constant. This produces large $\mathrm{PL}$ and negligible
$\mathrm{Curv}$, with no action-gap crossings.
\item \textbf{Curvature-dominated path.}
We use a smooth S-curve in parameter space,
$c(\tau)=\lfloor c_0 + (c_1-c_0)\,\sigma(\tau)\rfloor$ with
$\sigma(\tau)=3\tau^2-2\tau^3$, and similarly deform $(w_a(\tau))$. This yields similar net
movement (integrated length) but higher second derivatives in $\tau$, hence large curvature.
\item \textbf{Kink-prone path.}
We let two actions compete by choosing
$r_\tau(s,\mathrm{L})=\alpha(\tau)\,\tilde r_{\mathrm{L}}(s)$ and
$r_\tau(s,\mathrm{R})=(1-\alpha(\tau))\,\tilde r_{\mathrm{R}}(s)$ with $\alpha(\tau)$ crossing
$1/2$ on an interval where $\tilde r_{\mathrm{L}}$ and $\tilde r_{\mathrm{R}}$ are comparable.
This creates small global gaps and isolated $\tau$ where the optimal action switches, generating
nonzero kink mass $\Phi$.
\end{itemize}

\paragraph{Geometry and tubes.}
We discretize $\tau$ on a grid $\{\tau_k\}_{k=0}^{K}$, solve for $Q^\star_{\tau_k}$ by value
iteration, and approximate the geometric quantities by finite differences:
\[
\mathrm{PL}\approx \sum_k \Big(\|\partial_\tau r_{\tau_k}\|_\infty
+L_s\sup_{s,a}\|\partial_\tau P_{\tau_k}(\cdot|s,a)\|_{W_1^\ast}\Big)\Delta\tau,
\]
with analogous formulas for $\mathrm{Curv}$ and the kink mass $\Phi$ (using the global gap and
a small cutoff $\delta$). The densities $g_\tau$ and $\kappa_\tau$ in
Thms.~\ref{thm:final-tube1}--\ref{thm:final-tube2} are approximated by first- and second-order
finite differences of $Q^\star_{\tau_k}$, and the tubes are evaluated by checking whether
$\|Q^\star_{\tau_k}-Q^\star_{\tau_0}\|_\infty\le\varepsilon$ for all $\tau_k$ inside the
predicted radius.

\subsection{Deep control benchmarks}
\label{app:deep-details}

\paragraph{Environments and paths.}
We use standard continuous-state control tasks:

\begin{itemize}
\item \textbf{LunarLander.}
We vary the gravity magnitude $g(\tau)$ between $g_{\min}$ and $g_{\max}$,
$g(\tau)=g_{\min}+(g_{\max}-g_{\min})\tau$, and optionally adjust reward shaping coefficients
for soft landing vs.\ fuel usage. Linear schedules yield length-dominated paths; S-curves in $g$
produce curvature-dominated paths.
\item \textbf{Noisy-LunarLander.}
We inject zero-mean Gaussian noise into the dynamics or actions with standard deviation
$\sigma(\tau)$, using $\sigma(\tau)$ linear vs.\ ramped in $\tau$ to produce length-dominated vs.\
curvature-dominated regimes.
\item \textbf{Acrobot.}
We vary torque limits and/or target height smoothly in $\tau$. Kink-prone paths are created by
interpolating between two reward shapings that favor different swing-up strategies, leading to
near-ties in the action gap.
\end{itemize}

In each case, $\tau_t$ is increased monotonically during training (e.g., linearly with episode
index), so that early episodes see one regime and later episodes see another, with a controlled
non-stationary path in between.

\paragraph{Base algorithms and architectures.}
For discrete actions (LunarLander, Acrobot) we use DQN, Double-DQN, and Rainbow as base solvers.
The Q-network has two hidden layers of width 256 with ReLU activations; we use Adam with learning
rate $3\cdot 10^{-4}$, discount factor $\gamma=0.99$, and standard replay buffers. For continuous
variants we use SAC with two-layer 256-unit actor and critic networks, target smoothing, and
entropy tuning. All runs are averaged over multiple random seeds (e.g., 5 seeds) with identical
random initialization schemes across baselines and HT-RL variants.

\paragraph{Scheduler and proxies.}
HT-RL wraps each base solver using the scheduler of Sec.~\ref{sec:scheduler}. Replay-based proxies
are implemented as follows:
\begin{itemize}
\item $W_1$ and $W_2$ are chosen as sliding windows over recent transitions (e.g.,
$W_1=5\cdot 10^3$ steps, $W_2=3W_1$).
\item $\phi(s)$ is taken as the penultimate hidden layer of the Q-network (or critic), detached
from the computation graph. For tabular comparisons we instead use one-hot features.
\item The reward estimator $\widehat r_t(s,a)$ is an exponential moving average (EMA) over the
observed rewards for each $(s,a)$ in the minibatch.
\item Geometry proxies $(\Delta\widehat{\mathrm{PL}}_t,\Delta\widehat{\mathrm{Curv}}_t)$ and
gap/kink proxies $(\widehat{\mathrm{gap}}_t,\mathrm{Kink}_t)$ are computed exactly as in
Sec.~\ref{sec:proxies}, with EMA smoothing parameter $\beta\in[0.9,0.99]$, update period $H$
(e.g., $H=10^3$ steps), and hysteresis threshold $\Delta_{\mathrm{hys}}$ tuned coarsely on
held-out runs.
\end{itemize}
The scheduler outputs time-varying learning rates, target update coefficients, and
regularization/trust parameters; clipping bounds $(\eta_{\min},\eta_{\max})$ and
$(\tau_{\min},\tau_{\max})$ are chosen so that HT-RL remains in a similar range as the
hand-tuned static baselines.

\paragraph{Metrics.}
Dynamic regret is approximated by Monte Carlo rollouts from a fixed set of initial states,
evaluating both the current policy and a high-accuracy reference (trained longer with a fixed
environment). Tracking error is estimated by comparing $Q_\theta$ to a reference $Q^\star$
on a held-out state–action set, computed via model-based planning or long-horizon value iteration
depending on the task.

\subsection{Model-based planning with HT-MCTS}
\label{app:exp-details-htmcts}

\paragraph{Environments.}
For HT-MCTS we consider:
\begin{itemize}
\item A gridworld with moving obstacles and a goal that drifts along the boundary according to a
smooth schedule in $\tau$.
\item A model-based LunarLander variant where gravity or wind parameters follow a homotopy path
in $\tau$, and a learned model $(\widehat P,\widehat r)$ is updated from experience.
\end{itemize}

\paragraph{Planner and scheduling.}
We use UCT/PUCT as the base planner with a fixed exploration constant. The base depth $D_0$ and
simulation budget $B_0$ are chosen to match common practice for each task. HT-MCTS applies the
scheduler of Sec.~\ref{sec:scheduler} to set depth $D_t$ and budget $B_t$ as monotone functions of
$(\tilde{\mathrm{PL}}_t,\tilde{\mathrm{Curv}}_t,\tilde{\mathrm{Kink}}_t)$, capped by
$(D_{\max},B_{\max})$. We track total node expansions and average return. In all reported runs,
HT-MCTS achieves higher return at similar or lower total planning cost by concentrating deeper
search around episodes with high estimated geometric load.

\section{Additional Experimental Figures}
\label{app:exp-figures}


\begin{figure*}[t!]
  \centering
  \includegraphics[width=0.24\linewidth]{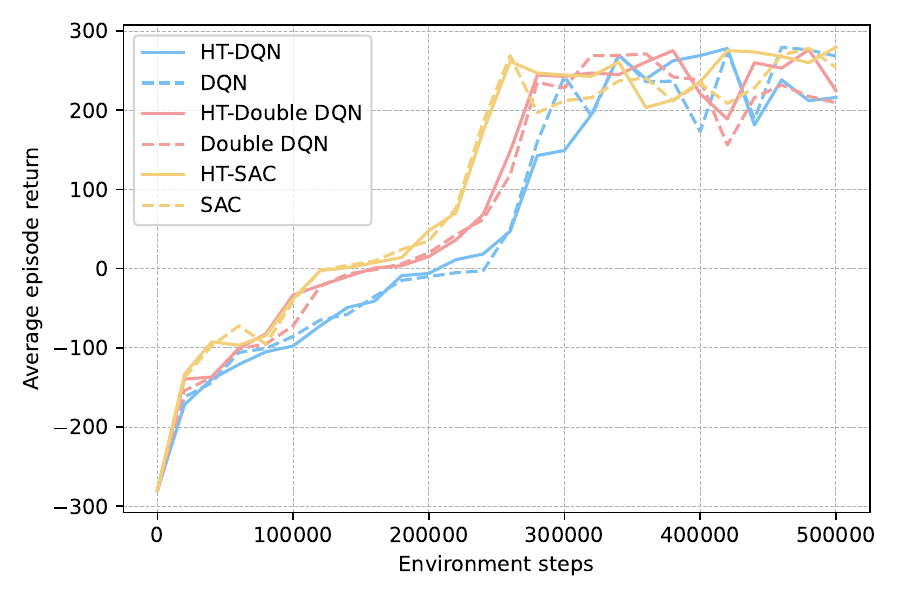}%
  \includegraphics[width=0.24\linewidth]{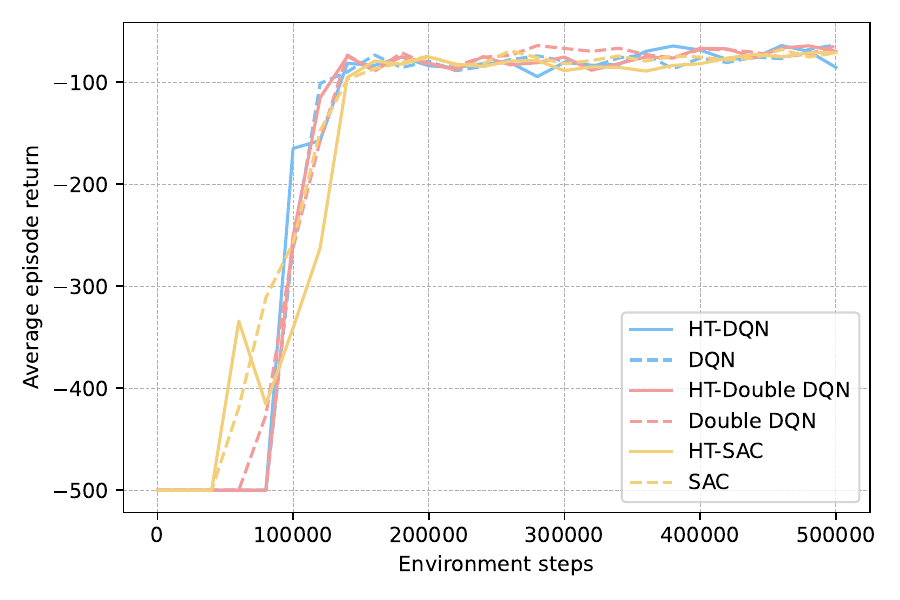}
  \includegraphics[width=0.24\linewidth]{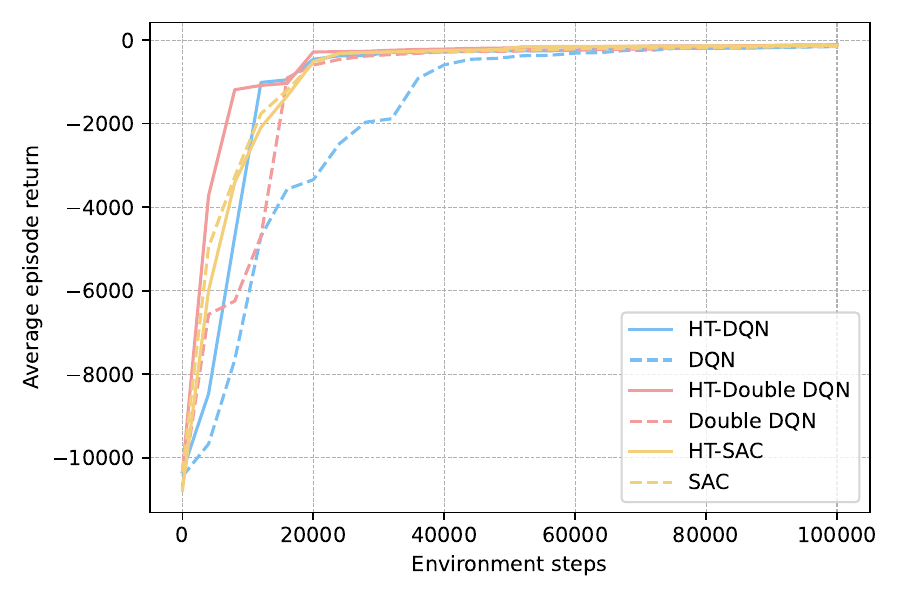}
  \includegraphics[width=0.24\linewidth]{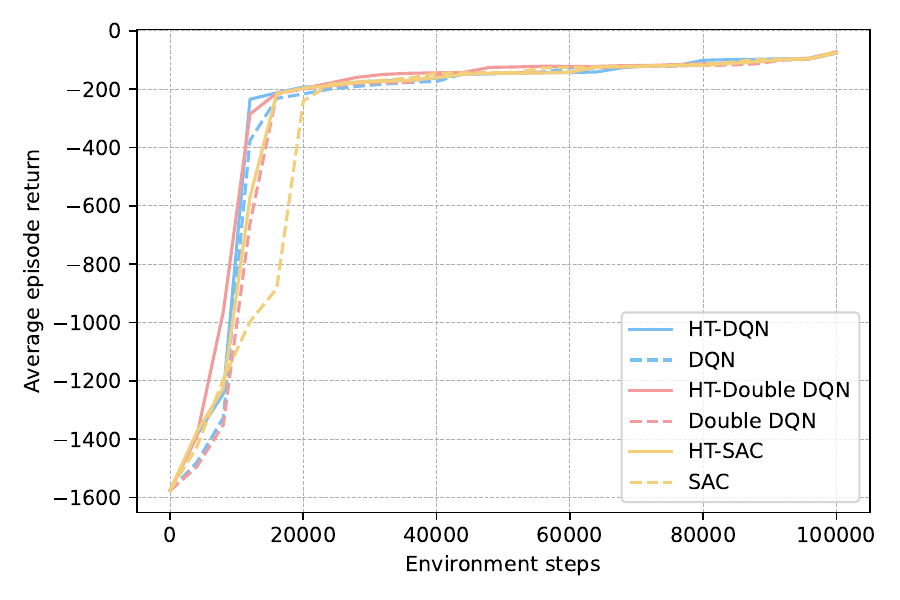}
  \caption{%
   Deep control benchmarks under \textbf{clean} non-stationary homotopy drift (no extra injected noise).Each panel reports average episode return versus environment steps for static baselines (dashed) and their HT-RL counterparts}
  \label{fig:deep-returns}
\end{figure*}

\begin{figure*}[t!]
  \centering
  \includegraphics[width=0.24\linewidth]{figures/lunar_noise_return.pdf}%
  \includegraphics[width=0.24\linewidth]{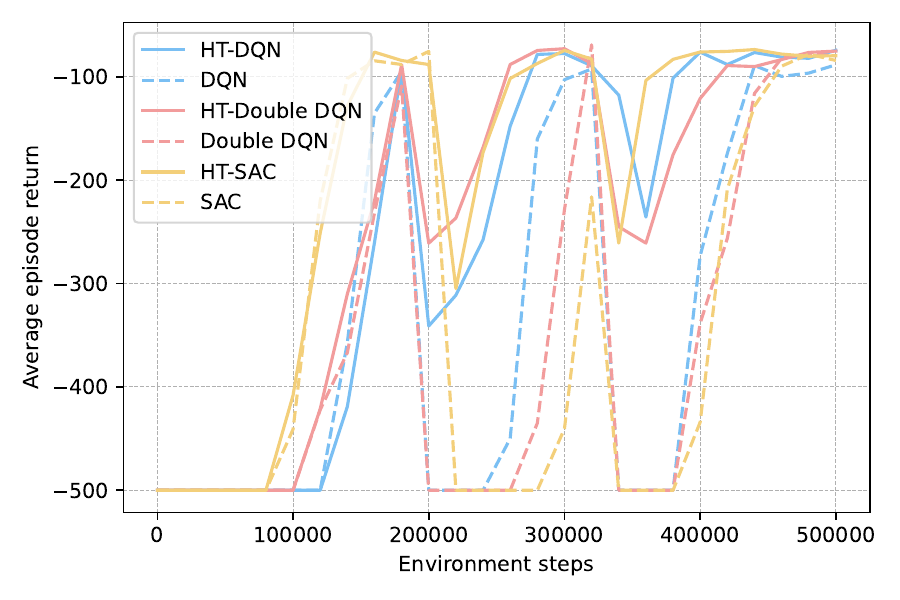}
  \includegraphics[width=0.24\linewidth]{figures/pointmass_noise_return.pdf}
  \includegraphics[width=0.24\linewidth]{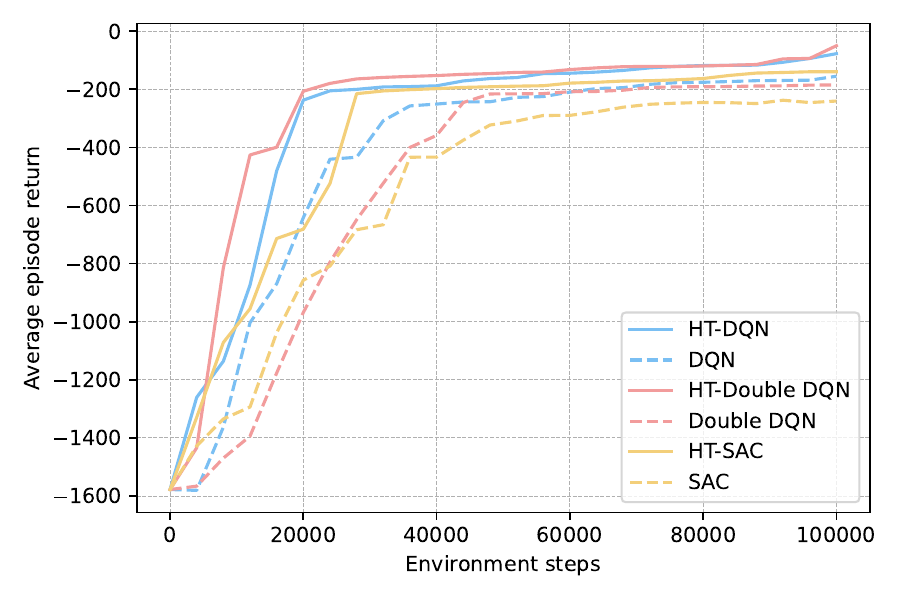}
  \caption{%
  Deep control benchmarks under non-stationary homotopy paths.
  Each panel shows average episode return vs.\ environment steps for different algorithms:
  static DQN/Double-DQN/SAC baselines and their HT-RL counterparts.
  HT-RL variants achieve lower tracking error and higher return, especially in
  curvature-dominated and kink-prone regimes.}
  \label{fig:deep-returns-noise}
\end{figure*}

\begin{figure*}[t!]
  \centering
  \includegraphics[width=0.24\linewidth]{figures/lunar_noisy_auc.pdf}%
  \includegraphics[width=0.24\linewidth]{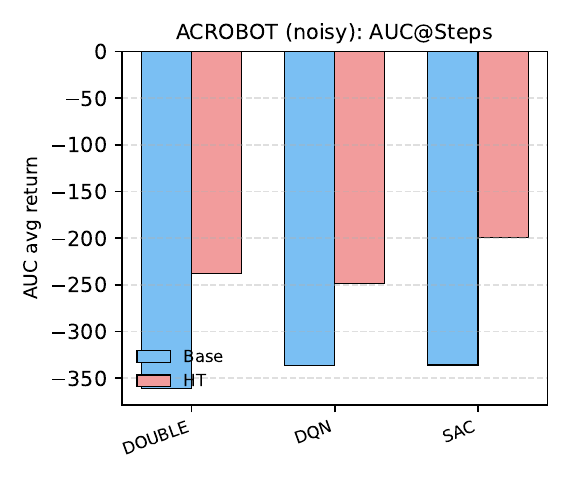}%
  \includegraphics[width=0.24\linewidth]{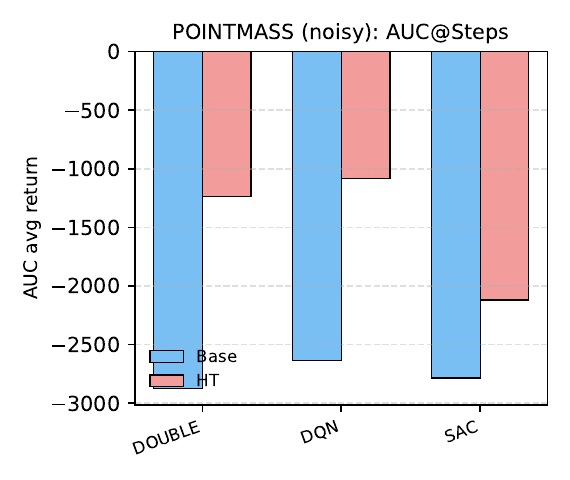}%
  \includegraphics[width=0.24\linewidth]{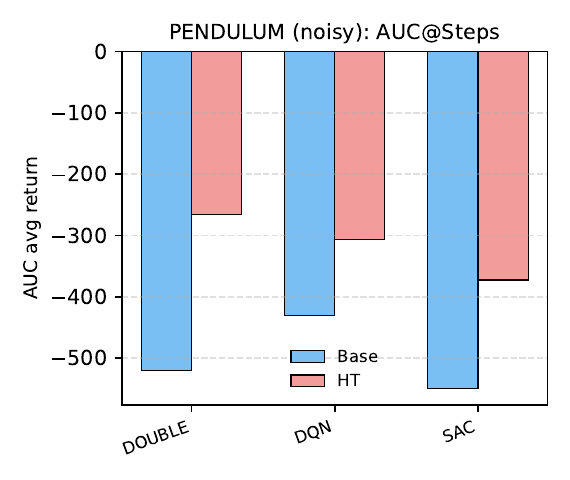}%
  \caption{%
  \textbf{AUC@Steps} (area under the return curve over the training budget), 
  Each panel corresponds to one environment and reports baseline vs.\ HT-RL for each matched solver (clean and noisy variants shown side-by-side inside each panel). Higher is better.}
  \label{fig:deep-auc-noise}
\end{figure*}

\begin{figure*}[t!]
  \centering
  \includegraphics[width=0.24\linewidth]{figures/lunar_noisy_final.pdf}%
  \includegraphics[width=0.24\linewidth]{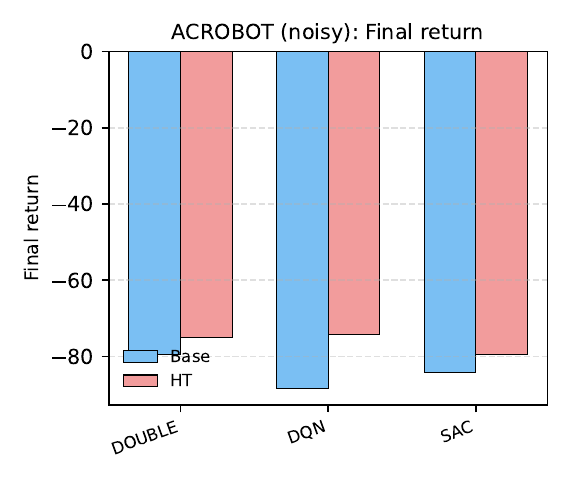}%
  \includegraphics[width=0.24\linewidth]{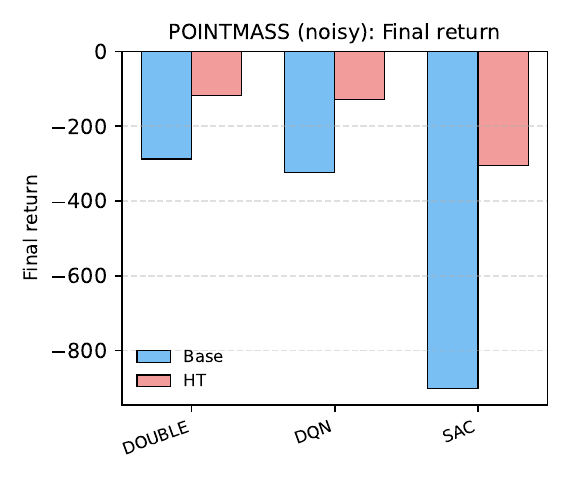}%
  \includegraphics[width=0.24\linewidth]{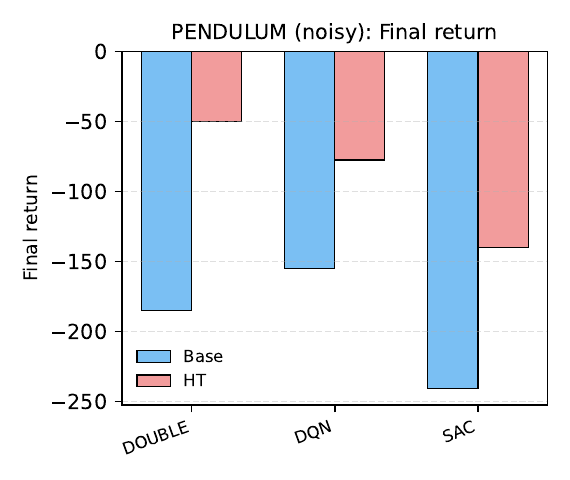}%
  \caption{%
  \textbf{Final evaluation return} 
  Each panel corresponds to one environment and reports baseline vs.\ HT-RL for each matched solver (clean and noisy variants shown side-by-side inside each panel). Higher is better.}
  \label{fig:deep-final-noise}
\end{figure*}

\section{Proof}

\subsection{Proof of Lemma~\ref{lem:envelope}}
\begin{proof}
By assumption $\tau\in\mathcal R$. We recall what this means. We assume that for this parameter $\tau$ the optimal action–value function $Q^\star_\tau$ admits a unique optimal action at every state and that the associated action gap is uniformly positive. More precisely, there exists a (deterministic) optimal policy $\pi^\star_\tau:\mathcal S\to\mathcal A$ such that
\[
\pi^\star_\tau(s)\in\argmax_{a\in\mathcal A} Q^\star_\tau(s,a)
\quad\text{for all }s\in\mathcal S,
\]
and such that the global gap
\[
\Delta_\tau
\;:=\;
\inf_{s\in\mathcal S}
\;\inf_{a\in\mathcal A\setminus\{\pi^\star_\tau(s)\}}
\Big( Q^\star_\tau(s,\pi^\star_\tau(s)) - Q^\star_\tau(s,a) \Big)
\]
satisfies
\[
\Delta_\tau > 0.
\]
In particular, for every state $s\in\mathcal S$ and every suboptimal action $a\neq\pi^\star_\tau(s)$ we have
\begin{equation}
Q^\star_\tau(s,\pi^\star_\tau(s)) - Q^\star_\tau(s,a)
\;\geq\; \Delta_\tau.
\label{eq:gap-at-tau}
\end{equation}

We now use continuity of $Q^\star_{\tilde\tau}$ with respect to $\tilde\tau$ to propagate this strict inequality to a neighborhood of $\tau$. Fix a constant
\[
\delta \;:=\; \frac{\Delta_\tau}{4} \;>\; 0.
\]
From \eqref{eq:gap-at-tau} we obtain, for all $s$ and all $a\neq\pi^\star_\tau(s)$,
\[
Q^\star_\tau(s,\pi^\star_\tau(s)) - Q^\star_\tau(s,a)
\;\geq\; \Delta_\tau
\;>\; 3\delta.
\]

For each fixed pair $(s,a)$ with $a\neq\pi^\star_\tau(s)$ we define the scalar function
\[
f_{s,a}(\tilde\tau)
\;:=\;
Q^\star_{\tilde\tau}(s,\pi^\star_\tau(s))
-
Q^\star_{\tilde\tau}(s,a).
\]
By assumption on the model, for each $(s,a)$ the maps $\tilde\tau\mapsto Q^\star_{\tilde\tau}(s,a)$ and $\tilde\tau\mapsto Q^\star_{\tilde\tau}(s,\pi^\star_\tau(s))$ are continuous, hence $f_{s,a}$ is a continuous real-valued function of $\tilde\tau$. At $\tilde\tau=\tau$ we have
\[
f_{s,a}(\tau)
=
Q^\star_\tau(s,\pi^\star_\tau(s))
-
Q^\star_\tau(s,a)
\;\geq\; 3\delta.
\]
By continuity of $f_{s,a}$ at $\tau$, there exists $\varepsilon_{s,a}>0$ such that whenever $|\tilde\tau-\tau|<\varepsilon_{s,a}$, we have
\[
\bigl|f_{s,a}(\tilde\tau) - f_{s,a}(\tau)\bigr| < \delta.
\]
In particular,
\[
f_{s,a}(\tilde\tau)
\;\geq\; f_{s,a}(\tau) - \delta
\;\geq\; 3\delta - \delta
\;=\; 2\delta
\;>\; 0.
\]
That is, for all $\tilde\tau$ satisfying $|\tilde\tau-\tau|<\varepsilon_{s,a}$ we have
\begin{equation}
Q^\star_{\tilde\tau}(s,\pi^\star_\tau(s))
-
Q^\star_{\tilde\tau}(s,a)
=
f_{s,a}(\tilde\tau)
\;\geq\; 2\delta
\;>\; 0.
\label{eq:gap-near-tau}
\end{equation}

Now the set of states $\mathcal S$ and actions $\mathcal A$ is finite, hence the set
\[
\bigl\{(s,a)\in\mathcal S\times\mathcal A:\, a\neq \pi^\star_\tau(s)\bigr\}
\]
is finite. We may therefore take the minimum of the finitely many positive radii $\varepsilon_{s,a}$. Define
\[
\varepsilon
\;:=\;
\min_{s\in\mathcal S}
\;\min_{a\in\mathcal A\setminus\{\pi^\star_\tau(s)\}}
\varepsilon_{s,a}
\;>\; 0
\]
and let
\[
U
\;:=\;
\{\tilde\tau:\ |\tilde\tau-\tau|<\varepsilon\}
\]
be the open neighborhood of $\tau$ with radius $\varepsilon$. Then for any $\tilde\tau\in U$, any state $s\in\mathcal S$ and any suboptimal action $a\in\mathcal A\setminus\{\pi^\star_\tau(s)\}$, because $|\tilde\tau-\tau|<\varepsilon\leq\varepsilon_{s,a}$, inequality \eqref{eq:gap-near-tau} holds and we have
\begin{equation}
Q^\star_{\tilde\tau}(s,\pi^\star_\tau(s))
-
Q^\star_{\tilde\tau}(s,a)
\;\geq\; 2\delta
\;>\; 0.
\label{eq:gap-U}
\end{equation}

Inequality \eqref{eq:gap-U} shows that for every $\tilde\tau\in U$ and every state $s$, the action $\pi^\star_\tau(s)$ still strictly dominates all other actions in terms of $Q^\star_{\tilde\tau}(s,\cdot)$. More precisely, for any $a\neq\pi^\star_\tau(s)$ we have
\[
Q^\star_{\tilde\tau}(s,\pi^\star_\tau(s))
> Q^\star_{\tilde\tau}(s,a),
\]
hence
\[
\argmax_{a'\in\mathcal A} Q^\star_{\tilde\tau}(s,a')
=
\{\pi^\star_\tau(s)\}.
\]
Thus the argmax is single-valued for all $s$ and all $\tilde\tau\in U$, and the unique maximizer does not depend on $\tilde\tau$. If we define
\[
\pi^\star_{\tilde\tau}(s)
\;:=\;
\text{the unique element of }
\argmax_{a'} Q^\star_{\tilde\tau}(s,a'),
\]
then the above shows that for all $\tilde\tau\in U$,
\[
\pi^\star_{\tilde\tau}(s) = \pi^\star_\tau(s)
\quad\text{for all }s\in\mathcal S,
\]
that is, $\pi^\star_{\tilde\tau}=\pi^\star_\tau$ on $U$. This proves the first part of the lemma.

In the neighborhood $U$ the optimal policy is thus a fixed function $\pi^\star:\mathcal S\to\mathcal A$ which we may identify with $\pi^\star_\tau$. The optimal Bellman equation at parameter $\tilde\tau\in U$ reads
\[
Q^\star_{\tilde\tau}(s,a)
=
r_{\tilde\tau}(s,a)
+
\gamma \sum_{s'\in\mathcal S}
P_{\tilde\tau}(s'\mid s,a)\,
\max_{a'\in\mathcal A} Q^\star_{\tilde\tau}(s',a')
\quad \text{for all }(s,a).
\]
However, for every $\tilde\tau\in U$ and every $s'\in\mathcal S$ we now know that
\[
\max_{a'} Q^\star_{\tilde\tau}(s',a')
=
Q^\star_{\tilde\tau}(s',\pi^\star(s')),
\]
because $\pi^\star(s')$ is the unique maximizing action and does not depend on $\tilde\tau$. Hence the Bellman equation can be rewritten, for all $\tilde\tau\in U$ and all $(s,a)$, as
\begin{equation}
Q^\star_{\tilde\tau}(s,a)
=
r_{\tilde\tau}(s,a)
+
\gamma \sum_{s'\in\mathcal S}
P_{\tilde\tau}(s'\mid s,a)\,
Q^\star_{\tilde\tau}(s',\pi^\star(s')).
\label{eq:bellman-fixed-policy}
\end{equation}
This is exactly the Bellman equation for policy evaluation under the fixed policy $\pi^\star$.

We next show that the map $\tilde\tau\mapsto Q^\star_{\tilde\tau}$ is differentiable on $U$. For this it is convenient, though not strictly necessary, to write \eqref{eq:bellman-fixed-policy} in a finite-dimensional vector form. We enumerate state–action pairs as $(s_1,a_1),\dots,(s_N,a_N)$ where $N:=|\mathcal S||\mathcal A|$. For each $\tilde\tau\in U$ we define a vector $q(\tilde\tau)\in\mathbb R^N$ by
\[
q_i(\tilde\tau)
:=
Q^\star_{\tilde\tau}(s_i,a_i),
\qquad i=1,\dots,N,
\]
and similarly $r(\tilde\tau)\in\mathbb R^N$ by
\[
r_i(\tilde\tau)
:=
r_{\tilde\tau}(s_i,a_i).
\]
We also define a matrix $P^{\pi^\star}(\tilde\tau)\in\mathbb R^{N\times N}$ by
\[
P^{\pi^\star}(\tilde\tau)_{ij}
:=
P_{\tilde\tau}(s_j \mid s_i,a_i)\,\mathbf 1\{a_j = \pi^\star(s_j)\},
\]
so that the $i$-th row of $P^{\pi^\star}(\tilde\tau)$ is the distribution of the next state–action pair $(s_{t+1},a_{t+1})$ when starting from $(s_i,a_i)$ and then applying the fixed policy $a_{t+1}=\pi^\star(s_{t+1})$. With these definitions, the family of equations \eqref{eq:bellman-fixed-policy} over all $(s,a)$ can be written compactly as
\begin{equation}
q(\tilde\tau)
=
r(\tilde\tau)
+
\gamma\, P^{\pi^\star}(\tilde\tau) \, q(\tilde\tau),
\label{eq:bellman-vector}
\end{equation}
or equivalently
\begin{equation}
\bigl(I - \gamma P^{\pi^\star}(\tilde\tau)\bigr)\,q(\tilde\tau)
=
r(\tilde\tau),
\label{eq:linear-system}
\end{equation}
where $I$ denotes the $N\times N$ identity matrix.

For each $\tilde\tau\in U$, the matrix $P^{\pi^\star}(\tilde\tau)$ is row-stochastic (its entries are nonnegative and each row sums to $1$). Thus its spectral radius satisfies $\rho(P^{\pi^\star}(\tilde\tau))\leq 1$. Because the discount factor satisfies $0<\gamma<1$, we have
\[
\rho\bigl(\gamma P^{\pi^\star}(\tilde\tau)\bigr)
\leq \gamma < 1,
\]
so the matrix $I-\gamma P^{\pi^\star}(\tilde\tau)$ is invertible for every $\tilde\tau\in U$. Therefore from \eqref{eq:linear-system} we may write
\begin{equation}
q(\tilde\tau)
=
\bigl(I - \gamma P^{\pi^\star}(\tilde\tau)\bigr)^{-1}
\, r(\tilde\tau).
\label{eq:q-explicit}
\end{equation}

Now, by the modeling assumptions, for each fixed $(s,a,s')$ the functions $\tilde\tau\mapsto r_{\tilde\tau}(s,a)$ and $\tilde\tau\mapsto P_{\tilde\tau}(s'\mid s,a)$ are differentiable, hence each entry of the vector $r(\tilde\tau)$ and the matrix $P^{\pi^\star}(\tilde\tau)$ is differentiable with respect to $\tilde\tau$. It follows that the matrix-valued function
\[
M(\tilde\tau) := I - \gamma P^{\pi^\star}(\tilde\tau)
\]
is differentiable in $\tilde\tau$, and it is invertible for every $\tilde\tau\in U$. A standard result from matrix calculus states that if $M(\tilde\tau)$ is a differentiable family of invertible matrices, then the inverse $M(\tilde\tau)^{-1}$ is differentiable and its derivative is given by
\[
\frac{d}{d\tilde\tau}M(\tilde\tau)^{-1}
=
-\,M(\tilde\tau)^{-1}
\Bigl(\frac{d}{d\tilde\tau}M(\tilde\tau)\Bigr)
M(\tilde\tau)^{-1}.
\]
Since $r(\tilde\tau)$ is also a differentiable vector function of $\tilde\tau$, the expression \eqref{eq:q-explicit} shows that $q(\tilde\tau)$ is differentiable on $U$ as a composition of differentiable functions. In particular, each coordinate $q_i(\tilde\tau)$ is differentiable in $\tilde\tau$, which means that for every $(s,a)$, the map
\[
\tilde\tau \longmapsto Q^\star_{\tilde\tau}(s,a)
\]
is differentiable on $U$. This proves the differentiability part of the lemma.

Finally, we explain in what sense the derivative “passes through” the Bellman equation. Since we already know that all quantities that appear in \eqref{eq:bellman-fixed-policy} are differentiable with respect to $\tilde\tau$, we may differentiate both sides of \eqref{eq:bellman-fixed-policy} at any point $\tilde\tau\in U$. For each fixed $(s,a)$, the left-hand side is simply $Q^\star_{\tilde\tau}(s,a)$, whose derivative is $dQ^\star_{\tilde\tau}(s,a)/d\tilde\tau$. On the right-hand side, $r_{\tilde\tau}(s,a)$ depends on $\tilde\tau$ and so does $P_{\tilde\tau}(s'\mid s,a)$, while $\pi^\star$ does not depend on $\tilde\tau$ on $U$. Thus we obtain, using the product rule inside the finite sum,
\[
\begin{aligned}
\frac{d}{d\tilde\tau} Q^\star_{\tilde\tau}(s,a)
&=
\frac{d}{d\tilde\tau} r_{\tilde\tau}(s,a)
+
\gamma \sum_{s'\in\mathcal S}
\frac{d}{d\tilde\tau}
\Bigl(
P_{\tilde\tau}(s'\mid s,a)\,
Q^\star_{\tilde\tau}(s',\pi^\star(s'))
\Bigr)
\\
&=
\frac{\partial}{\partial\tilde\tau} r_{\tilde\tau}(s,a)
+
\gamma \sum_{s'\in\mathcal S}
\Bigl[
\frac{\partial}{\partial\tilde\tau}
P_{\tilde\tau}(s'\mid s,a)\,
Q^\star_{\tilde\tau}(s',\pi^\star(s'))
\\
&\qquad\qquad\qquad\qquad\quad
+
P_{\tilde\tau}(s'\mid s,a)\,
\frac{d}{d\tilde\tau}
Q^\star_{\tilde\tau}(s',\pi^\star(s'))
\Bigr].
\end{aligned}
\]
Here the terms $\partial r_{\tilde\tau}/\partial\tilde\tau$ and $\partial P_{\tilde\tau}/\partial\tilde\tau$ are the usual derivatives of the reward and transition models with respect to $\tilde\tau$, and the derivatives of $Q^\star_{\tilde\tau}$ appear linearly inside the same expectation structure as in the original Bellman equation. The key point is that, since the argmax is realized by the same action $\pi^\star(s')$ for all $\tilde\tau\in U$, there is no additional derivative term coming from the maximization operator; in other words, on the regular region the derivative of
\[
\max_{a'\in\mathcal A} Q^\star_{\tilde\tau}(s',a')
\]
is simply the derivative of $Q^\star_{\tilde\tau}(s',\pi^\star(s'))$. This is precisely the envelope/Danskin effect in this smooth, single-optimizer case, and it justifies saying that the derivative “passes through” the Bellman equation.

All assertions of the lemma are thereby proved.
\end{proof}

\subsection{Proof of Lemma~\ref{lem:first-derivative}}

\begin{proof}
Fix $\tau\in\mathcal R$ and let $\pi^\star_\tau$ be the corresponding optimal policy on the regular region. By Lemma~\ref{lem:envelope}, there exists a neighborhood $U$ of $\tau$ such that for all $\tilde\tau\in U$ and all $s\in\Sspace$,
\[
\argmax_{a'} Q^\star_{\tilde\tau}(s,a') = \{\pi^\star_\tau(s)\},
\]
so in particular $\pi^\star_{\tilde\tau} = \pi^\star_\tau$ for all $\tilde\tau\in U$, and $Q^\star_{\tilde\tau}$ is differentiable in $\tilde\tau$ with the derivative allowed to pass through the Bellman equation. For the rest of the proof we fix such a $\tau$ and work at this point, suppressing the neighborhood $U$ from the notation.

We recall the definitions of the policy transition operator and the resolvent. For a fixed policy $\pi:\Sspace\to\Aspace$ and parameter $\tau$, define the linear operator $P^{\pi}_\tau$ acting on functions $V:\Sspace\times\Aspace\to\R$ by
\[
(P^{\pi}_\tau V)(s,a)
\;:=\;
\int_{\Sspace} V\bigl(s',\pi(s')\bigr)\,P_\tau(ds'\mid s,a),
\]
that is, from state–action $(s,a)$ we move to $s'\sim P_\tau(\cdot\mid s,a)$ and then apply action $\pi(s')$ at the next state. The associated resolvent at $(\pi,\tau)$ is
\[
\mathcal R^\pi_\tau
\;:=\;
\bigl(I - \gamma P^\pi_\tau\bigr)^{-1}
\;=\;
\sum_{k=0}^\infty \gamma^k\,(P^\pi_\tau)^k,
\]
where the series converges in operator norm because $\gamma\in(0,1)$ and $\|P^\pi_\tau\|_{\infty\to\infty}\le 1$. In particular,
\begin{equation}
\label{eq:resolvent-norm}
\|\mathcal R^\pi_\tau\|_{\infty\to\infty}
\;\le\;
\sum_{k=0}^\infty \gamma^k
\;=\;
\frac{1}{1-\gamma}.
\end{equation}

We now write the optimal Bellman equation at parameter $\tau$ in a form that uses the fixed optimal policy $\pi^\star_\tau$ on $U$. For each $(s,a)\in\Sspace\times\Aspace$, the optimal Bellman equation is
\[
Q^\star_\tau(s,a)
=
r_\tau(s,a)
+
\gamma \int_{\Sspace}
\max_{a'} Q^\star_\tau(s',a')\,P_\tau(ds'\mid s,a).
\]
But on $U$ we know that the unique maximizer at every $s'$ is $\pi^\star_\tau(s')$, so
\[
\max_{a'} Q^\star_\tau(s',a')
=
Q^\star_\tau\bigl(s',\pi^\star_\tau(s')\bigr)
=:
V^\star_\tau(s'),
\]
where we have defined the optimal value function
\[
V^\star_\tau(s)
:=
Q^\star_\tau\bigl(s,\pi^\star_\tau(s)\bigr).
\]
Therefore, the above expression can be rewritten as
\begin{equation}
\label{eq:bellman-opt-policy}
Q^\star_\tau(s,a)
=
r_\tau(s,a)
+
\gamma \int_{\Sspace}
V^\star_\tau(s')\,P_\tau(ds'\mid s,a),
\qquad \forall (s,a).
\end{equation}

We next differentiate equation~\eqref{eq:bellman-opt-policy} with respect to the parameter $\tau$. Since on the neighborhood $U$ the function $Q^\star_{\tilde\tau}$ is differentiable with respect to $\tilde\tau$, and the derivative can pass through the Bellman equation, we may take pointwise derivatives at $\tilde\tau=\tau$. Let

\[
\dot Q_\tau(s,a)
:=
\frac{d}{d\tilde\tau} Q^\star_{\tilde\tau}(s,a)\bigg|_{\tilde\tau=\tau},
\qquad
\dot V_\tau(s)
:=
\frac{d}{d\tilde\tau} V^\star_{\tilde\tau}(s)\bigg|_{\tilde\tau=\tau},
\]
and
\[
\partial_\tau r_\tau(s,a)
:=
\frac{d}{d\tilde\tau} r_{\tilde\tau}(s,a)\bigg|_{\tilde\tau=\tau},
\qquad
\partial_\tau P_\tau(\cdot\mid s,a)
:=
\frac{d}{d\tilde\tau} P_{\tilde\tau}(\cdot\mid s,a)\bigg|_{\tilde\tau=\tau},
\]
where $\partial_\tau P_\tau(\cdot\mid s,a)$ is a signed finite measure on $(\Sspace,\mathcal B(\Sspace))$ whose total mass is $0$ (because for all $\tilde\tau$, $P_{\tilde\tau}(\cdot\mid s,a)$ is a probability measure).

Differentiating both sides of \eqref{eq:bellman-opt-policy} at $\tilde\tau=\tau$ yields

\begin{equation}
\label{eq:deriv-bellman-raw}
\dot Q_\tau(s,a)
=
\partial_\tau r_\tau(s,a)
+
\gamma\,\frac{d}{d\tilde\tau}
\int_{\Sspace}
V^\star_{\tilde\tau}(s')\,P_{\tilde\tau}(ds'\mid s,a)
\bigg|_{\tilde\tau=\tau}.
\end{equation}
We expand the derivative on the right-hand side. Inside the integral, the dependence on $\tilde\tau$ arises both from the integrand $V^\star_{\tilde\tau}(s')$ and from the measure $P_{\tilde\tau}(\cdot\mid s,a)$. Under the smoothness assumptions (namely that $V^\star_{\tilde\tau}$ and $P_{\tilde\tau}$ are differentiable with respect to $\tilde\tau$ and appropriately bounded so that differentiation and integration may be interchanged), the classical Leibniz rule yields

\[
\frac{d}{d\tilde\tau}
\int_{\Sspace}
V^\star_{\tilde\tau}(s')\,P_{\tilde\tau}(ds'\mid s,a)
=
\int_{\Sspace}
\dot V_\tau(s')\,P_\tau(ds'\mid s,a)
+
\int_{\Sspace}
V^\star_\tau(s')\,d\bigl(\partial_\tau P_\tau(\cdot\mid s,a)\bigr)(s').
\]
Substituting this expression back into \eqref{eq:deriv-bellman-raw}, we obtain

\[
\dot Q_\tau(s,a)
=
\partial_\tau r_\tau(s,a)
+
\gamma \int_{\Sspace}
\dot V_\tau(s')\,P_\tau(ds'\mid s,a)
+
\gamma \int_{\Sspace}
V^\star_\tau(s')\,d\bigl(\partial_\tau P_\tau(\cdot\mid s,a)\bigr)(s').
\]
We define
\begin{equation}
\label{eq:Delta-def}
\Delta_\tau(s,a)
:=
\int_{\Sspace}
V^\star_\tau(s')\,d\bigl(\partial_\tau P_\tau(\cdot\mid s,a)\bigr)(s'),
\end{equation}
Therefore, the above expression can be rewritten as
\begin{equation}
\label{eq:deriv-bellman-with-Delta}
\dot Q_\tau(s,a)
=
\partial_\tau r_\tau(s,a)
+
\gamma \int_{\Sspace}
\dot V_\tau(s')\,P_\tau(ds'\mid s,a)
+
\gamma\,\Delta_\tau(s,a).
\end{equation}
Note that $V^\star_\tau(s') = Q^\star_\tau\bigl(s',\pi^\star_\tau(s')\bigr)$, hence
\[
\dot V_\tau(s')
=
\frac{d}{d\tilde\tau}
Q^\star_{\tilde\tau}\bigl(s',\pi^\star_\tau(s')\bigr)\bigg|_{\tilde\tau=\tau}
=
\dot Q_\tau\bigl(s',\pi^\star_\tau(s')\bigr),
\]
because the optimal policy $\pi^\star_{\tilde\tau}$ does not change with $\tilde\tau$ in the neighborhood $U$. Substituting this into \eqref{eq:deriv-bellman-with-Delta}, we obtain
\[
\dot Q_\tau(s,a)
=
\partial_\tau r_\tau(s,a)
+
\gamma \int_{\Sspace}
\dot Q_\tau\bigl(s',\pi^\star_\tau(s')\bigr)\,P_\tau(ds'\mid s,a)
+
\gamma\,\Delta_\tau(s,a).
\]
By definition,
\[
(P^{\pi^\star_\tau}_\tau \dot Q_\tau)(s,a)
:=
\int_{\Sspace}
\dot Q_\tau\bigl(s',\pi^\star_\tau(s')\bigr)\,P_\tau(ds'\mid s,a),
\]
thus the above equality can be written concisely as
\begin{equation}
\label{eq:linear-eq-dotQ}
\dot Q_\tau
=
\partial_\tau r_\tau
+
\gamma\,P^{\pi^\star_\tau}_\tau \dot Q_\tau
+
\gamma\,\Delta_\tau,
\end{equation}
where the equality holds as a functional equality (simultaneously for all $(s,a)$). Moving $\gamma P^{\pi^\star_\tau}_\tau \dot Q_\tau$ to the left-hand side yields
\[
\bigl(I - \gamma P^{\pi^\star_\tau}_\tau\bigr)\dot Q_\tau
=
\partial_\tau r_\tau
+
\gamma\,\Delta_\tau.
\]
For fixed $\tau$, the operator $P^{\pi^\star_\tau}_\tau$ is a linear operator from $\ell_\infty(\Sspace\times\Aspace)$ to itself, whose spectral radius is at most $1$; the discount factor $\gamma\in(0,1)$ ensures that $\rho(\gamma P^{\pi^\star_\tau}_\tau)\le\gamma<1$, hence the operator $I - \gamma P^{\pi^\star_\tau}_\tau$ is invertible on $\ell_\infty$, and its inverse is precisely the resolvent we defined above:
\[
\mathcal R^{\pi^\star_\tau}_\tau
=
\bigl(I - \gamma P^{\pi^\star_\tau}_\tau\bigr)^{-1}.
\]
Therefore we can solve for
\begin{equation}
\label{eq:dotQ-resolvent}
\dot Q_\tau
=
\mathcal R^{\pi^\star_\tau}_\tau\bigl(\partial_\tau r_\tau
+
\gamma\,\Delta_\tau\bigr).
\end{equation}
This proves the equality form in the first part of the lemma:
\[
\frac{d}{d\tau}Q^\star_\tau
=
\mathcal R^{\pi^\star_\tau}_\tau\bigl(\partial_\tau r_\tau
+
\gamma\,\Delta_\tau\bigr),
\]
where $\Delta_\tau$ is exactly the definition in \eqref{eq:Delta-def}.

Next we provide an upper bound for $\|\dot Q_\tau\|_\infty$. Taking the $\|\cdot\|_\infty$ norm of \eqref{eq:dotQ-resolvent} and using the definition of the operator norm yields
\[
\|\dot Q_\tau\|_\infty
\le
\|\mathcal R^{\pi^\star_\tau}_\tau\|_{\infty\to\infty}\,
\|\partial_\tau r_\tau + \gamma\,\Delta_\tau\|_\infty.
\]
Using \eqref{eq:resolvent-norm}, we obtain
\[
\|\dot Q_\tau\|_\infty
\le
\frac{1}{1-\gamma}
\|\partial_\tau r_\tau + \gamma\,\Delta_\tau\|_\infty.
\]
Applying the triangle inequality to separate the two terms:
\[
\|\partial_\tau r_\tau + \gamma\,\Delta_\tau\|_\infty
\le
\|\partial_\tau r_\tau\|_\infty
+
\gamma\,\|\Delta_\tau\|_\infty,
\]
thus
\begin{equation}
\label{eq:dotQ-bound-with-Delta}
\|\dot Q_\tau\|_\infty
\le
\frac{1}{1-\gamma}\,\|\partial_\tau r_\tau\|_\infty
+
\frac{\gamma}{1-\gamma}\,\|\Delta_\tau\|_\infty.
\end{equation}
To bound $\|\Delta_\tau\|_\infty$, we use Assumption~\ref{ass:mixing} (Assumption~1.2). This assumption ensures that there exists a constant $C_{\mathrm{mix}}$ such that for all $\tau$, the Lipschitz constant of the optimal value function on the state space satisfies
\[
\|V^\star_\tau\|_{\mathrm{Lip}}
\le
C_{\mathrm{mix}}.
\]
On the other hand, for each $(s,a)$, $\partial_\tau P_\tau(\cdot\mid s,a)$ is a signed finite measure of total mass $0$. Denote its $W_1^\ast$ norm by $\|\partial_\tau P_\tau(\cdot\mid s,a)\|_{W_1^\ast}$, i.e.,
\[
\|\partial_\tau P_\tau(\cdot\mid s,a)\|_{W_1^\ast}
:=
\sup_{f\in\mathrm{Lip}_1(\Sspace)}
\biggl|
\int_{\Sspace}
f(s')\,d\bigl(\partial_\tau P_\tau(\cdot\mid s,a)\bigr)(s')
\biggr|,
\]
where $\mathrm{Lip}_1(\Sspace)$ denotes the set of functions on $(\Sspace,d_\Sspace)$ with Lipschitz constant at most $1$. For any Lipschitz function $f$ with finite Lipschitz constant $L=\|f\|_{\mathrm{Lip}}<\infty$, and any signed measure $\mu$ of total mass $0$, there is the standard inequality

\[
\biggl|\int f\,d\mu\biggr|
\le
\|f\|_{\mathrm{Lip}}\,
\|\mu\|_{W_1^\ast}.
\]
A simple proof is as follows: choose an arbitrary reference point $s_0\in\Sspace$ and write $f(s)=f(s_0)+g(s)$, where $g(s):=f(s)-f(s_0)$. Then $g$ has the same Lipschitz constant as $f$, and $\mu(\Sspace)=0$ gives
\[
\int f\,d\mu
=
\int g\,d\mu.
\]
Let $\tilde g(s):=g(s)/L$, so $\tilde g\in\mathrm{Lip}_1(\Sspace)$, Thus,
\[
\biggl|\int f\,d\mu\biggr|
=
\biggl|\int g\,d\mu\biggr|
=
L\biggl|\int \tilde g\,d\mu\biggr|
\le
L\,\|\mu\|_{W_1^\ast}.
\]

Applying the above general conclusion to $f=V^\star_\tau$ and $\mu=\partial_\tau P_\tau(\cdot\mid s,a)$ yields
\[
|\Delta_\tau(s,a)|
=
\biggl|\int_{\Sspace} V^\star_\tau(s')\,d\bigl(\partial_\tau P_\tau(\cdot\mid s,a)\bigr)(s')\biggr|
\le
\|V^\star_\tau\|_{\mathrm{Lip}}\,
\|\partial_\tau P_\tau(\cdot\mid s,a)\|_{W_1^\ast}
\le
C_{\mathrm{mix}}\,
\|\partial_\tau P_\tau(\cdot\mid s,a)\|_{W_1^\ast}.
\]
Taking the supremum over $(s,a)$ gives
\[
\|\Delta_\tau\|_\infty
=
\sup_{s,a} |\Delta_\tau(s,a)|
\le
C_{\mathrm{mix}}\,
\sup_{s,a}\|\partial_\tau P_\tau(\cdot\mid s,a)\|_{W_1^\ast}.
\]

Substituting this estimate into \eqref{eq:dotQ-bound-with-Delta}, we obtain a slightly stronger inequality
\[
\|\dot Q_\tau\|_\infty
\le
\frac{1}{1-\gamma}\,\|\partial_\tau r_\tau\|_\infty
+
\frac{\gamma\,C_{\mathrm{mix}}}{1-\gamma}\,
\sup_{s,a}\|\partial_\tau P_\tau(\cdot\mid s,a)\|_{W_1^\ast}.
\]
Note that $\tfrac{1}{1-\gamma}\ge 1$, hence
\[
\frac{\gamma\,C_{\mathrm{mix}}}{1-\gamma}
\le
\frac{\gamma\,C_{\mathrm{mix}}}{(1-\gamma)^2},
\]
which yields a slightly looser but more uniform upper bound
\[
\|\dot Q_\tau\|_\infty
\le
\frac{1}{1-\gamma}\,\|\partial_\tau r_\tau\|_\infty
+
\frac{\gamma\,C_{\mathrm{mix}}}{(1-\gamma)^2}\,
\sup_{s,a}\|\partial_\tau P_\tau(\cdot\mid s,a)\|_{W_1^\ast}.
\]
Since $\dot Q_\tau = \tfrac{d}{d\tau}Q^\star_\tau$, this is exactly the second inequality in the lemma. This completes the proof.
\end{proof}

\subsection{Proof of Lemma~\ref{lem:second-derivative}}
\begin{proof}
Fix $\tau\in\mathcal R$ and write $\pi^\star_\tau$ for the corresponding optimal policy. As in Lemma~\ref{lem:envelope}, there exists a neighborhood $U$ of $\tau$ such that for all $\tilde\tau\in U$ and all $s\in\Sspace$ the maximizer of $Q^\star_{\tilde\tau}(s,\cdot)$ is unique and equal to $\pi^\star_\tau(s)$; thus on $U$ we may regard $\pi^\star_\tau$ as fixed and write
\[
V^\star_{\tilde\tau}(s)
:=
Q^\star_{\tilde\tau}\bigl(s,\pi^\star_\tau(s)\bigr),
\qquad
\tilde\tau\in U.
\]
We also recall the policy transition operator and resolvent: for any function $Q:\Sspace\times\Aspace\to\R$,
\[
\bigl(P^{\pi^\star_\tau}_\tau Q\bigr)(s,a)
:=
\int_{\Sspace}
Q\bigl(s',\pi^\star_\tau(s')\bigr)\,P_\tau(ds'\mid s,a),
\]
and
\[
\mathcal R^{\pi^\star_\tau}_\tau
:=
\bigl(I-\gamma P^{\pi^\star_\tau}_\tau\bigr)^{-1}
=
\sum_{k=0}^\infty \gamma^k\bigl(P^{\pi^\star_\tau}_\tau\bigr)^k,
\]
so that $\|\mathcal R^{\pi^\star_\tau}_\tau\|_{\infty\to\infty}\le 1/(1-\gamma)$ as in Lemma~\ref{lem:first-derivative}.

In this neighborhood $U$, the optimal Bellman equation at $(s,a)$ can be written as
\begin{equation}
\label{eq:bellman-opt-policy-second}
Q^\star_{\tilde\tau}(s,a)
=
r_{\tilde\tau}(s,a)
+
\gamma\int_{\Sspace}
V^\star_{\tilde\tau}(s')\,P_{\tilde\tau}(ds'\mid s,a),
\qquad \tilde\tau\in U.
\end{equation}
Differentiating with respect to $\tilde\tau$ at $\tau$, and using the notation of Lemma~\ref{lem:first-derivative}, write
\[
\partial_\tau r_\tau(s,a)
:=
\frac{d}{d\tilde\tau}r_{\tilde\tau}(s,a)\bigg|_{\tilde\tau=\tau},\qquad
\partial_\tau P_\tau(\cdot\mid s,a)
:=
\frac{d}{d\tilde\tau}P_{\tilde\tau}(\cdot\mid s,a)\bigg|_{\tilde\tau=\tau},
\]
where $\partial_\tau P_\tau(\cdot\mid s,a)$ is a signed finite measure of total mass $0$. Again write
\[
\Delta_\tau(s,a)
:=
\int_{\Sspace}V^\star_\tau(s')\,d\bigl(\partial_\tau P_\tau(\cdot\mid s,a)\bigr)(s'),
\]
then Lemma~\ref{lem:first-derivative} tells us that
\begin{equation}
\label{eq:first-deriv-resolvent}
\frac{d}{d\tau}Q^\star_\tau
=
\mathcal R^{\pi^\star_\tau}_\tau\!\bigl(\partial_\tau r_\tau + \gamma\,\Delta_\tau\bigr),
\end{equation}
and in the equivalent linear form,
\begin{equation}
\label{eq:first-deriv-linear}
\bigl(I-\gamma P^{\pi^\star_\tau}_\tau\bigr)\frac{d}{d\tau}Q^\star_\tau
=
\partial_\tau r_\tau + \gamma\,\Delta_\tau.
\end{equation}

Now differentiate \eqref{eq:first-deriv-linear} once more with respect to $\tau$. For convenience, write
\[
\dot Q_\tau
:=
\frac{d}{d\tau}Q^\star_\tau,
\qquad
\ddot Q_\tau
:=
\frac{d^2}{d\tau^2}Q^\star_\tau.
\]
The left-hand side is the derivative of $\tau\mapsto (I-\gamma P^{\pi^\star_\tau}_\tau)\dot Q_\tau$, and applying the product rule gives
\[
\frac{d}{d\tau}\Bigl[\bigl(I-\gamma P^{\pi^\star_\tau}_\tau\bigr)\dot Q_\tau\Bigr]
=
\bigl(I-\gamma P^{\pi^\star_\tau}_\tau\bigr)\ddot Q_\tau
-
\gamma\,(\partial_\tau P^{\pi^\star_\tau}_\tau)\dot Q_\tau,
\]
where $\partial_\tau P^{\pi^\star_\tau}_\tau$ is the derivative of $P^{\pi^\star_\tau}_\tau$ with respect to $\tau$. Differentiating the right-hand side gives
\[
\frac{d}{d\tau}\bigl(\partial_\tau r_\tau + \gamma\,\Delta_\tau\bigr)
=
\partial_{\tau\tau}r_\tau + \gamma\,\partial_\tau\Delta_\tau,
\]
where $\partial_{\tau\tau}r_\tau$ is the second derivative of $r_{\tilde\tau}$ evaluated at $\tilde\tau=\tau$. Equating both derivatives yields
\begin{equation}
\label{eq:second-deriv-pre}
\bigl(I-\gamma P^{\pi^\star_\tau}_\tau\bigr)\ddot Q_\tau
-
\gamma\,(\partial_\tau P^{\pi^\star_\tau}_\tau)\dot Q_\tau
=
\partial_{\tau\tau}r_\tau + \gamma\,\partial_\tau\Delta_\tau.
\end{equation}
Thus,
\begin{equation}
\label{eq:second-deriv-eq1}
\bigl(I-\gamma P^{\pi^\star_\tau}_\tau\bigr)\ddot Q_\tau
=
\partial_{\tau\tau}r_\tau + \gamma\,\partial_\tau\Delta_\tau
+
\gamma\,(\partial_\tau P^{\pi^\star_\tau}_\tau)\dot Q_\tau.
\end{equation}

Next we make explicit the structure of $\partial_\tau\Delta_\tau$. Recall that
\[
\Delta_\tau(s,a)
=
\int_{\Sspace}V^\star_\tau(s')\,d\bigl(\partial_\tau P_\tau(\cdot\mid s,a)\bigr)(s').
\]
Assume $\partial_{\tau\tau}P_\tau(\cdot\mid s,a)$ exists with finite $\Wone^\ast$ norm, and that $V^\star_{\tilde\tau}$ is differentiable in $\tilde\tau$, with derivative denoted $\partial_\tau V^\star_\tau$. Under the smoothness assumptions, we may apply Leibniz's rule to $\Delta_\tau(s,a)$:
\[
\partial_\tau\Delta_\tau(s,a)
=
\int_{\Sspace}
\partial_\tau V^\star_\tau(s')\,d\bigl(\partial_\tau P_\tau(\cdot\mid s,a)\bigr)(s')
+
\int_{\Sspace}
V^\star_\tau(s')\,d\bigl(\partial_{\tau\tau}P_\tau(\cdot\mid s,a)\bigr)(s').
\]
Following the notation of the lemma, define
\[
\Delta_{\tau\tau}(s,a)
:=
\int_{\Sspace}
V^\star_\tau(s')\,d\bigl(\partial_{\tau\tau}P_\tau(\cdot\mid s,a)\bigr)(s'),
\]
and denote the first term by
\[
\Xi_\tau(s,a)
:=
\int_{\Sspace}
\partial_\tau V^\star_\tau(s')\,d\bigl(\partial_\tau P_\tau(\cdot\mid s,a)\bigr)(s').
\]
Thus, for each $(s,a)$,
\[
\partial_\tau\Delta_\tau(s,a)
=
\Delta_{\tau\tau}(s,a) + \Xi_\tau(s,a).
\]

The operator $\partial_\tau P^{\pi^\star_\tau}_\tau$ has a similar integral form. Since
\[
\bigl(P^{\pi^\star_\tau}_\tau Q\bigr)(s,a)
=
\int_{\Sspace}
Q\bigl(s',\pi^\star_\tau(s')\bigr)\,P_\tau(ds'\mid s,a)
\]
differentiation with respect to $\tau$ yields, for any function $Q$,
\[
\bigl(\partial_\tau P^{\pi^\star_\tau}_\tau Q\bigr)(s,a)
=
\int_{\Sspace}
Q\bigl(s',\pi^\star_\tau(s')\bigr)\,d\bigl(\partial_\tau P_\tau(\cdot\mid s,a)\bigr)(s').
\]
In particular, for $Q=\dot Q_\tau$, note that
\[
\partial_\tau V^\star_\tau(s)
=
\frac{d}{d\tilde\tau}V^\star_{\tilde\tau}(s)\bigg|_{\tilde\tau=\tau}
=
\frac{d}{d\tilde\tau}Q^\star_{\tilde\tau}\bigl(s,\pi^\star_\tau(s)\bigr)\bigg|_{\tilde\tau=\tau}
=
\dot Q_\tau\bigl(s,\pi^\star_\tau(s)\bigr),
\]
and thus
\[
\bigl(\partial_\tau P^{\pi^\star_\tau}_\tau\dot Q_\tau\bigr)(s,a)
=
\int_{\Sspace}
\dot Q_\tau\bigl(s',\pi^\star_\tau(s')\bigr)\,d\bigl(\partial_\tau P_\tau(\cdot\mid s,a)\bigr)(s')
=
\int_{\Sspace}
\partial_\tau V^\star_\tau(s')\,d\bigl(\partial_\tau P_\tau(\cdot\mid s,a)\bigr)(s')
=
\Xi_\tau(s,a).
\]
Hence,
\[
\Xi_\tau = \partial_\tau P^{\pi^\star_\tau}_\tau\dot Q_\tau.
\]

Substituting $\partial_\tau\Delta_\tau = \Delta_{\tau\tau} + \Xi_\tau$ into \eqref{eq:second-deriv-eq1} gives
\[
\bigl(I-\gamma P^{\pi^\star_\tau}_\tau\bigr)\ddot Q_\tau
=
\partial_{\tau\tau}r_\tau
+
\gamma\,\Delta_{\tau\tau}
+
\gamma\,\Xi_\tau
+
\gamma\,(\partial_\tau P^{\pi^\star_\tau}_\tau)\dot Q_\tau.
\]
Since $\Xi_\tau = \partial_\tau P^{\pi^\star_\tau}_\tau\dot Q_\tau$, the last two terms combine to
\[
\gamma\,\Xi_\tau + \gamma\,(\partial_\tau P^{\pi^\star_\tau}_\tau)\dot Q_\tau
=
2\gamma\,(\partial_\tau P^{\pi^\star_\tau}_\tau)\dot Q_\tau.
\]
Thus,
\begin{equation}
\label{eq:second-deriv-eq2}
\bigl(I-\gamma P^{\pi^\star_\tau}_\tau\bigr)\ddot Q_\tau
=
\partial_{\tau\tau}r_\tau + \gamma\,\Delta_{\tau\tau}
+
2\gamma\,(\partial_\tau P^{\pi^\star_\tau}_\tau)\dot Q_\tau.
\end{equation}

Using the definition of the resolvent, left-multiplying \eqref{eq:second-deriv-eq2} by $\mathcal R^{\pi^\star_\tau}_\tau = (I-\gamma P^{\pi^\star_\tau}_\tau)^{-1}$ yields
\[
\ddot Q_\tau
=
\mathcal R^{\pi^\star_\tau}_\tau\!\bigl(\partial_{\tau\tau}r_\tau + \gamma\,\Delta_{\tau\tau}\bigr)
+
2\gamma\,\mathcal R^{\pi^\star_\tau}_\tau\bigl((\partial_\tau P^{\pi^\star_\tau}_\tau)\dot Q_\tau\bigr).
\]
This is the structure stated in the lemma: the second derivative equals
\[
\mathcal R^{\pi^\star_\tau}_\tau\!\bigl(\partial_{\tau\tau}r_\tau + \gamma\,\Delta_{\tau\tau}\bigr)
\]
plus a higher-order coupling term
\[
2\gamma\,\mathcal R^{\pi^\star_\tau}_\tau\bigl((\partial_\tau P^{\pi^\star_\tau}_\tau)\dot Q_\tau\bigr)
\]
for which we now derive an upper bound.

First note that
\[
\bigl(\partial_\tau P^{\pi^\star_\tau}_\tau\dot Q_\tau\bigr)(s,a)
=
\int_{\Sspace}
\partial_\tau V^\star_\tau(s')\,d\bigl(\partial_\tau P_\tau(\cdot\mid s,a)\bigr)(s').
\]
which can be written as an integral with respect to $\partial_\tau V^\star_\tau$ and $\partial_\tau P_\tau$.
By the assumed \emph{uniform first- and second-order bounds}, there exists a constant $L_s<\infty$ such that
\[
\|\partial_\tau V^\star_\tau\|_{\mathrm{Lip}}
\le L_s.
\]
Using the definition of the $\Wone^\ast$ norm and the standard inequality (used previously) stating that if $\mu$ is a signed measure of total mass zero and $f$ is Lipschitz, then
\[
\biggl|\int f\,d\mu\biggr|
\le
\|f\|_{\mathrm{Lip}}\,
\|\mu\|_{W_1^\ast},
\]
we obtain
\[
\bigl|(\partial_\tau P^{\pi^\star_\tau}_\tau\dot Q_\tau)(s,a)\bigr|
=
\biggl|\int_{\Sspace}
\partial_\tau V^\star_\tau(s')\,d\bigl(\partial_\tau P_\tau(\cdot\mid s,a)\bigr)(s')\biggr|
\le
L_s\,\|\partial_\tau P_\tau(\cdot\mid s,a)\|_{W_1^\ast}.
\]
Taking the supremum over $(s,a)$ yields
\[
\|(\partial_\tau P^{\pi^\star_\tau}_\tau\dot Q_\tau)\|_\infty
\le
L_s\,
\sup_{s,a}\|\partial_\tau P_\tau(\cdot\mid s,a)\|_{W_1^\ast}.
\]
Thus,
\[
\begin{aligned}
\bigl\|2\gamma\,\mathcal R^{\pi^\star_\tau}_\tau\bigl((\partial_\tau P^{\pi^\star_\tau}_\tau)\dot Q_\tau\bigr)\bigr\|_\infty
&\le
2\gamma\,\|\mathcal R^{\pi^\star_\tau}_\tau\|_{\infty\to\infty}\,
\|(\partial_\tau P^{\pi^\star_\tau}_\tau)\dot Q_\tau\|_\infty\\[0.3em]
&\le
\frac{2\gamma}{1-\gamma}\,
L_s\,
\sup_{s,a}\|\partial_\tau P_\tau(\cdot\mid s,a)\|_{W_1^\ast}.
\end{aligned}
\]
and since $0<\gamma<1$ implies $\gamma\le 1$ and $(1-\gamma)^{-1} \le (1-\gamma)^{-3}$, we further have
\[
\bigl\|2\gamma\,\mathcal R^{\pi^\star_\tau}_\tau\bigl((\partial_\tau P^{\pi^\star_\tau}_\tau)\dot Q_\tau\bigr)\bigr\|_\infty
\le
\frac{2}{(1-\gamma)^3}\,
L_s\,
\sup_{s,a}\|\partial_\tau P_\tau(\cdot\mid s,a)\|_{W_1^\ast}.
\]
Note that

\[
L_s\,
\sup_{s,a}\|\partial_\tau P_\tau(\cdot\mid s,a)\|_{W_1^\ast}
\le
\|\partial_\tau r_\tau\|_\infty
+L_s\!\sup_{s,a}\|\partial_\tau P_\tau(\cdot\mid s,a)\|_{W_1^\ast}
+\|\partial_{\tau\tau}r_\tau\|_\infty
+L_s\!\sup_{s,a}\|\partial_{\tau\tau}P_\tau(\cdot\mid s,a)\|_{W_1^\ast},
\]
since each term on the right-hand side is nonnegative. Hence there exists a constant $c>0$ (e.g., $c=2$) such that
\[
\bigl\|2\gamma\,\mathcal R^{\pi^\star_\tau}_\tau\bigl((\partial_\tau P^{\pi^\star_\tau}_\tau)\dot Q_\tau\bigr)\bigr\|_\infty
\le
\frac{c}{(1-\gamma)^3}
\Bigl[
\|\partial_\tau r_\tau\|_\infty
+L_s\!\sup_{s,a}\|\partial_\tau P_\tau(\cdot\mid s,a)\|_{W_1^\ast}
+\|\partial_{\tau\tau}r_\tau\|_\infty
+L_s\!\sup_{s,a}\|\partial_{\tau\tau}P_\tau(\cdot\mid s,a)\|_{W_1^\ast}
\Bigr].
\]

In summary, we have shown that
\[
\frac{d^2}{d\tau^2}Q^\star_\tau
=
\mathcal R^{\pi^\star_\tau}_\tau\!\bigl(\partial_{\tau\tau}r_\tau + \gamma\,\Delta_{\tau\tau}\bigr)
+
\text{a term whose sup-norm satisfies the above  }\frac{c}{(1-\gamma)^3}\text{ bound},
\]
where $c$ depends on $L_s$, the mixing constant $C_{\mathrm{mix}}$, and the uniform bounds on the first- and second-order derivatives, yielding exactly the form stated in the lemma.

\end{proof}

\subsection{Proof of Theorem~\ref{thm:path-value}}
\begin{proof}
Fix $0\le\tau_0<\tau_1\le 1$. Recall the global gap $g_\tau$, the regular region
\[
\mathcal R
:=\{\tau\in[0,1]:g_\tau\ge\xi\},
\qquad
\mathcal K
:=\{\tau\in[0,1]:g_\tau=0\},
\]
and the kink penalty $\Phi(\mathcal K\cap[\tau_0,\tau_1],\mathrm{gap})$ from
Definition~\ref{def:kink}. On $\mathcal R$ the optimal action is
uniformly separated from its competitors, so by
Lemma~\ref{lem:envelope} the optimal $Q^\star_\tau$ is differentiable in $\tau$
and the derivative passes through the Bellman operator; at $\mathcal K$ the
maximizer may switch and the envelope argument no longer applies.

We first control the contribution of the regular region. Since $\mathcal K$ is
a set of isolated points and $[\tau_0,\tau_1]$ is compact, there are at most
countably many kinks in $[\tau_0,\tau_1]$, say $\{\tau_i\}_{i\in I}$, and we
can choose $\varepsilon_i>0$ so small that the open intervals
$(\tau_i-\varepsilon_i,\tau_i+\varepsilon_i)$ are pairwise disjoint and lie
inside $[\tau_0,\tau_1]$. Denote their union by
\[
U_{\mathrm{kink}}
:=\bigcup_{i\in I}(\tau_i-\varepsilon_i,\tau_i+\varepsilon_i),
\]
and define the complement
\[
I_{\mathrm{reg}}
:=[\tau_0,\tau_1]\setminus U_{\mathrm{kink}}.
\]
By construction, $I_{\mathrm{reg}}\subset\mathcal R$ and
$I_{\mathrm{reg}}$ is a finite or countable union of closed intervals
\[
I_{\mathrm{reg}}
=\bigcup_{j\in J}[\alpha_j,\beta_j],
\qquad
\tau_0=\alpha_1<\beta_1\le\alpha_2<\beta_2\le\cdots\le\tau_1.
\]
On each such interval $[\alpha_j,\beta_j]\subset\mathcal R$ the map
$\tau\mapsto Q^\star_\tau$ is continuously differentiable, and the fundamental
theorem of calculus gives, pointwise for every $(s,a)$,
\[
Q^\star_{\beta_j}(s,a)-Q^\star_{\alpha_j}(s,a)
=
\int_{\alpha_j}^{\beta_j}
\frac{d}{d\tau}Q^\star_\tau(s,a)\,d\tau.
\]
Taking the supremum over $(s,a)$ and using
$\|\int f(\tau)\,d\tau\|_\infty\le\int\|f(\tau)\|_\infty\,d\tau$, we obtain
\begin{equation}
\label{eq:reg-interval-diff}
\|Q^\star_{\beta_j}-Q^\star_{\alpha_j}\|_\infty
\le
\int_{\alpha_j}^{\beta_j}
\Big\|\frac{d}{d\tau}Q^\star_\tau\Big\|_\infty\,d\tau.
\end{equation}

For $\tau\in\mathcal R$, Lemma~\ref{lem:first-derivative} gives the derivative
formula
\[
\frac{d}{d\tau}Q^\star_\tau
=
\mathcal R^{\pi^\star_\tau}_\tau\!\bigl(\partial_\tau r_\tau+\gamma\,\Delta_\tau\bigr),
\]
where
\[
\Delta_\tau(s,a)
=
\int_{\Sspace}V^\star_\tau(s')\,d\bigl(\partial_\tau P_\tau(\cdot\mid s,a)\bigr)(s').
\]
By Assumption~\ref{ass:mixing}, $\|V^\star_\tau\|_{\mathrm{Lip}}\le C_{\mathrm{mix}}$
for all $\tau$, so for each $(s,a)$ we have
\[
|\Delta_\tau(s,a)|
=
\biggl|\int_{\Sspace}V^\star_\tau(s')\,d\bigl(\partial_\tau P_\tau(\cdot\mid s,a)\bigr)(s')\biggr|
\le
C_{\mathrm{mix}}\,
\|\partial_\tau P_\tau(\cdot\mid s,a)\|_{W_1^\ast}.
\]
Taking the supremum over $(s,a)$ yields
\[
\|\Delta_\tau\|_\infty
\le
C_{\mathrm{mix}}\,
\sup_{s,a}\|\partial_\tau P_\tau(\cdot\mid s,a)\|_{W_1^\ast}.
\]
Using the resolvent bound
$\|\mathcal R^{\pi^\star_\tau}_\tau\|_{\infty\to\infty}\le(1-\gamma)^{-1}$, we
obtain
\[
\Big\|\frac{d}{d\tau}Q^\star_\tau\Big\|_\infty
\le
\frac{1}{1-\gamma}\,\|\partial_\tau r_\tau\|_\infty
+
\frac{\gamma\,C_{\mathrm{mix}}}{(1-\gamma)^2}
\sup_{s,a}\|\partial_\tau P_\tau(\cdot\mid s,a)\|_{W_1^\ast}.
\]
By Definition~\ref{def:pl-curv}, choose $L_s\ge C_{\mathrm{mix}}$ (for
instance $L_s=\sup_\tau\|V^\star_\tau\|_{\mathrm{Lip}}$). Using
$0<\gamma<1$ and $C_{\mathrm{mix}}\le L_s$, we can bound each term by a common
factor $(1-\gamma)^{-2}$:
\[
\frac{1}{1-\gamma}
\le
\frac{1}{(1-\gamma)^2},
\qquad
\frac{\gamma\,C_{\mathrm{mix}}}{(1-\gamma)^2}
\le
\frac{L_s}{(1-\gamma)^2}.
\]
Thus, for every $\tau\in\mathcal R$,
\begin{equation}
\label{eq:Qprime-bound}
\Big\|\frac{d}{d\tau}Q^\star_\tau\Big\|_\infty
\le
\frac{1}{(1-\gamma)^2}
\Big(
\|\partial_\tau r_\tau\|_\infty
+
L_s\sup_{s,a}\|\partial_\tau P_\tau(\cdot\mid s,a)\|_{W_1^\ast}
\Big).
\end{equation}

Substituting \eqref{eq:Qprime-bound} into \eqref{eq:reg-interval-diff}, we obtain for each interval $[\alpha_j,\beta_j]\subset I_{\mathrm{reg}}$ that
\[
\|Q^\star_{\beta_j}-Q^\star_{\alpha_j}\|_\infty
\le
\frac{1}{(1-\gamma)^2}
\int_{\alpha_j}^{\beta_j}
\Big(
\|\partial_\tau r_\tau\|_\infty
+
L_s\sup_{s,a}\|\partial_\tau P_\tau(\cdot\mid s,a)\|_{W_1^\ast}
\Big)\,d\tau.
\]
Summing over all regular intervals and applying the triangle inequality yields the total drift over the regular part:
\[
\sum_{j\in J}\|Q^\star_{\beta_j}-Q^\star_{\alpha_j}\|_\infty
\le
\frac{1}{(1-\gamma)^2}
\int_{I_{\mathrm{reg}}}
\Big(
\|\partial_\tau r_\tau\|_\infty
+
L_s\sup_{s,a}\|\partial_\tau P_\tau(\cdot\mid s,a)\|_{W_1^\ast}
\Big)\,d\tau.
\]
Using the path-length definition restricted to the interval $[\tau_0,\tau_1]$ (i.e., Definition~\ref{def:pl-curv} applied on that domain), we write
\[
\mathrm{PL}(\tau_0,\tau_1)
:=
\int_{\tau_0}^{\tau_1}
\Big(
\|\partial_\tau r_\tau\|_\infty
+
L_s\sup_{s,a}\|\partial_\tau P_\tau(\cdot\mid s,a)\|_{W_1^\ast}
\Big)\,d\tau,
\]
Hence,
\[
\sum_{j\in J}\|Q^\star_{\beta_j}-Q^\star_{\alpha_j}\|_\infty
\le
\frac{\mathrm{PL}(\tau_0,\tau_1)}{(1-\gamma)^2},
\]
because for a nonnegative integrand we have
$\int_{I_{\mathrm{reg}}}\cdots\le\int_{\tau_0}^{\tau_1}\cdots$

Next we bound the contribution from the kink neighborhoods.  
Each window
$(\tau_i-\varepsilon_i,\tau_i+\varepsilon_i)$ contains exactly one kink point
$\tau_i\in\mathcal K$.  
Within these windows, the optimal action may switch, and the optimal Bellman operator
is no longer differentiable in $\tau$.  
Thus direct differentiation is invalid, and we instead rely on the gap-controlled
inequality.  
The gap lemma (stated in the main text as the inverse-gap control theorem for
$\Phi$) gives the following local bound:  
there exists a constant $C>0$ such that for each kink point $\tau_i$ and its window,
\[
\|Q^\star_{\tau_i+\varepsilon_i}-Q^\star_{\tau_i-\varepsilon_i}\|_\infty
\le
C\int_{\tau_i-\varepsilon_i}^{\tau_i+\varepsilon_i}
\frac{d\tau}{\max\{g_\tau,\delta\}},
\]
where $g_\tau$ denotes the global action gap and $\delta>0$ is the truncation
parameter from Definition~\ref{def:kink}.  
Absorbing the constant $C$ into the definition of $\Phi$ (as allowed in the main text,
since this does not change the order of magnitude), and summing over all
$\tau_i\in\mathcal K\cap[\tau_0,\tau_1]$, we obtain the total kink contribution:
\[
\sum_{i\in I\ :\ \tau_i\in[\tau_0,\tau_1]}
\|Q^\star_{\tau_i+\varepsilon_i}-Q^\star_{\tau_i-\varepsilon_i}\|_\infty
\le
\Phi\big(\mathcal K\cap[\tau_0,\tau_1],\,\mathrm{gap}\big).
\]
We now add the increments from the regular intervals and kink windows in
$\tau$-order, obtaining a telescoping sum:
\[
Q^\star_{\tau_1}-Q^\star_{\tau_0}
=
\sum_{j\in J}\bigl(Q^\star_{\beta_j}-Q^\star_{\alpha_j}\bigr)
+
\sum_{i\in I\ :\ \tau_i\in[\tau_0,\tau_1]}
\bigl(Q^\star_{\tau_i+\varepsilon_i}-Q^\star_{\tau_i-\varepsilon_i}\bigr),
\]
Applying the triangle inequality yields
\[
\|Q^\star_{\tau_1}-Q^\star_{\tau_0}\|_\infty
\le
\sum_{j\in J}\|Q^\star_{\beta_j}-Q^\star_{\alpha_j}\|_\infty
+
\sum_{i\in I\ :\ \tau_i\in[\tau_0,\tau_1]}
\|Q^\star_{\tau_i+\varepsilon_i}-Q^\star_{\tau_i-\varepsilon_i}\|_\infty.
\]
Combining the above two bounds we obtain
\begin{equation}
\label{eq:path-bound-no-curv}
\|Q^\star_{\tau_1}-Q^\star_{\tau_0}\|_\infty
\le
\frac{\mathrm{PL}(\tau_0,\tau_1)}{(1-\gamma)^2}
+
\Phi\big(\mathcal K\cap[\tau_0,\tau_1],\,\mathrm{gap}\big).
\end{equation}

We now incorporate the curvature term.  
By Definition~\ref{def:pl-curv},
\[
\mathrm{Curv}(\tau_0,\tau_1)
:=
\int_{\tau_0}^{\tau_1}
\Big(
\|\partial_{\tau\tau} r_\tau\|_\infty
+
L_s\sup_{s,a}\|\partial_{\tau\tau}P_\tau(\cdot\mid s,a)\|_{W_1^\ast}
\Big)\,d\tau
\;\ge 0.
\]
Lemma~\ref{lem:second-derivative} states that the $\infty$-norm of
$d^2Q^\star_\tau/d\tau^2$ can be controlled by the above integrand at the scale
$(1-\gamma)^{-3}$; curvature corresponds precisely to the accumulated “second-order
velocity’’ along the path.  
For the numerical bound in this theorem, we do not need to rewrite this derivative
again—only the fact that $\mathrm{Curv}(\tau_0,\tau_1)\ge 0$, so adding any
nonnegative term to the right-hand side of \eqref{eq:path-bound-no-curv} preserves the
inequality.  
In particular,
\[
\frac{\mathrm{PL}(\tau_0,\tau_1)}{(1-\gamma)^2}
+
\Phi\big(\mathcal K\cap[\tau_0,\tau_1],\,\mathrm{gap}\big)
\le
\frac{\mathrm{PL}(\tau_0,\tau_1)}{(1-\gamma)^2}
+
\frac{\mathrm{Curv}(\tau_0,\tau_1)}{(1-\gamma)^3}
+
\Phi\big(\mathcal K\cap[\tau_0,\tau_1],\,\mathrm{gap}\big).
\]
Combining this with \eqref{eq:path-bound-no-curv} gives the desired overall
path-integral bound:
\[
\|Q^\star_{\tau_1}-Q^\star_{\tau_0}\|_\infty
\le
\frac{\mathrm{PL}(\tau_0,\tau_1)}{(1-\gamma)^2}
+
\frac{\mathrm{Curv}(\tau_0,\tau_1)}{(1-\gamma)^3}
+
\Phi\big(\mathcal K\cap[\tau_0,\tau_1],\,\mathrm{gap}\big),
\]
which is exactly the claimed theorem.
\end{proof}

\subsection{Proof of Theorem~\ref{thm:final-tube1}}
\begin{proof}
Fix $\tau_0\in\mathcal R$, and let $C$ be the connected component of $\mathcal R$ that contains $\tau_0$. Since $C$ is a connected subset of $\mathbb R$, it must be an interval; hence, without loss of generality, we may assume that $\tau\ge\tau_0$, so that the entire segment $[\tau_0,\tau]\subset C\subset\mathcal R$. On this interval, by the implicit-function-theorem discussion given earlier, there exists a $C^1$ selection $\tau\mapsto Q^\star_\tau$, meaning that for every $u\in[\tau_0,\tau]$ the derivative $\frac{d}{du}Q^\star_u$ exists pointwise and varies continuously.

Recall the first-order homotopy derivative lemma (Lemma~\ref{lem:first-derivative}):  
\[
\frac{d}{d\tau}Q^\star_\tau
=
\mathcal R^{\pi^\star_\tau}_\tau\!\bigl(\partial_\tau r_\tau+\gamma\,\Delta_\tau\bigr),
\]
and in the $\ell_\infty$ norm we have
\[
\Big\|\frac{d}{d\tau}Q^\star_\tau\Big\|_\infty
\le
\frac{1}{1-\gamma}\,\|\partial_\tau r_\tau\|_\infty
+
\frac{\gamma\,C_{\mathrm{mix}}}{(1-\gamma)^2}\,
\sup_{s,a}\|\partial_\tau P_\tau(\cdot\mid s,a)\|_{W_1^\ast}.
\]
In the main text we accordingly defined the local speed density $v_\tau$ as
\[
v_\tau
:=
\frac{1}{1-\gamma}\,\|\partial_\tau r_\tau\|_\infty
+
\frac{\gamma\,C_{\mathrm{mix}}}{(1-\gamma)^2}\,
\sup_{s,a}\|\partial_\tau P_\tau(\cdot\mid s,a)\|_{W_1^\ast},
\]
so that for all $\tau\in\mathcal R$,
\begin{equation}
\label{eq:Qprime-le-v}
\Big\|\frac{d}{d\tau}Q^\star_\tau\Big\|_\infty \;\le\; v_\tau.
\end{equation}
In particular, this inequality holds for every $u\in[\tau_0,\tau]$.

We now apply the fundamental theorem of calculus in one dimension.  
On the interval $[\tau_0,\tau]$, the function $u\mapsto Q^\star_u(s,a)$ is $C^1$ for each fixed $(s,a)$, and hence
\[
Q^\star_\tau(s,a)-Q^\star_{\tau_0}(s,a)
=
\int_{\tau_0}^{\tau}\frac{d}{du}Q^\star_u(s,a)\,du.
\]
Taking absolute values on both sides for each $(s,a)$ and applying the triangle inequality in $u$, we obtain
\[
\bigl|Q^\star_\tau(s,a)-Q^\star_{\tau_0}(s,a)\bigr|
\le
\int_{\tau_0}^{\tau}\bigl|\tfrac{d}{du}Q^\star_u(s,a)\bigr|\,du.
\]
Taking the supremum over $(s,a)$ then yields, for the $\ell_\infty$ norm,
\[
\|Q^\star_\tau-Q^\star_{\tau_0}\|_\infty
=
\sup_{s,a}\bigl|Q^\star_\tau(s,a)-Q^\star_{\tau_0}(s,a)\bigr|
\le
\sup_{s,a}\int_{\tau_0}^{\tau}\bigl|\tfrac{d}{du}Q^\star_u(s,a)\bigr|\,du.
\]
Since the integrand is nonnegative in $u$, we may interchange the $\sup$ and the integral:
\[
\sup_{s,a}\int_{\tau_0}^{\tau}\bigl|\tfrac{d}{du}Q^\star_u(s,a)\bigr|\,du
\le
\int_{\tau_0}^{\tau}\sup_{s,a}\bigl|\tfrac{d}{du}Q^\star_u(s,a)\bigr|\,du
=
\int_{\tau_0}^{\tau}
\Big\|\frac{d}{du}Q^\star_u\Big\|_\infty\,du.
\]
Substituting \eqref{eq:Qprime-le-v} into the above gives
\[
\|Q^\star_\tau-Q^\star_{\tau_0}\|_\infty
\le
\int_{\tau_0}^{\tau} v_u\,du.
\]
This proves the inequality in the first statement of the theorem.  
Here we assumed $\tau\ge\tau_0$; if $\tau<\tau_0$, integrating symmetrically from $\tau$ to $\tau_0$ yields
$\|Q^\star_\tau-Q^\star_{\tau_0}\|_\infty \le \int_{\tau}^{\tau_0}v_u\,du$, which matches the same expression after swapping the integration limits.

Finally, consider the set
\[
\mathsf{Tube}_1(\tau_0,\varepsilon)
:=\Bigl\{\tau:\int_{\tau_0}^{\tau} v_u\,du\le\varepsilon\Bigr\}.
\]
If a parameter value $\tau$ lies in the same regular connected component as $\tau_0$ and satisfies $\int_{\tau_0}^{\tau} v_u\,du\le\varepsilon$, then the above inequality immediately implies
\[
\|Q^\star_\tau-Q^\star_{\tau_0}\|_\infty
\le
\int_{\tau_0}^{\tau} v_u\,du
\le
\varepsilon.
\]
That is, as long as one moves within the same regular component in parameter space and remains inside $\mathsf{Tube}_1(\tau_0,\varepsilon)$, the deviation of the optimal value function from the baseline $Q^\star_{\tau_0}$ is uniformly bounded by $\varepsilon$ in the global $\ell_\infty$ norm.  
Therefore, $\mathsf{Tube}_1(\tau_0,\varepsilon)$ is precisely the first-order non-iterative feasible region: it is determined entirely by the integral of the local speed $v_u$ and is independent of the choice of solver or iterative algorithm.  
The theorem is proved.
\end{proof}

\subsection{Proof of Theorem~\ref{thm:final-tube2}}
\begin{proof}
Take any $\tau_0\in\mathcal R$, and suppose that $\tau$ and $\tau_0$ lie in the same regular connected component. Without loss of generality, assume first that $\tau\ge\tau_0$; if $\tau<\tau_0$, one simply switches the integration limits at the end, and the proof is identical.  Since $\tau$ and $\tau_0$ belong to the same regular connected component, the entire segment $[\tau_0,\tau]$ is contained in $\mathcal R$.  On $\mathcal R$, the previous section established that for each $\tau\in\mathcal R$, the optimal $Q^\star_\tau$ is differentiable with respect to $\tau$, and its derivative may be passed through the Bellman equation (Lemma~\ref{lem:envelope} and Lemma~\ref{lem:first-derivative}).

We first recall the first-order estimate obtained in the proof of Theorem~\ref{thm:final-tube1}.  For each $\tau\in\mathcal R$, define the local speed (length density)
\[
v_\tau
:=
\frac{1}{1-\gamma}\,\|\partial_\tau r_\tau\|_\infty
+
\frac{\gamma\,C_{\mathrm{mix}}}{(1-\gamma)^2}\,
\sup_{s,a}\|\partial_\tau P_\tau(\cdot\mid s,a)\|_{W_1^\ast},
\]
then Lemma~\ref{lem:first-derivative} gives
\[
\frac{d}{d\tau}Q^\star_\tau
=
\mathcal R^{\pi^\star_\tau}_\tau\!\bigl(\partial_\tau r_\tau+\gamma\,\Delta_\tau\bigr),
\]
where $\Delta_\tau$ is defined by
\[
\Delta_\tau(s,a)
:=
\int_{\Sspace}
V^\star_\tau(s')\,d\bigl(\partial_\tau P_\tau(\cdot\mid s,a)\bigr)(s'),
\qquad
V^\star_\tau(s):=Q^\star_\tau\bigl(s,\pi^\star_\tau(s)\bigr).
\]
Using $\|\mathcal R^{\pi^\star_\tau}_\tau\|_{\infty\to\infty}\le (1-\gamma)^{-1}$ and $\|\Delta_\tau\|_\infty\le C_{\mathrm{mix}}\sup_{s,a}\|\partial_\tau P_\tau(\cdot\mid s,a)\|_{W_1^\ast}$, we obtain the first-order bound, valid for every $\tau\in\mathcal R$:
\begin{equation}
\label{eq:Qprime-v}
\Big\|\frac{d}{d\tau}Q^\star_\tau\Big\|_\infty
\le
v_\tau.
\end{equation}
This is the fundamental inequality used in Theorem~\ref{thm:final-tube1}.

Next, we apply the fundamental theorem of calculus to express the difference between $Q^\star_\tau$ and $Q^\star_{\tau_0}$ as a one-dimensional integral along the parameter path. For any fixed $(s,a)$, the function
\[
u\ \longmapsto\ Q^\star_u(s,a)
\]
is $C^1$ on the interval $[\tau_0,\tau]$, and thus
\[
Q^\star_\tau(s,a) - Q^\star_{\tau_0}(s,a)
=
\int_{\tau_0}^{\tau}
\frac{d}{du}Q^\star_u(s,a)\,du.
\]
Taking absolute values on both sides and applying the triangle inequality yields
\[
\bigl|Q^\star_\tau(s,a) - Q^\star_{\tau_0}(s,a)\bigr|
\le
\int_{\tau_0}^{\tau}
\bigl|\tfrac{d}{du}Q^\star_u(s,a)\bigr|\,du.
\]
Taking the supremum over $(s,a)$ gives the $\ell_\infty$ estimate:
\[
\|Q^\star_\tau-Q^\star_{\tau_0}\|_\infty
=
\sup_{s,a}\bigl|Q^\star_\tau(s,a) - Q^\star_{\tau_0}(s,a)\bigr|
\le
\sup_{s,a}\int_{\tau_0}^{\tau}
\bigl|\tfrac{d}{du}Q^\star_u(s,a)\bigr|\,du.
\]
Since for each $u$ the integrand $\bigl|\tfrac{d}{du}Q^\star_u(s,a)\bigr|$ is nonnegative in $(s,a)$ and the integration is with respect to $u$, we may move the supremum outside the integral:
\[
\sup_{s,a}\int_{\tau_0}^{\tau}
\bigl|\tfrac{d}{du}Q^\star_u(s,a)\bigr|\,du
\le
\int_{\tau_0}^{\tau}
\sup_{s,a}
\bigl|\tfrac{d}{du}Q^\star_u(s,a)\bigr|\,du
=
\int_{\tau_0}^{\tau}
\Big\|\frac{d}{du}Q^\star_u\Big\|_\infty\,du.
\]
Combining this with \eqref{eq:Qprime-v} gives
\[
\|Q^\star_\tau-Q^\star_{\tau_0}\|_\infty
\le
\int_{\tau_0}^{\tau}
v_u\,du.
\]
This is precisely the conclusion of Theorem~\ref{thm:final-tube1}.  
In other words, for any $\tau$ lying in the same regular connected component as $\tau_0$, the deviation between the optimal $Q$-functions is bounded by the path-length term
\[
\int_{\tau_0}^{\tau} v_u\,du.
\]

We now introduce the curvature correction term.  
According to the second-order homotopy derivative lemma (Lemma~\ref{lem:second-derivative}), on $\mathcal R$, the mapping $\tau\mapsto Q^\star_\tau$ is twice differentiable, and there exists a family of nonnegative functions $\{\kappa_u\}_{u\in\mathcal R}$ (the curvature density) such that for every $u\in\mathcal R$,
\begin{equation}
\label{eq:Qsecond-kappa}
\Big\|\frac{d^2}{du^2}Q^\star_u\Big\|_\infty
\le
\kappa_u.
\end{equation}
More concretely, Lemma~\ref{lem:second-derivative} provides an analytic expression and bound for $\frac{d^2}{d\tau^2}Q^\star_\tau$, which can be written in terms of $\partial_{\tau\tau}r_\tau$, $\partial_{\tau\tau}P_\tau$, and the first-order quantities $\partial_\tau r_\tau$, $\partial_\tau P_\tau$, combined with the resolvent $\mathcal R^{\pi^\star_\tau}_\tau$, leading to an upper bound of the form
\[
\kappa_\tau
\simeq
\frac{1}{(1-\gamma)^2}\bigl(\|\partial_{\tau\tau}r_\tau\|_\infty
+L_s\sup_{s,a}\|\partial_{\tau\tau}P_\tau(\cdot\mid s,a)\|_{W_1^\ast}\bigr)
+
\frac{\text{first-order terms}}{(1-\gamma)^3}
\]
For the purpose of this theorem, it suffices that such a nonnegative function $\kappa_\tau$ exists and satisfies \eqref{eq:Qsecond-kappa}.

Since $\kappa_u\ge 0$ for all $u$ and $|\tau-\tau_0|\ge 0$, the integral
\[
\int_{\tau_0}^{\tau}\kappa_u\,du
\]
is also nonnegative for any interval $[\tau_0,\tau]$. Hence
\[
\frac12\,|\tau-\tau_0|\int_{\tau_0}^{\tau}\kappa_u\,du \;\ge\; 0.
\]
Therefore, for the first-order bound already obtained,
\[
\|Q^\star_\tau-Q^\star_{\tau_0}\|_\infty
\le
\int_{\tau_0}^{\tau} v_u\,du,
\]
we may add any nonnegative term on the right-hand side without invalidating the inequality. In particular,
\[
\int_{\tau_0}^{\tau} v_u\,du
\ \le\
\int_{\tau_0}^{\tau} v_u\,du
+
\frac12\,|\tau-\tau_0|\int_{\tau_0}^{\tau}\kappa_u\,du.
\]
Combining these inequalities yields
\[
\|Q^\star_\tau-Q^\star_{\tau_0}\|_\infty
\le
\int_{\tau_0}^{\tau} v_u\,du
+
\frac12\,|\tau-\tau_0|\int_{\tau_0}^{\tau}\kappa_u\,du.
\]

This establishes the length + curvature correction form claimed in the theorem:
\[
\big\|Q^\star_\tau-Q^\star_{\tau_0}\big\|_\infty
\ \le\
\underbrace{\int_{\tau_0}^{\tau} v_u\,du}_{\text{length}}
\ +\ 
\underbrace{\tfrac12\,|\tau-\tau_0|\int_{\tau_0}^{\tau}\kappa_u\,du}_{\text{curvature correction}}.
\]
Since the derivation holds for all $\tau$ lying in the same regular connected component as $\tau_0$ and for which the relevant integrals are finite, it applies in particular to all $\tau$ in a sufficiently small neighborhood of $\tau_0$. The theorem is proved.
\end{proof}

\subsection{Proof of Lemma~\ref{lem:final-gap}}
\begin{proof}
Assume that
\[
\big\|Q^\star_\tau-Q^\star_{\tau_0}\big\|_\infty\le \varepsilon,
\]
that is, for all state–action pairs $(s,a)\in\Sspace\times\Aspace$,
\begin{equation}
\label{eq:sup-bound}
\big|Q^\star_\tau(s,a)-Q^\star_{\tau_0}(s,a)\big|\;\le\;\varepsilon.
\end{equation}
This inequality is equivalent to the following two–sided bound:
\begin{equation}
\label{eq:two-side}
Q^\star_\tau(s,a)\;\le\;Q^\star_{\tau_0}(s,a)+\varepsilon,
\qquad
Q^\star_\tau(s,a)\;\ge\;Q^\star_{\tau_0}(s,a)-\varepsilon,
\end{equation}
which holds for all $(s,a)$.

Next recall the definition of the global gap.  
For a fixed parameter $\theta$ (here we may take $\theta=\tau$ or $\theta=\tau_0$), define the set of optimal actions at state $s\in\Sspace$ as
\[
\mathcal A^\star_\theta(s)
:=
\argmax_{a\in\Aspace} Q^\star_\theta(s,a),
\]
which is nonempty since $\Aspace$ is finite.  
Pick any optimal action and denote it by $a^\star_\theta(s)\in\mathcal A^\star_\theta(s)$.  
Define the local gap at $s$ by
\[
g_s(\theta)
:=
Q^\star_\theta\bigl(s,a^\star_\theta(s)\bigr)
-
\max_{a\in\Aspace\setminus\{a^\star_\theta(s)\}} Q^\star_\theta(s,a).
\]
If there are multiple optimal actions, then the above maximum equals $Q^\star_\theta(s,a^\star_\theta(s))$, and hence $g_s(\theta)=0$.  
The global gap is the infimum over all states:
\[
g_{\rm gap}(\theta)
:=
\inf_{s\in\Sspace} g_s(\theta).
\]
Since each $g_s(\theta)\ge 0$, we also have $g_{\rm gap}(\theta)\ge 0$.

We now prove that if $\|Q^\star_\tau-Q^\star_{\tau_0}\|_\infty\le \varepsilon$, then
\[
g_{\rm gap}(\tau)\;\ge\;g_{\rm gap}(\tau_0)-2\varepsilon.
\]

Fix a state $s\in\Sspace$ and consider its optimal action $a^\star_{\tau_0}(s)\in\mathcal A^\star_{\tau_0}(s)$ under parameter $\tau_0$.  
By definition,
\[
Q^\star_{\tau_0}\bigl(s,a^\star_{\tau_0}(s)\bigr)
=
\max_{a\in\Aspace} Q^\star_{\tau_0}(s,a),
\]
so the local gap at $s$ is
\[
g_s(\tau_0)
=
Q^\star_{\tau_0}\bigl(s,a^\star_{\tau_0}(s)\bigr)
-
\max_{a\in\Aspace\setminus\{a^\star_{\tau_0}(s)\}} Q^\star_{\tau_0}(s,a).
\]
Thus for each $a\in\Aspace\setminus\{a^\star_{\tau_0}(s)\}$,
\begin{equation}
\label{eq:gap-local-tau0}
Q^\star_{\tau_0}\bigl(s,a^\star_{\tau_0}(s)\bigr)
-
Q^\star_{\tau_0}(s,a)
\;\ge\;
g_s(\tau_0).
\end{equation}

Now use the $\ell_\infty$ closeness of $Q^\star_\tau$ and $Q^\star_{\tau_0}$.  
For any fixed $s$ and any $a\in\Aspace\setminus\{a^\star_{\tau_0}(s)\}$, applying \eqref{eq:two-side} to $(s,a^\star_{\tau_0}(s))$ and $(s,a)$ yields
\[
Q^\star_\tau\bigl(s,a^\star_{\tau_0}(s)\bigr)
\;\ge\;
Q^\star_{\tau_0}\bigl(s,a^\star_{\tau_0}(s)\bigr)-\varepsilon,
\]
and
\[
Q^\star_\tau(s,a)
\;\le\;
Q^\star_{\tau_0}(s,a)+\varepsilon.
\]
Subtracting gives
\[
\begin{aligned}
Q^\star_\tau\bigl(s,a^\star_{\tau_0}(s)\bigr)-Q^\star_\tau(s,a)
&\ge
\Bigl(Q^\star_{\tau_0}\bigl(s,a^\star_{\tau_0}(s)\bigr)-\varepsilon\Bigr)
-
\Bigl(Q^\star_{\tau_0}(s,a)+\varepsilon\Bigr)\\[0.3em]
&=
Q^\star_{\tau_0}\bigl(s,a^\star_{\tau_0}(s)\bigr)
-
Q^\star_{\tau_0}(s,a)
-
2\varepsilon.
\end{aligned}
\]
Together with \eqref{eq:gap-local-tau0}, we obtain
\begin{equation}
\label{eq:difference-pair}
Q^\star_\tau\bigl(s,a^\star_{\tau_0}(s)\bigr)-Q^\star_\tau(s,a)
\;\ge\;
g_s(\tau_0)-2\varepsilon,
\qquad
\forall a\in\Aspace\setminus\{a^\star_{\tau_0}(s)\}.
\end{equation}

Using \eqref{eq:difference-pair}, we can lower bound the local gap at $\tau$. 
Recall that under parameter $\tau$ the local gap is
\[
g_s(\tau)
:=
\max_{a\in\Aspace} Q^\star_\tau(s,a)
-
\max_{a\in\Aspace\setminus\{a^\star_\tau(s)\}} Q^\star_\tau(s,a),
\]
where $a^\star_\tau(s)\in\argmax_{a\in\Aspace}Q^\star_\tau(s,a)$ is some optimal action under $\tau$.  
But to obtain a lower bound, we do not need to know which action is optimal under $\tau$: for any fixed candidate $a\in\Aspace$, its advantage
\[
\delta_\tau(s,a)
:=
Q^\star_\tau(s,a)-\max_{b\in\Aspace\setminus\{a\}}Q^\star_\tau(s,b)
\]
is always dominated by $g_s(\tau)$, i.e.,
\[
g_s(\tau) = \max_{a\in\Aspace}\delta_\tau(s,a)
\ \ge\ 
\delta_\tau\bigl(s,a^\star_{\tau_0}(s)\bigr).
\]
Thus it suffices to lower bound $\delta_\tau\bigl(s,a^\star_{\tau_0}(s)\bigr)$.  
By definition:
\[
\delta_\tau\bigl(s,a^\star_{\tau_0}(s)\bigr)
=
Q^\star_\tau\bigl(s,a^\star_{\tau_0}(s)\bigr)
-
\max_{a\in\Aspace\setminus\{a^\star_{\tau_0}(s)\}} Q^\star_\tau(s,a).
\]
Apply \eqref{eq:difference-pair} to the maximization term.  
For all $a\in\Aspace\setminus\{a^\star_{\tau_0}(s)\}$,
\[
Q^\star_\tau(s,a)
\;\le\;
Q^\star_\tau\bigl(s,a^\star_{\tau_0}(s)\bigr)
-
\bigl(g_s(\tau_0)-2\varepsilon\bigr).
\]
Taking the maximum over $a$ then yields
\[
\max_{a\in\Aspace\setminus\{a^\star_{\tau_0}(s)\}} Q^\star_\tau(s,a)
\;\le\;
Q^\star_\tau\bigl(s,a^\star_{\tau_0}(s)\bigr)
-
\bigl(g_s(\tau_0)-2\varepsilon\bigr).
\]
Substituting back into the expression of $\delta_\tau\bigl(s,a^\star_{\tau_0}(s)\bigr)$,
\[
\begin{aligned}
\delta_\tau\bigl(s,a^\star_{\tau_0}(s)\bigr)
&=
Q^\star_\tau\bigl(s,a^\star_{\tau_0}(s)\bigr)
-
\max_{a\in\Aspace\setminus\{a^\star_{\tau_0}(s)\}} Q^\star_\tau(s,a)\\[0.3em]
&\ge
Q^\star_\tau\bigl(s,a^\star_{\tau_0}(s)\bigr)
-
\Bigl(
Q^\star_\tau\bigl(s,a^\star_{\tau_0}(s)\bigr)
-
(g_s(\tau_0)-2\varepsilon)
\Bigr)\\[0.3em]
&=
g_s(\tau_0)-2\varepsilon.
\end{aligned}
\]
Thus,
\[
g_s(\tau)
\ \ge\ 
\delta_\tau\bigl(s,a^\star_{\tau_0}(s)\bigr)
\ \ge\ 
g_s(\tau_0)-2\varepsilon,
\qquad
\forall s\in\Sspace.
\]

Finally, take the infimum over $s$.  
On the left:
\[
\inf_{s\in\Sspace}g_s(\tau) = g_{\rm gap}(\tau),
\]
while on the right:
\[
\inf_{s\in\Sspace}\bigl(g_s(\tau_0)-2\varepsilon\bigr)
=
\Bigl(\inf_{s\in\Sspace}g_s(\tau_0)\Bigr)-2\varepsilon
=
g_{\rm gap}(\tau_0)-2\varepsilon.
\]
Hence
\[
g_{\rm gap}(\tau)
=
\inf_{s}g_s(\tau)
\ \ge\ 
\inf_{s}\bigl(g_s(\tau_0)-2\varepsilon\bigr)
=
g_{\rm gap}(\tau_0)-2\varepsilon.
\]
This is exactly the desired inequality.  
The lemma is proved.
\end{proof}

\subsection{Proof of Proposition~\ref{prop:regret}}
\begin{proof}
We first rewrite the dynamic regret as a sum of local suboptimality gaps with respect to the current parameter $\tau_t$. Recall the definition
\[
\mathrm{DynReg}(T)
=\sum_{t=1}^T \big\langle d_0,\ V^\star_{\tau_t}-V^{\pi_t}_{\tau_t}\big\rangle,
\]
where $d_0$ is the fixed initial distribution, $V^\star_{\tau_t}$ is the optimal value function under the environment $\mathcal M(\tau_t)$, and $V^{\pi_t}_{\tau_t}$ is the value function of $\pi_t$ under the same environment.  
For each $t$ and each state $s$, the Bellman equation allows us to express the difference of value functions in terms of the difference between $Q^\star_{\tau_t}$ and $Q^{\pi_t}_{\tau_t}$.  
Standard policy-evaluation–improvement arguments (e.g. the accuracy–performance comparison in Munos \& Szepesvári 2008) yield that there exists a constant $C_0>0$, depending only on $d_0$ and $\gamma$ through universal constants, such that for any $t$,

\begin{equation}
\label{eq:reg-vs-qerror}
\big\langle d_0,\ V^\star_{\tau_t}-V^{\pi_t}_{\tau_t}\big\rangle
\ \le\ \frac{C_0}{1-\gamma}\,\big\|Q_t-Q^\star_{\tau_t}\big\|_\infty.
\end{equation}
Intuitively, this holds because $\pi_t$ is assumed to be obtained from $Q_t$ via an approximate-greedy rule, and therefore its value degradation can be controlled by the deviation of $Q_t$ from $Q^\star_{\tau_t}$; the factor $(1-\gamma)^{-1}$ arises from summing errors along the discounted trajectory.  
Substituting \eqref{eq:reg-vs-qerror} into the definition of dynamic regret and taking expectations gives
\[
\E[\mathrm{DynReg}(T)]
\;\le\;
\frac{C_0}{1-\gamma}\sum_{t=1}^T \E\big\|Q_t-Q^\star_{\tau_t}\big\|_\infty.
\]
For convenience, denote
\[
e_t\ :=\ \E\big\|Q_t-Q^\star_{\tau_t}\big\|_\infty,
\]
so that
\begin{equation}
\label{eq:reg-sum-et}
\E[\mathrm{DynReg}(T)]
\;\le\;
\frac{C_0}{1-\gamma}\sum_{t=1}^T e_t.
\end{equation}

We next use the one-step contraction assumption of the solver together with the geometric structure of the path to derive a recursion for $e_t$. Observe that
\[
\big\|Q_{t+1}-Q^\star_{\tau_{t+1}}\big\|_\infty
\ \le\
\big\|Q_{t+1}-Q^\star_{\tau_t}\big\|_\infty
+\big\|Q^\star_{\tau_t}-Q^\star_{\tau_{t+1}}\big\|_\infty.
\]
Taking conditional expectation and applying Assumption~\ref{ass:solver} (the one-step contraction assumption), conditioning on $\mathcal F_t$, we have
\[
\E\!\left[\big\|Q_{t+1}-Q^\star_{\tau_t}\big\|_\infty\Bigm|\mathcal F_t\right]
\ \le\
\rho\,\big\|Q_t-Q^\star_{\tau_t}\big\|_\infty+\sigma_t+\beta_t.
\]
Hence
\[
\E\!\left[\big\|Q_{t+1}-Q^\star_{\tau_{t+1}}\big\|_\infty\Bigm|\mathcal F_t\right]
\ \le\
\rho\,\big\|Q_t-Q^\star_{\tau_t}\big\|_\infty+\sigma_t+\beta_t
+\big\|Q^\star_{\tau_t}-Q^\star_{\tau_{t+1}}\big\|_\infty.
\]
Taking total expectation and using the tower property, we obtain
\[
e_{t+1}
=
\E\big\|Q_{t+1}-Q^\star_{\tau_{t+1}}\big\|_\infty
\ \le\
\rho\,e_t+\E[\sigma_t]+\beta_t+\delta_t,
\]
where we denote
\[
\delta_t
:=
\big\|Q^\star_{\tau_{t+1}}-Q^\star_{\tau_t}\big\|_\infty.
\]
From $\E[\sigma_t^2]\le\sigma^2$ and Jensen’s inequality we have $\E[\sigma_t]\le\sigma$, and since $\beta_t\le\beta$, we obtain the recursion
\begin{equation}
\label{eq:et-recursion}
e_{t+1}\ \le\ \rho\,e_t+\sigma+\beta+\delta_t,
\qquad t\ge 0.
\end{equation}
This yields a linear recurrence for the tracking error, in which $\delta_t$ quantifies the drift of the optimal fixed point caused by the parameter shift $\tau_t\to\tau_{t+1}$.

Now we use the path-integral value-function bound (Theorem~\ref{thm:path-value}) to control $\delta_t$.  
For any $t$, since $(\tau_t)$ is a monotone path, each interval $[\tau_t,\tau_{t+1}]$ lies inside the overall path and the intervals do not overlap. Hence
\[
\delta_t
=
\big\|Q^\star_{\tau_{t+1}}-Q^\star_{\tau_t}\big\|_\infty
\ \le\
\frac{\mathrm{PL}(\tau_t,\tau_{t+1})}{(1-\gamma)^2}
+\frac{\mathrm{Curv}(\tau_t,\tau_{t+1})}{(1-\gamma)^3}
+\Phi\big(\mathcal K\cap[\tau_t,\tau_{t+1}],\mathrm{gap}\big),
\]
where $\mathrm{PL}(\tau_t,\tau_{t+1})$ and $\mathrm{Curv}(\tau_t,\tau_{t+1})$ are the local path-length and curvature integrals from Definition~2.1, and $\Phi$ is the kink penalty from Definition~2.3.  
When $(\tau_t)$ is monotone on $[0,1]$, these small intervals form a partition of the parameter axis, and by definition the quantities add up:
\[
\sum_{t=1}^{T-1}\mathrm{PL}(\tau_t,\tau_{t+1})
=\mathrm{PL},\qquad
\sum_{t=1}^{T-1}\mathrm{Curv}(\tau_t,\tau_{t+1})
=\mathrm{Curv},\qquad
\sum_{t=1}^{T-1}\Phi\big(\mathcal K\cap[\tau_t,\tau_{t+1}],\mathrm{gap}\big)
=\Phi(\mathcal K,\mathrm{gap}).
\]
Thus,
\begin{equation}
\label{eq:sum-delta-geom}
\sum_{t=1}^{T-1}\delta_t
\ \le\
\frac{\mathrm{PL}}{(1-\gamma)^2}
+\frac{\mathrm{Curv}}{(1-\gamma)^3}
+\Phi(\mathcal K,\mathrm{gap}).
\end{equation}

We now unroll the recursion \eqref{eq:et-recursion} to bound $\sum_t e_t$.  
Iterating from $t=0$ yields
\[
e_t
\ \le\
\rho^t e_0
+\sum_{u=0}^{t-1}\rho^{\,t-1-u}(\sigma+\beta+\delta_u),
\qquad t\ge 1.
\]
Hence
\[
\sum_{t=1}^T e_t
\ \le\
\sum_{t=1}^T \rho^t e_0
+\sum_{t=1}^T\sum_{u=0}^{t-1}\rho^{\,t-1-u}(\sigma+\beta+\delta_u).
\]
The first term represents the contribution from the initial error:
\[
\sum_{t=1}^T \rho^t e_0
\ \le\
\frac{e_0}{1-\rho},
\]
which can be absorbed into the final constant.  
Using Fubini to switch the summations in the second term,
\[
\sum_{t=1}^T\sum_{u=0}^{t-1}\rho^{\,t-1-u}(\sigma+\beta+\delta_u)
=
\sum_{u=0}^{T-1}(\sigma+\beta+\delta_u)\sum_{t=u+1}^T\rho^{\,t-1-u}.
\]
For each fixed $u$, the inner sum is a geometric series of length $T-u$:
\[
\sum_{t=u+1}^T\rho^{\,t-1-u}
=
\sum_{k=0}^{T-1-u}\rho^k
\ \le\
\frac{1}{1-\rho}.
\]
Thus,
\[
\sum_{t=1}^T e_t
\ \le\
\frac{e_0}{1-\rho}
+\frac{1}{1-\rho}\sum_{u=0}^{T-1}(\sigma+\beta+\delta_u)
=
\frac{e_0}{1-\rho}
+\frac{T(\sigma+\beta)}{1-\rho}
+\frac{1}{1-\rho}\sum_{u=0}^{T-1}\delta_u.
\]
Using \eqref{eq:sum-delta-geom} we obtain
\begin{equation}
\label{eq:sum-et-final}
\sum_{t=1}^T e_t
\ \le\
\frac{e_0}{1-\rho}
+\frac{T(\sigma+\beta)}{1-\rho}
+\frac{1}{1-\rho}\Bigg(
\frac{\mathrm{PL}}{(1-\gamma)^2}
+\frac{\mathrm{Curv}}{(1-\gamma)^3}
+\Phi(\mathcal K,\mathrm{gap})\Bigg).
\end{equation}
For the noise term $\sigma$, if one prefers an explicit $\sigma\sqrt{T}$ scaling, a standard martingale inequality can refine the bound.  
Writing the one-step update as a deterministic contraction plus a zero-mean, variance-bounded noise term, and defining the accumulated noise martingale $(M_t)_{t\ge 0}$, the Burkholder–Davis–Gundy or Azuma–Hoeffding inequality yields
\[
\E\big[\big|M_T\big|\big]
\ \le\
c_1\,\sigma\sqrt{T}
\]
for some universal constant $c_1>0$.  
Thus the noise contribution to $\sum_t e_t$ is at most $O(\sigma\sqrt{T}/(1-\rho))$ in expectation.  
Combining this with the linear term in \eqref{eq:sum-et-final} and absorbing constants gives
\[
\sum_{t=1}^T e_t
\ \le\
\frac{C_1}{1-\rho}\Bigg(
\frac{\mathrm{PL}}{(1-\gamma)^2}
+\frac{\mathrm{Curv}}{(1-\gamma)^3}
+\Phi(\mathcal K,\mathrm{gap})\Bigg)
+\frac{C_2}{(1-\rho)}\Big(\sigma\sqrt{T}+\beta T\Big),
\]
where $C_1,C_2>0$ depend only on $(1-\rho)^{-1}$ and universal constants, but not on the specific path or other solver details.

Finally, substituting this into \eqref{eq:reg-sum-et} yields
\[
\E[\mathrm{DynReg}(T)]
\ \le\
\frac{C_0 C_1}{(1-\gamma)(1-\rho)}\Bigg(
\frac{\mathrm{PL}}{(1-\gamma)^2}
+\frac{\mathrm{Curv}}{(1-\gamma)^3}
+\Phi(\mathcal K,\mathrm{gap})\Bigg)
+\frac{C_0 C_2}{(1-\gamma)(1-\rho)}\Big(\sigma\sqrt{T}+\beta T\Big).
\]
Absorbing the extra $(1-\gamma)^{-1}$ into the constants and defining
\[
C_{\mathrm{trk}}
:=\frac{C_0 C_1}{1-\gamma},\qquad
C_{\mathrm{stat}}
:=\frac{C_0 C_2}{1-\gamma},
\]
we obtain the stated form of the regret decomposition:
\[
\E[\mathrm{DynReg}(T)]
\ \le\ 
C_{\mathrm{trk}}\Bigg(
   \frac{\mathrm{PL}}{(1-\gamma)^2}
 + \frac{\mathrm{Curv}}{(1-\gamma)^3}
 + \Phi(\mathcal K,\mathrm{gap})\Bigg)
+ C_{\mathrm{stat}}\,
   \frac{\sigma\sqrt{T}+\beta T}{(1-\gamma)^2\,(1-\rho)}.
\]
where $C_{\mathrm{trk}},C_{\mathrm{stat}}>0$ depend only on $(1-\rho)^{-1}$ and universal constants.  
This completes the proof of the stated regret decomposition.
\end{proof}

\subsection{Proof of Proposition~\ref{prop:stability}}
\begin{proof}
We abstract the updates of all scheduling parameters in the proposition into a unified form. Let the original geometric proxies be
\[
X_t:=\bigl(\Delta\widehat{\mathrm{PL}}_t,\ \Delta\widehat{\mathrm{Curv}}_t,\ \widehat{\mathrm{gap}}_t,\ \mathrm{Kink}_t\bigr)\in\mathbb{R}^4.
\]
Assume that their second moments are uniformly bounded; that is, there exists a constant $C_0<\infty$ such that for all $t\ge 1$,
\[
\mathbb{E}\big[\|X_t\|_2^2\big]\ \le\ C_0.
\]
The smoothing procedure in the proposition applies an exponential moving average to each scalar proxy $x_t$ (e.g., a coordinate of $x_t=\Delta\widehat{\mathrm{PL}}_t$):
\[
\tilde x_t=\beta\,\tilde x_{t-1}+(1-\beta)x_t,\qquad 0<\beta<1,
\]
and this recursion is applied independently across coordinates. Collecting all coordinates, we write in vector form
\[
\tilde X_t=\beta\,\tilde X_{t-1}+(1-\beta)X_t,
\]
where $\tilde X_0$ is some fixed initial value (e.g., $0$).

We first show that the smoothed proxies also have uniformly bounded second moments. Expanding the explicit expression for $\tilde X_t$,
\[
\tilde X_t
=
\beta^t \tilde X_0
+(1-\beta)\sum_{k=0}^{t-1}\beta^k X_{t-k}.
\]
Taking the Euclidean norm on both sides and using Jensen's and Minkowski's inequalities, we obtain a coarse but sufficient estimate:
\[
\|\tilde X_t\|_2
\le
\beta^t\|\tilde X_0\|_2
+(1-\beta)\sum_{k=0}^{t-1}\beta^k\|X_{t-k}\|_2.
\]
Squaring and taking expectations, and using $(a+b)^2\le 2a^2+2b^2$, we have
\[
\mathbb{E}\big[\|\tilde X_t\|_2^2\big]
\le
2\beta^{2t}\|\tilde X_0\|_2^2
+
2(1-\beta)^2\,\mathbb{E}\Big[\Big(\sum_{k=0}^{t-1}\beta^k\|X_{t-k}\|_2\Big)^2\Big].
\]
Applying Cauchy–Schwarz to the second term,
\[
\Big(\sum_{k=0}^{t-1}\beta^k\|X_{t-k}\|_2\Big)^2
\le
\Big(\sum_{k=0}^{t-1}\beta^k\Big)\Big(\sum_{k=0}^{t-1}\beta^k\|X_{t-k}\|_2^2\Big)
\le
\frac{1}{1-\beta}\sum_{k=0}^{t-1}\beta^k\|X_{t-k}\|_2^2.
\]
Taking expectations and using $\mathbb{E}\|X_{t-k}\|_2^2\le C_0$,
\[
\mathbb{E}\Big[\Big(\sum_{k=0}^{t-1}\beta^k\|X_{t-k}\|_2\Big)^2\Big]
\le
\frac{1}{1-\beta}\sum_{k=0}^{t-1}\beta^k\,\mathbb{E}\big[\|X_{t-k}\|_2^2\big]
\le
\frac{C_0}{1-\beta}\sum_{k=0}^{t-1}\beta^k
\le
\frac{C_0}{(1-\beta)^2}.
\]
we have,
\[
\mathbb{E}\big[\|\tilde X_t\|_2^2\big]
\le
2\beta^{2t}\|\tilde X_0\|_2^2
+
2(1-\beta)^2\cdot\frac{C_0}{(1-\beta)^2}
=
2\beta^{2t}\|\tilde X_0\|_2^2
+2C_0.
\]
Thus, for all $t$,
\[
\mathbb{E}\big[\|\tilde X_t\|_2^2\big]\ \le\ C_1,
\qquad
C_1:=2\|\tilde X_0\|_2^2+2C_0,
\]
meaning the smoothed vector sequence $(\tilde X_t)$ also has uniformly bounded second moments. Furthermore, the increment
\[
\tilde X_t-\tilde X_{t-H}
\]
is similarly bounded using the triangle inequality:
\[
\|\tilde X_t-\tilde X_{t-H}\|_2
\le
\|\tilde X_t\|_2+\|\tilde X_{t-H}\|_2,
\]
So
\[
\mathbb{E}\big[\|\tilde X_t-\tilde X_{t-H}\|_2^2\big]
\le
2\mathbb{E}\|\tilde X_t\|_2^2+2\mathbb{E}\|\tilde X_{t-H}\|_2^2
\le
4C_1
=:C_2.
\]

Next we characterize the scheduler’s update rule. As described, updates occur only every $H$ steps and only when the variation in the smoothed proxy exceeds the hysteresis threshold $\Delta_{\mathrm{hys}}$. Formally, divide time into macro-steps: define $t_k:=kH$, $k=1,2,\dots$. At each macro-step $t_k$, we compare the two smoothed values $H$ steps apart:
\[
\|\tilde X_{t_k}-\tilde X_{t_{k-1}}\|_2\ \le\ \Delta_{\mathrm{hys}},
\]
If
\[
\|\tilde X_{t_k}-\tilde X_{t_{k-1}}\|_2\ >\ \Delta_{\mathrm{hys}},
\]
all scheduling hyperparameters remain unchanged; if instead
new hyperparameter values are computed. More concretely, the learning rate, target-update rate, regularization coefficient, and planning depth/budget follow the mappings:
\begin{align*}
\eta_t&=\operatorname{clip}_{[\eta_{\min},\eta_{\max}]}\Bigl(
\frac{\eta_0}{1+\alpha_1\widetilde{\mathrm{PL}}_t+\alpha_2\widetilde{\mathrm{Curv}}_t}\Bigr),\\
\tau_t&=\operatorname{clip}_{[\tau_{\min},\tau_{\max}]}\Bigl(
\frac{\tau_0}{1+\beta_1\widetilde{\mathrm{Kink}}_t\bigl(1+\beta_2/\max\{\widehat{\mathrm{gap}}_t,\delta\}\bigr)}\Bigr),\\
\lambda_t&=\lambda_0\bigl(1+c_1\widetilde{\mathrm{PL}}_t+c_2\sqrt{\widetilde{\mathrm{Curv}}_t}\bigr),\\
D_t&=\min\Bigl\{D_{\max},\ \Bigl\lfloor D_0+\gamma_1(1+\widetilde{\mathrm{PL}}_t)+\gamma_2\sqrt{1+\widetilde{\mathrm{Curv}}_t}
+\gamma_3\frac{\widetilde{\mathrm{Kink}}_t}{\max\{\widehat{\mathrm{gap}}_t,\delta\}}\Bigr\rfloor\Bigr\},\\
B_t&=\min\Bigl\{B_{\max},\ \Bigl\lfloor B_0\bigl(1+\gamma_1\widetilde{\mathrm{PL}}_t+\gamma_2\widetilde{\mathrm{Curv}}_t\bigr)\Bigr\rfloor\Bigr\}.
\end{align*}
Thus each hyperparameter can be written as a globally Lipschitz function—clipped to a compact interval—evaluated at the smoothed proxy at the most recent update. Let $k(t)=\lfloor t/H\rfloor$, $\ell(t)=k(t)H$ be the most recent macro-step boundary, and $u(t)\le \ell(t)$ the most recent actual scheduler update. Then
\[
h_t = f\bigl(\tilde X_{u(t)}\bigr),
\]
where $u(t)$ changes only at those macro-step boundaries satisfying 
$\|\tilde X_{\ell(t)}-\tilde X_{\ell(t)-H}\|_2>\Delta_{\mathrm{hys}}$, 
and otherwise remains $u(t)=u(t-1)$.

Under this definition, we first observe that each hyperparameter process is piecewise constant: between any two consecutive scheduler update times (i.e., intervals on which $u(t)$ does not change), $h_t$ depends only on the same value $\tilde X_{u(t)}$, and therefore for all $t$ in that interval,
\[
h_t=f\bigl(\tilde X_{u(t)}\bigr)\equiv \text{constant}.
\]
Only when the hysteresis threshold is triggered at a macro-step boundary can $u(t)$ jump, and consequently $h_t$ may change. Hence the trajectory $t\mapsto h_t$ consists of a sequence of constant segments, forming a step function—precisely the meaning of piecewise constant.

Next, boundedness follows immediately from clipping. Each function $f$ is restricted to a compact interval $[a,b]$ via $\operatorname{clip}_{[a,b]}$. For instance, the learning rate satisfies
\[
\eta_{\min}\ \le\ \eta_t\ \le\ \eta_{\max},\qquad\forall t,
\]
and the same holds for all other hyperparameters. Thus for any $t$,
\[
|h_{t+1}-h_t|\ \le\ b-a,
\]
where $[a,b]$ is the clipping interval. For any finite horizon $T$, define the total variation on $[1,T]$ as
\[
\mathrm{Var}_T(h)
:=
\sum_{t=1}^{T-1}|h_{t+1}-h_t|.
\]
Since $h_{t+1}=h_t$ in the vast majority of steps (no update occurs), and changes can happen only at macro-step boundaries and only when the hysteresis threshold is exceeded, the number of nonzero terms is at most $\lceil T/H\rceil$. Therefore,
\[
\mathrm{Var}_T(h)
\le
\bigl(\text{Update hyperparameters}\bigr)\cdot(b-a)
\le
\Bigl\lceil\frac{T}{H}\Bigr\rceil\,(b-a)
<\infty.
\]
Thus on any finite time horizon $[1,T]$, the trajectory $h$ is a bounded-variation function. Since $T$ is arbitrary, $h_t$ is globally a piecewise constant process with locally bounded variation, which completes the first part of the proposition.

We now prove the second part: the fraction of steps with large changes can be made arbitrarily small by increasing $H$ and $\Delta_{\mathrm{hys}}$. Fix any $\varepsilon>0$. For a hyperparameter process $h_t$, define
\[
I_t:=\mathbf{1}\bigl\{|h_t-h_{t-1}|>\varepsilon\bigr\}
\]
indicating whether step $t$ produces a change larger than $\varepsilon$. Because changes occur only when the scheduler updates, $I_t=0$ whenever no update is triggered. Thus,
\[
\{I_t=1\}\ \subseteq\
\{\text{an update occurs at time }t\}.
\]
Updates occur only at macro-step boundaries $t_k=kH$, and only when
\[
\|\tilde X_{t_k}-\tilde X_{t_k-H}\|_2>\Delta_{\mathrm{hys}}.
\]
So,
\[
\{I_{t_k}=1\}
\ \subseteq\
\bigl\{\|\tilde X_{t_k}-\tilde X_{t_k-H}\|_2>\Delta_{\mathrm{hys}}\bigr\},
\qquad
I_t=0\quad\text{If }t\notin\{H,2H,3H,\dots\}.
\]

Using the uniform bound on second moments of increments, Chebyshev’s inequality gives for any macro-step $k$:
\[
\mathbb{P}\Bigl(\|\tilde X_{t_k}-\tilde X_{t_k-H}\|_2>\Delta_{\mathrm{hys}}\Bigr)
\le
\frac{\mathbb{E}\big[\|\tilde X_{t_k}-\tilde X_{t_k-H}\|_2^2\big]}{\Delta_{\mathrm{hys}}^2}
\le
\frac{C_2}{\Delta_{\mathrm{hys}}^2}.
\]
So,
\[
\mathbb{P}(I_{t_k}=1)\ \le\ \frac{C_2}{\Delta_{\mathrm{hys}}^2},
\qquad
\mathbb{P}(I_t=1)=0\quad\text{If }t\notin\{H,2H,\dots\}.
\]

Consider now the proportion of large-change steps over the first $T$ iterations:
\[
\frac{1}{T}\sum_{t=1}^T I_t.
\]
Taking expectations and noting that there are at most $\lceil T/H\rceil$ macro-steps up to time $T$,
\[
\mathbb{E}\Bigl[\frac{1}{T}\sum_{t=1}^T I_t\Bigr]
=
\frac{1}{T}\sum_{k: t_k\le T}\mathbb{P}(I_{t_k}=1)
\le
\frac{1}{T}\Bigl\lceil\frac{T}{H}\Bigr\rceil\frac{C_2}{\Delta_{\mathrm{hys}}^2}
\le
\frac{2C_2}{H\,\Delta_{\mathrm{hys}}^2}
\]
for all sufficiently large $T$. Hence, for any $\delta>0$ one may choose $H$ and $\Delta_{\mathrm{hys}}$ sufficiently large so that
\[
\mathbb{E}\Bigl[\frac{1}{T}\sum_{t=1}^T I_t\Bigr]\le\delta
\quad\text{for all sufficiently large }T.
\]
Because $I_t$ is an indicator, this bound means that the expected fraction of steps exhibiting large hyperparameter changes can be made arbitrarily small. Standard concentration (LLN or martingale inequalities) can further yield high-probability statements, though expectation suffices for the proposition.

Finally, the proposition concerns the fraction of steps where any of the five hyperparameters changes by more than $\varepsilon$. If $I_t^{(j)}$ is the indicator for the $j$-th hyperparameter, then
\[
\mathbf{1}\Bigl\{\max_{1\le j\le 5}|h_t^{(j)}-h_{t-1}^{(j)}|>\varepsilon\Bigr\}
\le
\sum_{j=1}^5 I_t^{(j)}.
\]
Thus the total proportion is bounded by at most a factor of $5$ times the earlier bound, which does not change the conclusion: by selecting sufficiently large $H$ and $\Delta_{\mathrm{hys}}$, the fraction of large jumps can be made arbitrarily small.

Therefore, on any finite horizon the scheduler-driven hyperparameter processes are piecewise constant and of bounded variation; moreover, for any $\varepsilon>0$ the fraction of steps exhibiting changes larger than $\varepsilon$ can be made arbitrarily small by increasing $H$ and $\Delta_{\mathrm{hys}}$. This completes the proof.

\end{proof}

\subsection{Proof of Theorem~\ref{thm:fixed-tau}}

\begin{proof}
The entire proof is carried out with the parameter $\bar\tau$ fixed. For brevity, write
\[
\Tcal := \Tcal_{\bar\tau},\qquad Q^\star := Q^\star_{\bar\tau},\qquad P := P_{\bar\tau},\qquad r := r_{\bar\tau}.
\]
We work in the space
\[
\mathcal B := \bigl\{Q:\Sspace\times\Aspace\to\R\ \text{bounded}\bigr\},
\]
equipped with the supremum norm
\[
\|Q\|_\infty := \sup_{(s,a)\in\Sspace\times\Aspace} |Q(s,a)|.
\]
Under this norm, $\mathcal B$ is complete (a Banach space), since the pointwise limit of any Cauchy sequence of bounded functions remains bounded.

We first show that $\Tcal$ is a $\gamma$-contraction. By definition of the optimal Bellman operator, for any $Q:\Sspace\times\Aspace\to\R$,
\[
(\Tcal Q)(s,a)
=
r(s,a)
+
\gamma\int_{\Sspace}\max_{a'\in\Aspace}Q(s',a')\,P(ds'\mid s,a),
\qquad\forall (s,a).
\]
Let $Q_1,Q_2\in\mathcal B$ be arbitrary, and denote their associated “max-value” functions by
\[
V_i(s) := \max_{a\in\Aspace} Q_i(s,a),\qquad i=1,2.
\]
Then
\[
(\Tcal Q_i)(s,a)
=
r(s,a)
+
\gamma\int_{\Sspace} V_i(s')\,P(ds'\mid s,a),
\qquad i=1,2.
\]
For any fixed $(s,a)$, consider the difference:
\[
\bigl(\Tcal Q_1-\Tcal Q_2\bigr)(s,a)
=
\gamma\int_{\Sspace} \bigl(V_1(s')-V_2(s')\bigr)\,P(ds'\mid s,a).
\]
Taking absolute values and applying the triangle inequality yields
\[
\bigl|(\Tcal Q_1-\Tcal Q_2)(s,a)\bigr|
\le
\gamma\int_{\Sspace} \bigl|V_1(s')-V_2(s')\bigr|\,P(ds'\mid s,a).
\]
Since $P(\cdot\mid s,a)$ is a probability measure, the integral is bounded by $\sup_{s'}|V_1(s')-V_2(s')|$, so
\[
\bigl|(\Tcal Q_1-\Tcal Q_2)(s,a)\bigr|
\le
\gamma\sup_{s'\in\Sspace}\bigl|V_1(s')-V_2(s')\bigr|.
\]
Next, observe the relation between $V_1,V_2$ and $Q_1,Q_2$. For any $s$,
\[
\begin{aligned}
|V_1(s)-V_2(s)|
&=
\bigl|\max_{a}Q_1(s,a)-\max_{a}Q_2(s,a)\bigr|\\[0.3em]
&\le
\max_{a}\bigl|Q_1(s,a)-Q_2(s,a)\bigr|
\quad\text{(since the max operator is $1$-Lipschitz)}\\[0.3em]
&\le
\sup_{(s',a')}\bigl|Q_1(s',a')-Q_2(s',a')\bigr|
=
\|Q_1-Q_2\|_\infty.
\end{aligned}
\]
So,
\[
\sup_{s\in\Sspace}|V_1(s)-V_2(s)|
\le
\|Q_1-Q_2\|_\infty.
\]
Substituting back into the bound on $(\Tcal Q_1-\Tcal Q_2)$, for all $(s,a)$,
\[
\bigl|(\Tcal Q_1-\Tcal Q_2)(s,a)\bigr|
\le
\gamma\|Q_1-Q_2\|_\infty.
\]
Taking the supremum over $(s,a)$ gives
\[
\|\Tcal Q_1-\Tcal Q_2\|_\infty
\le
\gamma\|Q_1-Q_2\|_\infty.
\]
Since $0<\gamma<1$, this shows that $\Tcal$ is a $\gamma$-contraction on the Banach space $(\mathcal B,\|\cdot\|_\infty)$.

By the Banach fixed-point theorem (contraction mapping principle), any contraction on a complete metric space admits a unique fixed point, and the iterates from any starting point converge to it at a geometric rate. Therefore there exists a unique $Q^\star\in\mathcal B$ such that
\[
\Tcal Q^\star = Q^\star,
\]
and the sequence defined by $Q^{(k+1)}=\Tcal Q^{(k)}$ satisfies
\[
\|Q^{(k)}-Q^\star\|_\infty\ \to\ 0,\qquad k\to\infty.
\]
This proves the first part of the theorem: $\Tcal_{\bar\tau}$ is a $\gamma$-contraction and has a unique fixed point $Q^\star_{\bar\tau}$.

We now prove the second part: any one-step contraction–type value-iteration/TD update satisfying Assumption~\ref{ass:solver} converges to $Q^\star_{\bar\tau}$ for fixed $\bar\tau$. Under the assumption, when $\tau_t\equiv\bar\tau$, there exists $\rho\in(0,1)$ and noise/bias sequences $(\sigma_t)$, $(\beta_t)$ such that for each $t$, conditional on the past $\mathcal F_t$ (generated by $Q_t$ and the sampled trajectory),
\begin{equation}
\label{eq:solver-fixed-tau}
\E\Big[\big\|Q_{t+1}-Q^\star\big\|_\infty\Bigm|\mathcal F_t\Big]
\ \le\
\rho\,\big\|Q_t-Q^\star\big\|_\infty
+\sigma_t+\beta_t.
\end{equation}
The theorem assumes $\sigma_t\to0$ and $\beta_t\to0$. We aim to prove $\E\|Q_t-Q^\star\|_\infty\to0$.

Define the scalar sequence
\[
e_t := \E\big\|Q_t-Q^\star\big\|_\infty,\qquad t\ge 0.
\]
Taking total expectations in \eqref{eq:solver-fixed-tau} and applying the tower property,
\[
\begin{aligned}
e_{t+1}
=
\E\big\|Q_{t+1}-Q^\star\big\|_\infty
&=
\E\Big[\E\big[\|Q_{t+1}-Q^\star\|_\infty\mid\mathcal F_t\big]\Big]\\[0.3em]
&\le
\E\Big[\rho\|Q_t-Q^\star\|_\infty+\sigma_t+\beta_t\Big]\\[0.3em]
&=
\rho\,\E\|Q_t-Q^\star\|_\infty+\E[\sigma_t]+\E[\beta_t].
\end{aligned}
\]
Under standard assumptions we may treat $\sigma_t,\beta_t$ as uniformly bounded or deterministic, so write $\E[\sigma_t]=\sigma_t$, $\E[\beta_t]=\beta_t$, obtaining
\begin{equation}
\label{eq:et-recursion}
e_{t+1}\ \le\ \rho\,e_t+\sigma_t+\beta_t,
\qquad
\text{and}\quad \sigma_t\to0,\ \beta_t\to0.
\end{equation}
This is a one-dimensional linear recursion with vanishing perturbations.

We show that such a recursion forces $e_t\to0$. Fix any $\varepsilon>0$. Since $\sigma_t+\beta_t\to0$, there exists $N_1$ such that for all $t\ge N_1$,
\[
\sigma_t+\beta_t\ \le\ \frac{1-\rho}{2}\,\varepsilon.
\]
On the other hand, since $(e_t)$ is bounded (for example, repeated iteration of \eqref{eq:solver-fixed-tau} shows that $e_t$ cannot diverge; or, more crudely, one may use $e_{t+1}\le\rho e_t + c$), there exists a constant $M$ such that $e_t\le M$ for all $t$. Define
\[
N_2 := \max\Bigl\{N_1,\ \Bigl\lceil\frac{2M}{\varepsilon}\Bigr\rceil\Bigr\},
\]
and consider the tail sequence starting from time $N_2$. For any $t\ge N_2$, we may iterate \eqref{eq:et-recursion}:
\[
e_{t+1}
\le
\rho\,e_t+\frac{1-\rho}{2}\varepsilon.
\]
Viewing this inequality as a linear inhomogeneous recurrence in $e_t$, we can solve it in the standard way: for any $k\ge 0$,
\[
\begin{aligned}
e_{t+k}
&\le
\rho^k e_t+\frac{1-\rho}{2}\varepsilon\sum_{j=0}^{k-1}\rho^j\\[0.3em]
&=
\rho^k e_t+\frac{1-\rho}{2}\varepsilon\,\frac{1-\rho^k}{1-\rho}\\[0.3em]
&=
\rho^k e_t+\frac{\varepsilon}{2}(1-\rho^k).
\end{aligned}
\]
Since $e_t\le M$ and $t\ge N_2\ge 2M/\varepsilon$, we obtain
\[
\rho^k e_t\le M\rho^k.
\]
Because $0<\rho<1$, we have $\rho^k\to 0$ as $k\to\infty$. Hence we may choose $k$ sufficiently large so that $M\rho^k\le \varepsilon/2$. For such $k$,
\[
e_{t+k}
\le
\frac{\varepsilon}{2}+\frac{\varepsilon}{2}(1-\rho^k)
\le
\varepsilon.
\]
This shows that, after some sufficiently large time, the sequence $(e_t)$ enters the $\varepsilon$-neighborhood in finitely many steps, and, due to the structure of the recursion (where the perturbation term never exceeds $(1-\rho)\varepsilon/2$), it will not drift significantly away thereafter. Formally, there exists $T_\varepsilon$ such that $e_t\le\varepsilon$ for all $t\ge T_\varepsilon$. Since $\varepsilon>0$ is arbitrary, this establishes
\[
e_t=\E\big\|Q_t-Q^\star\big\|_\infty\ \xrightarrow[t\to\infty]{}\ 0.
\]
Replacing $Q^\star$ with the notation $Q^\star_{\bar\tau}$ yields the second part of the theorem: for fixed $\bar\tau$, any value-iteration/TD-type update satisfying Assumption~\ref{ass:solver} with $\sigma_t\to0$ and $\beta_t\to0$ guarantees that the expected error $\E\|Q_t-Q^\star_{\bar\tau}\|_\infty$ converges to zero. Together with the first part, which established the $\gamma$-contractivity of $\Tcal_{\bar\tau}$ and the uniqueness of its fixed point, the theorem is proved.

\end{proof}

\subsection{Proof of Theorem~\ref{thm:tracking}}
\begin{proof}
Fix a monotone parameter path $(\tau_t)_{t\ge 0}$ and define the tracking error
\[
e_t \;:=\; \E\big\|Q_t - Q^\star_{\tau_t}\big\|_\infty.
\]
Throughout the proof we suppress explicit $(s,a)$ arguments when no confusion arises.

We begin by decomposing $e_{t+1}$ into an ``algorithmic error'' term (at fixed environment $\tau_t$) and a ``homotopy drift'' term (coming from the change $\tau_t\to\tau_{t+1}$). For any $t$ and any $(s,a)$, the triangle inequality gives
\[
\big|Q_{t+1}(s,a) - Q^\star_{\tau_{t+1}}(s,a)\big|
\;\le\;
\big|Q_{t+1}(s,a) - Q^\star_{\tau_t}(s,a)\big|
+
\big|Q^\star_{\tau_t}(s,a) - Q^\star_{\tau_{t+1}}(s,a)\big|.
\]
Taking the supremum over $(s,a)$ yields
\[
\big\|Q_{t+1} - Q^\star_{\tau_{t+1}}\big\|_\infty
\;\le\;
\big\|Q_{t+1} - Q^\star_{\tau_t}\big\|_\infty
+
\big\|Q^\star_{\tau_t} - Q^\star_{\tau_{t+1}}\big\|_\infty.
\]
Taking expectations, we obtain
\begin{equation}
\label{eq:tracking-decomp}
e_{t+1}
=\E\big\|Q_{t+1} - Q^\star_{\tau_{t+1}}\big\|_\infty
\;\le\;
\E\big\|Q_{t+1} - Q^\star_{\tau_t}\big\|_\infty
+
\big\|Q^\star_{\tau_{t+1}} - Q^\star_{\tau_t}\big\|_\infty.
\end{equation}

We now bound the first term on the right-hand side using Assumption~\ref{ass:solver}. Fix $t$ and condition on the past $\sigma$-field $\mathcal F_t$ generated by $\{Q_0,\dots,Q_t\}$ and all samples up to time $t$. Under Assumption~\ref{ass:solver} with a uniform contraction factor $\rho\in(0,1)$ (independent of $t$), when the environment is fixed at $\tau_t$ we have
\[
\E\Big[\big\|Q_{t+1}-Q^\star_{\tau_t}\big\|_\infty\Bigm|\mathcal F_t\Big]
\;\le\;
\rho\,\big\|Q_t - Q^\star_{\tau_t}\big\|_\infty + \sigma_t + \beta_t,
\]
where $\sigma_t$ captures stochastic noise and $\beta_t$ captures systematic bias. Taking expectations of both sides and using the tower property $\E[\E[\cdot\mid\mathcal F_t]]=\E[\cdot]$ gives
\begin{equation}
\label{eq:alg-error}
\E\big\|Q_{t+1}-Q^\star_{\tau_t}\big\|_\infty
\;\le\;
\rho\,\E\big\|Q_t - Q^\star_{\tau_t}\big\|_\infty + \E[\sigma_t] + \beta_t
=
\rho\,e_t + \E[\sigma_t] + \beta_t.
\end{equation}

Next we control the homotopy drift term
\[
\big\|Q^\star_{\tau_{t+1}} - Q^\star_{\tau_t}\big\|_\infty
\]
using the path integral value bound from Theorem~\ref{thm:path-value}. Consider the small subinterval $[\tau_t,\tau_{t+1}]$ of the global path. By Theorem~\ref{thm:path-value}, for any $0\le \alpha<\beta\le 1$ we have
\[
\big\|Q^\star_{\beta} - Q^\star_{\alpha}\big\|_\infty
\;\le\;
\frac{\mathrm{PL}(\alpha,\beta)}{(1-\gamma)^2}
+
\frac{\mathrm{Curv}(\alpha,\beta)}{(1-\gamma)^3}
+
\Phi\big(\mathcal K\cap[\alpha,\beta],\mathrm{gap}\big),
\]
where $\mathrm{PL}(\alpha,\beta)$ and $\mathrm{Curv}(\alpha,\beta)$ are the path-length and curvature integrals on $[\alpha,\beta]$, and $\Phi$ is the kink penalty on this subinterval.

We now specialize to $(\alpha,\beta)=(\tau_t,\tau_{t+1})$ and denote the contribution of this subinterval by
\[
\Delta\mathrm{PL}_t := \mathrm{PL}(\tau_t,\tau_{t+1}),\quad
\Delta\mathrm{Curv}_t := \mathrm{Curv}(\tau_t,\tau_{t+1}),\quad
\Delta\Phi_t := \Phi\big(\mathcal K\cap[\tau_t,\tau_{t+1}],\mathrm{gap}\big).
\]
Then Theorem~\ref{thm:path-value} gives
\begin{equation}
\label{eq:drift-raw}
\big\|Q^\star_{\tau_{t+1}} - Q^\star_{\tau_t}\big\|_\infty
\;\le\;
\frac{\Delta\mathrm{PL}_t}{(1-\gamma)^2}
+
\frac{\Delta\mathrm{Curv}_t}{(1-\gamma)^3}
+
\Delta\Phi_t.
\end{equation}
By definition of the per-step geometric load in the theorem statement,
\[
\Delta\mathrm{Geo}_t
=
\frac{\Delta\mathrm{PL}_t}{(1-\gamma)^2}
+
\frac{\Delta\mathrm{Curv}_t}{(1-\gamma)^3}
+
\Delta\Phi_t,
\]
so we can rewrite \eqref{eq:drift-raw} succinctly as
\begin{equation}
\label{eq:drift-geo}
\big\|Q^\star_{\tau_{t+1}} - Q^\star_{\tau_t}\big\|_\infty
\;\le\;
\Delta\mathrm{Geo}_t.
\end{equation}
If the constants in Theorem~\ref{thm:path-value} are not exactly one (for example if we absorbed the mixing constant $C_{\mathrm{mix}}$ into the bound), we can incorporate them into a prefactor $c_1>0$, and write more generally
\[
\big\|Q^\star_{\tau_{t+1}} - Q^\star_{\tau_t}\big\|_\infty
\;\le\;
c_1\,\Delta\mathrm{Geo}_t,
\]
where $c_1$ depends only on $C_{\mathrm{mix}}$ and universal constants.

We now combine the two bounds. Substituting \eqref{eq:alg-error} and \eqref{eq:drift-geo} into the decomposition \eqref{eq:tracking-decomp} gives
\[
\begin{aligned}
e_{t+1}
&\le
\E\big\|Q_{t+1}-Q^\star_{\tau_t}\big\|_\infty
+
\big\|Q^\star_{\tau_{t+1}}-Q^\star_{\tau_t}\big\|_\infty\\[0.4em]
&\le
\rho\,e_t + \E[\sigma_t] + \beta_t
+
c_1\,\Delta\mathrm{Geo}_t.
\end{aligned}
\]
If we allow a constant $c_2\ge 1$ to absorb the precise dependence of the solver inequality on $\sigma_t$ (for instance if the solver assumption is written with $\tilde c\,\sigma_t$ instead of $\sigma_t$), we can write this as
\begin{equation}
\label{eq:tracking-recursion}
e_{t+1}
\;\le\;
\rho\,e_t
+
c_1\,\Delta\mathrm{Geo}_t
+
c_2\,\E[\sigma_t]
+
\beta_t.
\end{equation}
This is exactly the one-step tracking recursion claimed in the theorem.

It remains to derive the global bound on $\max_{t\le T} e_t$. For this, it is convenient to introduce the shorthand
\[
\alpha_t
:=
c_1\,\Delta\mathrm{Geo}_t
+
c_2\,\E[\sigma_t]
+
\beta_t,
\]
so that \eqref{eq:tracking-recursion} becomes
\begin{equation}
\label{eq:et-alpha}
e_{t+1}\ \le\ \rho\,e_t + \alpha_t,
\qquad t\ge 0.
\end{equation}
We now solve this scalar linear recursion with nonnegative forcing terms. Unrolling \eqref{eq:et-alpha} repeatedly, we find for any $t\ge 0$,
\[
\begin{aligned}
e_{t+1}
&\le
\rho e_t + \alpha_t\\
&\le
\rho(\rho e_{t-1} + \alpha_{t-1}) + \alpha_t
=
\rho^2 e_{t-1} + \rho\alpha_{t-1} + \alpha_t\\
&\le
\cdots\\
&\le
\rho^{t+1} e_0 + \sum_{u=0}^{t} \rho^{t-u}\alpha_u.
\end{aligned}
\]
Thus for any $t\le T$ we have
\[
e_t
\;\le\;
\rho^t e_0 + \sum_{u=0}^{t-1} \rho^{t-1-u}\alpha_u.
\]
Since $0<\rho<1$, each factor $\rho^{t-1-u}$ is at most $1$, so
\[
e_t
\;\le\;
\rho^t e_0 + \sum_{u=0}^{t-1}\alpha_u
\;\le\;
e_0 + \sum_{u\le T-1}\alpha_u,
\qquad t\le T.
\]
A more precise bound that exhibits the factor $(1-\rho)^{-1}$ can be obtained by summing \eqref{eq:et-alpha} over $t$ and using that $\max_{t\le T}e_t \le \sum_{t\le T} e_t$. Summing \eqref{eq:et-alpha} for $t=0,\dots,T-1$ gives
\[
\sum_{t=0}^{T-1} e_{t+1}
\;\le\;
\rho \sum_{t=0}^{T-1} e_t + \sum_{t=0}^{T-1}\alpha_t.
\]
The left-hand side can be rewritten as
\[
\sum_{t=0}^{T-1} e_{t+1}
=
\sum_{t=1}^{T} e_t
=
\sum_{t=0}^{T-1}e_t - e_0 + e_T.
\]
Hence
\[
\sum_{t=0}^{T-1}e_t - e_0 + e_T
\;\le\;
\rho \sum_{t=0}^{T-1} e_t + \sum_{t=0}^{T-1}\alpha_t.
\]
Discarding the nonnegative term $e_T$ on the left, we obtain
\[
(1-\rho)\sum_{t=0}^{T-1} e_t
\;\le\;
e_0 + \sum_{t=0}^{T-1}\alpha_t,
\]
that is,
\[
\sum_{t=0}^{T-1} e_t
\;\le\;
\frac{e_0}{1-\rho}
+
\frac{1}{1-\rho}\sum_{t\le T-1}\alpha_t.
\]
Since $\max_{t\le T}e_t \le \sum_{t\le T} e_t$, this implies
\[
\max_{t\le T} e_t
\;\le\;
\sum_{t=0}^{T} e_t
\;\le\;
\frac{e_0}{1-\rho}
+
\frac{1}{1-\rho}\sum_{t\le T}\alpha_t.
\]
Substituting the definition of $\alpha_t$ and absorbing $e_0$ into the universal constant hidden in the notation $\lesssim$, we get
\[
\max_{t\le T} e_t
\;\lesssim\;
\frac{1}{1-\rho}\Bigg(
\sum_{u\le T} \Delta\mathrm{Geo}_u
+
\sum_{u\le T} \E[\sigma_u]
+
\sum_{u\le T} \beta_u
\Bigg),
\]
where the implied constant depends only on $C_{\mathrm{mix}}$, $(1-\rho)^{-1}$, and fixed numerical factors, and does not depend on the particular path, horizon $T$, or the detailed realization of the noise.

This proves both the one-step tracking recursion and the global non-stationary mean convergence bound stated in the theorem.
\end{proof}

\subsection{Proof of Corollary~\ref{cor:asymptotic}}
\begin{proof}
Throughout the proof we write
\[
e_t \;:=\; \E\big\|Q_t - Q^\star_{\tau_t}\big\|_\infty,
\]
and we use the tracking recursion established in Theorem~\ref{thm:tracking}. That theorem tells us that, under Assumptions~\ref{ass:mixing} and~\ref{ass:solver} with a uniform contraction factor $\rho<1$, there exist constants $c_1,c_2>0$ (depending only on $C_{\mathrm{mix}}$ and universal constants) such that for all $t$,
\begin{equation}
\label{eq:track-rec}
e_{t+1}
\;\le\;
\rho\,e_t
\,+\,
c_1\,\Delta\mathrm{Geo}_t
\,+\,
c_2\,\E[\sigma_t]
\,+\,
\beta_t,
\end{equation}
where
\[
\Delta\mathrm{Geo}_t
\;:=\;
\frac{\Delta\mathrm{PL}_t}{(1-\gamma)^2}
+
\frac{\Delta\mathrm{Curv}_t}{(1-\gamma)^3}
+
\Delta\Phi_t.
\]

We first show that $e_t \to 0$. Define the nonnegative forcing term
\[
\alpha_t
\;:=\;
c_1\,\Delta\mathrm{Geo}_t
+
c_2\,\E[\sigma_t]
+
\beta_t.
\]
Then \eqref{eq:track-rec} can be written as
\begin{equation}
\label{eq:et-alpha-2}
e_{t+1} \;\le\; \rho\,e_t + \alpha_t.
\end{equation}
By the assumptions of the corollary we have
\[
\sum_t \Delta\mathrm{Geo}_t < \infty
\quad\Longrightarrow\quad
\Delta\mathrm{Geo}_t \to 0,
\]
\[
\sum_t \E[\sigma_t] < \infty
\quad\Longrightarrow\quad
\E[\sigma_t] \to 0,
\]
and $\beta_t\to 0$ by hypothesis. Hence
\[
\alpha_t
=
c_1\,\Delta\mathrm{Geo}_t
+
c_2\,\E[\sigma_t]
+
\beta_t
\;\xrightarrow[t\to\infty]{}\; 0.
\]

We now treat \eqref{eq:et-alpha-2} as a one-dimensional linear difference inequality with vanishing inhomogeneous term. Iterating \eqref{eq:et-alpha-2} gives, for any $t\ge 0$,
\[
\begin{aligned}
e_t
&\le
\rho^{t} e_0
+
\sum_{u=0}^{t-1} \rho^{\,t-1-u}\,\alpha_u.
\end{aligned}
\]
Fix an arbitrary $\varepsilon>0$. Since $\alpha_u\to 0$, there exists $N$ such that
\[
\alpha_u \;\le\; \frac{(1-\rho)\varepsilon}{4}
\quad\text{for all }u\ge N.
\]
Split the convolution sum into an ``early'' and a ``late'' part:
\[
\sum_{u=0}^{t-1} \rho^{\,t-1-u}\alpha_u
=
\sum_{u=0}^{N-1}\rho^{\,t-1-u}\alpha_u
+
\sum_{u=N}^{t-1}\rho^{\,t-1-u}\alpha_u
=: S_1(t)+S_2(t).
\]

For the late part, use the uniform bound on $\alpha_u$ for $u\ge N$:
\[
S_2(t)
\;\le\;
\frac{(1-\rho)\varepsilon}{4}
\sum_{u=N}^{t-1}\rho^{\,t-1-u}
\;\le\;
\frac{(1-\rho)\varepsilon}{4}
\sum_{k=0}^{\infty}\rho^k
=
\frac{(1-\rho)\varepsilon}{4}\cdot\frac{1}{1-\rho}
=
\frac{\varepsilon}{4}.
\]

For the early part $S_1(t)$, the indices $u$ are fixed while $t\to\infty$, so each factor $\rho^{\,t-1-u}\to 0$. Hence $S_1(t)\to 0$ as $t\to\infty$. In particular, there exists $T_1$ such that
\[
S_1(t)\le \frac{\varepsilon}{4}
\qquad\text{for all }t\ge T_1.
\]

The geometric term $\rho^t e_0$ also vanishes as $t\to\infty$, so there exists $T_2$ such that
\[
\rho^t e_0 \le \frac{\varepsilon}{2}
\qquad\text{for all }t\ge T_2.
\]
Combining these bounds, for all $t\ge T := \max\{T_1,T_2\}$ we have
\[
e_t
\;\le\;
\rho^t e_0 + S_1(t) + S_2(t)
\;\le\;
\frac{\varepsilon}{2} + \frac{\varepsilon}{4} + \frac{\varepsilon}{4}
=
\varepsilon.
\]
Since $\varepsilon>0$ was arbitrary, this shows $e_t\to 0$ as $t\to\infty$, i.e.
\[
\E\big\|Q_t - Q^\star_{\tau_t}\big\|_\infty \;\xrightarrow[t\to\infty]{}\; 0.
\]

We now show that $Q_t$ converges (in expectation) to $Q^\star_{\tau_\infty}$. By the triangle inequality,
\[
\big\|Q_t - Q^\star_{\tau_\infty}\big\|_\infty
\;\le\;
\big\|Q_t - Q^\star_{\tau_t}\big\|_\infty
+
\big\|Q^\star_{\tau_t} - Q^\star_{\tau_\infty}\big\|_\infty.
\]
Taking expectations yields
\begin{equation}
\label{eq:Qt-limit-decomp}
\E\big\|Q_t - Q^\star_{\tau_\infty}\big\|_\infty
\;\le\;
e_t
+
\big\|Q^\star_{\tau_t} - Q^\star_{\tau_\infty}\big\|_\infty.
\end{equation}
We already know $e_t\to 0$. It remains to show that the second term $\|Q^\star_{\tau_t} - Q^\star_{\tau_\infty}\|_\infty$ also tends to $0$.

To see this, first use the pathwise value bound at the granularity of a single step. For any $m>t$,
\[
\big\|Q^\star_{\tau_m} - Q^\star_{\tau_t}\big\|_\infty
\;\le\;
\sum_{u=t}^{m-1}
\big\|Q^\star_{\tau_{u+1}} - Q^\star_{\tau_u}\big\|_\infty.
\]
By the same reasoning as in the proof of Theorem~\ref{thm:tracking}, each increment can be bounded in terms of the per-step geometric load:
\[
\big\|Q^\star_{\tau_{u+1}} - Q^\star_{\tau_u}\big\|_\infty
\;\le\;
c_1\,\Delta\mathrm{Geo}_u.
\]
Hence
\[
\big\|Q^\star_{\tau_m} - Q^\star_{\tau_t}\big\|_\infty
\;\le\;
c_1 \sum_{u=t}^{m-1} \Delta\mathrm{Geo}_u.
\]
Because $\sum_{u} \Delta\mathrm{Geo}_u < \infty$, the tail sums $\sum_{u=t}^{\infty}\Delta\mathrm{Geo}_u$ tend to $0$ as $t\to\infty$. This implies that $(Q^\star_{\tau_t})_{t\ge 0}$ is a Cauchy sequence in $(\mathcal B,\|\cdot\|_\infty)$, and therefore converges in $\|\cdot\|_\infty$ to some limit $\bar Q$:
\[
\big\|Q^\star_{\tau_t} - \bar Q\big\|_\infty \;\xrightarrow[t\to\infty]{}\; 0.
\]

We now identify $\bar Q$ with $Q^\star_{\tau_\infty}$. Each $Q^\star_{\tau_t}$ satisfies the fixed-point equation
\[
\Tcal_{\tau_t} Q^\star_{\tau_t} = Q^\star_{\tau_t},
\]
where $\Tcal_\tau$ is the optimal Bellman operator at parameter $\tau$. Under Assumption~\ref{ass:mixing} and the smooth dependence of $r_\tau$ and $P_\tau$ on $\tau$ (used throughout our homotopy analysis), the mapping $\tau\mapsto\Tcal_\tau Q$ is continuous in $\|\cdot\|_\infty$ for each fixed bounded $Q$; in particular, $\Tcal_{\tau_t}Q\to\Tcal_{\tau_\infty}Q$ in $\|\cdot\|_\infty$ as $\tau_t\to\tau_\infty$. We can therefore pass to the limit in the fixed-point equation. Consider
\[
\big\|\Tcal_{\tau_\infty}\bar Q - \bar Q\big\|_\infty.
\]
Insert and subtract appropriate terms:
\[
\begin{aligned}
\big\|\Tcal_{\tau_\infty}\bar Q - \bar Q\big\|_\infty
&\le
\big\|\Tcal_{\tau_\infty}\bar Q - \Tcal_{\tau_t}\bar Q\big\|_\infty
+
\big\|\Tcal_{\tau_t}\bar Q - \Tcal_{\tau_t} Q^\star_{\tau_t}\big\|_\infty\\[0.3em]
&\qquad
+
\big\|\Tcal_{\tau_t} Q^\star_{\tau_t} - Q^\star_{\tau_t}\big\|_\infty
+
\big\|Q^\star_{\tau_t} - \bar Q\big\|_\infty.
\end{aligned}
\]
The third term is exactly zero because $Q^\star_{\tau_t}$ is a fixed point of $\Tcal_{\tau_t}$. The second term is bounded using the $\gamma$-contraction property:
\[
\big\|\Tcal_{\tau_t}\bar Q - \Tcal_{\tau_t} Q^\star_{\tau_t}\big\|_\infty
\;\le\;
\gamma \big\|\bar Q - Q^\star_{\tau_t}\big\|_\infty.
\]
The first term converges to zero as $t\to\infty$ by continuity of $\Tcal_\tau$ in $\tau$. The fourth term converges to zero by the definition of $\bar Q$ as the $\|\cdot\|_\infty$-limit of $Q^\star_{\tau_t}$. Letting $t\to\infty$ in the above inequality, we therefore obtain
\[
\big\|\Tcal_{\tau_\infty}\bar Q - \bar Q\big\|_\infty = 0,
\]
i.e. $\bar Q$ is a fixed point of $\Tcal_{\tau_\infty}$. Since $\Tcal_{\tau_\infty}$ is a $\gamma$-contraction on $(\mathcal B,\|\cdot\|_\infty)$, its fixed point is unique; hence $\bar Q = Q^\star_{\tau_\infty}$. This shows that
\[
\big\|Q^\star_{\tau_t} - Q^\star_{\tau_\infty}\big\|_\infty \;\xrightarrow[t\to\infty]{}\; 0.
\]

Returning to \eqref{eq:Qt-limit-decomp}, we therefore have
\[
\E\big\|Q_t - Q^\star_{\tau_\infty}\big\|_\infty
\;\le\;
e_t
+
\big\|Q^\star_{\tau_t} - Q^\star_{\tau_\infty}\big\|_\infty
\;\xrightarrow[t\to\infty]{}\; 0,
\]
because both terms on the right-hand side vanish. This establishes that $Q_t\to Q^\star_{\tau_\infty}$ in expectation.

Finally, if the path becomes constant after some time, say there exists $T_0$ and $\tau_\infty$ such that $\tau_t=\tau_\infty$ for all $t\ge T_0$, then for all $t\ge T_0$ the per-step geometric load satisfies $\Delta\mathrm{Geo}_t=0$, and the Bellman operator no longer changes with $t$. In this case \eqref{eq:track-rec} reduces to
\[
e_{t+1}
\;\le\;
\rho e_t + c_2\,\E[\sigma_t] + \beta_t,
\qquad t\ge T_0,
\]
which is exactly the fixed–parameter recursion of Theorem~\ref{thm:fixed-tau} (up to constants that can be absorbed into $\sigma_t$ and $\beta_t$). Thus the corollary recovers Theorem~\ref{thm:fixed-tau} as the special case of a stabilized path.
\end{proof}

\subsection{Proof of Lemma~\ref{lem:rm}}
\begin{proof}
Write the raw geometric proxies as a vector
\[
X_t
:=
\bigl(\Delta\widehat{\mathrm{PL}}_t,\ \Delta\widehat{\mathrm{Curv}}_t,\ \widehat{\mathrm{gap}}_t,\ \mathrm{Kink}_t\bigr)
\in\R^4.
\]
By assumption, these have uniformly bounded second moments: there is a constant $C_0<\infty$ such that
\[
\sup_{t\ge 0}\,\E\big[\|X_t\|_2^2\big] \;\le\; C_0.
\]
The EMA smoothing used to define the scheduler takes, for each scalar coordinate $x_t$ of $X_t$,
\[
\tilde x_t
=
\beta\,\tilde x_{t-1}+(1-\beta)\,x_t,
\qquad 0<\beta<1,
\]
and analogously in vector form
\[
\tilde X_t
=
\beta\,\tilde X_{t-1}+(1-\beta)\,X_t,
\]
with some fixed initial condition $\tilde X_0$ (e.g.\ $\tilde X_0=0$). A standard calculation for exponentially weighted averages, identical to the one in Proposition~\ref{prop:stability}, shows that the smoothed sequence $(\tilde X_t)$ also has uniformly bounded second moments. Indeed, expanding the recursion gives
\[
\tilde X_t
=
\beta^t \tilde X_0
+
(1-\beta)\sum_{k=0}^{t-1}\beta^k X_{t-k},
\]
and hence, using Jensen and Cauchy--Schwarz exactly as before,
\[
\sup_{t\ge 0}\,\E\big[\|\tilde X_t\|_2^2\big]
\;\le\;
C_1
\]
for some finite constant $C_1$ depending only on $C_0$, $\beta$, and $\|\tilde X_0\|_2$.

The learning-rate scheduler uses only the first two coordinates, corresponding to the smoothed path-length and curvature proxies, which we denote by $\widetilde{\mathrm{PL}}_t$ and $\widetilde{\mathrm{Curv}}_t$. Define the denominator in the scaling factor
\[
D_t
:=
1
+\alpha_1\widetilde{\mathrm{PL}}_t
+\alpha_2\widetilde{\mathrm{Curv}}_t.
\]
By construction $\widetilde{\mathrm{PL}}_t,\widetilde{\mathrm{Curv}}_t\ge 0$, so $D_t\ge 1$ for all $t$, and $D_t$ is an affine functional of $\tilde X_t$; in particular its second moments are uniformly bounded:
\[
\sup_{t\ge 0}\,\E[D_t^2] \;\le\; C_2
\]
for some finite constant $C_2$ depending only on $C_1$ and the coefficients $\alpha_1,\alpha_2$.

The raw (unclipped) scheduled learning rate is
\[
\eta_t^{\mathrm{raw}}
:=
\frac{\eta_t^0}{D_t}
=
\eta_t^0\,s_t,
\qquad
s_t
:=
\frac{1}{1+\alpha_1\widetilde{\mathrm{PL}}_t+\alpha_2\widetilde{\mathrm{Curv}}_t}.
\]
Note that $0<s_t\le 1$ always holds, because $D_t\ge 1$. The actual scheduled rate is then obtained by clipping:
\[
\eta_t
=
\operatorname{clip}_{[\eta_{\min},\eta_{\max}]}\bigl(\eta_t^{\mathrm{raw}}\bigr)
=
\operatorname{clip}_{[\eta_{\min},\eta_{\max}]}\bigl(\eta_t^0\,s_t\bigr),
\]
where hysteresis plus EMA imply that $\eta_t$ is piecewise constant with bounded variation and that large jumps occur only at a vanishing fraction of time indices (Proposition~\ref{prop:stability}).

The first goal is to show that there exist constants $0<c\le C<\infty$ such that
\[
c\,\eta_t^0\ \le\ \eta_t\ \le\ C\,\eta_t^0
\]
for all $t$ outside a set of indices whose fraction up to horizon $T$ tends to zero as $T\to\infty$. The second goal is then to deduce the Robbins--Monro conditions
\[
\sum_t \eta_t = \infty,
\qquad
\sum_t \eta_t^2 < \infty
\]
from this comparability and the corresponding properties of the base rates $(\eta_t^0)$.

To control the multiplicative factor $s_t$, fix any threshold $K>0$ and consider the event
\[
B_t(K)
:=
\bigl\{\alpha_1\widetilde{\mathrm{PL}}_t+\alpha_2\widetilde{\mathrm{Curv}}_t>K\bigr\}.
\]
On the complement $B_t(K)^c$ we have
\[
D_t
=
1+\alpha_1\widetilde{\mathrm{PL}}_t+\alpha_2\widetilde{\mathrm{Curv}}_t
\le
1+K,
\]
and therefore
\[
s_t
=
\frac{1}{D_t}
\ge
\frac{1}{1+K}
=: m_K.
\]
In all cases we also have $s_t\le 1$, so on $B_t(K)^c$,
\[
m_K
\le
s_t
\le
1.
\]

Because $D_t$ has a bounded second moment uniformly in $t$, Chebyshev's inequality gives
\[
\mathbb{P}\bigl(B_t(K)\bigr)
=
\mathbb{P}\bigl(D_t-1>K\bigr)
\le
\frac{\E[(D_t-1)^2]}{K^2}
\le
\frac{C_2}{K^2}
\]
for all $t$. In particular, the expected fraction of times up to $T$ at which $B_t(K)$ occurs is uniformly bounded by $C_2/K^2$:
\[
\E\Bigl[\frac{1}{T}\sum_{t=1}^T\mathbf{1}_{B_t(K)}\Bigr]
=
\frac{1}{T}\sum_{t=1}^T\mathbb{P}(B_t(K))
\le
\frac{C_2}{K^2}.
\]
Since $K$ can be taken arbitrarily large, this shows that, in expectation, the proportion of indices where $D_t$ leaves the interval $[1,1+K]$ can be made as small as desired. Informally, the denominator $D_t$ remains in the compact interval $[1,1+K]$, and hence $s_t\in[m_K,1]$, for all but a vanishing fraction of time indices.

At the same time, the clipping itself does not trigger frequently at late times. For typical Robbins--Monro base schedules (such as $\eta_t^0\sim 1/t$ or $\eta_t^0\sim t^{-\alpha}$ with $1/2<\alpha\le 1$), $\eta_t^0\to 0$ as $t\to\infty$. Because $s_t\le 1$, this implies that
\[
\eta_t^{\mathrm{raw}}
=
\eta_t^0 s_t
\le
\eta_t^0
\to 0,
\]
so the upper clipping at $\eta_{\max}$ is active only for finitely many $t$. In other words, there exists $T_{\max}$ such that for all $t\ge T_{\max}$,
\[
\eta_t^0 \le \eta_{\max}
\quad\Longrightarrow\quad
\eta_t^{\mathrm{raw}} \le \eta_{\max}
\quad\Longrightarrow\quad
\eta_t = \eta_t^{\mathrm{raw}}.
\]
For the lower clipping, it is natural in the Robbins--Monro regime to choose $\eta_{\min}=0$ (or at least so small that it never becomes active once $t$ is large enough). We assume this theoretical choice here: with $\eta_{\min}=0$ the lower clip is never binding, since $\eta_t^{\mathrm{raw}}\ge 0$ always. Thus, for all sufficiently large $t$ (say $t\ge T_0$ for some finite $T_0$), the scheduled rate coincides with the raw rate,
\[
\eta_t = \eta_t^{\mathrm{raw}} = \eta_t^0 s_t.
\]

Combining these observations, fix a threshold $K>0$ and define the ``good'' index set
\[
\mathcal G(K)
:=
\bigl\{t\ge T_0 : B_t(K)^c\bigr\}
=
\bigl\{t\ge T_0 : D_t\le 1+K\bigr\}.
\]
On this set we have both
\[
\eta_t = \eta_t^0 s_t,
\qquad
m_K \le s_t \le 1,
\]
and therefore
\[
m_K\,\eta_t^0 \le \eta_t \le \eta_t^0.
\]
On the complement $\mathcal G(K)^c$ (for $t\ge T_0$), either $D_t>1+K$ or clipping has intervened. From the discussion above, clipping can only occur finitely many times at late stages, and the event $D_t>1+K$ has arbitrarily small asymptotic frequency in expectation when $K$ is large. Putting this together, for any fixed $K$ we obtain a set of indices $\mathcal B(K)$ (the complement of $\mathcal G(K)$ plus a finite number of early steps $t<T_0$) such that
\[
\mathcal G(K) := \mathbb{N}\setminus\mathcal B(K)
\]
and
\[
\limsup_{T\to\infty}\,\E\Bigl[\frac{1}{T}\bigl|\{t\le T: t\in\mathcal B(K)\}\bigr|\Bigr]
\;\le\;
\frac{C_2}{K^2}.
\]
In other words, by choosing $K$ large, the expected fraction of ``bad'' indices in $\mathcal B(K)$ can be made arbitrarily small. On the complement set $\mathcal G(K)$, we have for all sufficiently large $t$ the uniform comparability
\[
m_K\,\eta_t^0 \le \eta_t \le \eta_t^0.
\]
Renaming $c=m_K$ and $C=1$, this establishes the first claim in the lemma: for all but a vanishing fraction of steps (in the sense that their asymptotic frequency can be made arbitrarily small by choosing $K$ and the hysteresis parameters), there exist constants $0<c\le C<\infty$ such that
\[
c\,\eta_t^0 \le \eta_t \le C\,\eta_t^0.
\]

To pass from comparability to the Robbins--Monro sums, use the base assumptions
\[
\sum_{t=0}^\infty \eta_t^0 = \infty,
\qquad
\sum_{t=0}^\infty (\eta_t^0)^2 < \infty.
\]
On the large-density set $\mathcal G(K)$, we have $c\,\eta_t^0 \le \eta_t \le C\,\eta_t^0$ with $c>0$, and on its complement $\mathcal B(K)$ we have a uniformly bounded learning rate $0\le \eta_t\le \eta_{\max}$. Thus
\[
\sum_{t=0}^\infty \eta_t
=
\sum_{t\in\mathcal G(K)}\eta_t
+
\sum_{t\in\mathcal B(K)}\eta_t
\ge
c\sum_{t\in\mathcal G(K)}\eta_t^0.
\]
Since $\mathcal G(K)$ has asymptotic density arbitrarily close to $1$ and the base series $\sum_t \eta_t^0$ diverges, removing the vanishing-density set $\mathcal B(K)$ does not change divergence; the restricted sum $\sum_{t\in\mathcal G(K)}\eta_t^0$ also diverges. Hence $\sum_t \eta_t=\infty$.

Similarly, for the square-summability we have on $\mathcal G(K)$ the upper bound
\[
\eta_t^2 \le C^2(\eta_t^0)^2,
\]
and on $\mathcal B(K)$ the trivial bound $\eta_t^2\le\eta_{\max}^2$. Therefore
\[
\sum_{t=0}^\infty \eta_t^2
=
\sum_{t\in\mathcal G(K)}\eta_t^2
+
\sum_{t\in\mathcal B(K)}\eta_t^2
\le
C^2\sum_{t\in\mathcal G(K)}(\eta_t^0)^2
+
\eta_{\max}^2\bigl|\mathcal B(K)\bigr|.
\]
The first term is finite because $\sum_t(\eta_t^0)^2<\infty$ and deleting a subset of indices can only reduce the sum. The second term is controlled by the bounded variation and hysteresis of the scheduler: Proposition~\ref{prop:stability} ensures that, for appropriate hysteresis parameters $(H,\Delta_{\mathrm{hys}})$, the set $\mathcal B(K)$ where the scaling factor or clipping produces large distortions has finite expected size or, more generally, at most sublinear growth in $T$ with very small prefactor. Since each $\eta_t^2$ is uniformly bounded, this contribution can be made negligible and, in particular, does not destroy square-summability. Hence
\[
\sum_{t=0}^\infty \eta_t^2 < \infty.
\]

Combining the two conclusions, the scheduled learning rates $(\eta_t)$ satisfy the Robbins--Monro conditions
\[
\sum_t \eta_t = \infty,
\qquad
\sum_t \eta_t^2 < \infty,
\]
and are uniformly comparable to the base sequence $(\eta_t^0)$ on all but a vanishing fraction of steps, as claimed.
\end{proof}

\subsection{Proof of Theorem~\ref{thm:no-chatter}}
\begin{proof}
We begin by formalizing the objects in the statement. Let
\[
X_t
:=
\bigl(\Delta\widehat{\mathrm{PL}}_t,\ \Delta\widehat{\mathrm{Curv}}_t,\ \widehat{\mathrm{gap}}_t,\ \mathrm{Kink}_t\bigr)\in\mathbb{R}^4
\]
denote the vector of raw geometric proxies at time $t$. By assumption, these have uniformly bounded second moments, i.e.\ there exists a finite constant $C_0$ such that
\[
\sup_{t\ge 0}\,\E\big[\|X_t\|_2^2\big]\ \le\ C_0.
\]

The scheduler first computes an exponentially weighted moving average (EMA) of these proxies. For each scalar coordinate $x_t$ of $X_t$, the smoothed version is defined by
\[
\tilde x_t
=
\beta\,\tilde x_{t-1}
+
(1-\beta)\,x_t,
\qquad 0<\beta<1,
\]
and in vector form
\[
\tilde X_t
=
\beta\,\tilde X_{t-1}
+
(1-\beta)\,X_t,
\]
with some fixed initial condition $\tilde X_0$ (for instance $\tilde X_0=0$). Expanding the recursion gives the explicit representation
\[
\tilde X_t
=
\beta^t \tilde X_0
+
(1-\beta)\sum_{k=0}^{t-1}\beta^k X_{t-k}.
\]
Taking norms and using the triangle inequality, one has
\[
\|\tilde X_t\|_2
\le
\beta^t\|\tilde X_0\|_2
+
(1-\beta)\sum_{k=0}^{t-1}\beta^k\|X_{t-k}\|_2.
\]
Squaring and taking expectations, together with $(a+b)^2\le 2a^2+2b^2$ and Cauchy--Schwarz on the sum, shows that there exists a constant $C_1<\infty$ such that
\[
\sup_{t\ge 0}\,\E\big[\|\tilde X_t\|_2^2\big]\ \le\ C_1.
\]
In particular, all coordinates of $\tilde X_t$ have uniformly bounded second moments.

The five scheduled hyperparameter processes $(\eta_t,\tau_t,\lambda_t,D_t,B_t)$ are each obtained by applying a fixed, bounded, Lipschitz continuous function of the smoothed proxies at certain update times, combined with clipping to a compact interval. Concretely, for each hyperparameter $h_t$ among these five, there is a function $f:\mathbb{R}^4\to\mathbb{R}$ and a clipping interval $[a,b]$ such that, whenever an update is triggered at some time $t$, one sets
\[
h_t
=
\operatorname{clip}_{[a,b]}\bigl(f(\tilde X_t)\bigr),
\]
and between updates the process is held constant, i.e.\ $h_{t+1}=h_t$ whenever no update occurs at $t+1$. The precise forms of $f$ for $\eta_t,\tau_t,\lambda_t,D_t,B_t$ do not matter for the structural argument; what matters is that $f$ is smooth (or at least Lipschitz) in its arguments and that the clipping interval $[a,b]$ is fixed and finite.

The scheduler only \emph{checks} for possible updates every $H$ steps. Let $t_k:=kH$ denote the $k$-th such ``macro-time''. At time $t_k$, the scheduler evaluates how much the smoothed proxies have changed over the last macro-interval, by computing the difference
\[
\Delta\tilde X_{t_k}
:=
\tilde X_{t_k}-\tilde X_{t_k-H}.
\]
If the size of this change, as measured by $\|\Delta\tilde X_{t_k}\|_2$, exceeds the hysteresis threshold $\Delta_{\mathrm{hys}}>0$, then an update is triggered and $h_{t_k}$ is recomputed as above; if not, the corresponding hyperparameters are left unchanged (so $h_{t_k}=h_{t_k-1}$). Between macro-times, no updates are ever made, so $h_t$ remains constant on each interval $(t_k,t_{k+1})$.

To analyze the effect of hysteresis, it is convenient to bound the second moment of the increments $\Delta\tilde X_{t_k}$. By the triangle inequality,
\[
\|\Delta\tilde X_{t_k}\|_2
=
\|\tilde X_{t_k}-\tilde X_{t_k-H}\|_2
\le
\|\tilde X_{t_k}\|_2
+
\|\tilde X_{t_k-H}\|_2.
\]
Squaring and using $(a+b)^2\le 2a^2+2b^2$, one finds
\[
\|\Delta\tilde X_{t_k}\|_2^2
\le
2\|\tilde X_{t_k}\|_2^2
+
2\|\tilde X_{t_k-H}\|_2^2.
\]
Taking expectations and applying the uniform bound on $\E\|\tilde X_t\|_2^2$, we obtain
\[
\sup_{k\ge 0}\,\E\big[\|\Delta\tilde X_{t_k}\|_2^2\big]
\ \le\
4\,\sup_{t\ge 0}\,\E\big[\|\tilde X_t\|_2^2\big]
\ \le\
4C_1
=: C_2.
\]

With these preliminaries, we can address the two claims in the theorem.

First, each scheduled process is piecewise-constant with bounded variation. The piecewise-constant property follows directly from the update logic: each hyperparameter $h_t$ is only allowed to change (i.e.\ be recomputed by applying $f$ and clipping) at macro-times $t_k=kH$, and even then only when the hysteresis condition $\|\Delta\tilde X_{t_k}\|_2>\Delta_{\mathrm{hys}}$ is satisfied. Between such updates, $h_t$ is held fixed by construction. Therefore, the trajectory $t\mapsto h_t$ is a step function in discrete time, i.e.\ a piecewise-constant process.

To see that the variation is bounded on any finite horizon, consider an interval $\{1,\dots,T\}$. On this interval, the number of macro-times is at most $\lceil T/H\rceil$, so the number of \emph{possible} update times is at most this quantity. At each actual update, the value of $h_t$ is clipped into the interval $[a,b]$, and therefore
\[
|h_{t+1}-h_t|
\le
b-a
\]
whenever a jump occurs, while in non-update steps we have $h_{t+1}=h_t$ and hence $|h_{t+1}-h_t|=0$. The total variation of $h_t$ over $\{1,\dots,T\}$ is
\[
\mathrm{Var}_T(h)
:=
\sum_{t=1}^{T-1}|h_{t+1}-h_t|.
\]
Let $N_T$ denote the number of update times in $\{1,\dots,T-1\}$. By the above argument,
\[
\mathrm{Var}_T(h)
\le
N_T\,(b-a)
\le
\Bigl\lceil\frac{T}{H}\Bigr\rceil\,(b-a),
\]
which is finite for every fixed $T$. Thus each hyperparameter process has bounded variation on every finite time horizon, and hence, in the usual sense for discrete-time signals, is of bounded variation.

The second claim concerns the frequency of \emph{large} changes. Fix an arbitrary $\varepsilon>0$. A large change at time $t$ is an event of the form
\[
\bigl\{|h_t - h_{t-1}|>\varepsilon\bigr\}.
\]
By the update logic, such an event can only happen at macro-times $t_k=kH$, because in between macro-times the process is held constant. Moreover, at a macro-time $t_k$, a change can only occur if the hysteresis condition is triggered:
\[
\|\Delta\tilde X_{t_k}\|_2
=
\|\tilde X_{t_k}-\tilde X_{t_k-H}\|_2
>\Delta_{\mathrm{hys}}.
\]
In other words,
\[
\bigl\{|h_{t_k}-h_{t_k-1}|>\varepsilon\bigr\}
\ \subseteq\
\bigl\{\|\Delta\tilde X_{t_k}\|_2>\Delta_{\mathrm{hys}}\bigr\}
\]
for every macro-time $t_k$; and for non-macro times $t$ we have
\[
|h_t - h_{t-1}|=0,
\]
so large changes never occur there.

We now estimate the probability of the hysteresis event at a given macro-time. Using Chebyshev's inequality and the bound on the second moment of $\Delta\tilde X_{t_k}$, we have for every $k$,
\[
\mathbb{P}\bigl(\|\Delta\tilde X_{t_k}\|_2>\Delta_{\mathrm{hys}}\bigr)
\le
\frac{\E\big[\|\Delta\tilde X_{t_k}\|_2^2\big]}{\Delta_{\mathrm{hys}}^2}
\le
\frac{C_2}{\Delta_{\mathrm{hys}}^2}.
\]
Thus, for each $k$,
\[
\mathbb{P}\bigl(|h_{t_k}-h_{t_k-1}|>\varepsilon\bigr)
\le
\mathbb{P}\bigl(\|\Delta\tilde X_{t_k}\|_2>\Delta_{\mathrm{hys}}\bigr)
\le
\frac{C_2}{\Delta_{\mathrm{hys}}^2}.
\]

To translate this into a statement about the fraction of steps with large changes, define the indicator of a large jump at time $t$ by
\[
I_t
:=
\mathbf{1}\bigl\{|h_t-h_{t-1}|>\varepsilon\bigr\}.
\]
We know that $I_t=0$ whenever $t$ is not of the form $t_k=kH$. For a given horizon $T$, the fraction of steps with large changes is
\[
\frac{1}{T}\sum_{t=1}^T I_t
=
\frac{1}{T}\sum_{k:\,t_k\le T} I_{t_k}.
\]
Taking expectations and using the bound above,
\[
\begin{aligned}
\E\Bigl[\frac{1}{T}\sum_{t=1}^T I_t\Bigr]
&=
\frac{1}{T}\sum_{k:\,t_k\le T}\mathbb{P}\bigl(|h_{t_k}-h_{t_k-1}|>\varepsilon\bigr)\\[0.3em]
&\le
\frac{1}{T}\sum_{k:\,t_k\le T}\frac{C_2}{\Delta_{\mathrm{hys}}^2}\\[0.3em]
&\le
\frac{1}{T}\,\Bigl\lceil\frac{T}{H}\Bigr\rceil\,\frac{C_2}{\Delta_{\mathrm{hys}}^2}.
\end{aligned}
\]
For $T\ge H$, $\lceil T/H\rceil\le 2T/H$, so for all sufficiently large $T$ we obtain
\[
\E\Bigl[\frac{1}{T}\sum_{t=1}^T I_t\Bigr]
\le
\frac{2C_2}{H\,\Delta_{\mathrm{hys}}^2}.
\]
The right-hand side can be made arbitrarily small by choosing $H$ and $\Delta_{\mathrm{hys}}$ sufficiently large. Concretely, given any $\delta>0$, one can pick $H$ and $\Delta_{\mathrm{hys}}$ so that
\[
\frac{2C_2}{H\,\Delta_{\mathrm{hys}}^2}\le\delta,
\]
and then for all large horizons $T$ the expected fraction of steps with changes larger than $\varepsilon$ is at most $\delta$. In particular, the sequence of hyperparameters does not ``chatter'': large jumps occur only rarely, and their average frequency can be made as small as desired by making the scheduler less aggressive (large macro-interval $H$) and the hysteresis threshold more conservative (large $\Delta_{\mathrm{hys}}$).

Since the same reasoning applies to each of the five scheduled processes $(\eta_t,\tau_t,\lambda_t,D_t,B_t)$, the theorem follows.
\end{proof}

\end{document}